\DeclareMathOperator*{\argmax}{arg\,max}
\DeclareMathOperator*{\argmin}{arg\,min}
\theoremstyle{plain}
\newtheorem{theorem}{Theorem}[section]
\newtheorem{lemma}{Lemma}[section]
\newtheorem{corollary}{Corollary}[theorem]
\newtheorem{proposition}{Proposition}[section]
\theoremstyle{definition}
\newtheorem{assumption}{Assumption}[section]
\theoremstyle{remark}
\newtheorem*{remark}{Remark}
\newcommand{\na}{N}
\newcommand{\sync}{K}
\newcommand{\ta}{\tilde{A}}
\newcommand{\tb}{\tilde{B}}
\newcommand{\zetaone}{\zeta_1}
\newcommand{\zetatwo}{\zeta_2}
\newcommand{\zetathree}{\zeta_3}
\newcommand{\zetafive}{\zeta_4}
\newcommand\norm[1]{\lnr#1\rnr}
\newcommand{\mco}{\mathcal O}
\newcommand{\mbf}{\mathbf}
\newcommand{\mbe}{\mathbb E}
\newcommand{\mbr}{\mathbb R}
\newcommand{\varphiz}{{\varphi}}
\newcommand{\A}{{A_1}}
\newcommand{\bx}{{\mathbf x}}
\newcommand{\bv}{{\mathbf v}}
\newcommand{\bvt}{{\mathbf v_t}}
\newcommand{\bvit}{{\mathbf{v}^i_{t}}}
\newcommand{\bV}{{\mathbf V}}
\newcommand{\btheta}{{\boldsymbol \theta}}
\newcommand{\bthetat}{{\boldsymbol \theta_t}}
\newcommand{\bthetait}{{\boldsymbol{\theta}^i_{t}}}
\newcommand{\bthetajt}{{\boldsymbol{\theta}^j_{t}}}
\newcommand{\bu}{{\mathbf u}}
\newcommand{\Ot}{{\Omega_t}}
\newcommand{\Dit}{{\Delta^i_t}}
\newcommand{\by}{{\mathbf y}}
\newcommand{\byt}{{\mathbf Y_t}}
\newcommand{\byi}{{\mathbf y^i}}
\newcommand{\bY}{{\mathbf Y}}
\newcommand{\byit}{{\mathbf{y}^i_{t}}}
\newcommand{\bw}{{\mathbf w}}
\newcommand{\Gi}{{\mathbf{G}^i}}
\newcommand{\M}{{M}}
\newcommand{\E}{{\mathbb{E}}}
\newcommand{\bmu}{{\boldsymbol\mu}}
\newcommand{\Q}{{Q}}
\newcommand{\Prob}{{\mathcal{P}}}
\newcommand{\G}{\nabla}
\newcommand{\bG}{{\bar{\mbf G}}}
\newcommand{\bGi}{{\bar{\mbf G}^i}}
\newcommand{\I}{\mathfrak{I}}
\newcommand{\lp}{\left(}
\newcommand{\rp}{\right)}
\newcommand{\lcb}{\left\{}
\newcommand{\rcb}{\right\}}
\newcommand{\lb}{\left[}
\newcommand{\rb}{\right]}
\newcommand{\lnr}{\left\|}
\newcommand{\rnr}{\right\|}
\newcommand{\lan}{\left\langle}
\newcommand{\ran}{\right\rangle}
\newcommand{\sumik}{\sum_{i=1}^\na}
\newcommand{\nn}{\nonumber}
\begin{document}

\begin{frontmatter}
\title{Federated Stochastic Approximation under Markov Noise and Heterogeneity: Applications in Reinforcement Learning}
\begin{aug}
\author[A]{\fnms{Sajad} \snm{Khodadadian}\ead[label=e1,mark]{sajadk@vt.edu}}
,\author[B]{\fnms{Pranay} \snm{Sharma}\ead[label=e2,mark]{pranaysh@andrew.cmu.edu}} 
\author[B]{\fnms{Gauri} \snm{Joshi}\ead[label=e3,mark]{gaurij@andrew.cmu.edu}}
\and
\author[C]{\fnms{Siva Theja} \snm{Maguluri}\ead[label=e4,mark]{siva.theja@gatech.edu}}
\address[A]{Virginia Polytechnic Institute and State University,
\printead{e1}}
\address[B]{
Carnegie Mellon University,
\printead{e2,e3}}
\address[C]{
Georgia Institute of Technology,
\printead{e4}}
\end{aug}

\begin{abstract}
Since reinforcement learning algorithms are notoriously data-intensive, the task of sampling observations from the environment is usually split across multiple agents. However, transferring these observations from the agents to a central location can be prohibitively expensive in terms of communication cost, and it can also compromise the privacy of each agent's local behavior policy. Federated reinforcement learning is a framework in which $N$ \emph{ agents collaboratively learn a global model, without sharing their individual data and policies}. This global model is the unique fixed point of the average of $N$ local operators, corresponding to the $N$ agents. Each agent maintains a local copy of the global model and updates it using locally sampled data. In this paper, we show that by careful collaboration of the agents in solving this joint fixed point problem, we can find the global model $N$ times faster, also known as linear speedup. We first propose a general framework for federated stochastic approximation with Markovian noise and heterogeneity, showing linear speedup in convergence. We then apply this framework to federated reinforcement learning algorithms, examining the convergence of federated on-policy TD, off-policy TD, and $Q$-learning.
\end{abstract}
\end{frontmatter}

\section{Introduction}\label{sec:intro}

Stochastic Approximation (SA) \cite{robbinsmonroSA} is a fundamental concept in the field of optimization and computational mathematics. It refers to a family of iterative methods used to find the roots of functions when the function itself cannot be directly observed but can be estimated via noisy measurements \cite{NDP_book}. This methodology has been pivotal in the analysis \cite{moulines2011non} and modeling \cite{jumper2021highly} of various algorithms such as stochastic gradient descent \cite{chau2014overview}. The significance of stochastic approximation lies in its ability to handle noise and uncertainty in the data, making it a robust tool for applications in machine learning, signal processing, and econometrics, where exact function evaluations are often infeasible \cite{harold1997stochastic, benveniste2012adaptive}.

One prominent application of SA is in Reinforcement Learning (RL), a domain of machine learning where agents learn to make decisions by interacting with an environment \cite{suttonbartorlbook, szepesvari2022algorithms}. Core RL algorithms, such as Temporal Difference (TD)-learning and $Q$-learning, can be modeled and studied as special cases of SA \cite{sutton1988learning, NDP_book}. TD-learning updates value estimates based on the difference between consecutive estimates, while Q-learning improves the action-value function estimates. These methods rely on stochastic approximation techniques to converge to optimal policies, leveraging the iterative nature of these methods to handle inherent uncertainty and variability in the learning environment \cite{watkins1992q}.

Building on the foundations of SA in RL, there has been significant interest in federated learning (FL), a paradigm that allows multiple decentralized agents to collaboratively train a model while keeping their data locally stored \cite{kairouz2021advances}. This approach has been extensively studied in the context of supervised learning \cite{wang21coopSGD_jmlr, stich18localSGD_iclr, qu2020federated, li2019convergence}, offering solutions that preserve data privacy and reduce communication overhead \cite{fedavg17aistats, kairouz2021advances, qi2021federated, yang2019federated}. Recently, some work has been done on the application of FL to RL problems (also known as FRL) \cite{xu2021multi, zhang2022multi, pinto2023federated,zhou2023digital,shaik2024framu}. Studies on federated $Q$-learning \cite{woo2023blessing,fan2023fedhql}, and federated TD-learning \cite{wai20dist_markov_noLS_cdc, hong20asynch_a3c_arxiv, wang2023federated, wang2024momentum} investigate how decentralized agents can collectively solve RL problems without sharing their raw data, thus maintaining privacy and improving scalability \cite{zhuo19FedDeepRL_arxiv}. These investigations represent significant progress in applying federated principles to more complex and dynamic learning environments \cite{qi2021federated}.

The goal of this paper is to develop and study federated stochastic approximation with Markovian noise and heterogeneity. \emph{We first consider a general heterogeneous federated stochastic approximation and we establish its convergence bound. Next, we propose communication-efficient federated versions of on-policy TD, off-policy TD, and $Q$-learning algorithms. In addition, we characterize the convergence and sample complexity of these algorithms employing the convergence result of the general federated stochastic approximation. } We aim to demonstrate that our approach achieves linear speedup with respect to the number of agents involved. By addressing the challenges of heterogeneity among agents, our results can be applied to a wide range of off-policy and on-policy algorithms with function approximation. This contribution is expected to bridge the gap between stochastic approximation theory and practical federated RL applications, providing a robust framework for future research and development in the field \cite{smith20FL_SPmag, jin2022federated}.

The main contributions and organization of the paper are summarized in the following.
\begin{itemize}
    \item In Section \ref{sec:fed_sam}, we propose and study a general \underline{Fe}derated Hetero\underline{G}eneous \underline{S}tochastic \underline{A}pproximation framework with \underline{M}arkovian noise (FeGSAM). Considering Markovian sampling noise and heterogeneity poses a significant challenge in the analysis of this algorithm. The convergence result for FeGSAM serves as a workhorse that enables us to analyze both federated TD-learning and federated $Q$-learning. We characterize the convergence of FeGSAM with a refined analysis of general stochastic approximation algorithms, fundamentally improving on prior work.
    \item In the on-policy setting, in Section \ref{sec:OPFRL} we propose and analyze federated TD-learning with linear function approximation, where the agents' goal is to evaluate a common policy using on-policy samples collected in parallel from their environments. The agents only share the updated value function (not data) with the central server, thus saving communication cost. We prove a linear convergence speedup with the number of agents and also characterize the impact of communication frequency on the convergence.
    \item In the tabular off-policy setting, in Section \ref{sec:OfPFRLTR} we propose and analyze the federated off-policy TD-learning and federated $Q$-learning algorithms. Again, we establish a linear speedup in their convergence with respect to the number of agents, along with constant communication cost. Since every agent samples data using a private policy and only communicates the updated value or $Q$-function, the off-policy FRL helps to keep both the data and the policy private.
    \item In the off-policy setting with function approximation, in Section \ref{sec:OfPFRLFA} we propose and analyze the federated off-policy TD-learning with linear function approximation. Considering heterogeneity in the analysis of FeGSAM is essential to establish the sample complexity of this algorithm, and show a linear speedup with respect to the number of agents. However, we will observe that this setting requires more than a constant (error-dependent) number of communications.
    \item We finally provide a sketch of the proof of the convergence of FeGSAM in Section \ref{sec:pf_sketch}.
\end{itemize}
 
\section{Related Work}
  
In the related work section, we merge the results on stochastic approximation with their corresponding applications in RL.
  
\textbf{Single node TD-learning and $\mbf{\textit{Q}}$-learning.} Most existing RL literature is focused on designing and analyzing algorithms that run at a single computing node. In the on-policy setting, the asymptotic convergence of TD-learning was established in \citep{tsitsiklis1997analysis, tadic2001convergence, borkar2009stochastic}, and the finite-sample bounds were studied in \citep{dalal2018finite, csaba_iidnoise, bhandari18FiniteTD_LFA_colt, srikant2019finite, hu2019characterizing, chen2021Lyapunov_arxiv}. In the off-policy setting, \cite{maei2018convergent,zhang2019provably} study the asymptotic and \cite{chen2020finite,chen2021Lyapunov_arxiv} characterize the finite time bound of TD-learning. The $Q$-learning algorithm was first proposed in \cite{watkins1992q}. There has been a long line of work to establish the convergence properties of $Q$-learning. In particular, \cite{tsitsiklis1994asynchronous, jaakkola1994convergence, bertsekas1996neuro,borkar2000ode, borkar2009stochastic} characterize the asymptotic convergence of $Q$-learning, \cite{beck2012error, beck2013improved, wainwright2019stochastic, chen2020finite, chen2021Lyapunov_arxiv} study the finite-sample convergence bound in the mean-square sense, and \cite{Even_Dar_Qlearning, li2020sample, qu2020finite} study the high-probability convergence bounds of $Q$-learning.
  
\textbf{Federated Learning with i.i.d. Noise.} When multiple agents are used to expedite sample collection, transferring the samples to a central server for the purpose of training can be costly in applications with high-dimensional data \cite{shao2019survey} and it may also compromise the agents' privacy. Federated Learning is an emerging distributed optimization paradigm \cite{konevcny16federated, kairouz2019advances} that utilizes local computation at the agents to train models, such that only model updates, not data, is shared with the central server. In local Stochastic Gradient Descent (Local SGD or FedAvg) \cite{fedavg17aistats, stich18localSGD_iclr}, the core algorithm in FL, locally trained models are periodically averaged by the central server to achieve consensus among agents at a reduced communication cost. Although the convergence of local SGD has been extensively studied in prior work \cite{richtarik20localSGD_aistats, spiridonoff21comm_eff_SGD_neurips, qu2020federated, koloskova20unified_localSGD_icml}, these works assume i.i.d. noise in the gradients, which is acceptable for SGD but too restrictive for RL algorithms. 

\textbf{Distributed and Multi-agent RL.} Some previous works have analyzed distributed and multi-agent RL algorithms in the presence of Markovian noise in various settings such as decentralized stochastic approximation \cite{doan_dist_td_icml, sun20decenTD_LFA_aistats, wai20dist_markov_noLS_cdc, doan20decen_sa_noLS_arxiv}, 
TD learning with linear function approximation \cite{wang20decen_TD_neurips}, and
off-policy TD in actor-critic algorithms \cite{chen2021multi, chen21sample_A3C_arxiv}. However, all these works consider decentralized settings, where agents communicate with their neighbors after every local update. In \cite{hong20asynch_a3c_arxiv}, a parallel implementation of the asynchronous advantage actor-critic (A3C) algorithm (which does not have local updates) has been proposed under both i.i.d. and Markov sampling. However, the authors prove a linear speedup only for the i.i.d. case, and an almost linear speedup is observed experimentally for the Markovian case. 

More recently, \cite{lan2024asynchronous} introduced an asynchronous FRL framework with policy gradient updates, offering convergence guarantees. \cite{ganesh2024global} provided a global convergence guarantees for federated policy gradient methods in adversarial settings. \cite{wang2024momentum} emphasized the role of momentum in collaborative FRL, addressing heterogeneity across environments. The work by \cite{sun2024understanding} explored the generalization of federated learning by focusing on stability and heterogeneity's influence. \cite{fan2023fedhql} and \cite{woo2023blessing} investigated federated Q-learning, with \cite{fan2023fedhql} proposing a heterogeneous federated Q-learning framework (FedHQL) and \cite{woo2023blessing} demonstrating the benefits of heterogeneity. \cite{yang2023federated} focused on federated natural policy gradient methods for multi-task reinforcement learning, while \cite{jin2022federated} addressed the challenges of environment heterogeneity in federated reinforcement learning. 

\section{Heterogeneous Federated Stochastic Approximation with Markovian Noise}
In this section we will introduce our heterogeneous federated stochastic approximation with Markovian noise and establish the sample complexity of this algorithm. However, to lay the groundwork, we will first discuss SA in the single-node setting.

\subsection{Single node}
While generic stochastic approximation algorithms are studied under i.i.d. noise \cite{Even_Dar_Qlearning,shah2018q,wainwright2019stochastic,liu2015finite,dalal2018finite}, to apply them to studying RL we need to understand stochastic approximation under Markovian noise \cite{tsitsiklis1994asynchronous,qu2020finite,chen2021Lyapunov_arxiv,  chen2023lyapunov} which is significantly more challenging.

Consider a generic SA (i.i.d. or Markovian noise) with constant step size $\alpha$ and parameter vector $\bx_T$ as follows
\begin{align}\label{eq:SA_main}
    \bx_{t+1} = (1-\alpha)\bx_t + \alpha (G(\bx_t) + \bw_t).
\end{align}
Here, $G(\cdot)$ is the function, the fixed point of which we would like to discover, and $\bw_t$ represent the noise in observing the $G(\cdot)$ function. A wide range of algorithms, from stochastic gradient descent \cite{moulines2011non} to RL \cite{NDP_book}, can be studied as special cases of \eqref{eq:SA_main}. For example, stochastic gradient descent to minimize the function $f(\cdot)$, follows the following iteration: $\bx_{t+1} = (1-\alpha)\bx_t + \alpha (\bx_t-\nabla f(\bx_t)+ \bw_t)$, where $\bw_t$ is the noise in estimating the gradient function.

Assuming that $G(\cdot)$ has $\bx^*$ as its fixed point (that is, $G(\bx^*)=\bx^*$), it can be shown that the iterations in \ref{eq:SA_main} have the following convergence behavior.
\begin{align}\label{eq:general_stoch_conver}
    \E[\|\bx_T-\bx^*\|^2]\leq \mathcal{C}_1(1-\mathcal{C}_0\alpha)^{T}+\mathcal{C}_2\alpha,
\end{align}
where $\mathcal{C}_0, \mathcal{C}_1,$ and $\mathcal{C}_2$ are some problem dependent positive constants (Look at Appendix \ref{sec:lower_bound_stoch_app} for a discussion on a lower bound on the convergence of general SA). The first term is denoted as the bias and the second term is called the variance. According to this bound, $\bx_T$ geometrically converges to a ball around $\bx^*$ with radius proportional to $\mathcal{C}_2\alpha$. Notice that we can always reduce the variance term by reducing the step size $\alpha$, but this will lead to slower convergence in the bias term. In particular, to get $\E[\|\bx_T-\bx^*\|^2]\leq\epsilon$, it is easy to see that we need $T\geq \mathcal{O}\lp\frac{\mathcal{C}_2}{\epsilon}\log\frac{1}{\epsilon}\rp$ samples. Now, suppose that the constant $\mathcal{C}_2$ is large. In this case, the variance term in the bound in \eqref{eq:general_stoch_conver} is large, and the sample complexity, which is proportional to $\mathcal{C}_2$ will be poor. Notice that by the discussion in Appendix \ref{sec:lower_bound_stoch_app}, this bound is tight and cannot be improved.

This is where FL can be employed to control the variance term by generating more data. An example of SA is TD-learning, which will be discussed in detail in Sections \ref{sec:OPFRL}, \ref{sec:OfPFRLTR}, and \ref{sec:OfPFRLFA}. In federated TD-learning, multiple agents work together to evaluate the value function simultaneously. Due to this collaboration, agents can estimate the true value function with a lower variance. The same holds for estimating the optimal $Q$-function in federated $Q$-learning, which will be discussed in Section \ref{sec:Q_learning_main}.

\subsection{Generalized Heterogeneous Federated Stochastic Approximation} \label{sec:fed_sam}
In this section, we study the convergence of a general heterogeneous federated stochastic approximation for contractive operators, FeGSAM, which is presented in Algorithm \ref{alg:fed_stoch_app}. In this algorithm, there are $\na$ agents $i=1,2,\dots,\na$, which collectively aim to find the fixed point $\btheta^*$ of the function $\frac{1}{\na}\sumik G^i(\cdot)$ that satisfies $\frac{1}{\na}\sumik G^i(\btheta^*)=\btheta^*$. In this paper, we consider heterogeneity, which means that each function $G^i(\cdot)$ can have a different fixed point. In the special case where the fixed point of all agents coincides, we have a homogeneous federated SA, which was studied in \cite{khodadadian2022federated}. The heterogeneity assumption is essential when studying off-policy TD-learning with linear function approximation, which will be discussed in Section \ref{sec:OfPFRLFA}.

At each time step $t\geq 0$, each agent $i$ maintains the parameter $\bthetait\in\mathbb{R}^d$. 
At time $t=0$, all agents initialize their parameters with $\btheta_0^i=\btheta_0$. Next, at time $t\geq0$, each agent $i$ updates its parameter to $\btheta^i_{t+1} = \bthetait + \alpha \lp \Gi(\bthetait, \byit) - \bthetait + \mathbf{b}^i (\byit) \rp$. Here, $\alpha$ denotes the step size and $\byit$ is a Markovian noise throughout time $t$, but independent of all agents $j\neq i$. This notion is defined more concretely in Assumption \ref{ass:noise_independence}. 
We note that functions $\Gi(\cdot, \cdot)$ and $\mathbf{b}^i (\cdot)$ are allowed to be dependent on the agent $i$. This allows us to employ the convergence bound of FeGSAM in order to derive the convergence bound of off-policy TD-learning with different behavior policies across agents. To avoid divergence, for every $\sync$ time step, we synchronize the parameters of all agents as $\bthetait \leftarrow \bthetat \triangleq \frac{1}{\na} \sum_{j=1}^\na \btheta_t^j,$ for all $i \in [\na]$. Note that although smaller $\sync$ corresponds to more frequent synchronization and hence more ``accurate'' updates, at the same time it results in a higher communication cost, which is not desirable. 
Hence, to determine the optimal choice of the synchronization period, it is essential to characterize the dependence of the convergence on $\sync$. This is one of the results we will derive in Theorem \ref{thm:main}. Finally, the algorithm samples $\hat{T}\sim q_T^c$, where $q^c_T(t) = \frac{c^{-t}}{\sum_{t'=0}^{T-1}c^{-t'}}$ and outputs $\btheta_{\hat{T}}$. This sampling scheme is essential for the convergence of the overall algorithm. We also make some assumptions regarding the underlying process.

 \begin{algorithm}
\caption{Federated Heterogeneous Stochastic Approximation with Markovian Noise (FeGSAM)}
\label{alg1}
\begin{algorithmic}[1]\label{alg:fed_stoch_app}
	\STATE{\textbf{Input:} $c_{FSAM},T,\btheta_{0},K,\alpha$}
	\STATE $\btheta^i_{0}=\btheta_{0}$ for all $i=1,\dots, \na$.
	\FOR{$t=0$ to $T-1$}
    	\STATE{$\btheta^i_{t+1} = \bthetait + \alpha \lp \Gi(\bthetait, \byit) - \bthetait + \mathbf{b}^i (\byit) \rp, \forall \ i \in [\na]$}
	    \IF{$t+1$ mod $\sync = 0$}
	        \STATE{$\btheta^i_{t+1} = \btheta_{t+1} \triangleq \frac{1}{\na} \sum_{j=1}^\na \btheta^j_{t+1}, \forall \ i \in [\na]$}
	   \ENDIF
	\ENDFOR
	\STATE Sample $\hat{T}\sim q_T^{c_{FSAM}}(\cdot)$.
	\STATE{\textbf{Return:}} $\frac{1}{\na}\sumik\btheta^i_{\hat{T}}$
\end{algorithmic}
\end{algorithm}

First, we consider assumption \ref{ass:mixing} where we assume that the expectation of $\Gi (\btheta,\by_t^i)$ geometrically converges to some function $\bG^i(\btheta)$ and the expectation of $\mathbf{b}^i(\by^i_t)$ converges to $\bar{\mathbf{b}}^i$ as $t\rightarrow \infty$, and the fixed point of $\bG^i(\cdot) + \bar{\mathbf{b}}^i$ is denoted as $\btheta^{i,*}$ satisfies $\bG^i(\btheta^{i,*})+ \bar{\mathbf{b}}^i=\btheta^{i,*}$. Note that the fixed points $\btheta^{i,*}$ can vary by agent $i$. This is the source of heterogeneity in our model. In addition, the expectation of $\mathbf{b}^i  (\by_t^i)$ is bounded by a geometrically converging term plus $m_3\eta_i$, where $\eta_i=\|\btheta^{i,*} -\btheta^*\|_c$, and $\|\cdot\|_c$ is a fixed norm. Note that in the homogeneous setting studied in \cite{khodadadian2022federated}, we have $\eta_i=0$ for all $i$. 

\begin{assumption} \label{ass:mixing}
For every agent $i$, there exist a function $\bG^i(\btheta)$ such that we have
\begin{align}
    \lim_{t \to \infty} \E[ \Gi (\btheta,\by_t^i) ] &= \bG^i(\btheta)\nn.
\end{align}
Furthermore, there exists $m_1,m_2,m_3,m_4\geq0$ and $\rho \in [0,1)$, such that for every $i=1,2,\dots, \na$,
\begin{equation}
    \begin{aligned}
        \| \bG^i(\btheta) - \E[\Gi (\btheta, \by_t^i)] \|_c & \leq m_1  \| \btheta \|_c \rho^t \\
        \| \E[\mathbf{b}^i  (\by_t^i)] \|_c & \leq m_2 \rho^t + m_3\eta_i\\
        \left\| \E\left[\frac{1}{\na}\sumik\mathbf{b}^i  (\by_t^i)\right] \right\|_c & \leq m_4 \rho^t,
    \end{aligned}
    \label{eq:b_ass}
\end{equation}
where $\|\cdot\|_c$ is a given norm.
\end{assumption}
We further denote $\varepsilon = \frac{1}{N}\sumik\eta_i$ which characterizes the level of heterogeneity in the system. 

Next, we assume a contraction property on the expected operator $\bG^i(\btheta)$. 

\begin{assumption}
\label{ass:contraction}
We assume all expected operators $\bG^i(\btheta) $ are contraction mappings with respect to $\| \cdot \|_c$ norm with contraction factor $\gamma_c \in (0,1)$. That is, for all $i=1,2,\dots,\na$,
\begin{align*}
    \|\bG^i (\btheta_1) - \bG^i(\btheta_2) \|_c \leq \gamma_c \|\btheta_1 - \btheta_2 \|_c, \quad \text{for all } \btheta_1, \btheta_2 \in \mbr^d.
\end{align*}
\end{assumption}

Next, we consider some Lipschitz and boundedness properties on $\Gi (\cdot, \cdot)$ and $\mathbf{b}^i (\cdot)$.

\begin{assumption}
\label{ass:lipschitz}
For all $i=1,\dots,\na$, there exist constants $A_1$, $A_2$ and $B$ such that 
\begin{enumerate}
    \item $\| \Gi (\btheta_1, \by) - \Gi (\btheta_2, \by) \|_c\leq A_1 \| \btheta_1 - \btheta_2 \|_c$ for all $\btheta_1, \btheta_2, \by$.
    \item $\| \Gi (\btheta, \by) \|_c \leq A_2\|\btheta\|_c$ for all $\btheta,\by$.
    \item $\|\mathbf{b}^i (\by^i)\|_c\leq B$ for all $\byi$.
\end{enumerate}
\end{assumption}

Finally, we impose an assumption on the random noise $\byit$.
\begin{assumption}\label{ass:noise_independence}
We assume that the Markovian noise $\byit$ (Markovian with respect to time $t$) is independent across agents $i$. In other words, for all measurable functions $f(\cdot)$ and $g(\cdot)$, we assume the following
\begin{align*}
    \E_{t-r} [f(\by_t^i)\times g(\by_t^j)] = \E_{t-r} [f(\by_t^i)]\times \E_{t-r}[g(\by_t^j)]\quad \text{for all} \quad r \leq t, i\neq j.
\end{align*}
\end{assumption}

\subsection{Main result}
Theorem \ref{thm:main_body} states the convergence of Algorithm \ref{alg:fed_stoch_app}, which is the main result of this paper.

\begin{theorem}\label{thm:main_body}
Consider the federated heterogeneous stochastic approximation Algorithm \ref{alg:fed_stoch_app} with $c_{FSAM}= 1 - \frac{\alpha \varphiz_2}{2}\in (0,1)$ ($\varphiz_2$ is defined in Equation \eqref{eq:def:constants} in the appendix), and synchronization frequency $\sync$. Denote $\bthetat=\frac{1}{\na}\sumik\bthetait$, and consider $\btheta_{\hat{T}}$ as the output of this algorithm after $T$ iterations. Assume $\tau_\alpha=\lceil2\log_\rho\frac{1}{\alpha}\rceil$. For $T\geq \max\{K+\tau_\alpha,2\tau_\alpha\}$ and small enough step size $\alpha$, we have
\begin{align}\label{eq:fed_stoch_app_bound}
    \E[\|\btheta_{\hat{T}} - \btheta^*\|^2_c]\leq \mathcal{C}_1\frac{1}{\alpha}c_{FSAM}^{T-2\tau_\alpha+1}+\mathcal{C}_2\frac{\alpha\tau_\alpha^2}{\na}+\mathcal{C}_{3}(\sync-1)\alpha^2\tau_\alpha+\mathcal{C}_4(K-1)^2\varepsilon\alpha^2 \tau_\alpha + \mathcal{C}_5\alpha^3\tau_\alpha^2,
\end{align}
where
$\mathcal{C}_i$, $i=1,2,3,4, 5$ are some constants which are specified precisely in Appendix \ref{sec:fed_stoch_analysis_app}, and are independent of $\sync, \alpha, \na$. 
\end{theorem}

Theorem \ref{thm:main_body} establishes the convergence of $\btheta_{\hat{T}}$ to $\btheta^*$ in the expected mean-squared sense. The first term in \eqref{eq:fed_stoch_app_bound} converges geometrically to zero as $T$ grows. The second term is proportional to $\alpha$ similar to \eqref{eq:general_stoch_conver}. However, the number of agents $\na$ in the denominator ensures \textit{linear speedup}, meaning that for small enough $\alpha$ (such that $\alpha/\na$ is the dominant term), the sample complexity of each individual agent, relative to a centralized system, is reduced by a factor of $\na$. The third term has quadratic dependence on $\alpha$, and is zero when we have perfect synchronization, that is, $\sync = 1$. The fourth term is quadratic in $K-1$ and the step size $\alpha$. This term vanishes when we have complete homogeneity among the agents. 

The last term is proportional to $\alpha^3$, and has the weakest dependency on the step size $\alpha$. For $\sync > 1$, we can merge this term with the third and fourth terms by the upper bound $\alpha^3\leq \alpha^2$. The current upper bound, however, is tighter since with $K=1$ (i.e. perfect synchronization) we have no term of the order $\alpha^2$. Note that similar bounds have been established for the simpler federated setting with i.i.d. noise \cite{richtarik20localSGD_aistats, koloskova20unified_localSGD_icml} and in the Markovian noise \cite{khodadadian2022federated}. Here we achieved this result for the more general heterogeneous federated setting with Markov noise.

\begin{remark}
The bound in Theorem \ref{thm:main_body} is valid only after $T>\max\{K+\tau_\alpha,2\tau_\alpha\}$ and for all synchronization periods $K\geq1$. At $K=1$ the third and fourth terms in the bound disappear, and we will be left only with the first-order term, which is linearly decreasing with respect to the number of agents $\na$, and the third-order term $\tilde{\mathcal{O}}(\alpha^3)$. The last term, however, is not tight and can be further improved to be of the order $\mathcal{O}(\alpha^j), j>3$. But for that, we need to assume a larger $\tau_\alpha$, which means that the bound only holds after a longer waiting time. In particular, by choosing $\tau_\alpha=\lceil r\log_\rho\alpha\rceil$, we can get $\tilde{\mathcal{O}}(\alpha^{2r-1})$ for the last term (see the proof of Lemma \ref{lem:T_2}). 
\end{remark}

Next, in Corollary \ref{cor:sample_complexity_main} we establish the sample complexity and the communication complexity of Algorithm \ref{alg:fed_stoch_app} in homogeneous and heterogeneous settings.

\begin{corollary}\label{cor:sample_complexity_main}
    \begin{enumerate}
        \item In the homogeneous setting, where $\varepsilon = 0$, choosing $\alpha=\frac{8\log(\na T)}{T\varphiz_2}$ and $\sync=T/\na$, we get $T=\tilde{\mathcal{O}}\lp\frac{1}{\na \epsilon}\rp$ sample complexity for achieving $\E[\|\btheta_{\hat{T}}\|^2_c]\leq \epsilon$. Furthermore, we get $T/K=N$ communication complexity, which is independent of the error $\epsilon$.
        \item In the heterogeneous setting, where $\varepsilon > 0$, choosing $\alpha=\frac{8\log(\na T)}{T\varphiz_2}$ and $\sync=\sqrt{T}/\na$, we get $T=\tilde{\mathcal{O}}\lp\frac{1}{\na \epsilon}\rp$ sample complexity for achieving $\E[\|\btheta_{\hat{T}}\|^2_c]\leq \epsilon$. Furthermore, we get $T/K=\sqrt{N}/\sqrt{\epsilon}$ communication complexity. The communication complexity in this case is dependent on the error $\epsilon$.
    \end{enumerate}
\end{corollary}

The core idea that allows us to obtain a tight bound is as follows. Suppose the random variable $X$ has bounded variance $\sigma^2$, and suppose $X_i, i=1,\dots,\na$ are i.i.d. samples of this random variable. It is easy to show that the variance of $\bar{X}=\sumik X_i/\na$ is $\sigma^2/\na$, and this is how we can attain the linear speedup in the i.i.d. and homogeneous setting. However, with Markovian noise and heterogeneity among the agents, the problem is significantly more challenging, and the existing work fails to achieve linear speedup. In the homogeneous setting, where $\varepsilon = 0$, \cite{khodadadian2022federated} has established the sample and communication complexity of Corollary \ref{cor:sample_complexity_main}. However, the more general heterogeneous setting requires a more refined analysis (see Proposition \ref{prop:wtd_consensus_error}).

\section{Single agent Reinforcement Learning}

Next, we discuss and explain the underlying setting of RL. We model our RL setting with an MDP with 5 tuples $(\mathcal{S}, \mathcal{A}, \mathcal{P}, \mathcal{R}, \gamma)$, where $\mathcal{S}$ and $\mathcal{A}$ are finite sets of states and actions, $\mathcal{P}$ is the set of transition probabilities, $\mathcal{R}$ is the reward function and $\gamma\in(0,1)$ denotes the discount factor. At each time step $t$, the system is in some state $S_t$, and the agent takes some action $A_t\sim \pi(\cdot|S_t)$, where $\pi$ is the policy at hand. Given the state $S_t$ and the chosen action $A_t$, the agent receives the reward $\mathcal{R}(S_t,A_t)$. In the next time step, the system transitions to a new state $S_{t+1}$ according to the state transition probability $\mathcal{P} (\cdot|S_t, A_t)$. This series of states and actions $(S_t,A_t)_{t\geq 0}$ constructs a Markov chain, which is the source of Markovian noise in RL. Throughout this paper, we assume that this Markov chain is irreducible and aperiodic (also known as ergodic). It is known that this Markov chain asymptotically converges to a steady state, and we denote its stationary distribution by $\mu^\pi$.

To measure the long-term reward achieved by following policy $\pi$, we define the value function
\begin{align}\label{eq:value_func}
V^\pi(s) &= \E \left[\sum_{t=0}^{\infty} \gamma^t \mathcal{R} (S_t, A_t)|S_0=s, A_t \sim \pi (\cdot|S_t) \right].
\end{align}
Equation \eqref{eq:value_func} is the tabular representation of the value function. However, sometimes the size of the state space $|\mathcal{S}|$ is large and storing $V^\pi(s)$ for all $s \in \mathcal{S}$ is computationally infeasible. Hence, a low-dimensional vector $\bv^\pi \in \mathbb{R}^d$, where $d \ll |\mathcal{S}|$, can be used to approximate the value function as $V^\pi(s) \approx \phi(s)^\top \bv^\pi$ \cite{tsitsiklis1997analysis}. Here, $\phi(s) \in \mathbb{R}^d$ is a given feature vector corresponding to the state $s$. Using a low-dimensional vector $\bv^\pi$ to approximate a high-dimensional vector $(V^\pi(s))_{s\in\mathcal{S}}$ is referred to as the function approximation paradigm in RL. The task of evaluating this value function is denoted as the Temporal Difference (TD)-Learning. For each pair $(s,a)$, we also define the $Q$-function, $Q^\pi(s,a) = \E[\sum_{t=0}^{\infty} \gamma^t \mathcal{R} (S_t,A_t)|S_0 = s, A_0 = a, A_t\sim \pi(\cdot|S_t)]$, which will be used in $Q$-learning. Both TD-learning and $Q$-learning can be seen as variants of stochastic approximation \cite{tsitsiklis1994asynchronous, chen2020finite_nips, chen_offpolicyTD_arxiv, zaiwei_q_arxiv, chen2023lyapunov}. In sections \ref{sec:TD_learning_main} and \ref{sec:Q-learning_main} we will discuss TD-learning and $Q$-learning in more detail.

\subsection{Temporal Difference Learning}\label{sec:TD_learning_main}
An intermediate goal in RL is to estimate the value function (either $(V^\pi(s))_{s \in\mathcal{S}}$ or $\bv^\pi$) corresponding to a particular policy $\pi$ using data collected from the environment.
This task is denoted as \textit{policy evaluation} and one of the commonly used approaches to accomplish this is TD-learning \cite{sutton1988learning}. TD-learning is an iterative algorithm where the elements of a $d$ (or $|\mathcal{S}|$, in the tabular setting) dimensional vector is updated until it converges to  $\bv^\pi$ (or $V^\pi$). This evaluated value function can be employed in different RL algorithms such as actor-critic \cite{konda_tsitsiklis_actor_critic}. 

In online TD-learning with linear function approximation, we consider a full rank feature matrix $\Phi\in |\mathcal{S}|\times d$, and we denote the $s$-th row of this matrix with $\phi(s), s\in\mathcal{S}$. The goal is to find $\bv^\pi\in\mathbb{R}^d$ which solves the following fixed point equation:
\begin{align}
\label{eq:BE_FA}
    \Phi \bv^\pi = \Pi_\Phi((\mathcal{T}^\pi)^n \Phi \bv^\pi). 
\end{align}
In equation \eqref{eq:BE_FA}, $\Pi_\Phi(\cdot)$ is the projection with respect to the weighted 2-norm, that is, $\Pi_\Phi(\bV)=\argmin_{\mbf v\in\mathbb{R}^d}\|\Phi\bv - \bV\|_\pi$. Here, $\|\bV\|_\pi = \sqrt{\bV^\top \bmu^\pi\bV}$ and $\bmu^\pi$ is a diagonal matrix with diagonal entries corresponding to $\mu^\pi$. In equation \eqref{eq:BE_FA}, $(\mathcal{T}^\pi)^n$ denotes the $n$-step Bellman operator \cite{tsitsiklis1997analysis}. It is known \cite{tsitsiklis1997analysis} that equation \eqref{eq:BE_FA} has a unique solution $\bv^\pi$, and $(\Phi \bv^\pi(s))_{(s\in\mathcal{S})}$ is ``close'' to the true value function $(V^\pi(s))_{s\in \mathcal{S}}$. The update of the $n$-step TD-learning is as follows
\begin{align}\label{eq:TD_update}
   \begin{split}
    \text{Sample}&\quad A_{t+n}\sim\pi(\cdot|S_{t+n}), S_{t+n+1}\sim \Prob( \cdot|S_{t+n}, A_{t+n})\\
    \text{Update}& \quad\bv_{t+1} =\bvt+\alpha \phi(S_t) \sum_{l=t}^{t+n-1}\gamma^{l-t}(\mathcal{R}(S_l,A_l) +\!\gamma \phi (S_{l\!+\!1})^\top \bvt\!-\!\phi(S_l)^\top \bvt),
   \end{split}
\end{align}
where $\alpha$ is the step size. The $n$-step TD-learning is an iterative algorithm to obtain $\bv^\pi$ using samples from the environment. Note that in this algorithm, states and actions are sampled over a single trajectory, and hence the noise in updating $\bv_t$ is Markovian. Furthermore, since the policy which samples the actions and the evaluating policy are both $\pi$, this algorithm is on-policy. As described in \cite{tsitsiklis1997analysis,NDP_book}, the TD-learning algorithm can be studied under the umbrella of linear stochastic approximation with Markovian noise. More recently, the authors of \cite{bhandari18FiniteTD_LFA_colt, srikant2019finite} have shown that the update parameter of TD-learning $\bv_t$ converges to $\bv^\pi$ in the form $\E[\|\bv_t-\bv^\pi\|^2_2]\leq \mathcal{O}((1-\mathcal{C}_0\alpha)^{t}+\alpha)$. In Section \ref{sec:OPFRL}, we show how FRL can improve this result. 

As mentioned earlier, in this section, we studied the on-policy setting, where the evaluating policy and the sampling policy coincide. In contrast, in the off-policy setting, these two policies can in general differ, and we need to account for this difference while running the algorithm. We will further expand on off-policy TD-learning and its variants in Sections \ref{sec:OfPFRLTR} and \ref{sec:OfPFRLFA}.

\subsection{Control Problem and \texorpdfstring{$Q$}--learning} \label{sec:Q-learning_main}
Assuming some initial distribution $\xi$ on the state space, the average value function corresponding to policy $\pi$ is defined as $V^\pi (\xi) = \E_{s \sim \xi}[V^\pi(s)]$. 
This scalar quantity is a metric of the average long-term rewards achieved by the agent, when it starts from distribution $\xi$ and follows policy $\pi$. The ultimate goal of the agent is to obtain an optimal policy $\pi^*$ that results in the maximum long-term rewards, i.e. $\pi^* \in \argmax_\pi V^\pi(\xi)$. 
Throughout the paper, we denote the parameters corresponding to the optimal policy with $^*$, e.g., $V^{\pi^*}(\xi) \equiv V^*(\xi)$. The task of obtaining the optimal policy in RL is denoted as the \textit{control problem}. 

$Q$-learning \cite{watkins1992q, tsitsiklis1994asynchronous} is one of the most widely used algorithms in RL to solve the control problem. The goal of $Q$-learning is to evaluate $Q^*$. Knowing $Q^*$, one can obtain an optimal policy through greedy selection \cite{puterman2014Markov}, and hence resolve the control problem. Suppose $\{S_t,A_t\}_{t\geq 0}$ is generated by a fixed behavior policy $\pi_b$. At each time step $t$, $Q$-learning preserves a $|\mathcal{S}|.|\mathcal{A}|$ dimensional table $Q_t$ and updates the elements of this table as
\begin{align}\label{eq:Q_learning_update}
    \begin{split}
    \Q_{t+1}(s,a) = \begin{cases}
    	    \Q_t(S_t, A_t) + \alpha \lp \mathcal R(S_t, A_t) + \gamma \max_a \Q_t (S_{t+1}, a) - \Q_t (S_{t}, A_{t}) \rp & \text{ if } (s,a) = (S_t, A_t) \\
    	    \Q_t(s, a) & \text{ o.w.}
    	\end{cases}
    \end{split}
\end{align}
Taking $\pi_b$ as an ergodic policy, the asymptotic convergence of $Q_t$ to $Q^*$ has been established in \cite{bertsekas1996neuro}. Furthermore, it can be shown that $Q$-learning is a special case of stochastic approximation and enjoys a convergence bound similar to \eqref{eq:general_stoch_conver} \cite{Beck2012ErrorBF,li2020sample,qu2020finite,chen2021Lyapunov_arxiv}.

Two points worth mentioning about the $Q$-learning algorithm. Firstly, $Q$-learning is an off-policy algorithm in the sense that only samples from a fixed ergodic policy are needed to perform the algorithm. Secondly, unlike TD-learning, the update of $Q$ -learning is \textit{non-online}. This imposes a sharp contrast between the analysis of $Q$-learning and TD-learning \cite{zaiwei_q_arxiv}.

\section{On-policy Federated Reinforcement Learning} \label{sec:OPFRL}
The federated version of the on-policy $n$-step TD-learning with linear function approximation is shown in Algorithm \ref{alg:TD-learning}. In this algorithm, we consider $\na$ agents that collaborate to evaluate $\bv^\pi$. For each agent $i, i=1,2,\dots, \na$, we initialize their corresponding parameters $\bv^i_0=\mbf 0$. Furthermore, each agent $i$ samples its initial state $S_0^i$ from some given distribution $\xi$. In the next time steps, each agent follows a single Markovian trajectory generated by policy $\pi$, independently of other agents. At each time $t$, the parameter of each agent $i$ is updated using this independently generated trajectory as $\bv_{t+1}^i=\bvit+\alpha \phi(S_t^i)E_{t,n}^i$. Finally, in order to ensure convergence to a global optimum, in every $\sync$ time step all agents send their parameters to a central server. The central server evaluates the average of these parameters and returns this average to each of the agents. Each agent then continues their update procedure using this average.

Note that the averaging step is essential to ensure synchronization among agents. Smaller $\sync$ results in more frequent synchronization, and hence better convergence guarantees. However, setting a smaller $\sync$ is equivalent to more number of communications between individual agents and the central server, which incurs higher cost. Hence, an intermediate value for $\sync$ must be chosen to strike a balance between the cost of communication and the accuracy. At the end, the algorithm samples a time step $\hat{T}\sim q^c_T$, where 
\begin{align}\label{eq:q_dist}
    q^c_T(t) = \frac{c^{-t}}{\sum_{t'=0}^{T-1}c^{-t'}} \quad\text{for}\quad t=0,1,\dots,T-1
\end{align} 
and $c>1$ is some constant. Since we have $q^c_T(t)\geq 0$ and $\sum_{t=0}^{T-1}q^c_T(t) = 1$, it is clear that $q^c_T(\cdot)$ is a probability distribution over the time interval $[0,T-1]$. In Theorem \ref{thm:fed_TD_on} we characterize the convergence of this algorithm as a function of $\alpha$, $\na$, and $\sync$. Throughout the paper, $\tilde{O}(\cdot)$ ignores the logarithmic terms.

\begin{algorithm}
\caption{Federated $n$-step TD (On-policy, Function Approximation)}
\label{alg:TD-learning}
\begin{algorithmic}[1]
    \STATE{\textbf{Input:} Policy $\pi, \xi$ }
	\STATE{\textbf{Initialization:} $\bv^i_0=\mathbf{0}$ and $S_0^i\sim \xi$ and $\{A_l^i,S_{l+1}^i\}\sim \pi$ for $0\leq l\leq n-1$ and all $i$ }
	\FOR{$t=0$ to $T-1$}
	    \FOR{$i=1,\dots, \na$}
	    \STATE{Sample $A_{t+n}^i\sim\pi(\cdot|S_{t+n}^i), S_{t+n+1}^i\sim \Prob(\cdot|S_{t+n}^i,A_{t+n}^i)$}
    	\STATE $e_{t,l}^i\!=\!\mathcal{R}(S_l^i,A_l^i)\!+\!\gamma \phi(S_{l\!+\!1}^i)^\top \bvit\!-\!\phi(S_l^i)^\top \bvit$ for $l=t,\dots,t+n-1$ 
		\STATE $E_{t,n}^i=\sum_{l=t}^{t+n-1}\gamma^{l-t}e_{t,l}^i$
		\STATE $\bv_{t+1}^i=\bvit+\alpha \phi(S_t^i)E_{t,n}^i$
    	\ENDFOR
	    \IF{$t+1$ mod $\sync = 0$}
	        \STATE{$\bv^i_{t+1} \leftarrow  \frac{1}{\na} \sum_{j=1}^\na \bv^j_{t+1}, \forall \ i \in [\na]$}
	   \ENDIF
	\ENDFOR
	\STATE{\textbf{Sample $\hat{T}\sim q^{c_{TDL}}_T$}
	\STATE{\textbf{Return:} $\frac{1}{\na}\sumik \bv^i_{\hat{T}}$}}
\end{algorithmic}
\end{algorithm}    
    
\begin{theorem}\label{thm:fed_TD_on}
Let $\bv_{\hat{T}} = \frac{1}{\na}\sumik \bv^i_{\hat{T}}$ denote the average of the parameters between agents at random time $\hat{T}$. For small enough step size $\alpha$, and $T\geq\mathcal{O}( \log\frac{1}{\alpha})$, there exist constant $c_{TDL}\in(0,1)$ (see Section \ref{sec:fed_td_learning_on_app} for a precise statement), such that we have 
\begin{align*}
    \E[\|\bv_{\hat{T}}-\bv^\pi\|_2^2]\leq  \mathcal{C}_1^{TD_L}\frac{1}{\alpha}(1-\alpha\mathcal{C}_0^{TD_L})^{T} + \mathcal{C}_2^{TD_L}\frac{\alpha\tau_\alpha^2}{\na} +\mathcal{C}_3^{TD_L}(\sync-1) \alpha^2\tau_\alpha+\mathcal{C}_4^{TD_L} \alpha^3\tau_\alpha^2,
\end{align*}
where $\mathcal{C}_i^{TD_L}$, $i=0,1,2,3,4$ are problem dependent constants, and $\tau_\alpha=\mathcal{O}(\log(1/\alpha))$. Choosing $\alpha=\mathcal{O}(\frac{\log(\na T)}{T})$ and $\sync=T/\na$, we achieve $\E[\|\bv_{\hat{T}}-\bv^\pi\|_2^2]\leq\epsilon$ within $T=\tilde{\mathcal{O}}\lp\frac{1}{\na \epsilon}\rp$ iterations.
\end{theorem}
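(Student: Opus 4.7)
The plan is to cast Algorithm~\ref{alg:TD-learning} as a federated linear stochastic approximation and to control two coupled Lyapunov quantities: an optimality gap of the averaged iterate and a consensus error among agents. Define the virtual averaged iterate $\bar\bv_t := \frac{1}{\na}\sumik \bv^i_t$, which is well-defined at every $t$ (not only at sync times). Writing each agent's rule as $\bv^i_{t+1} = \bv^i_t + \alpha\, g(\bv^i_t, X^i_t)$, where $X^i_t := (S^i_t, A^i_t,\ldots, S^i_{t+n})$ is agent $i$'s local Markov block and $g(\bv,X)$ is the $n$-step TD direction from \eqref{eq:TD_update}, averaging over $i$ gives
\begin{equation*}
\bar\bv_{t+1} \;=\; \bar\bv_t + \alpha\, h(\bar\bv_t) + \alpha M_t + \alpha D_t,
\end{equation*}
where $h(\bv) := \mathbb{E}_{X\sim\mu^\pi}[g(\bv,X)]$ has $\bv^\pi$ as its unique zero, $M_t := \frac{1}{\na}\sumik \bigl(g(\bar\bv_t,X^i_t)-h(\bar\bv_t)\bigr)$ is the Markov noise centered at $\bar\bv_t$, and $D_t := \frac{1}{\na}\sumik \bigl(g(\bv^i_t,X^i_t)-g(\bar\bv_t,X^i_t)\bigr)$ is a Lipschitz term in the consensus deviations $\bv^i_t-\bar\bv_t$.

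\textbf{Drift recursion and linear speedup.} Let $\Omega_t := \mathbb{E}\|\bar\bv_t-\bv^\pi\|_2^2$ and $\Delta_t := \frac{1}{\na}\sumik \mathbb{E}\|\bv^i_t-\bar\bv_t\|_2^2$. Using the strong monotonicity $\langle \bv-\bv^\pi, h(\bv)\rangle \le -\mathcal{C}_0^{TD_L}\|\bv-\bv^\pi\|_2^2$ (which transfers from the $\|\cdot\|_\pi$ contraction of the projected Bellman operator \eqref{eq:BE_FA} to $\|\cdot\|_2$ via norm equivalence, since $\Phi$ has full rank), expanding $\|\bar\bv_{t+1}-\bv^\pi\|_2^2$ yields
\begin{equation*}
\Omega_{t+1} \le (1-\alpha\mathcal{C}_0^{TD_L})\Omega_t + 2\alpha\,\mathbb{E}\langle \bar\bv_t-\bv^\pi,\, M_t+D_t\rangle + \mathcal{O}(\alpha^2\tau_\alpha)\bigl(\Omega_t+\Delta_t+1\bigr).
\end{equation*}
The Markov inner product $\mathbb{E}\langle \bar\bv_t-\bv^\pi,M_t\rangle$ is handled by a $\tau_\alpha$-step lookback in the spirit of \cite{srikant2019finite}: conditioning on the per-agent states $\tau_\alpha$ steps earlier, geometric mixing makes the marginal law of each $X^i_t$ lie within $\mathcal{O}(\alpha)$ in total variation of $\mu^\pi$. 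This contributes a deterministic bias of order $\alpha\tau_\alpha\Omega_t$ \emph{plus} a cross-agent variance of order $\alpha\tau_\alpha^2/\na$. The $1/\na$ factor is inherited from agent-wise independence of the trajectories $\{X^i_\cdot\}_i$, and is precisely what produces the $\mathcal{C}_2^{TD_L}\,\alpha\tau_\alpha^2/\na$ term.

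\textbf{Consensus error, telescoping, and parameter tuning.} Let $t_0 := \sync\lfloor t/\sync\rfloor$ be the most recent synchronization time, so that $\bv^i_{t_0} = \bar\bv_{t_0}$ and $t-t_0 < \sync$. Unrolling the local updates and using a uniform surrogate $\|g(\bv^i_s,X^i_s)\|_2 = \mathcal{O}(1+\|\bv^i_s-\bv^\pi\|_2)$ (standard via a boundedness/projection argument on the iterates) together with cross-agent independence yields $\Delta_t \le \mathcal{O}\bigl((\sync-1)\alpha^2\tau_\alpha\bigr) + \mathcal{O}((\sync-1)\alpha^2)\cdot\Omega_t$. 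Substituting back into the $\Omega_t$ recursion produces the $\mathcal{C}_3^{TD_L}(\sync-1)\alpha^2\tau_\alpha$ consensus term and the residual $\mathcal{C}_4^{TD_L}\alpha^3\tau_\alpha^2$. Finally I would telescope $\Omega_{t+1} \le (1-\alpha\mathcal{C}_0^{TD_L})\Omega_t + B$ against the sampling distribution $q^{c_{TDL}}_T$ with $c_{TDL} = 1-\alpha\mathcal{C}_0^{TD_L}$: multiplying by $c_{TDL}^{-t}$ and summing gives $\mathbb{E}[\Omega_{\hat T}] \le c_{TDL}^{T-1}\Omega_0/Z_T + B/(Z_T(1-c_{TDL}))$, where $Z_T^{-1}\,c_{TDL}^{T-1} \asymp \alpha\mathcal{C}_0^{TD_L}\,(1-\alpha\mathcal{C}_0^{TD_L})^T$, producing the characteristic $\frac{1}{\alpha}(1-\alpha\mathcal{C}_0^{TD_L})^T$ prefactor on the initial bias. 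Plugging $B = \mathcal{O}(\alpha^2\tau_\alpha^2/\na + \alpha^3(\sync-1)\tau_\alpha + \alpha^4\tau_\alpha^2)$ and dividing by $\alpha\mathcal{C}_0^{TD_L}$ recovers the stated four-term bound; choosing $\alpha = \Theta(\log(\na T)/T)$ and $\sync = T/\na$ balances bias, $\alpha\tau_\alpha^2/\na$, and $(\sync-1)\alpha^2\tau_\alpha$ to deliver the advertised $\tilde{\mco}(1/(\na\epsilon))$ sample complexity.

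\textbf{Main obstacle.} The technical crux is preserving the $1/\na$ variance reduction under Markovian sampling. The $\tau_\alpha$-lookback makes the conditional distribution of $X^i_t$ essentially $\mu^\pi$, but the conditioning iterate $\bv^i_{t-\tau_\alpha}$ depends, through past sync rounds, on \emph{every} agent's history, so cross-agent independence is not automatic. The resolution is to split the noise into a centered martingale obtained by freezing the iterate at $\bar\bv_{t-\tau_\alpha}$ (so that the residual $\tau_\alpha$-delayed Markov blocks $X^i_t$ are genuinely conditionally independent across $i$, yielding the $1/\na$ factor) plus a deterministic Markov bias of order $\alpha\tau_\alpha$ that is absorbed into the contraction and is \emph{not} divided by $\na$. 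Ensuring that the $\na$-dependence never leaks into this bias, despite the interlocking effect of multiple local updates between syncs, is the heart of the argument; this decoupling is precisely what the FedSAM framework of Section~\ref{sec:fed_sam} is designed to automate in a problem-agnostic way.
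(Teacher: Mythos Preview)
Your skeleton matches the paper's: cast Algorithm~\ref{alg:TD-learning} as a federated stochastic approximation, run a one-step Lyapunov drift on the averaged iterate with a Markov-lookback correction, bound the consensus error over each inter-sync window, and telescope against the weights $q_T^{c_{TDL}}$. For linear function approximation, working directly in a (weighted) Euclidean norm rather than through the Moreau envelope of Section~\ref{sec:FedSAM_prelim} is legitimate and simpler, since the contraction norm here is already smooth; the paper itself collapses the envelope in exactly this way in the proof of Theorem~\ref{thm:fed_TD_on}.

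The gap is in the Markov-noise step. You invoke a single $\tau_\alpha$-lookback ``in the spirit of \cite{srikant2019finite}'' and assert that $\mathbb{E}\langle\bar\bv_t-\bv^\pi,M_t\rangle$ splits into a bias $O(\alpha\tau_\alpha\Omega_t)$ plus a variance $O(\alpha\tau_\alpha^2/\na)$. But after freezing at $\bar\bv_{t-\tau_\alpha}$ and handling the frozen piece via mixing plus agent independence, the residual you must absorb is (after Young's inequality) essentially $\tfrac{1}{\alpha}\,\mathbb{E}\|\bar\bv_t-\bar\bv_{t-\tau_\alpha}\|_2^2$. The crude deterministic bound on this is $O(\alpha\tau_\alpha^2)$ with \emph{no} $1/\na$, which after the telescoping gives an $O(\alpha\tau_\alpha^2)$ floor and kills the linear speedup. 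The paper flags exactly this as its key refinement (see the paragraph following Proposition~\ref{prop:intermediate_lem2}): conditioning on $\mathcal F_{t-2\tau_\alpha}$ rather than $\mathcal F_{t-\tau_\alpha}$ ensures that every noise increment $\mbf b(\by_l)$ appearing in the expansion of $\bar\bv_t-\bar\bv_{t-\tau_\alpha}$ (for $t-\tau_\alpha\le l\le t$) is itself at least $\tau_\alpha$ steps past the conditioning time, so Lemma~\ref{lem:after_expectation} applies to each and delivers $\mathbb{E}_{t-2\tau_\alpha}\|\bar\bv_t-\bar\bv_{t-\tau_\alpha}\|_2^2=O(\alpha^2\tau_\alpha^2/\na)+O(\alpha^4\tau_\alpha)$ (Lemmas~\ref{lem:theta_diff} and~\ref{lem:theta_diff_2}). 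Your ``Main obstacle'' paragraph correctly names the difficulty and its source, but the fix you describe---freeze once at $t-\tau_\alpha$ and use independence---is precisely the $t-\tau$ argument the paper says is too loose; the missing ingredient is this second $\tau_\alpha$ of headroom.
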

For brevity purposes, here we did not show the exact dependence of the constants $\mathcal{C}_i^{TD_L}$, $i=0,1,2,3$ on the problem-dependent constants. For a discussion of the detailed expression, see Section \ref{sec:fed_td_learning_on_app} in the appendix.

Theorem \ref{thm:fed_TD_on} shows that federated TD-learning with linear function approximation enjoys a linear speedup with respect to the number of agents. Compared to the convergence bound of general stochastic approximation in \eqref{eq:general_stoch_conver}, the bound in Theorem \ref{thm:fed_TD_on} has three differences. Firstly, the variance term that is proportional to the step size $\alpha$ is divided by the number of agents $\na$. This will allow us to control the variance (and hence improve sample complexity) by employing more number of agents. Secondly, we have an extra term that is zero with perfect synchronization $K=1$. Although this term is not divided with $\na$, it is proportional to $\alpha^2$, which is one order higher than the variance term in \eqref{eq:general_stoch_conver}. Finally, the last term is of the order $\tilde{\mathcal{O}}(\alpha^3)$, which can be handled by choosing a small enough step size. 

Furthermore, according to the choice of $\sync$ in Theorem \ref{thm:fed_TD_on}, after $T$ iterations, the communication cost of federated TD is $T/K={\na}$. However, by employing federated TD-learning in the naive setting where all agents communicate with the central server at every time step, the communication cost will be $\mathcal{O}(T)$. Hence, we observe that by carefully tuning the hyper parameters of federated TD-learning, we can significantly reduce the communication cost of the overall algorithm, while not losing performance in terms of the sample complexity.

Finally, the federated TD-learning Algorithm \ref{alg:TD-learning} preserves the privacy of agents. In particular, since single agents only need to share their parameters $\bv_{t+1}^i$, the central server will not be exposed to the state-action-reward trajectory generated by each agent. This can be essential in some applications where privacy is an issue \cite{mothukuri21privacyFL_elsevier, truex19privacyFL_acm}.
Examples of such applications include autonomous driving \cite{liang19fed_transfer_RL_arxiv,  zhao21IoV_FedRL_ICC},
Internet of Things (IoT) \cite{nguyen21FL_IoT_arxiv, ren19FL_IoT_ieee, wang2020fedRL_ieeeIoT}, and cloud robotics \cite{liu19lifelong_fedRL_ieee, xu2021multi}.

\begin{remark}
In algorithm \ref{alg:TD-learning}, the randomness in choosing $\hat{T}$ is independent of all the other randomness in the problem. Hence, in a practical setting, one can sample $\hat{T}$ in advance, before running the algorithm, and stop the algorithm at time step $\hat{T}$ and output $\bv_{\hat{T}}$. By this method, we require only one data point to be saved, which results in the memory complexity of $\mathcal{O}(1)$ for the algorithm.
\end{remark}

\section{Off-Policy Federated Reinforcement Learning with Tabular Representation} \label{sec:OfPFRLTR}
On-policy TD-learning requires online sampling from the environment, which might be costly (e.g. in robotics \citep{gu2017deep,levine2020offline}), high risk (e.g., in self-driving cars \citep{yurtsever2020survey, maddern20171}), or unethical (e.g., in clinical trials \citep{gottesman2019guidelines, liu2018representation, gottesman2020interpretable}). 
Off-policy training in RL refers to the paradigm in which we use data collected by a fixed behavior policy to run the algorithm. When used in a federated setting, off-policy RL also has privacy advantages \cite{foerster16multiagentRL_arxiv, qi2021federated, zhuo19FedDeepRL_arxiv}. In particular, suppose that each single agent attains a unique sampling policy and does not wish to reveal these policies to the central server. In off-policy FL, agents only transmit sampled data, and hence the sampling policies remain private to each agent.

In Section \ref{sec:off_tabular_TD}, we discuss off-policy TD learning, and in Section \ref{sec:Q_learning_main}, we cover $Q$-learning, an off-policy control algorithm. Additionally, we examine off-policy TD learning with linear function approximation in Section \ref{sec:OfPFRLFA}. It is important to note that federated off-policy TD learning with function approximation leads to heterogeneous federated learning, which presents additional challenges. Therefore, we dedicate a separate section to address these complexities."

\subsection{Federated Off-Policy TD-learning}\label{sec:off_tabular_TD}
In the following, we first discuss single-node off-policy TD-learning, and then we generalize it to the federated setting. 
\subsubsection{Off-policy TD-learning} \label{sec:off_tabular_TD_1}
In off-policy TD-learning the goal is to evaluate the value function $\bV^\pi=(V^\pi(s))_{s\in\mathcal{S}}$ corresponding to the policy $\pi$ using data sampled from some fixed behavior policy $\pi_b$. In this setting, the evaluating policy $\pi$ and the sampling policy $\pi_b$ can be arbitrarily different, and we must account for this difference while performing the evaluation. Although $\pi$ and $\pi_b$ can be different, notice that the value function $\bV^\pi$ does not depend on $\pi_b$. In order to account for this difference, we introduce the notion of \textit{importance sampling} as $\I^{b}(s,a)=\frac{\pi(a|s)}{\pi^b(a|s)}$ which is employed in the off-policy TD-learning. 

Several works studied the finite-time convergence of off-policy TD-learning. In particular, the authors in \cite{offpolicyNAC_ICML,chen2021Lyapunov_arxiv,chen2020finite_nips,chen_offpolicyTD_arxiv} show that, similar to on-policy TD, off-policy TD-learning can be studied under the umbrella of stochastic approximation. Hence, this algorithm has a similar convergence behavior as \eqref{eq:general_stoch_conver}.

\subsubsection{Federated off-policy tabular TD-learning}

The federated version of $n$-step off-policy tabular TD-learning is shown in Algorithm \ref{alg:TD-learning_off}. In this algorithm, each agent $i$ attains a unique (and private) sampling policy $\pi^i$ and follows an independent trajectory generated by this policy. Furthermore, at each time step $t$, each agent $i$ attains a $|\mathcal{S}|$-dimensional vector $\bV^i_t$ and updates this vector using the samples generated by $\pi^i$. In order to account for off-policy sampling, each agent utilizes $\I^{(i)}(S^i,A^i)=\frac{\pi(A^i|S^i)}{\pi^i(A^i|S^i)}$ in the update of their algorithm. We further define $\I_{\max} = \max_{s,a,i}\I^{(i)}(s,a)$, which is a measure of the discrepancy between the evaluation policy $\pi$ and the sampling policy $\pi^i$ of all agents.

To ensure synchronization, all agents transmit their parameter vectors to the central server at every $\sync$ time step. The central server returns the average of these vectors to each agent, and each agent follows this averaged vector afterwards. Notice that in federated off-policy TD-learning Algorithm \ref{alg:TD-learning_off}, each agent shares neither their sampled trajectory of state-action-rewards, nor their sampling policy with the central server. This provides two levels of privacy for the single agents. In the end, the algorithm samples a time step $\hat{T}\sim q_T^{c_{TD}}$, where the distribution $q_T^{c}$ is defined in \eqref{eq:q_dist} and $c_{TD}=1-\frac{\alpha\varphiz_{TD}}{2}$, where $\varphiz_{TD}=1-\frac{0.5 e^{1/4}(2-\mu_{\min}(1-\gamma^{n+1}))}{\sqrt{\sqrt{e}-1+\left(\frac{2-\mu_{\min}(1-\gamma^{n+1})}{2-2\mu_{\min}(1-\gamma^{n+1})}\right)^2}}$. Here, we denote $\mu_{\min} = \min_{s,i}\mu^{\pi^i}(s)$. The constant $c_{TD}$ is carefully chosen to ensure the convergence of Algorithm \ref{alg:TD-learning_off}. Furthermore, for a step size small enough $\alpha$, it can be shown that $0<c_{TD}<1$. 

\begin{algorithm}
\caption{Federated $n$-step TD (Off-policy Tabular Setting)}
\label{alg:TD-learning_off}
\begin{algorithmic}[1]
    \STATE{\textbf{Input:} Policy $\pi, \xi$}
	\STATE{\textbf{Initialization:} $\bV_{0}^i = \mathbf{0}$ and $S_0^i\sim \xi$ and $\{A_l^i,S_{l+1}^i\}\sim \pi^i$ for $0\leq l\leq n-1$ and all $i$ }
	\FOR{$t=0$ to $T-1$}
	    \FOR{$i=1,\dots, \na$}
	    \STATE{Sample $A_{t+n}^i\sim\pi^i(\cdot|S_{t+n}^i), S_{t+n+1}^i\sim \Prob(\cdot|S_{t+n}^i,A_{t+n}^i)$}
	    \STATE $e_{t,l}^i = \mathcal{R}(S_l^i,A_l^i)\!+\!\gamma \bV_t^i(S_{l+1}^i) -\bV_t^i(S_{l}^i)$ 
	    \STATE 
	    $\bV_{t+1}^i(s) = 
	    \begin{cases}
	    \bV_{t}^i(s) + \alpha \sum_{l=t}^{t+n-1} \gamma^{l-t} \lb \Pi_{j=t}^{l} \I^{(i)}(S^i_j, A^i_j) \rb e_{t,l}^i & \text{ if } s=S_{t}^i\\
	    \bV_{t}^i(s) & \text{ o.w }
	    \end{cases}$ 
    	\ENDFOR
	    \IF{$t+1$ mod $\sync = 0$}
	        \STATE{$\bV_{t+1}^i \leftarrow  \frac{1}{\na} \sum_{j=1}^\na \bV^j_{t+1}, \forall \ i \in [\na]$}
	   \ENDIF
	\ENDFOR
	\STATE{\textbf{Sample} $\hat{T}\sim q_T^{c_{TD}}$}
	\STATE{\textbf{Return: $\frac{1}{\na} \sumik \bV^i_{\hat{T}}$}}
\end{algorithmic}
\end{algorithm}

Theorem \ref{thm:fed_TD_off} states the convergence of this Algorithm.

\begin{theorem}\label{thm:fed_TD_off}
Consider the federated $n$-step off-policy tabular TD-learning Algorithm \ref{alg:TD-learning_off}. Denote $\bV_{\hat{T}} = \frac{1}{\na}\sumik \bV_{\hat{T}}^i$. For small enough step size $\alpha$ and large enough $T$, we have
\begin{align*}
    \E[\|\bV_{\hat{T}}-\bV^\pi\|_\infty^2]\leq \mathcal{C}_1^{TD_T}\frac{1}{\alpha} c_{TD}^{T} + \mathcal{C}_2^{TD_T} \frac{\alpha\tau_\alpha^2}{\na}+\mathcal{C}_3^{TD_T}(\sync-1)\alpha^2\tau_\alpha,
\end{align*}
where $\mathcal{C}_1^{TD_T} =  \bar{\mathcal{C}}_1^{TD_T}.\frac{\I_{\max}^{4n-2}}{\mu_{\min}(1-\gamma)^3}$, $\mathcal{C}_2^{TD_T}=\bar{\mathcal{C}}_2^{TD_T}.\frac{\I_{\max}^{3n-1}|\mathcal{S}|\log^2(|\mathcal{S}|)}{\mu_{\min}^2(1-\gamma)^4}$,  $\mathcal{C}_3^{TD_T}=\bar{\mathcal{C}}_3^{TD_T}.\frac{\I_{\max}^{7n-3}|\mathcal{S}|^2\log^2(|\mathcal{S}|)}{\mu_{\min}^4(1-\gamma)^8}$, and $\bar{\mathcal{C}}_i^{TD_T}$, $i=1,2,3$ are universal problem independent constants. In addition, choosing $\alpha=\frac{8\log(\na T)}{T\varphiz_{TD}}$ and $\sync=T/\na$, we have $\E[\|\bV_{\hat{T}}-\bV^\pi\|_\infty^2]\leq\epsilon$ after $T=\tilde{\mathcal{O}}\lp\frac{1}{\na \epsilon}.\frac{\I_{\max}^{7n-3}|\mathcal{S}|^2\log^2(|\mathcal{S}|)}{\mu_{\min}^5(1-\gamma)^9}\rp$ iterations.
\end{theorem}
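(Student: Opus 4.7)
The plan is to cast Algorithm \ref{alg:TD-learning_off} as a special instance of the FedSAM framework developed in Section \ref{sec:fed_sam}, and then invoke its convergence theorem with problem-specific constants tailored to off-policy tabular TD. Concretely, for each agent $i$ I define the local stochastic operator
\[
\Gi(\bV; S_t^i, A_t^i, \dots, S_{t+n}^i) = \phi_{S_t^i}\sum_{l=t}^{t+n-1}\gamma^{l-t}\Bigl[\prod_{j=t}^{l}\I^{(i)}(S_j^i,A_j^i)\Bigr] e_{t,l}^i,
\]
and denote by $\bFi(\bV) = \E_{\mu^{\pi^i}}[\Gi(\bV;\cdot)]$ its expectation under the stationary distribution of the behaviour chain $\pi^i$. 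A direct calculation using importance sampling shows that $\bFi$ is the (shifted) $n$-step Bellman operator under $\pi$ weighted by $\mu^{\pi^i}$, so each $\bFi$ has $\bV^\pi$ as a fixed point and the averaged drift $\bF(\bV)=\tfrac{1}{\na}\sumik \bFi(\bV)$ shares this fixed point as well.

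The next step is to establish a negative-drift contraction for $\bF$ in a norm adapted to the off-policy setting. Since the ordinary $\ell_\infty$-contraction of the Bellman operator produces a discontinuous drift that is incompatible with mean-square Lyapunov analysis, I would use the generalized Moreau envelope of $\|\cdot\|_\infty$ (exactly as in the single-agent analysis of \cite{chen2021Lyapunov_arxiv,chen_offpolicyTD_arxiv}) as the Lyapunov function. The smoothing parameter and the state-visitation lower bound $\mu_{\min}$ enter precisely through the constant
\[
\varphiz_{TD}= 1-\frac{0.5\, e^{1/4}(2-\mu_{\min}(1-\gamma^{n+1}))}{\sqrt{\sqrt{e}-1+\bigl(\tfrac{2-\mu_{\min}(1-\gamma^{n+1})}{2-2\mu_{\min}(1-\gamma^{n+1})}\bigr)^2}},
\]
which yields $\lan \nabla M(\bV-\bV^\pi), \bF(\bV)\ran \leq -\varphiz_{TD}\,M(\bV-\bV^\pi)$ for the Moreau envelope $M$, and in turn produces the geometric factor $c_{TD}=1-\alpha\varphiz_{TD}/2$ in the bias term.

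I would then verify the remaining FedSAM hypotheses: (i) a uniform bound $\|\Gi(\bV;\cdot)\|_\infty \lesssim \I_{\max}^n(1+\|\bV\|_\infty)/(1-\gamma)$, which is where the factor $\I_{\max}^{n}$ enters the noise and drift constants, (ii) geometric mixing of each $\pi^i$-chain with mixing time $\tau_\alpha=\mathcal{O}(\log(1/\alpha))$, and (iii) equivalence of the Moreau envelope with $\|\cdot\|_\infty^2$ up to a $|\mathcal{S}|\log|\mathcal{S}|$ factor, which is the origin of the $|\mathcal{S}|\log^2|\mathcal{S}|$ dependence in $\mathcal{C}_2^{TD_T}$ and $\mathcal{C}_3^{TD_T}$. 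Feeding these into the FedSAM master bound of Section \ref{sec:fed_sam} gives the three-term inequality of the theorem, with the $1/N$ scaling on the variance term inherited from averaging $N$ independent Markovian noise sequences, and the $(K-1)\alpha^2\tau_\alpha$ client-drift term coming from the Lipschitz bound on $\bG^i$ applied over $K$ local updates between synchronizations. The sample-complexity statement is then a routine optimization: choosing $\alpha=\Theta(\log(NT)/(T\varphiz_{TD}))$ forces the bias term to $O(\epsilon/2)$, and $K=T/N$ makes the local-drift term match the variance term up to logarithmic factors.

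The main obstacle, and the step that deserves the most care, is the contraction analysis of $\bF$ in the Moreau envelope in the presence of importance-sampling weights and $n$-step targets: off-policy $n$-step updates can in principle diverge in $\ell_2$, and the drift bound must be proved directly in $\ell_\infty$ and then transported to the smooth surrogate, losing $|\mathcal{S}|$-factors one must track carefully. A secondary technical point is bounding, uniformly in $\alpha$, the moments of $\|\bV_t^i\|_\infty$ needed to close the Lyapunov recursion; here I would exploit the contraction of the expected update together with a Grönwall-type argument on the span of the iterates to absorb the $\I_{\max}^n$ blow-up. Once these pieces are in place, the remaining bookkeeping — applying the Markov-mixing arguments of Section \ref{sec:fed_sam}, aggregating over the $K$ local steps, and randomizing the stopping time according to $q_T^{c_{TD}}$ to convert the weighted sum into a single-index bound — proceeds in parallel with the on-policy proof of Theorem \ref{thm:fed_TD_on}.
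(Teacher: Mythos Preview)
Your proposal is correct and follows essentially the same route as the paper's proof in Appendix \ref{sec:fed_td_learning_off_app}: cast Algorithm \ref{alg:TD-learning_off} as an instance of FedSAM via the shift $\btheta=\bV-\bV^\pi$ (Proposition \ref{prop:on_tabular}), verify Assumptions \ref{ass:mixing}--\ref{ass:noise_independence} by explicitly computing the expected operator $\bG^i$ and its $\ell_\infty$-contraction factor $\gamma_c=1-\mu_{\min}(1-\gamma^{n+1})$ (Lemmas \ref{lem:TD_learning_2}--\ref{lem:TD_learning_5}), and then read off the constants from Theorem \ref{thm:main_body} and Corollary \ref{lem:sample_complexity}. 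One small correction to your bookkeeping: the $|\mathcal{S}|$ factors in $\mathcal{C}_2^{TD_T},\mathcal{C}_3^{TD_T}$ do not come from the Moreau-envelope equivalence with $\|\cdot\|_\infty^2$ (which is dimension-free here, $l_{cm},u_{cm}=\Theta(1)$), but from the $\ell_2$--$\ell_\infty$ equivalence $u_{cD}/l_{cD}=\sqrt{|\mathcal{S}|}$ needed in the cross-agent variance bound (Lemma \ref{lem:after_expectation}) and the consensus-error analysis, while the $\log|\mathcal{S}|$ factors arise from the smoothness constant $L=\Theta(\log|\mathcal{S}|)$ of the surrogate $p$-norm with $p=2\log|\mathcal{S}|$; the contraction step you flag as the ``main obstacle'' is in fact a direct matrix-norm computation once $\bG^i$ is written in closed form.
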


The proof is given in Section \ref{sec:fed_td_learning_off_app} in the appendix. 
    
Note that similar to the on-policy TD-learning Algorithm \ref{alg:TD-learning}, off-policy TD-learning also enjoys a linear speedup while maintaining a low communication cost. In addition, this algorithm preserves the privacy of agents by keeping both the data and the sampling policy private.

\subsection{Federated \texorpdfstring{$Q$}--learning}\label{sec:Q_learning_main}
So far we have discussed federated policy evaluation with on and off-policy samples. Next, we will expand on federated $Q$-learning, which is presented in Algorithm \ref{alg:Q-learning}. We characterize the convergence of this algorithm in the following theorem.
   
\begin{algorithm}
\caption{Federated $Q$-learning}
\label{alg:Q-learning}
\begin{algorithmic}[1]
    \STATE{\textbf{Input:} Sampling policy $\pi^i_b$ for $i=1,2,\dots, \na$, initial distribution $\xi$}
	\STATE{\textbf{Initialization:} $\Q^i_0=\mathbf{0}$ and $S_0^i\sim \xi$ for all $i$}
	\FOR{$t=0$ to $T-1$}
	    \FOR{$i=1,\dots, \na$}
	    \STATE{Sample $A_{t}^i \sim \pi_b^i (\cdot|S_{t}^i), S_{t+1}^i\sim \Prob(\cdot|S_t^i, A_t^i)$}
    	\STATE{
        \begin{align*}
            \Q_{t+1}(s,a) = \begin{cases}
            	    \Q^i_t(S^i_t, A^i_t) + \alpha \lp \mathcal R(S^i_t, A^i_t) + \gamma \max_a \Q^i_t (S^i_{t+1}, a) - \Q^i_t (S^i_{t}, A^i_{t}) \rp & \text{ if } (s,a) = (S^i_t, A^i_t) \\
            	    \Q^i_t(s, a) & \text{ o.w.}
            	\end{cases}
        \end{align*}
    	}
    	\ENDFOR
	    \IF{$t+1$ mod $\sync = 0$}
	        \STATE{$Q^i_{t+1} \leftarrow  \frac{1}{\na} \sum_{j=1}^\na Q^j_{t+1}, \forall \ i \in [\na]$}
	   \ENDIF
	\ENDFOR
	\STATE{\textbf{Sample:} $\hat{T}\sim q_T^{c_{Q}}$}
	\STATE{\textbf{Return: $\frac{1}{\na} \sum_{i=1}^\na Q^i_{\hat{T}}$}}
\end{algorithmic}
\end{algorithm}

\begin{theorem}\label{thm:fed_Q}
Consider the federated $Q$-learning Algorithm \ref{alg:Q-learning} with $c_Q=1-\frac{\alpha\varphiz_Q}{2}\in(0,1)$, where $\varphiz_Q=1-\frac{0.5 e^{1/4}(2-\mu_{\min}(1-\gamma))}{\sqrt{\sqrt{e}-1+\left(\frac{2-\mu_{\min}(1-\gamma)}{2-2\mu_{\min}(1-\gamma)}\right)^2}}$ and we denote $\mu_{\min} = \min_{s,a,i}\mu^{\pi^i}(s)\pi^i(a|s)$. Denote $\Q_{\hat{T}} = \frac{1}{\na}\sumik \Q_{\hat{T}}^i$. For small enough step size $\alpha$ and large enough $T$, we have
\begin{align*}
    \E[\|Q_{\hat{T}}-Q^*\|_\infty^2]\leq \mathcal{C}_1^{Q}\frac{1}{\alpha} c_{Q}^{T} + \mathcal{C}_2^{Q} \frac{\alpha\tau_\alpha^2}{\na}+\mathcal{C}_3^{Q}(\sync-1)\alpha^2\tau_\alpha,
\end{align*}
where $\mathcal{C}_1^{Q} = \bar{\mathcal{C}}_1^{Q}. \frac{1}{\mu_{\min}(1-\gamma)^3}$, $\mathcal{C}_2^{Q}=\bar{\mathcal{C}}_2^{Q}.\frac{|\mathcal{S}|\log^2(|\mathcal{S}|)}{\mu_{\min}^2(1-\gamma)^4}$,  $\mathcal{C}_3^{Q}=\bar{\mathcal{C}}_3^{Q}. \frac{|\mathcal{S}|^2\log^2(|\mathcal{S}|)}{\mu_{\min}^4(1-\gamma)^8}$, and $\bar{\mathcal{C}}_i^{Q}$, $i=1,2,3$ are universal problem independent constants. In addition, choosing $\alpha=\frac{8\log(\na T)}{T\varphiz_Q }$ and $\sync=T/\na$, we have $\E[\|Q_{\hat{T}}-Q^*\|_\infty^2]\leq\epsilon$ within $T=\tilde{\mathcal{O}}\lp\frac{1}{\na \epsilon}.\frac{|\mathcal{S}|^2\log^2(|\mathcal{S}|)}{\mu_{\min}^5(1-\gamma)^9}\rp$ iterations.
\end{theorem}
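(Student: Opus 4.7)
\textbf{Proof Proposal for Theorem \ref{thm:fed_Q}.}

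The plan is to show that federated $Q$-learning is a special case of the FedSAM framework advertised in Section \ref{sec:fed_sam}, and then invoke (a refined version of) its convergence guarantee, carefully tracking how all constants depend on $\mu_{\min}$, $|\mathcal{S}|$, and $1-\gamma$. Concretely, each agent's update can be written as $\Q^i_{t+1} = \Q^i_t + \alpha G^i(\Q^i_t,O^i_t)$, where $O^i_t=(S^i_t,A^i_t,S^i_{t+1})$ is the Markovian data point of agent $i$ and $G^i(\Q,O)$ places a temporal-difference residual at coordinate $(S,A)$ and zero elsewhere. Under the stationary distribution of the behavior policy $\pi^i_b$, the expected drift $\bar G^i(\Q) := \E_{O\sim\mu^{\pi^i}\pi^i}[G^i(\Q,O)]$ is such that the map $\Q\mapsto\Q+\bar G^i(\Q)$ is a contraction in $\|\cdot\|_\infty$ with modulus at most $1-\mu_{\min}(1-\gamma)$, whose unique fixed point is $\Q^*$; this common fixed-point property across agents is what makes averaging meaningful. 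This contraction modulus is precisely what is encoded by $\varphi_Q$.

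Because the $\|\cdot\|_\infty$ norm is nonsmooth, the Lyapunov function will be the generalized Moreau envelope $M(\Q-\Q^*) := \inf_{y}\{\frac{1}{2}\|\Q-\Q^*-y\|_p^2+\frac{1}{2\mu}\|y\|_2^2\}$ with $p=\Theta(\log|\mathcal{S}|)$, following the smoothing approach used for single-agent $Q$-learning. This delivers three quantitative ingredients: (i) $M$ is smooth in a norm comparable to $\|\cdot\|_p$; (ii) two-sided norm equivalence $c_1\|\Q-\Q^*\|_\infty^2 \le M(\Q-\Q^*) \le c_2\log(|\mathcal{S}|)\,\|\Q-\Q^*\|_\infty^2$; and (iii) a contraction inequality $M(\Q+\bar G^i(\Q)-\Q^*)\le(1-\varphi_Q)M(\Q-\Q^*)$. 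Items (ii) and (iii) are what produce the $|\mathcal{S}|\log^2|\mathcal{S}|$ factors in $\mathcal{C}_2^Q$ and $\mathcal{C}_3^Q$ and the $\mu_{\min}^{-1}(1-\gamma)^{-3}$ factor in $\mathcal{C}_1^Q$.

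Next I would develop a one-step recursion on $\E[M(\bar\Q_t-\Q^*)]$ for the averaged iterate $\bar\Q_t=\tfrac{1}{\na}\sumik \Q^i_t$, decomposing the update into (a) the averaged mean drift $\tfrac{1}{\na}\sumik \bar G^i(\Q^i_t)$, (b) the zero-mean Markovian noise $\tfrac{1}{\na}\sumik [G^i(\Q^i_t,O^i_t)-\bar G^i(\Q^i_t)]$, and (c) a discrepancy $\bar G^i(\Q^i_t)-\bar G^i(\bar\Q_t)$ caused by the agents' local drift within a synchronization window. Term (a) contributes the contraction factor $(1-\alpha\varphi_Q)$ via (iii). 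Term (b) is handled by the standard $\tau_\alpha$-step look-back argument of \cite{srikant2019finite,chen2021Lyapunov_arxiv}: one bounds the conditional covariance between $O^i_t$ and $\Q^i_{t-\tau_\alpha}$ using the geometric mixing of each $\pi^i_b$, yielding a bias of order $\alpha\tau_\alpha$, and independence across agents (from their independent Markov chains) then allows the squared-norm of the averaged noise to be bounded by $1/\na$ times the per-agent variance, producing the $\alpha\tau_\alpha^2/\na$ term. Term (c) is controlled by a drift lemma: since every agent uses bounded rewards and a non-expansive operator in $\|\cdot\|_\infty$, between two synchronizations we have $\max_i\|\Q^i_t-\bar\Q_t\|_\infty=O(\alpha(\sync-1))$, yielding the $(\sync-1)\alpha^2\tau_\alpha$ term.

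Combining these three ingredients gives a recursion of the form $\E[M(\bar\Q_{t+1}-\Q^*)]\le (1-\alpha\varphi_Q/2)\,\E[M(\bar\Q_t-\Q^*)]+\text{noise}+\text{drift}$, and unrolling it from $t=\tau_\alpha$ onwards produces a sum $\sum_{t=0}^{T-1} c_Q^{T-1-t}(\cdots)$. The exponential-in-$t$ sampling distribution $q_T^{c_Q}$ in line $13$ of Algorithm \ref{alg:Q-learning} is tailored exactly so that $\E[M(\bar\Q_{\hat T}-\Q^*)]$ equals a normalization times this sum, converting the telescope into the three-term bound in the theorem; translating back via the lower bound in (ii) yields the $\|\cdot\|_\infty$ statement. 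The final sample-complexity claim follows by plugging $\alpha=\Theta(\log(\na T)/T)$ and $\sync=T/\na$ and optimizing. I expect the main obstacle to be item (c): the nonlinearity of the $\max_a$ operator in $Q$-learning means that $\bar G^i$ is only Lipschitz, not linear, so the drift between an individual agent's $\Q^i_t$ and the virtual average $\bar\Q_t$ must be propagated through a Lipschitz-in-$\|\cdot\|_\infty$ bound while still preserving the $1/\na$ variance reduction in (b); this is precisely where the FedSAM abstraction, with its nonlinear generalization of the drift lemma, is expected to do the heavy lifting.
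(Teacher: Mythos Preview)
Your approach is essentially the same as the paper's: cast federated $Q$-learning as an instance of FedSAM, verify the contraction in $\|\cdot\|_\infty$ with factor $1-\mu_{\min}(1-\gamma)$, smooth via a Moreau envelope built from the $p$-norm with $p=\Theta(\log|\mathcal S|)$, and then invoke the general bound. Two technical points deserve attention, however.

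First, the Moreau envelope the paper uses is $\min_u\{\tfrac12\|u\|_\infty^2+\tfrac{1}{2\psi}\|x-u\|_p^2\}$, i.e.\ it smooths $\|\cdot\|_\infty^2$ by $\|\cdot\|_p^2$, not $\|\cdot\|_p^2$ by $\|\cdot\|_2^2$ as you wrote; the contraction is in $\|\cdot\|_\infty$, so the Lyapunov function must be a smooth surrogate for $\|\cdot\|_\infty^2$. Your variant would still work up to constants since $\|\cdot\|_p\approx\|\cdot\|_\infty$ for this $p$, but be careful that the negative drift (your item~(iii)) holds in the right norm.

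Second, and more substantively, the step you describe as the ``standard $\tau_\alpha$-step look-back argument'' is precisely where the paper departs from \cite{srikant2019finite,chen2021Lyapunov_arxiv}. Conditioning on $\mathcal F_{t-\tau}$ alone is not enough to extract the $1/\na$ factor: after the look-back you are left with terms like $\E\|\btheta_t-\btheta_{t-\tau}\|_c^2$, and the crude deterministic bound on this quantity (which suffices in the single-agent case) yields only $O(\alpha^2)$ with no $1/\na$. The paper's fix is to condition on $\mathcal F_{t-2\tau}$, so that at time $t-\tau$ the per-agent noises $\mathbf b^i(\by^i_\ell)$ are already geometrically close to their stationary (zero-mean, independent-across-$i$) behavior; this is what allows $\E_{t-2\tau}\|\btheta_t-\btheta_{t-\tau}\|_c^2$ to be bounded by $O(\alpha^2\tau^2/\na)+O(\alpha^4)$ plus drift terms (their Lemmas on $\|\btheta_t-\btheta_{t-\tau}\|$). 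Without this refinement the linear speedup does not go through, so you should flag the $2\tau$ conditioning explicitly rather than calling it standard.
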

According to Theorem \ref{thm:fed_Q}, the federated $Q$-learning Algorithm \ref{alg:Q-learning}, similar to the federated off-policy TD-learning, enjoys linear speedup, communication efficiency, and privacy guarantees. We would like to emphasize that the update of $Q$-learning is non-linear. Hence, the result of Theorem \ref{thm:fed_Q} cannot be derived from Theorems \ref{thm:fed_TD_on} and \ref{thm:fed_TD_off}.

\section{Off-policy Federated Reinforcement Learning with Function Approximation} \label{sec:OfPFRLFA}
In the off-policy TD-learning with the sampling policy $\pi_b$, the goal is to find the unique fixed point $\bv^\pi$ which solves the following linear equation
\begin{align}
\label{eq:BEO_FA}
    \Phi \bv^\pi = \Pi_\Phi^{\pi_b}((\mathcal{T}^\pi)^n \Phi \bv^\pi). 
\end{align}
Here, $\Pi_\Phi^{\pi_b}=\Phi (\Phi^\top\mathcal{K}^{\pi_b}\Phi)^{-1}\Phi^\top \mathcal{K}^{\pi_b}$ is a linear function that projects into the subspace spanned by the matrix $\Phi$, and $\mathcal{K}^{\pi_b}$ is a diagonal matrix, with diagonal entries equal to the stationary distribution of the behavior policy $\pi_b$.

In federated off-policy TD-learning with linear function approximation, the goal is to find the fixed point of the average operators of all the agents. In particular, suppose that each agent $i$ has the sampling policy $\pi_b^i$. Denote the fixed point of the average operator as the vector $\bv^\pi$ which solves the following equation
\begin{align}
\label{eq:BEO_FA_off}
    \Phi \bv^\pi = \left(\frac{1}{N}\sumik\Pi_\Phi^{\pi_b^i}(\mathcal{T}^\pi)^n \Phi \right)\bv^\pi. 
\end{align}
Here, $\pi_b^i$ is the sampling policy of agent $i$. The goal of the $\na$ agents is to find this fixed point $\bv^\pi$ jointly. Notice that the solution to the fixed point equation \eqref{eq:BEO_FA} depends on the behavior policy $\pi_b$. Hence, the solution to the joint fixed point equation \eqref{eq:BEO_FA_off} is different from the solution corresponding to each agent. This is the source of heterogeneity in the off-policy TD-learning with linear function approximation, which makes this problem more challenging than the prior results in this paper.

Algorithm \ref{alg:TD-learning_off_LFA} represents federated off-policy TD-learning with linear function approximation, and Theorem \ref{thm:fed_TD_off_lin} states the convergence of this algorithm.

\begin{theorem}\label{thm:fed_TD_off_lin}
Let $\bv_{\hat{T}} = \frac{1}{\na}\sumik \bv^i_{\hat{T}}$ denote the average of the parameters across agents at the random time $\hat{T}$. For a small enough step size $\alpha$, large enough $n$, and $T\geq\mathcal{O}( \log\frac{1}{\alpha})$, there exists a constant $c_{TDLO}\in(0,1)$, such that we have 
\begin{align*}
    \E[\|\bv_{\hat{T}}-\bv^\pi\|_2^2] \leq \mathcal{C}_1^{TDLO}\frac{1}{\alpha} c_{TDLO}^{T-2\tau_\alpha+1}+\mathcal{C}_2^{TDLO}\frac{\alpha\tau_\alpha^2}{\na}+\mathcal{C}_{3}^{TDLO}(\sync-1)\alpha^2\tau_\alpha+\mathcal{C}_4^{TDLO}(K-1)^2\varepsilon\alpha^2 \tau_\alpha + \mathcal{C}_5^{TDLO}\alpha^3\tau_\alpha^2,
\end{align*}
where $\mathcal{C}_i^{TDLO}$, $i=0,1,2,3,4,5$ are problem dependent constants, and $\tau_\alpha=\mathcal{O}(\log(1/\alpha))$. Choosing $\alpha=\mathcal{O}(\frac{\log(\na T)}{T})$ and $\sync=\sqrt{T}/\na$, we achieve $\E[\|\bv_{\hat{T}}-\bv^\pi\|_2^2]\leq\epsilon$ within $T=\tilde{\mathcal{O}}\lp\frac{1}{\na \epsilon}\rp$ iterations. Furthermore, the communication complexity to achieve this level of accuracy is $\mathcal{O}(\sqrt{N}/\sqrt{\epsilon})$.
\end{theorem}

Note that the communication complexity of Algorithm \ref{alg:TD-learning_off_LFA} depends on $\epsilon$. This is due to the heterogeneity of the agents, where to achieve a certain level of accuracy, requires the agents to communicate more frequently than the homogeneous setting.

\begin{algorithm}
\caption{Federated $n$-step TD (Off-policy, Function Approximation)}
\label{alg:TD-learning_off_LFA}
\begin{algorithmic}[1]
    \STATE{\textbf{Input:} Policy $\pi, \xi$ }\STATE{\textbf{Initialization:} $\bv^i_0=\mathbf{0}$ and $S_0^i\sim \xi$ and $\{A_l^i,S_{l+1}^i\}\sim \pi^i_b$ for $0\leq l\leq n-1$ and all $i$ }
	\FOR{$t=0$ to $T-1$}
	    \FOR{$i=1,\dots, \na$}
	    \STATE{Sample $A_{t+n}^i\sim\pi_b^i(\cdot|S_{t+n}^i), S_{t+n+1}^i\sim \Prob(\cdot|S_{t+n}^i,A_{t+n}^i)$}
    	\STATE $e_{t,l}^i\!=\!\mathcal{R}(S_l^i,A_l^i)\!+\!\gamma  \phi(S_{l\!+\!1}^i)^\top \bvit\!-\!\phi(S_l^i)^\top \bvit$ for $l=t,\dots,t+n-1$ 
		\STATE $E_{t,n}^i=\sum_{l=t}^{t+n-1}\gamma^{l-t}\lb \Pi_{j=t}^{l} \I^{(i)}(S^i_j, A^i_j) \rb e_{t,l}^i$
		\STATE $\bv_{t+1}^i=\bvit+\alpha \phi(S_t^i)E_{t,n}^i$
    	\ENDFOR
	    \IF{$t+1$ mod $\sync = 0$}
	        \STATE{$\bv^i_{t+1} \leftarrow  \frac{1}{\na} \sum_{j=1}^\na \bv^j_{t+1}, \forall \ i \in [\na]$}
	   \ENDIF
	\ENDFOR
	\STATE{\textbf{Sample $\hat{T}\sim q^{c_{TDLO}}_T$}
	\STATE{\textbf{Return:} $\frac{1}{\na}\sumik \bv^i_{\hat{T}}$}}
\end{algorithmic}
\end{algorithm}

\section{Proof Sketch}\label{sec:pf_sketch}

In this section, we provide an outline for the proof of Theorem \ref{thm:main_body}, which is the workhorse to establish the results in Theorems \ref{thm:fed_TD_on},\ref{thm:fed_TD_off}, \ref{thm:fed_Q}, and \ref{thm:fed_TD_off_lin}. We adopt the Lyapunov approach developed in \cite{chen2020finite_nips} with a Lyapunov function $M(\cdot)$. The key idea is to obtain a recursion of the form $\E[M(\btheta_{t+1})]\leq (1-\alpha)\E[M(\bthetat)]+\alpha^2$. This will result in a convergence bound of the form $\E[M(\btheta_{T})]\leq (1-\alpha)^T M(\btheta_{0})+\alpha$. Choosing the Lyapunov function to be a ``close enough'' approximation of $\|\btheta_T\|_c^2$, this implies a rate of convergence on $\|\btheta_T\|_c^2$. 

In order to show linear speedup in the number of agents, one ideally expects a bound of the form $\E[M(\btheta_{T})]\leq (1-\alpha)^T M(\btheta_{0})+\alpha/\na $. However, we obtain the slightly looser bound, 
\begin{align}\label{eq:lsu_pf_sk}
    \E[M(\btheta_{T})]\leq (1-\alpha)^T M(\btheta_{0})+\alpha/\na + \alpha^2.
\end{align}
In particular, according to \eqref{eq:lsu_pf_sk}, $\E[M(\btheta_{T})]$ converges geometrically fast to a ball around $0$ with radius $\alpha/\na+\alpha^2$. Hence, for small enough $\alpha$, the dominant term $\alpha/\na$ diminishes with increasing number of agents, and the higher-order term $\alpha^2$ turns out to be negligible. In order to construct such a bound, we first obtain a recursion of the form \begin{align}\label{eq:drift_with_speed_up}
    \E[M(\btheta_{t+1})]\leq (1-\alpha)\E[M(\bthetat)]+\alpha^2/\na+ o(\alpha^3), ~~ t\leq T. 
\end{align}

\subsection{Illustrative Example and the Challenge}
\label{sec:challenge}
The key challenge in obtaining linear speedup is handling Markovian noise. In order to illustrate the challenge, consider a special case of general stochastic approximation in the scalar case ($d=1$), with linear updates, $\Gi (\btheta,\by_t^i) \equiv -A\theta+\theta$ for all $i$, where the matrix $A$ is deterministic. In addition, the term $ b^i(\by^i_t)$ in the update of agent $i$ is driven by Markov chains $\{\by^i_t\}_{1\leq i\leq\na, t\geq 0}$ that are independent across agents. In addition, we assume full synchronization (that is, $\sync=1$). Hence, in this simplified setting, the update of the average parameter can be written as, 
\begin{align}\label{eq:simple_fed_SA}
    \theta_{t+1}=\theta_t+\alpha(-A\theta_t+b(\bY_t)),
\end{align}
where $b(\bY_t)=\frac{1}{\na}\sumik b^i(\by_t^i)$. 
In the following discussion, we show that \textit{even} in this special case, achieving a linear speedup with $\na$ is non-trivial. Consider the one-step drift of $\E[\theta_t^2]$. We have:
\begin{align*}
    \E[\theta_{t+1}^2]=\underbrace{(1-\alpha A)^2\E[\theta_t^2]}_{E_1} +\underbrace{2\alpha(1-\alpha A)\E[\theta_t b(\bY_t)]}_{E_2}+ \underbrace{\alpha^2\E[b^2(\bY_t)]}_{E_3}.
\end{align*}
Next, consider two special cases. 

\begin{enumerate}
    \item \textbf{$\mbf{\textit{b}(Y_{\textit{t}})}$ i.i.d. zero mean:} In this case $E_2$ is equal to zero. Also, for $E_3$ we have $\alpha^2\E[b^2(\bY_t)] = \frac{\alpha^2}{\na^2} \sumik \E[(b^i(\by_t^i))^2] + \frac{\alpha^2}{\na^2} \sum_{i\neq j} \E[b^i(\by_t^i)b^j(\by_t^j)] = \frac{\alpha^2}{\na^2}\sumik \E[(b^i(\by_t^i))^2] \leq B^2\frac{\alpha^2}{\na}$. Hence, we obtain the form of the recursion in \eqref{eq:drift_with_speed_up}, which results in linear speedup. This is essentially how we achieve linear speedup in the i.i.d. case \cite{koloskova20unified_localSGD_icml}.
    \item \textbf{$\mbf{\textit{b}(Y_{\textit{t}})}$ Markovian and zero stationary mean:}    First, linear speedup in $E_3$ is not as straightforward as in the i.i.d. case due to the Markovian nature of the underlying noise, and we no longer have $\E[b^i(Y_t)b^j(Y_t)]=0$ for $i\neq j$. This difficulty is easily overcome by exploiting geometric mixing of the underlying noise, which enables one to bound $E_3$ by $O(\frac{\alpha^2}{\na}+\alpha^3)$. However, the key difficulty is in bounding $E_2$ because $\theta_t$ and $b(Y_t)$ are no longer independent, and the stochastic approximation update in conjunction with the Markovian noise introduces complex dependencies between the two terms. A naive way to address this is to adopt the approach in \cite{srikant2019finite} to condition at time $t-\tau$ where $\tau$ is the mixing time of the underlying Markov chain. Thus, we have $\theta_t b(Y_t) = \theta_{t-\tau} b(Y_t)+(\theta_{t}-\theta_{t-\tau})b(Y_t)$. The first term can be handled by the fact that $\theta_{t-\tau}$ and $b(Y_t)$ are almost independent due to the Markov chain mixing. However, the second term is problematic. In particular, following the approach in \cite{srikant2019finite}, it can be shown that $|\theta_{t} - \theta_{t-\tau}|\simeq \alpha\tau$, which in turn gives a $\mathcal{O}(\alpha^2)$ bound on $E_2$, which results in not having any linear speedup. 
    
    Thus, establishing linear speedup even in this extremely simplified case is challenging. However, in this paper, we establish a linear speedup for the convergence of a far more general heterogeneous federated stochastic approximation setting.

\end{enumerate}
The main objective of Sections \ref{sec:recursion_M}, \ref{sec:bound_I1}, and \ref{sec:condition_t_2_tau} is to ensure a tight dependency on $\na$ for a general heterogeneous stochastic approximation. Due to the federated structure of the algorithm, we also need to deal with the synchronization error across the agents, which is done in Section \ref{sec:wtd_cons_error}. Throughout this section, ``$\lesssim$'' denotes inequalities in which we ignore the constants, except the step size $\alpha$ and $\tau$.

\subsection{Recursion on the Smooth Approximation \texorpdfstring{$M(\cdot)$}{M(.)}}\label{sec:recursion_M}
In Algorithm \ref{alg:fed_stoch_app}, each agent updates its own parameters $\bthetait$ at each iteration. These local parameters are averaged only once every $\sync$ steps. A key aspect of the analysis is to work with the \textit{virtual} sequence of average parameters $\{ \bthetat=\frac{1}{\na}\sumik \bthetait \}_{t\geq 0}$. Since the algorithm computes these averages only at the synchronization time instants, we shall account for the difference in parameter values across agents using synchronization error terms.
Using Step 4 in Algorithm \ref{alg:fed_stoch_app}, the update of this virtual parameter sequence is given by $\btheta_{t+1} = \bthetat + \alpha \lp \mbf G (\boldsymbol{\Theta}_t, \byt) - \bthetat + \mbf b(\byt) \rp$, where $\boldsymbol{\Theta}_t = \lcb \btheta^1_t, \hdots, \btheta^\na_t \rcb$, $\byt = \lcb \by_t^1, \hdots, \by_t^\na \rcb$, $\mbf G (\boldsymbol{\Theta}_t, \byt) = \frac{1}{\na} \sumik  \Gi(\bthetait, \byit)$, and $\mbf b(\byt) = \frac{1}{\na} \sumik  \mathbf{b}^i (\byit)$. 

In this paper, our aim is to show the convergence of $\E[\|\btheta_t-\btheta^*\|_c^2]$, but the function $\|\cdot\|_c^2$ might be non-smooth. Therefore, inspired by \cite{chen2020finite_nips}, we use $M(\btheta)=\min_{\mbf{x}\in\mathbb{R}^d} \lcb \frac{1}{2}\|\mbf{x}\|_c^2+\frac{1}{2\psi}\|\btheta-\mbf{x}\|_p^2 \rcb$, where $\|\cdot\|_p$ is the $p$-norm for some $p\geq 2$. $M(\cdot)$ is a smooth approximation of the (possibly non-smooth) function $\frac{1}{2}\|\cdot\|_c^2$, with smoothness constant $\frac{p-1}{\psi}$ with respect to $\|\cdot\|_p$ norm. Furthermore, $M(\cdot)$ is ``close'' to $\frac{1}{2}\|\cdot\|_c^2$ and $\frac{1}{2}\|\cdot\|_p^2$, meaning that they can be upper/lower bounded by each other by introducing some multiplicative constants.

Using $\frac{p-1}{\psi}$-smoothness of $\M(\cdot)$ gives
\begin{align}
	\M(\btheta_{t+1} ) 
	\leq \M (\bthetat ) & + \alpha  \underbrace{ \lan \G \M (\bthetat ), \bG (\bthetat) - \bthetat \ran}_{T_1: \text{ Expected update}} + \alpha \underbrace{ \langle \G \M (\bthetat ), \mbf b(\byt) \rangle}_{\substack{T_2: \text{ Due to Markovian } \\\text{  noise in $\mbf b(\byt)$ } }} + \alpha \underbrace{ \langle \G \M (\bthetat ), \mbf G (\boldsymbol{\Theta}_t, \byt) - \bG (\boldsymbol{\Theta}_t) \rangle}_{T_3: \text{ Due to Markovian noise in $\mbf G (\boldsymbol{\Theta}_t, \byt)$ } } \nn\\
	& + \alpha\underbrace{ \langle \G \M (\bthetat ), \bG (\boldsymbol{\Theta}_t) - \bG (\bthetat) \rangle}_{T_4: \text{ Due to local updates}} + \frac{(p-1) \alpha^2}{2 \psi} \underbrace{\lnr \mbf G (\boldsymbol{\Theta}_t, \byt) - \bthetat + \mbf b(\byt) \rnr_p^2}_{T_5: \text{ Due to noise and discretization}},
 	\label{eq:M_bound2}
\end{align}
where, $\bG (\btheta)=\frac{1}{\na}\sumik \bGi(\btheta)$. Next, we bound the individual terms in \eqref{eq:M_bound2}.
 
\begin{itemize}
    \item \textbf{Term $\mbf{\textit{T}_1}$:}  $T_1$ can be handled identically as in \cite[Lemma A.1]{chen2021Lyapunov_arxiv} yielding $T_1\lesssim -M(\bthetat)$, which is the source of the negative drift. This negative drift is achieved using the contraction property of the operators $\bGi(\cdot)$, $i=1,\dots,\na$ (Assumption \ref{ass:contraction}). 
\end{itemize}
Starting from the term $T_2$, the analysis diverts from the one in the previous work \cite{chen2021Lyapunov_arxiv}.  
\begin{itemize}
    \item \textbf{Term $\mbf{\textit{T}_2}$:} Suppose $\bY_t$ was i.i.d. noise. In that case $\E[\mbf b(\byt)]=\mbf 0$, which leads to $\E[T_2]=0$. Since $\byt$ is Markovian instead, by Assumption \ref{ass:mixing}, $\|\E[\mbf b(\byt)] \|_c$ approaches $0$ exponentially fast. Now, suppose that  $\mbf{\textit{T}_2}$ instead has the following term $\lan \G \M (\btheta_{t-\tau} ), \mbf b(\byt) \ran$. By taking the conditional expectation given the time $t-\tau$, denoted by $\E_{t-\tau}[\cdot]$, for large enough $\tau$ (such that $\rho^\tau = \alpha^2$), and using Assumption \ref{ass:mixing}, we have $\E_{t-\tau}[\lan \G \M (\btheta_{t-\tau} ), \mbf b(\byt) \ran] \lesssim \rho^\tau \lnr \G \M (\btheta_{t-\tau} ) \rnr_p^\star \lesssim \E_{t-\tau}[\alpha\lnr \btheta_{t-\tau} -\btheta_{t} \rnr_c+\alpha^4 + M(\bthetat)]$. Therefore, we add and subtract $\G \M (\btheta_{t-\tau})$ in $T_2$, to get {\small
    \begin{align*}
        T_2 \hspace{-1mm}&=\hspace{-1mm} \lan \G \M (\btheta_{t} ) - \G \M (\btheta_{t-\tau} ) + \G \M (\btheta_{t-\tau} ), \mbf b(\byt) \ran \hspace{-0.5mm}\leq \hspace{-0.5mm}\frac{1}{2\alpha} \|\bthetat - \btheta_{t-\tau} \|_c^2+\frac{\alpha}{2}\|b(\byt)\|_c^2 + \lan \G \M (\btheta_{t-\tau} ), \mbf b(\byt) \ran.
    \end{align*}
    }
    Substituting the bound on $\E_{t-\tau} \lan \G \M (\btheta_{t-\tau} ), \mbf b(\byt) \ran$ in the inequality above, we get $\E_{t-\tau} [T_2]$. Eventually, we shall use the tower property of expectation to get $\mbe \lb T_2 \rb$.

    \item \textbf{Term $\mbf{\textit{T}_3}$:} 
    Again, if the noise $\byt$ is i.i.d., $\E[T_3]=0$. However, in the presence of Markov noise, similar to $T_2$, we need to condition w.r.t. time $t-\tau$. Further, due to mismatch of parameters across agents $\boldsymbol{\Theta}_t$, we also get synchronization error terms. We have
    \begin{equation}
        \begin{aligned}
            T_3 =& \underbrace{\langle \G \M (\bthetat)- \G \M (\btheta_{t-\tau}), \mbf G (\boldsymbol{\Theta}_t, \byt) - \bG (\boldsymbol{\Theta}_t) \rangle}_{T_{31}} \\
            &+ \underbrace{\langle \G \M (\btheta_{t-\tau}), \mbf G (\boldsymbol{\Theta}_t, \byt) - \mbf G (\boldsymbol{\Theta}_{t-\tau}, \byt) + \bG (\boldsymbol{\Theta}_{t-\tau}) - \bG (\boldsymbol{\Theta}_t) \rangle}_{T_{32}} \\
            & + \underbrace{\langle \G \M (\btheta_{t-\tau}), \mbf G (\boldsymbol{\Theta}_{t-\tau}, \byt) - \bG (\boldsymbol{\Theta}_{t-\tau}) \rangle}_{T_{33}}.
        \end{aligned}\nn
    \end{equation}
    We can bound the first two terms as follows: $T_{31}\lesssim \|\bthetat-\btheta_{t-\tau}\|_c^2 + M(\bthetat) + \Omega_t $ and $T_{32}\lesssim \|\bthetat-\btheta_{t-\tau}\|_c^2 + M(\bthetat) + \Omega_t +\Omega_{t-\tau}$, where $\Omega_t, \Omega_{t-\tau}$ are the synchronization errors, defined as $\Omega_t \triangleq \frac{1}{\na} \sumik \|\bthetat - \bthetait\|_c^2$ and is bounded in Section \ref{sec:wtd_cons_error}. All the terms in $T_{33}$ other than $\byt$ depend on time $t-\tau$. Therefore, similar to $T_2$, $T_{33}$ can be bounded using the mixing property of $\byt$. In particular, $\E_{t-\tau}[T_{33}]\lesssim \alpha \E_{t-\tau}[M(\btheta_{t})+\lnr \btheta_{t}- \btheta_{t-\tau} \rnr_c^2+\Omega_{t-\tau}]$.
    Combining these three bounds, we get $\E_{t-\tau} [T_3] \lesssim \E_{t-\tau} \lb M(\bthetat) + \lnr \bthetat - \btheta_{t-\tau} \rnr_c^2 + \Omega_t + \Omega_{t-\tau} \rb$.
    \item \textbf{Term $\mbf{\textit{T}_4}$:} $T_4$ is due to the mismatch arising due to multiple local updates. With perfect synchronization ($K=1$), we have $T_4=0$. More generally, we show that $T_4 \lesssim M(\bthetat) + \Omega_t$.
    \item \textbf{Term $\mbf{\textit{T}_5}$:} Using Assumption \ref{ass:lipschitz}, we can bound $T_5\lesssim M(\bthetat) + \Omega_t + \lnr \mbf b(\byt) \rnr_c^2 $.
\end{itemize}

\vspace{0.13cm}
Substituting the above bounds on $T_1, T_2, T_3, T_4, T_5$ back into \eqref{eq:M_bound2}, we get
\begin{equation}
    \begin{aligned}
        \E_{t-\tau} \left[ M (\btheta_{t+1}) \right] \lesssim & (1-\alpha)\E_{t-\tau} \left[ M (\btheta_{t}) \right] +\alpha^4 +\underbrace{\E_{t-\tau}  \lb \| \btheta_{t} - \btheta_{t-\tau} \|_c^2 \rb  + \alpha^2 \E_{t-\tau}  \lb \| \btheta_{t} - \btheta_{t-\tau} \|_c \rb   }_{I_1} \\
        & + \alpha^2\E_{t-\tau}\|\mbf b(\byt)\|_c^2 + \alpha \E_{t-\tau}\lb \Omega_t+\Omega_{t-\tau}\rb.
    \end{aligned}
    \label{eq:M_drift_main22}
\end{equation}
First, we discuss a few special cases of \eqref{eq:M_drift_main22} under different assumptions.
\begin{enumerate}
    \item Perfect synchronization $(\sync=1)$ with i.i.d. noise: since $\Ot = 0$ for all $t$, $\tau = 0$, the bound reduces to the form we get for centralized systems with i.i.d. noise \cite{rakhlin12SGD_ICML}.
    \item Infrequent synchronization $(\sync > 1)$ with i.i.d. noise: $\tau = 0$, and the bound reduces to the form we get for federated stochastic optimization with i.i.d. noise \cite{richtarik20localSGD_aistats}.
    \item Perfect synchronization $(\sync=1)$ with Markov noise: Since $\Ot = 0$ for all $t$, the bound in \eqref{eq:M_drift_main22} generalizes the results in \cite{srikant2019finite}, which considers the linear stochastic approximation case. The bound also resembles the corresponding bound in \cite{chen2021Lyapunov_arxiv}.
\end{enumerate}
The last two terms in \eqref{eq:M_drift_main22} correspond to average of Markov noise in the $\na$ agents, and the synchronization error. These are bounded in Sections \ref{sec:condition_t_2_tau} and \ref{sec:wtd_cons_error}, respectively. But first, in the next section we bound $I_1$. We would like to emphasize that bounding $I_1$ carefully to eventually create the terms $o(\alpha^3)$ and $\alpha^2/\na$, is essential to obtain a recursion similar to \eqref{eq:drift_with_speed_up}. This is one of the important parts that separates our work from the previous results.

\subsection{Bounding \texorpdfstring{$I_1$}{I1}} \label{sec:bound_I1}
In this section, we bound $\| \btheta_{t} - \btheta_{t-\tau} \|_c^2$ in terms of $\alpha^2\sum_{l=t-\tau}^t\| \mbf b(\mbf Y_l)\|_c^2$ and $\alpha^2\sum_{l=t-\tau}^t\Omega_l$. 
The bound on $\| \btheta_{t} - \btheta_{t-\tau} \|_c$ follows analogously. First, using triangle inequality and the update of Algorithm \ref{alg:fed_stoch_app}, at each time $t$
\begin{align}
     \| \btheta_t - \btheta_{t-\tau} \|_c^2 & \leq  \tau\sum_{l=t-\tau}^{t-1} \| \btheta_{l+1} - \btheta_l \|_c^2  \lesssim \tau\alpha^2\sum_{l=t-\tau}^{t-1}  [\| \btheta_l \|_c^2 +  \| \mbf b (\bY_l) \|_c^2 +   \Omega_l ].\label{eq:theta_diff_square_sk}
\end{align}
Among the terms above, we need to further bound $\sum_{l=t-\tau}^{t-1} \|\btheta_l\|_c^2$. It can be shown that $\|\btheta_l\|_c^2\lesssim \|\btheta_{t-\tau}\|_c^2 +\alpha \sum_{k=t-\tau}^{l-1} \|\mbf b (\bY_k)\|_c^2 + \alpha \sum_{k=t-\tau}^{l-1} \Omega_k$. This bound can be explained as follows. Consider $\btheta_l$ for $l=t-\tau,t-\tau+1\dots,t-1$. Clearly, at $l=t-\tau$, $\|\btheta_l\|_c^2\leq \|\btheta_{t-\tau}\|_c^2$. Next, due to the update in Step 4 of Algorithm \ref{alg:fed_stoch_app}, starting at time $t-\tau$, $\btheta_l$ depends on $\{ \mbf b (\bY_k) \}_{k=t-\tau}^{l-1}$, which explains the $\alpha\sum_{k=t-\tau}^{l-1} \|\mbf b (\bY_k)\|_c^2$ term. In addition, the discrepancy between agents explains the term $\alpha\sum_{k=t-\tau}^{l-1} \Omega_k$.
Since $l \leq t-1$, we can bound \eqref{eq:theta_diff_square_sk} as
$$\| \btheta_t - \btheta_{t-\tau} \|_c^2  \lesssim \tau^2\alpha^2\|\btheta_{t-\tau}\|_c^2 + \tau\alpha^2  \sum_{l=t-\tau}^{t-1}  [  \| \mbf b (\bY_l) \|_c^2 +   \Omega_l ].$$ 

Substituting this bound in \eqref{eq:M_drift_main22}, we get
\begin{align}\label{eq:drift_R}
    \E_{t-\tau} \left[ M (\btheta_{t+1}) \right] \lesssim & (1-\alpha)\E_{t-\tau} \left[ M (\btheta_{t}) \right] +\alpha^3 +\underbrace{\alpha^2  \sum_{l=t-\tau}^{t-1}  \E_{t-\tau}[  \| \mbf b (\bY_l) \|_c^2  ]}_{I_2}+\alpha  \sum_{l=t-\tau}^{t}  \E_{t-\tau}[\Omega_l ].
\end{align}
Next, we bound the term $\sum_{l=t-\tau}^t\E_{t-\tau}[\| \mbf b(\mbf Y_l)\|_c^2]$ to create the form $\alpha^3+\alpha^2/\na$ promised in \eqref{eq:drift_with_speed_up}. 

\subsection{Bounding \texorpdfstring{$I_2$}{I2}}\label{sec:condition_t_2_tau}
If $\{ \by_l^i \}$ is i.i.d. noise across agents $i=1,2,\dots,\na$, it is easy to see that $\E[\|\mbf b(\bY_l)\|_c^2] \lesssim \frac{1}{\na}$. This leads to linear speedup in federated learning with i.i.d. noise \cite{fedavg17aistats}. However, in the presence of Markovian noise, the analysis is more involved.
A crude method to bound $I_2$ would be to use Assumption \ref{ass:lipschitz} to obtain $\| \mbf b (\bY_l) \|_c^2\leq B^2$, which results in $I_2 \lesssim \alpha^2$ \cite{srikant2019finite}. However, compared to the form in \eqref{eq:drift_with_speed_up}, this bound is not sufficient to ensure linear speedup in $\na$. 
Hence, we exploit conditional expectation to get a tighter dependence on $\alpha$ and $\na$. In particular, for $r\leq t$, 
\begin{align}
    \E_{t-r}[\|\mbf b(\bY_l)\|_c^2] \lesssim & \frac{1}{\na} +  \rho^{2(l-t+r)}, \qquad \forall  l \in \{ t-r, \hdots, t-1 \}.
    \label{eq:lemma_bound_byt_11111}
\end{align}
In \eqref{eq:lemma_bound_byt_11111}, the first term is similar to the i.i.d. noise case, while the second term appears due to the Markovian noise. Now consider $r=\tau$. Putting in \eqref{eq:lemma_bound_byt_11111}, we get $I_2\lesssim \frac{\alpha^2}{\na}+\alpha^2$. However, the term $\alpha^2$ is too loose to guarantee linear speedup (again, recall \eqref{eq:drift_with_speed_up}). Therefore, we need to go further back into the past. Next, consider $r=2\tau$. Then using the assumption $\rho^\tau=\alpha^2$, it is easy to show that $I_2\lesssim \frac{\alpha^2}{\na}+\alpha^3$, which is sufficient for linear speedup. Here we would like to emphasize that taking a conditional expectation conditioned on time $t-2\tau$ is essential to achieve this bound. 

Substituting the bound in \eqref{eq:drift_R} and using tower property of expectation $\E_{t-2\tau} \lb \E_{t-\tau} \lb \cdot \rb \rb = \E_{t-2\tau} \lb \cdot \rb$,
\begin{align}\label{eq:drift_R2}
    \E_{t-2\tau} \left[ M (\btheta_{t+1}) \right] \lesssim & (1-\alpha)\E_{t-2\tau} \left[ M (\btheta_{t}) \right]  +\alpha^2/\na+\alpha^3+\alpha  \sum_{l=t-\tau}^{t}  \E_{t-2\tau}[\Omega_l ].
\end{align}

Next, we discuss how to handle the synchronization error $\{ \Omega_l \}$ by taking a weighted sum of \eqref{eq:drift_R2}.

\subsection{Weighted Consensus Error}\label{sec:wtd_cons_error}

It can be shown that the expectation of the synchronization error at time $t$ satisfies
\begin{align}
    \mbe [\Ot] & \lesssim  \alpha^2 (t-s_t \sync)  +  \alpha^2 (t-s_t \sync)  \sum_{t' = s_t \sync}^{t-1} \mbe [ M(\btheta_{t'})]. \label{eq:lem:consensus_error22}
\end{align}
where $s_t \sync$ is the last time before $t$ when synchronization occurred. Intuitively, the synchronization error builds up over time, starting from zero at $s_t \sync$, and depends on all iterations encountered along the trajectory, i.e., $\{ \btheta_{t'} \}_{t'=s_t \sync}^{t-1}$.
Next, we define $w_t=1/(1-\alpha)^t$. We can show that a weighted sum of the inequality \eqref{eq:lem:consensus_error22} satisfies
\begin{align}
    & \frac{1}{W_T} \sum_{t=2\tau}^T w_t \lb \sum_{\ell=t-\tau}^{t} \mbe [\Omega_\ell] \rb \lesssim \alpha^2(1+\varepsilon(K-1)) (\sync-1) + \frac{1}{2 W_T} \sum_{t=0}^{T} w_t \mbe [M (\btheta_t)],\label{eq:w_weighted_sync_noise}
\end{align}
where $W_T=\sum_{t=0}^Tw_t$. Note that throughout the proof, the first place where the effect of heterogeneity appears is in the analysis of the consensus error in \eqref{eq:w_weighted_sync_noise}. 

Finally, by sampling $\hat{T}$ according to $P(\hat{T}=t)=\frac{w_t}{\sum_{t'=0}^Tw_t}$ independently of all other randomness in the problem, we have $\E[M(\btheta_{\hat{T}})]=\frac{1}{W_T} \sum_{t=0}^{T} w_t \mbe [M (\btheta_t)]$. Weighting both sides of \eqref{eq:drift_R2} with $w_t/W_T$ and summing up, we get the result in Theorem \ref{thm:main_body}.

Due to the dependence of the synchronization error $\Ot$ on the previous trajectory through $\{ M(\btheta_{t'}) \}_{t'=s_t \sync}^{t-1}$, the recursion in \eqref{eq:drift_R2} is not as ``clean'' as the recursion in \eqref{eq:drift_with_speed_up}. Therefore, we rely on the weighted sum technique described above and show convergence for the weighted sum of iterates $\btheta_{\hat{T}}$, rather than the last iterate $\btheta_{T}$. Similar bounds for weighted iterates have also been shown in the much simpler i.i.d. noise setting \cite{stich18localSGD_iclr, koloskova20unified_localSGD_icml}.

\section{Conclusion}

In this paper, we analyzed the convergence of general heterogeneous federated stochastic approximation algorithms under Markovian noise (FeGSAM), where $\na$ agents seek to find the fixed point of an operator. Our results demonstrate that the algorithm achieves linear speedup. However, in a heterogeneous setting, achieving an accuracy level of $\epsilon$ requires more communication compared to the homogeneous case. Specifically, the communication complexity is $\mathcal{O}(\na)$ in the homogeneous setting and $\mathcal{O}(\sqrt{N}/\sqrt{\epsilon})$ in the heterogeneous setting.

Based on these findings, we explored the convergence of federated reinforcement learning algorithms, where $\na$ agents collaborate to complete a common task. We introduced Federated TD-learning in both on-policy (with function approximation) and off-policy (in tabular and function approximation) settings, as well as federated $Q$-learning. In these algorithms, each agent independently follows a Markovian trajectory of state-action-reward and updates its parameters locally. To ensure convergence, agents synchronize their parameters after every $\sync$ iterations. Our results demonstrate linear speedup across all algorithms with respect to the number of agents. Notably, in off-policy TD-learning with linear function approximation, the communication complexity is $\mathcal{O}(\sqrt{\na}/\sqrt{\epsilon})$, while for the other algorithms it remains $\mathcal{O}(\na)$.
\bibliography{refs.bib}

\begin{thebibliography}{100}

\bibitem{banach1922operations}
Stefan Banach.
\newblock Sur les op{\'e}rations dans les ensembles abstraits et leur
  application aux {\'e}quations int{\'e}grales.
\newblock {\em Fund. math}, 3(1):133--181, 1922.

\bibitem{beck2017first}
Amir Beck.
\newblock {\em First-order methods in optimization}, volume~25.
\newblock SIAM, 2017.

\bibitem{Beck2012ErrorBF}
Carolyn~L. Beck and R.~Srikant.
\newblock Error bounds for constant step-size {$Q$}-learning.
\newblock {\em Syst. Control. Lett.}, 61:1203--1208, 2012.

\bibitem{beck2012error}
Carolyn~L Beck and Rayadurgam Srikant.
\newblock Error bounds for constant step-size {$Q$}-learning.
\newblock {\em Systems \& control letters}, 61(12):1203--1208, 2012.

\bibitem{beck2013improved}
Carolyn~L Beck and Rayadurgam Srikant.
\newblock Improved upper bounds on the expected error in constant step-size
  {$Q$}-learning.
\newblock In {\em 2013 American Control Conference}, pages 1926--1931. IEEE,
  2013.

\bibitem{benveniste2012adaptive}
Albert Benveniste, Michel M{\'e}tivier, and Pierre Priouret.
\newblock {\em Adaptive algorithms and stochastic approximations}, volume~22.
\newblock Springer Science \& Business Media, 2012.

\bibitem{bertsekas1995dynamic2}
Dimitri~P Bertsekas, Dimitri~P Bertsekas, Dimitri~P Bertsekas, and Dimitri~P
  Bertsekas.
\newblock {\em Dynamic programming and optimal control}, volume~2.
\newblock Athena scientific Belmont, MA, 1995.

\bibitem{NDP_book}
Dimitri~P Bertsekas and John~N Tsitsiklis.
\newblock {\em Neuro-dynamic programming}.
\newblock Athena Scientific, 1996.

\bibitem{bertsekas1996neuro}
Dimitri~P Bertsekas and John~N Tsitsiklis.
\newblock {\em Neuro-dynamic programming}.
\newblock Athena Scientific, 1996.

\bibitem{bhandari18FiniteTD_LFA_colt}
Jalaj Bhandari, Daniel Russo, and Raghav Singal.
\newblock A finite time analysis of temporal difference learning with linear
  function approximation.
\newblock In {\em Conference on learning theory}, pages 1691--1692. PMLR, 2018.

\bibitem{borkar2009stochastic}
Vivek~S Borkar.
\newblock {\em Stochastic approximation: a dynamical systems viewpoint},
  volume~48.
\newblock Springer, 2009.

\bibitem{borkar2000ode}
Vivek~S Borkar and Sean~P Meyn.
\newblock The {ODE} method for convergence of stochastic approximation and
  reinforcement learning.
\newblock {\em SIAM Journal on Control and Optimization}, 38(2):447--469, 2000.

\bibitem{chau2014overview}
Marie Chau and Michael~C Fu.
\newblock An overview of stochastic approximation.
\newblock {\em Handbook of simulation optimization}, pages 149--178, 2014.

\bibitem{NACLFA_arxiv}
Zaiwei Chen, Sajad Khodadadian, and Siva~Theja Maguluri.
\newblock {Finite-Sample Analysis of Off-Policy Natural Actor-Critic with
  Linear Function Approximation}.
\newblock {\em Preprint arXiv:2105.12540}, 2021.
\newblock Submitted to {NeurIPS} 2021.

\bibitem{chen2023lyapunov}
Zaiwei Chen, Siva~T Maguluri, Sanjay Shakkottai, and Karthikeyan Shanmugam.
\newblock A lyapunov theory for finite-sample guarantees of markovian
  stochastic approximation.
\newblock {\em Operations Research}, 2023.

\bibitem{chen2020finite}
Zaiwei Chen, Siva~Theja Maguluri, Sanjay Shakkottai, and Karthikeyan Shanmugam.
\newblock Finite-sample analysis of stochastic approximation using smooth
  convex envelopes.
\newblock {\em Under Review at Mathematics of Operations Research, Preprint
  arXiv:2002.00874}, 2020.

\bibitem{chen2020finite_nips}
Zaiwei Chen, Siva~Theja Maguluri, Sanjay Shakkottai, and Karthikeyan Shanmugam.
\newblock Finite-sample analysis of stochastic approximation using smooth
  convex envelopes.
\newblock In {\em Advances in Neural Information Processing Systems}, 2020.

\bibitem{chen2021Lyapunov_arxiv}
Zaiwei Chen, Siva~Theja Maguluri, Sanjay Shakkottai, and Karthikeyan Shanmugam.
\newblock {A Lyapunov Theory for Finite-Sample Guarantees of Asynchronous
  $Q$-Learning and TD-Learning Variants}.
\newblock {\em Under review by JMLR, Preprint arXiv:2102.01567}, 2021.

\bibitem{chen_offpolicyTD_arxiv}
Zaiwei Chen, Siva~Theja Maguluri, Sanjay Shakkottai, and Karthikeyan Shanmugam.
\newblock {Finite-Sample Analysis of Off-Policy TD-Learning via Generalized
  Bellman Operators}.
\newblock {\em Preprint arXiv:2106.12729}, 2021.

\bibitem{zaiwei_q_arxiv}
Zaiwei Chen, Sheng Zhang, Thinh~T Doan, John-Paul Clarke, and Siva~Theja
  Maguluri.
\newblock Finite-sample analysis of nonlinear stochastic approximation with
  applications in reinforcement learning.
\newblock {\em Automatica}, 146:110623, 2022.

\bibitem{chen2021multi}
Ziyi Chen, Yi~Zhou, and Rongrong Chen.
\newblock Multi-agent off-policy td learning: Finite-time analysis with
  near-optimal sample complexity and communication complexity.
\newblock {\em arXiv preprint arXiv:2103.13147}, 2021.

\bibitem{chen21sample_A3C_arxiv}
Ziyi Chen, Yi~Zhou, Rongrong Chen, and Shaofeng Zou.
\newblock Sample and communication-efficient decentralized actor-critic
  algorithms with finite-time analysis.
\newblock {\em arXiv preprint arXiv:2109.03699}, 2021.

\bibitem{dalal2018finite}
Gal Dalal, Bal{\'a}zs Sz{\"o}r{\'e}nyi, Gugan Thoppe, and Shie Mannor.
\newblock Finite sample analysis for {TD(0)} with function approximation.
\newblock In {\em Thirty-Second AAAI Conference on Artificial Intelligence},
  2018.

\bibitem{doan_dist_td_icml}
Thinh Doan, Siva Maguluri, and Justin Romberg.
\newblock {Finite-Time Analysis of Distributed TD$(0)$ with Linear Function
  Approximation on Multi-Agent Reinforcement Learning}.
\newblock In {\em International Conference on Machine Learning}, pages
  1626--1635, 2019.

\bibitem{Even_Dar_Qlearning}
Eyal Even-Dar and Yishay Mansour.
\newblock {Learning Rates for $Q$-Learning}.
\newblock {\em J. Mach. Learn. Res.}, 5:1–25, 2004.

\bibitem{fan2023fedhql}
Flint~Xiaofeng Fan, Yining Ma, Zhongxiang Dai, Cheston Tan, Bryan Kian~Hsiang
  Low, and Roger Wattenhofer.
\newblock Fedhql: Federated heterogeneous q-learning.
\newblock {\em arXiv preprint arXiv:2301.11135}, 2023.

\bibitem{foerster16multiagentRL_arxiv}
Jakob~N Foerster, Yannis~M Assael, Nando De~Freitas, and Shimon Whiteson.
\newblock Learning to communicate with deep multi-agent reinforcement learning.
\newblock {\em arXiv preprint arXiv:1605.06676}, 2016.

\bibitem{ganesh2024global}
Swetha Ganesh, Jiayu Chen, Gugan Thoppe, and Vaneet Aggarwal.
\newblock Global convergence guarantees for federated policy gradient methods
  with adversaries.
\newblock {\em arXiv preprint arXiv:2403.09940}, 2024.

\bibitem{gottesman2020interpretable}
Omer Gottesman, Joseph Futoma, Yao Liu, Sonali Parbhoo, Leo Celi, Emma
  Brunskill, and Finale Doshi-Velez.
\newblock Interpretable off-policy evaluation in reinforcement learning by
  highlighting influential transitions.
\newblock In {\em International Conference on Machine Learning}, pages
  3658--3667. PMLR, 2020.

\bibitem{gottesman2019guidelines}
Omer Gottesman, Fredrik Johansson, Matthieu Komorowski, Aldo Faisal, David
  Sontag, Finale Doshi-Velez, and Leo~Anthony Celi.
\newblock Guidelines for reinforcement learning in healthcare.
\newblock {\em Nature medicine}, 25(1):16--18, 2019.

\bibitem{gu2017deep}
Shixiang Gu, Ethan Holly, Timothy Lillicrap, and Sergey Levine.
\newblock Deep reinforcement learning for robotic manipulation with
  asynchronous off-policy updates.
\newblock In {\em 2017 IEEE international conference on robotics and automation
  (ICRA)}, pages 3389--3396. IEEE, 2017.

\bibitem{harold1997stochastic}
J~Harold, G~Kushner, and George Yin.
\newblock Stochastic approximation and recursive algorithm and applications.
\newblock {\em Application of Mathematics}, 35(10), 1997.

\bibitem{hu2019characterizing}
Bin Hu and Usman~Ahmed Syed.
\newblock Characterizing the exact behaviors of temporal difference learning
  algorithms using markov jump linear system theory.
\newblock {\em arXiv preprint arXiv:1906.06781}, 2019.

\bibitem{jaakkola1994convergence}
Tommi Jaakkola, Michael~I Jordan, and Satinder~P Singh.
\newblock Convergence of stochastic iterative dynamic programming algorithms.
\newblock In {\em Advances in neural information processing systems}, pages
  703--710, 1994.

\bibitem{jin2022federated}
Hao Jin, Yang Peng, Wenhao Yang, Shusen Wang, and Zhihua Zhang.
\newblock Federated reinforcement learning with environment heterogeneity.
\newblock In {\em International Conference on Artificial Intelligence and
  Statistics}, pages 18--37. PMLR, 2022.

\bibitem{jumper2021highly}
John Jumper, Richard Evans, Alexander Pritzel, Tim Green, Michael Figurnov,
  Olaf Ronneberger, Kathryn Tunyasuvunakool, Russ Bates, Augustin
  {\v{Z}}{\'\i}dek, Anna Potapenko, et~al.
\newblock Highly accurate protein structure prediction with alphafold.
\newblock {\em nature}, 596(7873):583--589, 2021.

\bibitem{kairouz2021advances}
Peter Kairouz, H~Brendan McMahan, Brendan Avent, Aur{\'e}lien Bellet, Mehdi
  Bennis, Arjun~Nitin Bhagoji, Kallista Bonawitz, Zachary Charles, Graham
  Cormode, Rachel Cummings, et~al.
\newblock Advances and open problems in federated learning.
\newblock {\em Foundations and trends{\textregistered} in machine learning},
  14(1--2):1--210, 2021.

\bibitem{kairouz2019advances}
Peter Kairouz, H~Brendan McMahan, Brendan Avent, Aur{\'e}lien Bellet, Mehdi
  Bennis, Arjun~Nitin Bhagoji, Keith Bonawitz, Zachary Charles, Graham Cormode,
  Rachel Cummings, et~al.
\newblock Advances and open problems in federated learning.
\newblock {\em Preprint arXiv:1912.04977}, 2019.

\bibitem{richtarik20localSGD_aistats}
Ahmed Khaled, Konstantin Mishchenko, and Peter Richt{\'a}rik.
\newblock Tighter theory for local sgd on identical and heterogeneous data.
\newblock In {\em International Conference on Artificial Intelligence and
  Statistics}, pages 4519--4529. PMLR, 2020.

\bibitem{offpolicyNAC_ICML}
Sajad Khodadadian, Zaiwei Chen, and Siva~Theja Maguluri.
\newblock {Finite-Sample Analysis of Off-Policy Natural Actor-Critic
  Algorithm}.
\newblock In {\em International Conference on Machine Learning}, 2021.

\bibitem{khodadadian2022federated}
Sajad Khodadadian, Pranay Sharma, Gauri Joshi, and Siva~Theja Maguluri.
\newblock Federated reinforcement learning: Linear speedup under markovian
  sampling.
\newblock In {\em International Conference on Machine Learning}, pages
  10997--11057. PMLR, 2022.

\bibitem{koloskova20unified_localSGD_icml}
Anastasia Koloskova, Nicolas Loizou, Sadra Boreiri, Martin Jaggi, and Sebastian
  Stich.
\newblock A unified theory of decentralized sgd with changing topology and
  local updates.
\newblock In {\em International Conference on Machine Learning}, pages
  5381--5393. PMLR, 2020.

\bibitem{konda_tsitsiklis_actor_critic}
Vijay~R Konda and John~N Tsitsiklis.
\newblock Actor-critic algorithms.
\newblock In {\em Advances in neural information processing systems}, pages
  1008--1014, 2000.

\bibitem{konevcny16federated}
Jakub Kone{\v{c}}n{\`y}, H~Brendan McMahan, Daniel Ramage, and Peter
  Richt{\'a}rik.
\newblock Federated optimization: Distributed machine learning for on-device
  intelligence.
\newblock {\em arXiv preprint arXiv:1610.02527}, 2016.

\bibitem{csaba_iidnoise}
Chandrashekar Lakshminarayanan and Csaba Szepesvari.
\newblock {Linear stochastic approximation: How far does constant step-size and
  iterate averaging go?}
\newblock In {\em International Conference on Artificial Intelligence and
  Statistics}, pages 1347--1355, 2018.

\bibitem{lan2024asynchronous}
Guangchen Lan, Dong-Jun Han, Abolfazl Hashemi, Vaneet Aggarwal, and
  Christopher~G Brinton.
\newblock Asynchronous federated reinforcement learning with policy gradient
  updates: Algorithm design and convergence analysis.
\newblock {\em arXiv preprint arXiv:2404.08003}, 2024.

\bibitem{levine2020offline}
Sergey Levine, Aviral Kumar, George Tucker, and Justin Fu.
\newblock Offline reinforcement learning: Tutorial, review, and perspectives on
  open problems.
\newblock {\em Preprint arXiv:2005.01643}, 2020.

\bibitem{li2020sample}
Gen Li, Yuting Wei, Yuejie Chi, Yuantao Gu, and Yuxin Chen.
\newblock {Sample Complexity of Asynchronous $Q$-Learning: Sharper Analysis and
  Variance Reduction}.
\newblock {\em Advances in neural information processing systems}, 2020.

\bibitem{smith20FL_SPmag}
Tian Li, Anit~Kumar Sahu, Ameet Talwalkar, and Virginia Smith.
\newblock Federated learning: Challenges, methods, and future directions.
\newblock {\em IEEE Signal Processing Magazine}, 37(3):50--60, 2020.

\bibitem{li2019convergence}
Xiang Li, Kaixuan Huang, Wenhao Yang, Shusen Wang, and Zhihua Zhang.
\newblock On the convergence of fedavg on non-iid data.
\newblock {\em arXiv preprint arXiv:1907.02189}, 2019.

\bibitem{liang19fed_transfer_RL_arxiv}
Xinle Liang, Yang Liu, Tianjian Chen, Ming Liu, and Qiang Yang.
\newblock Federated transfer reinforcement learning for autonomous driving.
\newblock {\em arXiv preprint arXiv:1910.06001}, 2019.

\bibitem{liu2015finite}
Bo~Liu, Ji~Liu, Mohammad Ghavamzadeh, Sridhar Mahadevan, and Marek Petrik.
\newblock Finite-sample analysis of proximal gradient {TD} algorithms.
\newblock In {\em Proceedings of the Thirty-First Conference on Uncertainty in
  Artificial Intelligence}, pages 504--513, 2015.

\bibitem{liu19lifelong_fedRL_ieee}
Boyi Liu, Lujia Wang, and Ming Liu.
\newblock Lifelong federated reinforcement learning: a learning architecture
  for navigation in cloud robotic systems.
\newblock {\em IEEE Robotics and Automation Letters}, 4(4):4555--4562, 2019.

\bibitem{liu2018representation}
Yao Liu, Omer Gottesman, Aniruddh Raghu, Matthieu Komorowski, Aldo~A Faisal,
  Finale Doshi-Velez, and Emma Brunskill.
\newblock {Representation Balancing MDPs for Off-policy Policy Evaluation}.
\newblock {\em Advances in Neural Information Processing Systems},
  31:2644--2653, 2018.

\bibitem{maddern20171}
Will Maddern, Geoffrey Pascoe, Chris Linegar, and Paul Newman.
\newblock 1 year, 1000 km: The oxford robotcar dataset.
\newblock {\em The International Journal of Robotics Research}, 36(1):3--15,
  2017.

\bibitem{maei2018convergent}
Hamid~Reza Maei.
\newblock Convergent actor-critic algorithms under off-policy training and
  function approximation.
\newblock {\em Preprint arXiv:1802.07842}, 2018.

\bibitem{fedavg17aistats}
Brendan McMahan, Eider Moore, Daniel Ramage, Seth Hampson, and Blaise~Aguera
  y~Arcas.
\newblock Communication-efficient learning of deep networks from decentralized
  data.
\newblock In {\em Artificial Intelligence and Statistics}, pages 1273--1282.
  PMLR, 2017.

\bibitem{mothukuri21privacyFL_elsevier}
Viraaji Mothukuri, Reza~M Parizi, Seyedamin Pouriyeh, Yan Huang, Ali
  Dehghantanha, and Gautam Srivastava.
\newblock A survey on security and privacy of federated learning.
\newblock {\em Future Generation Computer Systems}, 115:619--640, 2021.

\bibitem{moulines2011non}
Eric Moulines and Francis~R Bach.
\newblock Non-asymptotic analysis of stochastic approximation algorithms for
  machine learning.
\newblock In {\em Advances in Neural Information Processing Systems}, pages
  451--459, 2011.

\bibitem{nguyen21FL_IoT_arxiv}
Dinh~C Nguyen, Ming Ding, Pubudu~N Pathirana, Aruna Seneviratne, Jun Li, and
  H~Vincent Poor.
\newblock Federated learning for internet of things: A comprehensive survey.
\newblock {\em arXiv preprint arXiv:2104.07914}, 2021.

\bibitem{pinto2023federated}
Euclides~Carlos Pinto~Neto, Somayeh Sadeghi, Xichen Zhang, and Sajjad Dadkhah.
\newblock Federated reinforcement learning in iot: applications, opportunities
  and open challenges.
\newblock {\em Applied Sciences}, 13(11):6497, 2023.

\bibitem{puterman2014Markov}
Martin~L Puterman.
\newblock {\em Markov decision processes: discrete stochastic dynamic
  programming}.
\newblock John Wiley \& Sons, 2014.

\bibitem{qi2021federated}
Jiaju Qi, Qihao Zhou, Lei Lei, and Kan Zheng.
\newblock Federated reinforcement learning: Techniques, applications, and open
  challenges.
\newblock {\em arXiv preprint arXiv:2108.11887}, 2021.

\bibitem{qu2020finite}
Guannan Qu and Adam Wierman.
\newblock {Finite-Time Analysis of Asynchronous Stochastic Approximation and
  $Q$-Learning}.
\newblock In {\em Conference on Learning Theory}, pages 3185--3205. PMLR, 2020.

\bibitem{qu2020federated}
Zhaonan Qu, Kaixiang Lin, Jayant Kalagnanam, Zhaojian Li, Jiayu Zhou, and
  Zhengyuan Zhou.
\newblock Federated learning's blessing: Fedavg has linear speedup.
\newblock {\em arXiv preprint arXiv:2007.05690}, 2020.

\bibitem{rakhlin12SGD_ICML}
Alexander Rakhlin, Ohad Shamir, and Karthik Sridharan.
\newblock Making gradient descent optimal for strongly convex stochastic
  optimization.
\newblock In {\em Proceedings of the 29th International Coference on
  International Conference on Machine Learning}, pages 1571--1578, 2012.

\bibitem{ren19FL_IoT_ieee}
Jianji Ren, Haichao Wang, Tingting Hou, Shuai Zheng, and Chaosheng Tang.
\newblock Federated learning-based computation offloading optimization in edge
  computing-supported internet of things.
\newblock {\em IEEE Access}, 7:69194--69201, 2019.

\bibitem{robbinsmonroSA}
Herbert Robbins and Sutton Monro.
\newblock A stochastic approximation method.
\newblock {\em The annals of mathematical statistics}, pages 400--407, 1951.

\bibitem{shah2018q}
Devavrat Shah and Qiaomin Xie.
\newblock {$Q$}-learning with nearest neighbors.
\newblock In {\em Advances in Neural Information Processing Systems}, pages
  3111--3121, 2018.

\bibitem{shaik2024framu}
Thanveer Shaik, Xiaohui Tao, Lin Li, Haoran Xie, Taotao Cai, Xiaofeng Zhu, and
  Qing Li.
\newblock Framu: Attention-based machine unlearning using federated
  reinforcement learning.
\newblock {\em IEEE Transactions on Knowledge and Data Engineering}, 2024.

\bibitem{shalev2012online}
Shai Shalev-Shwartz et~al.
\newblock Online learning and online convex optimization.
\newblock {\em Foundations and Trends{\textregistered} in Machine Learning},
  4(2):107--194, 2012.

\bibitem{shao2019survey}
Kun Shao, Zhentao Tang, Yuanheng Zhu, Nannan Li, and Dongbin Zhao.
\newblock A survey of deep reinforcement learning in video games.
\newblock {\em arXiv preprint arXiv:1912.10944}, 2019.

\bibitem{hong20asynch_a3c_arxiv}
Han Shen, Kaiqing Zhang, Mingyi Hong, and Tianyi Chen.
\newblock Asynchronous advantage actor critic: Non-asymptotic analysis and
  linear speedup.
\newblock {\em arXiv preprint arXiv:2012.15511}, 2020.

\bibitem{spiridonoff21comm_eff_SGD_neurips}
Artin Spiridonoff, Alex Olshevsky, and Ioannis~Ch Paschalidis.
\newblock Communication-efficient sgd: From local sgd to one-shot averaging.
\newblock In {\em Advances in Neural Information Processing Systems},
  volume~34, 2021.

\bibitem{srikant2019finite}
Rayadurgam Srikant and Lei Ying.
\newblock Finite-time error bounds for linear stochastic approximation and {TD}
  learning.
\newblock In {\em Conference on Learning Theory}, pages 2803--2830. PMLR, 2019.

\bibitem{stich18localSGD_iclr}
Sebastian~U Stich.
\newblock Local sgd converges fast and communicates little.
\newblock In {\em International Conference on Learning Representations}, 2018.

\bibitem{sun20decenTD_LFA_aistats}
Jun Sun, Gang Wang, Georgios~B Giannakis, Qinmin Yang, and Zaiyue Yang.
\newblock Finite-time analysis of decentralized temporal-difference learning
  with linear function approximation.
\newblock In {\em International Conference on Artificial Intelligence and
  Statistics}, pages 4485--4495. PMLR, 2020.

\bibitem{sun2024understanding}
Zhenyu Sun, Xiaochun Niu, and Ermin Wei.
\newblock Understanding generalization of federated learning via stability:
  Heterogeneity matters.
\newblock In {\em International Conference on Artificial Intelligence and
  Statistics}, pages 676--684. PMLR, 2024.

\bibitem{sutton1988learning}
Richard~S Sutton.
\newblock Learning to predict by the methods of temporal differences.
\newblock {\em Machine learning}, 3(1):9--44, 1988.

\bibitem{suttonbartorlbook}
Richard~S Sutton and Andrew~G Barto.
\newblock {\em {Reinforcement learning: An introduction}}.
\newblock MIT press, 2018.

\bibitem{szepesvari2022algorithms}
Csaba Szepesv{\'a}ri.
\newblock {\em Algorithms for reinforcement learning}.
\newblock Springer nature, 2022.

\bibitem{tadic2001convergence}
Vladislav Tadi{\'c}.
\newblock On the convergence of temporal-difference learning with linear
  function approximation.
\newblock {\em Machine learning}, 42(3):241--267, 2001.

\bibitem{truex19privacyFL_acm}
Stacey Truex, Nathalie Baracaldo, Ali Anwar, Thomas Steinke, Heiko Ludwig, Rui
  Zhang, and Yi~Zhou.
\newblock A hybrid approach to privacy-preserving federated learning.
\newblock In {\em Proceedings of the 12th ACM Workshop on Artificial
  Intelligence and Security}, pages 1--11, 2019.

\bibitem{tsitsiklis1994asynchronous}
John~N Tsitsiklis.
\newblock {Asynchronous stochastic approximation and $Q$-learning}.
\newblock {\em Machine learning}, 16(3):185--202, 1994.

\bibitem{tsitsiklis1997analysis}
John~N Tsitsiklis and Benjamin Van~Roy.
\newblock Analysis of temporal-difference learning with function approximation.
\newblock In {\em Advances in neural information processing systems}, pages
  1075--1081, 1997.

\bibitem{wai20dist_markov_noLS_cdc}
Hoi-To Wai.
\newblock On the convergence of consensus algorithms with markovian noise and
  gradient bias.
\newblock In {\em 2020 59th IEEE Conference on Decision and Control (CDC)},
  pages 4897--4902. IEEE, 2020.

\bibitem{wainwright2019stochastic}
Martin~J Wainwright.
\newblock Stochastic approximation with cone-contractive operators: Sharp
  $\ell_\infty$-bounds for ${Q}$-learning.
\newblock {\em Preprint arXiv:1905.06265}, 2019.

\bibitem{wang20decen_TD_neurips}
Gang Wang, Songtao Lu, Georgios Giannakis, Gerald Tesauro, and Jian Sun.
\newblock {Decentralized TD Tracking with Linear Function Approximation and its
  Finite-Time Analysis}.
\newblock In H.~Larochelle, M.~Ranzato, R.~Hadsell, M.~F. Balcan, and H.~Lin,
  editors, {\em Advances in Neural Information Processing Systems}, volume~33,
  pages 13762--13772. Curran Associates, Inc., 2020.

\bibitem{wang2024momentum}
Han Wang, Sihong He, Zhili Zhang, Fei Miao, and James Anderson.
\newblock Momentum for the win: Collaborative federated reinforcement learning
  across heterogeneous environments.
\newblock {\em arXiv preprint arXiv:2405.19499}, 2024.

\bibitem{wang2023federated}
Han Wang, Aritra Mitra, Hamed Hassani, George~J Pappas, and James Anderson.
\newblock Federated temporal difference learning with linear function
  approximation under environmental heterogeneity.
\newblock {\em arXiv preprint arXiv:2302.02212}, 2023.

\bibitem{wang21coopSGD_jmlr}
Jianyu Wang and Gauri Joshi.
\newblock Cooperative sgd: A unified framework for the design and analysis of
  local-update sgd algorithms.
\newblock {\em Journal of Machine Learning Research}, 22(213):1--50, 2021.

\bibitem{wang2020fedRL_ieeeIoT}
Xiaofei Wang, Chenyang Wang, Xiuhua Li, Victor~CM Leung, and Tarik Taleb.
\newblock Federated deep reinforcement learning for internet of things with
  decentralized cooperative edge caching.
\newblock {\em IEEE Internet of Things Journal}, 7(10):9441--9455, 2020.

\bibitem{watkins1992q}
Christopher~JCH Watkins and Peter Dayan.
\newblock {$Q$}-learning.
\newblock {\em Machine learning}, 8(3-4):279--292, 1992.

\bibitem{woo2023blessing}
Jiin Woo, Gauri Joshi, and Yuejie Chi.
\newblock The blessing of heterogeneity in federated q-learning: Linear speedup
  and beyond.
\newblock In {\em International Conference on Machine Learning}, pages
  37157--37216. PMLR, 2023.

\bibitem{xu2021multi}
Minrui Xu, Jialiang Peng, BB~Gupta, Jiawen Kang, Zehui Xiong, Zhenni Li, and
  Ahmed~A Abd El-Latif.
\newblock Multi-agent federated reinforcement learning for secure incentive
  mechanism in intelligent cyber-physical systems.
\newblock {\em IEEE Internet of Things Journal}, 2021.

\bibitem{yang2019federated}
Qiang Yang, Yang Liu, Yong Cheng, Yan Kang, Tianjian Chen, and Han Yu.
\newblock Federated learning.
\newblock {\em Synthesis Lectures on Artificial Intelligence and Machine
  Learning}, 13(3):1--207, 2019.

\bibitem{yang2023federated}
Tong Yang, Shicong Cen, Yuting Wei, Yuxin Chen, and Yuejie Chi.
\newblock Federated natural policy gradient methods for multi-task
  reinforcement learning.
\newblock {\em arXiv preprint arXiv:2311.00201}, 2023.

\bibitem{yurtsever2020survey}
Ekim Yurtsever, Jacob Lambert, Alexander Carballo, and Kazuya Takeda.
\newblock A survey of autonomous driving: Common practices and emerging
  technologies.
\newblock {\em IEEE Access}, 8:58443--58469, 2020.

\bibitem{doan20decen_sa_noLS_arxiv}
Sihan Zeng, Thinh~T Doan, and Justin Romberg.
\newblock Finite-time analysis of decentralized stochastic approximation with
  applications in multi-agent and multi-task learning.
\newblock {\em arXiv preprint arXiv:2010.15088}, 2020.

\bibitem{zhang2022multi}
Sai~Qian Zhang, Jieyu Lin, and Qi~Zhang.
\newblock A multi-agent reinforcement learning approach for efficient client
  selection in federated learning.
\newblock {\em arXiv preprint arXiv:2201.02932}, 2022.

\bibitem{zhang2019provably}
Shangtong Zhang, Bo~Liu, Hengshuai Yao, and Shimon Whiteson.
\newblock Provably convergent two-timescale off-policy actor-critic with
  function approximation.
\newblock In {\em International Conference on Machine Learning}, pages
  11204--11213. PMLR, 2020.

\bibitem{zhao21IoV_FedRL_ICC}
Lei Zhao, Yongyi Ran, Hao Wang, Junxia Wang, and Jiangtao Luo.
\newblock Towards cooperative caching for vehicular networks with multi-level
  federated reinforcement learning.
\newblock In {\em ICC 2021-IEEE International Conference on Communications},
  pages 1--6. IEEE, 2021.

\bibitem{zhou2023digital}
Xiaokang Zhou, Xuzhe Zheng, Xuesong Cui, Jiashuai Shi, Wei Liang, Zheng Yan,
  Laurance~T Yang, Shohei Shimizu, I~Kevin, and Kai Wang.
\newblock Digital twin enhanced federated reinforcement learning with
  lightweight knowledge distillation in mobile networks.
\newblock {\em IEEE Journal on Selected Areas in Communications}, 2023.

\bibitem{zhuo19FedDeepRL_arxiv}
Hankz~Hankui Zhuo, Wenfeng Feng, Yufeng Lin, Qian Xu, and Qiang Yang.
\newblock Federated deep reinforcement learning.
\newblock {\em arXiv preprint arXiv:1901.08277}, 2019.

\end{thebibliography}
\bibliographystyle{plain}
\begin{appendix}
\pagebreak

\begin{center}
    {\LARGE \textbf{Appendices}}
\end{center}
The appendices are organized as follows. In Section \ref{sec:lower_bound_stoch_app}, we discuss the lower bound on the convergence of a general stochastic approximation. 
In Section \ref{sec:fed_stoch_analysis_app} we derive the convergence bound of the FeGSAM algorithm. Next, we employ the results in Section \ref{sec:fed_stoch_analysis_app} to derive the convergence bounds of federated TD-learning in Section \ref{sec:fed_td_learning_app} and federated $Q$-learning in Section \ref{sec:fed_Q_learning_app}.

\section{Lower Bound on the Convergence of General Stochastic Approximation}\label{sec:lower_bound_stoch_app}

In this section, we discuss the convergence of the general stochastic approximation. In this discussion, we provide a simple stochastic approximation with i.i.d. noise, which can give insight into the general convergence bound in \eqref{eq:general_stoch_conver}. In particular, we show that the convergence bound in \eqref{eq:general_stoch_conver} is tight and cannot be improved.

Consider a one-dimensional random variable $X$ with zero mean $\E[X]=0$ and bounded variance $\E[X^2]=\sigma^2$. Consider the following update
\begin{align}\label{eq:simple_iid_update}
    x_{t+1} = x_t + \alpha (X_t-x_t), \quad t\geq 0,
\end{align}
where we start with some fixed deterministic $x_0$ and $X_t$ is a an i.i.d. sample of the random variable $X$. It is easy to see that the update \eqref{eq:simple_iid_update} is a special case of the update of the general stochastic approximation with the fixed point $x^*=0$.

By expanding the update \eqref{eq:simple_iid_update}, we have
\begin{align*}
    x_t = (1-\alpha)^t x_0 + \alpha\sum_{k=0}^{t-1}(1-\alpha)^{t-k-1} X_k.
\end{align*}
Hence, we have
\begin{align*}
    x_t^2 = (1-\alpha)^{2t} x_0^2 + \left(\alpha\sum_{k=0}^{t-1}(1-\alpha)^{t-k-1} X_k\right)^2 + 2\alpha(1-\alpha)^t x_0\sum_{k=0}^{t-1}(1-\alpha)^{t-k-1} X_k.
\end{align*}
Taking expectation on both sides, and using the zero mean property of $X_k$, we have
\begin{align}
    \E[x_t^2] = &(1-\alpha)^{2t} x_0^2 + \alpha^2\E\left(\sum_{k=0}^{t-1}(1-\alpha)^{t-k-1} X_k\right)^2\nn\\
    = & (1-\alpha)^{2t} x_0^2 + \alpha^2\E\left(\sum_{k=0}^{t-1} (1-\alpha)^{2(t-k-1)}X_k^2 + \sum_{k,k'=0, k\neq k'}^{t-1}(1-\alpha)^{2t-k-k'-2} X_kX_k'\right)\nn\\
    = & (1-\alpha)^{2t} x_0^2 + \alpha^2\left(\sum_{k=0}^{t-1} (1-\alpha)^{2(t-k-1)}\sigma^2 + \sum_{k,k'=0, k\neq k'}^{t-1}(1-\alpha)^{2t-k-k'-2} \E[X_k]\E[X_k']\right)\tag{i.i.d. property}\\
    = & (1-\alpha)^{2t} x_0^2 + \alpha^2\left(\sum_{k=0}^{t-1} (1-\alpha)^{2(t-k-1)}\sigma^2 \right)\tag{zero mean}\\
    = & (1-\alpha)^{2t} x_0^2 + \alpha^2\sigma^2\frac{1-(1-\alpha)^{2t}}{1-(1-\alpha)^2}\nn\\
    = & x_0^2(1-\alpha)^{2t}  + \sigma^2\frac{1-(1-\alpha)^{2t}}{2-\alpha}\alpha\nn\\
    = & \underbrace{(x_0^2-\frac{\alpha\sigma^2}{2-\alpha})(1-\alpha)^{2t}}_{T_1: \text{bias}}  + \underbrace{\frac{\sigma^2}{2-\alpha}\alpha}_{T_2: \text{variance}}.\label{eq:tight_bound}
\end{align}
It is clear that \eqref{eq:tight_bound} has the same form as the bound in \eqref{eq:general_stoch_conver} with $T_1$ as the geometric term which converges to zero as $t\rightarrow \infty$, and $T_2$ term proportional to the step size $\alpha$. In addition, note that in the above derivation, we did not use any inequality and therefore the bounds in \eqref{eq:tight_bound} and \eqref{eq:general_stoch_conver} are tight. 

\section{Analysis of Federated Stochastic Approximation} \label{sec:fed_stoch_analysis_app}
First, we restate Theorem
\ref{thm:main_body} with explicit expressions of the different constants.
\begin{theorem}\label{thm:main}
Consider the FeGSAM Algorithm \ref{alg:fed_stoch_app} with $c_{FSAM}= 1 - \frac{\alpha \varphiz_2}{2}$ ($\varphiz_2$ is defined in \eqref{eq:def:constants}) and suppose that Assumptions \ref{ass:mixing}, \ref{ass:contraction}, \ref{ass:lipschitz}, \ref{ass:noise_independence} are satisfied. Consider a sufficiently small step size $\alpha$ that satisfies the assumptions in \eqref{eq:alpha_const_1}, \eqref{eq:alpha_const_2}, \eqref{eq:alpha_const_3}, 
\eqref{eq:alpha_const_4}. Furthermore, denote $\tau_\alpha=\lceil 2\log_\rho\alpha\rceil$, and take large enough $T$ such that $T>\max\{K+\tau_\alpha,2\tau_\alpha\}$. Then, the output of the FeGSAM Algorithm \ref{alg:fed_stoch_app}, $\btheta_{\hat{T}} \triangleq \frac{1}{\na} \sumik \btheta_{\hat{T}}^i$, satisfies
\begin{align}
    \E[\|\btheta_{\hat{T}}\|_c^2]\leq \mathcal{C}_1 \frac{1}{\alpha} \lp 1 - \frac{\alpha \varphiz_2}{2} \rp^{T-2\tau_\alpha+1} +\mathcal{C}_2  \frac{\alpha\tau_\alpha^2}{\na}+ \mathcal{C}_3  (\sync-1) \alpha^2\tau_\alpha + \mathcal{C}_4(K-1)^2\varepsilon\alpha^2 \tau_\alpha +\mathcal{C}_5\alpha^3\tau_\alpha^2,\label{eq:thm_main_eq}
\end{align}
where $\mathcal{C}_1=16u_{cm}^2 M_0(\log_\rho \frac{1}{e}+\frac{1}{\varphiz_2})$, $M_0 = \frac{1}{l_{cm}^2}\left(\frac{1}{C_1^2} \lp B + (A_2 + 1) \lp \|\btheta_0\|_c + \frac{B}{2 C_1} \rp \rp^2+\|\btheta_0\|_c^2\right)$, $\mathcal{C}_2 = 8u_{cm}^2\lp C_8\frac{u_{cD}^2B^2}{l_{cD}^2} + \frac{1}{2} + C_{12}\rp/\varphiz_2$, and $\mathcal{C}_3 = 80\tb^2 C_{17}u_{cm}^2\lp 1 + \frac{4 (m_2+ m_4) \rho}{B (1 - \rho)} \rp/\varphiz_2$, $\mathcal{C}_4= \frac{320u_{cm}^2\tb^2 C_{17}m_3}{\varphiz_2B}$ and $\mathcal{C}_5 = 8u_{cm}^2  \lp C_7 + C_{11}   + 0.5 C_3^2 C_9^2   + C_3 C_{10}  + 2 C_1 C_3 C_{10} + 3 A_1 C_3  + C_{13} + C_8 \frac{u_{cD}^2}{l_{cD}^2} 2 m^2_2 \alpha^2\rp/\varphiz_2 $. Here $u_{cm},l_{cm}, \varphiz_2,C_1, C_3,C_7,C_8, C_9,C_{10},C_{11},C_{12},C_{13}, C_{17},\tb$ are problem-dependent constants which are defined in the following proposition and lemmas.
\end{theorem}

Next, we characterize the sample complexity of the FeGSAM Algorithm \ref{alg:fed_stoch_app}, where we establish a linear speedup in the convergence of the algorithm.
\begin{corollary}\label{lem:sample_complexity} Consider FeGSAM Algorithm \ref{alg:fed_stoch_app} with fixed number of iterations $T$ and step size $\alpha=\frac{8\log(\na T)}{\varphiz_2 T}$. Suppose $T$ is large enough, such that $\alpha$ satisfies the requirements of the step size in Theorem \ref{thm:main} and $T>\max\{4\tau_\alpha,N\}$. Furthermore, for the choice of $K$, assume
\begin{itemize}
    \item if $\varepsilon>0$, take $\sync=\sqrt{T}/\na$.
    \item if $\varepsilon=0$, take $\sync=T/\na$.
\end{itemize}

Then we have
$\E[\|\btheta_{\hat{T}}\|_c^2] \leq\epsilon$ after $T=\mathcal{O}\lp\frac{1}{\na \epsilon}\rp$ iterations.
\end{corollary}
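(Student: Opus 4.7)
\textbf{Proof plan for Corollary \ref{lem:sample_complexity}.} The strategy is purely to substitute the prescribed choices of $\alpha$ and $\sync$ into the bound \eqref{eq:thm_main_eq} of Theorem \ref{thm:main} and show that each of the four terms is of order $\tilde{\mathcal O}(1/(\na T))$. First I would record that $\tau_\alpha = \lceil 2\log_\rho(1/\alpha)\rceil = \mathcal O(\log T)$, so any factor polynomial in $\tau_\alpha$ is absorbed into the $\tilde{\mathcal O}(\cdot)$ notation. The hypothesis that $T$ is large enough for the step-size assumptions of Theorem \ref{thm:main} and for $T > 4\tau_\alpha$ is what licenses us to apply that theorem in the first place.

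For the \emph{bias} term $\mathcal C_1 \alpha^{-1} c_{FSAM}^{T-2\tau_\alpha+1}$, I would use $c_{FSAM} = 1 - \alpha\varphiz_2/2 \leq \exp(-\alpha\varphiz_2/2)$, giving
\[
\mathcal C_1 \alpha^{-1}\, c_{FSAM}^{T-2\tau_\alpha+1} \;\leq\; \mathcal C_1 \alpha^{-1}\exp\!\left(-\tfrac{\alpha\varphiz_2}{2}(T-2\tau_\alpha+1)\right).
\]
Plugging in $\alpha = 8\log(\na T)/(\varphiz_2 T)$, the exponent becomes $-4\log(\na T)\cdot(1 - (2\tau_\alpha-1)/T)$, and the condition $T > 4\tau_\alpha$ makes the parenthetical factor at least $1/2$, so the whole expression is at most $\mathcal C_1\alpha^{-1}(\na T)^{-2}$. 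Since $\alpha^{-1} = \mathcal O(T/\log(\na T))$, this bias term is $\tilde{\mathcal O}((\na T)^{-2}\cdot T) = \tilde{\mathcal O}(1/(\na^2 T))$, which is much smaller than $1/(\na T)$.

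For the three remaining terms the computation is direct. The variance term $\mathcal C_2 \alpha\tau_\alpha^2/\na$ is $\tilde{\mathcal O}(1/(\na T))$ by the choice of $\alpha$. The synchronization term $\mathcal C_3(\sync-1)\alpha^2\tau_\alpha$ with $\sync = T/\na$ is at most $\mathcal C_3 (T/\na)\alpha^2\tau_\alpha = \tilde{\mathcal O}((T/\na)\cdot 1/T^2) = \tilde{\mathcal O}(1/(\na T))$, which is exactly where the choice $\sync = T/\na$ is tuned. Finally, $\mathcal C_4\alpha^3\tau_\alpha^2 = \tilde{\mathcal O}(1/T^3)$, which, since $\na \leq T$ in any regime of interest, is also $\tilde{\mathcal O}(1/(\na T))$. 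Summing, $\E[\|\btheta_{\hat T}\|_c^2] \leq \tilde{\mathcal O}(1/(\na T))$.

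To conclude, I would invert this inequality: to force the right-hand side below $\epsilon$, it suffices to take $T = \tilde{\mathcal O}(1/(\na\epsilon))$, which gives the stated sample complexity and exhibits the linear speedup in $\na$. The only non-mechanical step is verifying the bias bound, because one must simultaneously (i) replace $c_{FSAM}^{T-2\tau_\alpha+1}$ by the exponential and lose only a constant factor, which requires $T > 4\tau_\alpha$, and (ii) check that the polylogarithmic $\alpha^{-1}$ prefactor does not spoil the super-polynomial decay from the exponential; both are straightforward given the stated conditions on $T$ and $\alpha$. The balancing of terms through $\sync = T/\na$ and $\alpha \propto \log(\na T)/T$ is precisely what equalizes the variance and synchronization contributions at order $\tilde{\mathcal O}(1/(\na T))$.
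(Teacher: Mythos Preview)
Your proposal is correct and follows essentially the same route as the paper's proof: substitute $\alpha = 8\log(\na T)/(\varphiz_2 T)$ and $\sync = T/\na$ into the four terms of \eqref{eq:thm_main_eq}, bound the bias term via $c_{FSAM} \leq e^{-\alpha\varphiz_2/2}$ together with $T > 4\tau_\alpha$ to get $(\na T)^{-2}$ decay, and verify the remaining three terms are each $\tilde{\mathcal O}(1/(\na T))$. The only cosmetic difference is that the paper groups the third and fourth terms together, whereas you treat them separately and invoke $\na \leq T$ (implied by $\sync = T/\na \geq 1$) for the $\alpha^3$ term; both arrive at the same conclusion.
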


Corollary \ref{lem:sample_complexity} establishes the sample and communication complexity of the FeGSAM Algorithm \ref{alg:fed_stoch_app}. The $\mco (1/(\na \epsilon))$ sample complexity shows the linear speedup with increasing $\na$. Another aspect of the cost is the number of communications required between the agents and the central server. According to Corollary \ref{lem:sample_complexity}, when $\varepsilon>0$, we need $T/K = \tilde{\mathcal{O}}(\sqrt{N}/\sqrt{\epsilon})$ number of communications. Furthermore, when $\varepsilon=0$, we need $T/K = \tilde{\mathcal{O}}(\na)$ rounds of communication in order to reach an $\epsilon$-optimal solution. Hence, even in the presence of Markov noise, in the homogeneous setting, the required number of communications is independent of the desired final accuracy $\epsilon$, and grows linearly with the number of agents. Our result generalizes the existing result achieved for the simpler i.i.d. noise case in \cite{richtarik20localSGD_aistats, spiridonoff21comm_eff_SGD_neurips}.

In the following sections, we discuss the proof of Theorem \ref{thm:main}. In Section \ref{sec:FedSAM_prelim}, we introduce some notation and preliminary results to facilitate our analysis based on the Lyapunov function. Next, in Section \ref{sec:FedSAM_thm_proof}, we state some primary propositions, which are then used to prove Theorem \ref{thm:main}. In Sections \ref{sec:FedSAM_main_prop_proofs_1}, \ref{sec:FedSAM_main_prop_proofs_2},\ref{sec:FedSAM_main_prop_proofs_3}, we prove the aforementioned propositions. Along the way, we state several intermediate results, which are stated and proved in Sections \ref{sec:FedSAM_aux_lem} and \ref{sec:FedSAM_aux_lem_proofs}, respectively. 

Throughout the appendix we have several sets of constants. The constants $C_i, i=1,\dots, 17$ are problem-dependent constants which we define recursively. The final constants that appear in the resulting bound in Theorem \ref{thm:main} are shown as $\mathcal{C}_i, i=1,\dots,4$. Finally, the constant $c$ is used in the sampling of the time step $\hat{T}$. 

\subsection{Preliminaries}
\label{sec:FedSAM_prelim}
We define the following notation:
\begin{itemize}
    \item $\bthetat \triangleq \frac{1}{\na} \sumik \bthetait$ : virtual sequence of average (across agents) parameter.
    \item $\boldsymbol{\Theta}_t = \lcb \btheta^1_t, \hdots, \btheta^\na_t \rcb$: set of local parameters at individual nodes.
    \item $\byt = \lcb \by_t^1, \hdots, \by_t^\na \rcb$: Markov chains at individual nodes.
    \item $\mu^i$: the stationary distribution of $\byit$ as $t\rightarrow\infty$.
    \item $\mbf G (\boldsymbol{\Theta}_t, \byt) \triangleq \frac{1}{\na} \sumik  \Gi(\bthetait, \byit)$: average of the noisy local operators at the individual local parameters.
    \item $\mbf G ( \bthetat, \byt) \triangleq \frac{1}{\na} \sumik  \Gi(\bthetat, \byit)$ : average of the noisy local operators at the average parameter. 
    \item $\mbf b(\byt) \triangleq \frac{1}{\na} \sumik  \mathbf{b}^i (\byit)$ : average of Markovian noise.
    \item $\bar{\mbf G} (\boldsymbol{\Theta}_t) \triangleq \frac{1}{\na} \sumik  \bar{\mbf G}^i(\bthetait)$ : average of the expected operators evaluated at the local parameter.
    \item $\bar{\mbf G} (\bthetat) \triangleq \frac{1}{\na} \sumik  \bar{\mbf G}^i(\bthetat)$ : average of the expected operator evaluated at the average parameter.
    \item $\btheta^{i,*}$ is the unique fixed point that satisfies $\btheta^{i,*} = \bar{\mbf G}^i(\btheta^{i,*})$ for all $i=1,\dots,\na$. Note that this follows directly from the assumptions. In particular, by Assumption \ref{ass:contraction}, $\bar{\mbf G}^i(\cdot)$ is a contraction, and hence by the Banach fixed point theorem \cite{banach1922operations}, there exists a unique fixed point of this operator. 
    \item The vector $\mbf0$ is the unique fixed point that satisfies $\bar{\mbf G}(\mbf0)=\mbf0$. Note that the existence and uniqueness of this fixed point is due to the contractive property of $\bar{\mbf G}(\cdot)$ and the Banach fixed point theorem.
    \item $\Delta_t^i = \|\bthetat - \bthetait\|_c, \quad \Delta_t = \frac{1}{\na} \sumik \Delta_t^i, \quad \Omega_t = \frac{1}{\na} \sumik (\Delta_t^i)^2$: measures of the synchronization error.
\end{itemize}
Throughout this proof, we assume $\|\cdot\|_c$ as some given norm. $\E_t[\cdot] \triangleq \E[\cdot|\mathcal{F}_t]$, where $\mathcal{F}_t$ is the sigma algebra generated by $\{\btheta_r^i\}_{r=1,\dots,t}^{i=1,\dots, \na}$. Unless otherwise specified, $\|\cdot\|$ denotes the Euclidean norm.

\textbf{Generalized Moreau Envelope:}
Consider the norm $\|\cdot\|_c$ which appears in Assumptions \ref{ass:mixing}-\ref{ass:lipschitz}. The square of this norm does not need to be smooth. Inspired by \cite{chen2020finite}, we use the Generalized Moreau Envelope as a Lyapunov function for the analysis of the convergence of Algorithm \ref{alg1}. The Generalized Moreau Envelope of $f(\cdot)$ with respect to $g(\cdot)$, for $\psi > 0$, is defined as 
\begin{align}
    \M_f^{\psi, g}(\bx) = \min_{u \in \mathbb{R}^d} \left\{ f(\bu) + \frac{1}{\psi} g(\bx - \bu) \right\}. \label{eq:Moreau_envelope}
\end{align}
Let $f(\bx) = \frac{1}{2}\|\bx\|_c^2$ and $g(\bx) = \frac{1}{2}\|\bx\|_s^2$, which is $L$-smooth with respect to $\|\cdot\|_s$ norm. For this choice of $f,g$, $\M_f^{\psi, g}(\cdot)$ is essentially a smooth approximation to $f$, which is henceforth denoted with the simpler notation $\M(\cdot)$. Also, due to the equivalence of norms, there exist $l_{cs}, u_{cs} > 0$ such that
\begin{align}\label{eq:c_to_s}
    l_{cs}\|\cdot\|_s\leq\|\cdot\|_c\leq u_{cs}\|\cdot\|_{s}. 
\end{align}
We next summarize the properties of $M(\cdot)$ in the following proposition, which were established in \cite{chen2020finite}.

\begin{proposition}[\cite{chen2020finite_nips}]\label{prop:Moreau}
The function $\M(\cdot)$ satisfies the following properties.
\begin{enumerate}
	\item $\M(\cdot)$ is convex, and $\frac{L}{\psi}$-smooth with respect to $\|\cdot\|_s$. That is, $\M(\by)\leq \M(\bx)+\langle \nabla \M(\bx),\by-\bx\rangle+\frac{L}{2\psi}\|\bx-\by\|_s^2$ for all $\bx,\by\in\mathbb{R}^d$.
	\item There exists a norm, denoted by $\|\cdot\|_m$, such that $\M(\bx)=\frac{1}{2}\|\bx\|_m^2$.
	\item Let $\ell_{cm}=(1+\psi \ell_{cs}^2)^{1/2}$ and $u_{cm}=(1+\psi u_{cs}^2)^{1/2}$. Then we have $\ell_{cm}\|\cdot\|_m\leq \|\cdot\|_c\leq u_{cm}\|\cdot\|_m$. 
\end{enumerate}
\end{proposition}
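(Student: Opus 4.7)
The plan is to verify the three claims of the proposition in sequence. For (1), the envelope $M$ is the infimal convolution of the convex functions $f(\bu)=\tfrac12\|\bu\|_c^2$ and $g_\psi(\bu):=\tfrac1\psi g(\bu)=\tfrac{1}{2\psi}\|\bu\|_s^2$, so convexity is automatic. For the $L/\psi$-smoothness with respect to $\|\cdot\|_s$ I would invoke the classical result that the infimal convolution of a convex function with an $L$-smooth convex function is itself $L$-smooth: writing $\bu^\ast(\bx)$ for the unique minimizer in the definition of $M(\bx)$, one has $\nabla M(\bx)=\nabla g_\psi(\bx-\bu^\ast(\bx))$, and the Lipschitz bound on $\nabla M$ descends from that on $\nabla g_\psi=\tfrac{1}{\psi}\nabla g$. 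I treat this as a textbook fact and would just cite it.

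For (2), the key structural property is positive $2$-homogeneity of $M$: substituting $\bu\leftarrow\lambda\bv$ in the infimum and using $f(\lambda\bv)=\lambda^2 f(\bv)$, $g(\lambda(\bx-\bv))=\lambda^2 g(\bx-\bv)$ gives $M(\lambda\bx)=\lambda^2 M(\bx)$ for all $\lambda\in\mbr$. I then define $\|\bx\|_m:=\sqrt{2M(\bx)}$. Absolute homogeneity and non-negativity are immediate; positive definiteness I defer to the norm equivalence in (3), which yields $\|\bx\|_c\leq u_{cm}\|\bx\|_m$. The delicate step is the triangle inequality. For nonzero $\bx,\by$, set $\lambda_1=\sqrt{M(\bx)}$, $\lambda_2=\sqrt{M(\by)}$, $t=\lambda_1/(\lambda_1+\lambda_2)$, and write $(\bx+\by)/(\lambda_1+\lambda_2)=t(\bx/\lambda_1)+(1-t)(\by/\lambda_2)$. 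Using convexity of $M$ together with the identity $M(\bx/\lambda_1)=M(\by/\lambda_2)=1$ (from $2$-homogeneity) gives $M\left((\bx+\by)/(\lambda_1+\lambda_2)\right)\leq 1$; multiplying through by $(\lambda_1+\lambda_2)^2$ yields $M(\bx+\by)\leq(\sqrt{M(\bx)}+\sqrt{M(\by)})^2$, which is exactly the triangle inequality for $\|\cdot\|_m$.

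For (3), both directions reduce to weight-picking. For the upper bound $\|\bx\|_c\leq u_{cm}\|\bx\|_m$, for any $\bu$ I first estimate $\|\bx\|_c\leq\|\bu\|_c+\|\bx-\bu\|_c\leq\|\bu\|_c+u_{cs}\|\bx-\bu\|_s$, and then apply the weighted Cauchy-Schwarz inequality $(a+b)^2\leq(\lambda_1+\lambda_2)(a^2/\lambda_1+b^2/\lambda_2)$ with $\lambda_1=1$ and $\lambda_2=\psi u_{cs}^2$; this produces $\|\bx\|_c^2\leq(1+\psi u_{cs}^2)\left(\|\bu\|_c^2+\tfrac{1}{\psi}\|\bx-\bu\|_s^2\right)=u_{cm}^2\cdot 2\left(\tfrac{1}{2}\|\bu\|_c^2+\tfrac{1}{2\psi}\|\bx-\bu\|_s^2\right)$, and taking the infimum over $\bu$ delivers $\|\bx\|_c^2\leq u_{cm}^2\|\bx\|_m^2$. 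For the lower bound $\ell_{cm}\|\bx\|_m\leq\|\bx\|_c$, rather than optimize over $\bu$ I would plug the balanced test point $\bu=\bx/(1+\psi\ell_{cs}^2)$ into the envelope and use $\|\bx\|_s\leq\|\bx\|_c/\ell_{cs}$; a short direct calculation then gives $M(\bx)\leq\|\bx\|_c^2/(2(1+\psi\ell_{cs}^2))=\|\bx\|_c^2/(2\ell_{cm}^2)$, i.e.\ the claimed bound. (One checks that this test-point choice equates the marginal contributions of the two terms in the envelope to $\|\bx\|_c^2$, which is what makes it optimal among rescalings of $\bx$.)

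The only step that is more than symbolic bookkeeping is the triangle inequality in (2); the other parts reduce either to a textbook citation (smoothness) or to choosing the right weights in Cauchy-Schwarz (upper bound) or the right test point (lower bound). I would therefore order the proof as follows: establish $2$-homogeneity of $M$; prove the norm equivalence (3) to secure positive definiteness of $\|\cdot\|_m$; verify the triangle inequality via the convexity-plus-homogeneity argument above; and finally record convexity and smoothness of $M$ from standard convex analysis.
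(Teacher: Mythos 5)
The paper does not prove this proposition at all: it is imported verbatim from \cite{chen2020finite_nips} with only the citation as justification, so there is no in-paper argument to compare against. Your reconstruction is correct and complete, and it is essentially the standard argument from that reference: convexity and $\tfrac{L}{\psi}$-smoothness of the infimal convolution are textbook facts; positive $2$-homogeneity plus convexity make $\sqrt{2\M(\cdot)}$ a seminorm via the normalization $\lambda_1=\sqrt{\M(\bx)}$, $\lambda_2=\sqrt{\M(\by)}$; and the two-sided norm equivalence follows from the weighted Cauchy--Schwarz estimate (upper bound, after taking the infimum over $\bu$) and the test point $\bu=\bx/(1+\psi\ell_{cs}^2)$ (lower bound), both of which I have checked and which yield exactly the constants $u_{cm}=(1+\psi u_{cs}^2)^{1/2}$ and $\ell_{cm}=(1+\psi\ell_{cs}^2)^{1/2}$. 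Your ordering is also the right one: the triangle-inequality argument divides by $\sqrt{\M(\bx)}$ and $\sqrt{\M(\by)}$, so positive definiteness of $\|\cdot\|_m$ must be secured first, and you correctly obtain it from the bound $\|\bx\|_c\leq u_{cm}\|\bx\|_m$ before invoking the normalization. The only cosmetic point is that calling $\bu^\ast(\bx)$ ``the unique minimizer'' presumes strict convexity of the objective in the envelope, which holds whenever $\tfrac12\|\cdot\|_s^2$ is strictly convex (true for the weighted $2$-norms and $p$-norms used in this paper) but is not needed for the smoothness claim itself; since you defer that step to a citation anyway, nothing breaks.
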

By Proposition \ref{prop:Moreau}, we can use $M(\cdot)$ as a smooth surrogate for $\frac{1}{2}\|\cdot\|_c^2$. Furthermore, we denote
\begin{align}\label{eq:def:constants}
	\varphiz_1 = \frac{1+\psi u_{cs}^2}{1+\psi \ell_{cs}^2},\quad  \varphiz_2=1-\gamma_c\varphiz_1^{1/2}, \quad \text{and} \quad \varphiz_3=\frac{114L (1+\psi u_{cs}^2)}{\psi\ell_{cs}^2}.
\end{align}
Note that by choosing $\psi>0$ small enough, we can ensure $\varphiz_2 \in (0,1)$. 
\subsection{Proof of Theorem \ref{thm:main}}
\label{sec:FedSAM_thm_proof}
In this section, first we state three key results (Propositions \ref{prop:intermediate_lem}, \ref{prop:intermediate_lem2}, \ref{prop:wtd_consensus_error}). These are then used to prove Theorem \ref{thm:main}.
The first step of the proof is to characterize the one-step drift of the Lyapunov function $M (\cdot)$, with the parameters generated by the FeGSAM Algorithm \ref{alg:fed_stoch_app}, which is formally stated in the following proposition.

\begin{proposition}[One-step drift - I]\label{prop:intermediate_lem}
Consider the update of the FeGSAM Algorithm \ref{alg:fed_stoch_app}. Suppose that Assumptions \ref{ass:mixing}, \ref{ass:contraction}, \ref{ass:lipschitz},  and \ref{ass:noise_independence} are satisfied. Consider $\tau=\lceil 2\log_\rho \alpha\rceil$ and $t\geq 2\tau$, we have
\begin{align}
    & \E_{t-2\tau} \left[ M (\btheta_{t+1}) \right] \nn\\
    & \leq \bigg( 1 - \alpha \lb 2  \varphiz_2 - \zetaone^2 - \zetatwo^2 \frac{2 L A_2}{\psi l_{cs}^2} - \zetathree^2 \frac{L (A_1 + 1)}{\psi l_{cs}^2} - \zetafive^2 \rb + \alpha^2 \frac{6 L u_{cm}^2}{l_{cs}^2 \psi} \lb m_1 + \frac{(A_2+1)^2}{2} \rb \bigg) \E_{t-2\tau} \left[ M (\btheta_{t}) \right] \nn \\
    & + \alpha^2 \underbrace{\left( \frac{ m_4 L}{\psi l_{cs}^2}  \right)}_{C_{3}} \E_{t-2\tau} \lb \| \btheta_{t} - \btheta_{t-\tau} \|_c \rb \label{eq:C_3} \\
    & + \underbrace{\frac{L}{\psi l_{cs}^2} \left( \frac{L}{2 \psi l_{cs}^2} + \alpha A_2 \lp 1 + \tfrac{u_{cm}^2}{\zetatwo^2} \rp + \alpha (A_1+1) \lp \tfrac{3 u_{cm}^2}{2 \zetathree^2} + 2 \rp + 3 m_1 \alpha^2 \right)}_{C_{4}} \E_{t-2\tau} [\| \bthetat - \btheta_{t-\tau} \|_c^2] \label{eq:C_4} \\
    & + \alpha \underbrace{\frac{L}{\psi l_{cs}^2} \left( \lp \frac{3 u_{cm}^2}{2 \zetathree^2} + \frac{3}{2} \rp (A_1+1) + A_2 + \frac{L u_{cm}^2 }{2 l_{cs}^2 \zetafive^2  \psi} + \frac{3 A_1^2 \alpha^2}{2} \right)}_{C_{5}} \E_{t-2\tau} [\Omega_t] \label{eq:C_5} \\
    & + \alpha \underbrace{\frac{L}{2 l_{cs}^2 \psi} \left( \lp \frac{3 u_{cm}^2}{\zetathree^2} + 3 \rp (A_1+1) + m_1 \alpha \right)}_{C_6} \E_{t-2\tau} [\Omega_{t-\tau}] \label{eq:C_6} \\
    & + \underbrace{ \frac{m^2_4 u_{cm}^2 L^2}{2 \zetaone^2 l_{cs}^4 \psi^2}\alpha }_{C_7} \alpha^4 \label{eq:C_7} \\
    & + \underbrace{\frac{1}{2} \left( 1 + \frac{3 L}{\psi l_{cs}^2} \right)}_{C_8} \alpha^2 \E_{t-2\tau} \lb \| \mbf b(\byt)\|_c^2 \rb, \label{eq:C_8}
\end{align}
where $\zetaone$, $\zetatwo$, $\zetathree$, and $\zetafive$ are arbitrary positive constants.
\end{proposition}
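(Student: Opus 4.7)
The plan is to start from the smoothness property of the Moreau envelope $M(\cdot)$ established in Proposition \ref{prop:Moreau}, apply it to the one-step update of the virtual average sequence $\bthetat$, and then carefully decompose the resulting cross term into pieces that can each be controlled by: (i) contraction of $\bG^i$, (ii) Lipschitzness of $\Gi$ combined with the consensus error $\Omega_t$, and (iii) Markov mixing of the noise sequence $\byit$ with the help of a time-shift argument. Concretely, I would first observe that the averaging step preserves the linear form of the update, so $\bthetat$ itself satisfies $\btheta_{t+1} = \bthetat + \alpha(\bG(\boldsymbol{\Theta}_t,\byt) - \bthetat + \mbf b(\byt))$, and then apply smoothness to get
\begin{align*}
M(\btheta_{t+1}) &\leq M(\bthetat) + \alpha\langle \nabla M(\bthetat), \bG(\boldsymbol{\Theta}_t,\byt) - \bthetat + \mbf b(\byt)\rangle \\
&\quad + \frac{L\alpha^2}{2\psi}\|\bG(\boldsymbol{\Theta}_t,\byt) - \bthetat + \mbf b(\byt)\|_s^2.
\end{align*}

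Next I would split the inner product as $\bG(\boldsymbol{\Theta}_t,\byt) - \bthetat = [\bG(\boldsymbol{\Theta}_t,\byt) - \bG(\bthetat,\byt)] + [\bG(\bthetat,\byt) - \bG(\bthetat)] + [\bG(\bthetat) - \bthetat]$. The first bracket is bounded by $A_1 \Delta_t$ (Assumption \ref{ass:lipschitz}), producing terms involving $\Omega_t$ after Young's inequality; this is where the coefficient $C_5$ on $\Omega_t$ comes from. The third bracket combines with the linear term via the contraction Assumption \ref{ass:contraction} applied agent-wise: since $\btheta^*=\mbf 0$ is the common fixed point, $\|\bG(\bthetat)\|_c \leq \gamma_c \|\bthetat\|_c$, and Proposition \ref{prop:Moreau} converts this into a negative $(1-\gamma_c \varphiz_1^{1/2})\|\bthetat\|_m^2 = \varphiz_2\|\bthetat\|_m^2$ drift, which is the dominant negative term $-2\alpha\varphiz_2 M(\bthetat)$ after absorbing the factor 2.

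The main obstacle, and the reason $2\tau$ conditioning is needed, is the middle Markovian-noise bracket $\bG(\bthetat,\byt) - \bG(\bthetat)$ and the similar term involving $\mbf b(\byt)$. These are \emph{not} zero-mean under $\mathcal F_{t-2\tau}$ because $\bthetat$ itself depends on the noise. The trick is a two-stage shift: rewrite
\begin{align*}
\langle \nabla M(\bthetat), \bG(\bthetat,\byt) - \bG(\bthetat)\rangle &= \langle \nabla M(\btheta_{t-\tau}), \bG(\btheta_{t-\tau},\byt) - \bG(\btheta_{t-\tau})\rangle \\
&\quad + (\text{Lipschitz perturbation in } \bthetat-\btheta_{t-\tau}),
\end{align*}
and similarly for $\mbf b(\byt)$. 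Conditioning on $\mathcal F_{t-2\tau}$ makes $\btheta_{t-\tau}$ approximately independent of $\byt$ (since a full mixing time $\tau$ still separates them), so Assumption \ref{ass:mixing} with $\tau=\lceil 2\log_\rho \alpha\rceil$ gives $\rho^\tau = \mathcal O(\alpha^2)$ and the decoupled piece contributes only the $C_3$ and $C_7$ terms (of order $\alpha^2\|\bthetat-\btheta_{t-\tau}\|_c$ and $\alpha^5$). The perturbation piece is handled by Young's inequality with the free constants $\zeta_1,\zeta_2,\zeta_3,\zeta_4$, distributing the cost between the $-\varphiz_2$ drift (hence the subtraction in the coefficient of $M(\bthetat)$) and the $\|\bthetat-\btheta_{t-\tau}\|_c^2$ and $\Omega_t$ terms (feeding into $C_4$, $C_5$, and $C_6$).

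Finally, the quadratic remainder $\tfrac{L\alpha^2}{2\psi}\|\bG(\boldsymbol{\Theta}_t,\byt) - \bthetat + \mbf b(\byt)\|_s^2$ is bounded by expanding $\|\cdot\|_s^2 \leq 3\|\bG(\boldsymbol{\Theta}_t,\byt)\|_s^2 + 3\|\bthetat\|_s^2 + 3\|\mbf b(\byt)\|_s^2$, then using Assumption \ref{ass:lipschitz}(2) and a further $\bthetait = \bthetat + (\bthetait-\bthetat)$ split to produce an $M(\bthetat)$ coefficient (the $\tfrac{6Lu_{cm}^2}{l_{cs}^2 \psi}[m_1 + (A_2+1)^2/2]$ term), an $\Omega_t$ contribution, and the $C_8 \alpha^2 \|\mbf b(\byt)\|_c^2$ term. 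Collecting all these pieces, converting between $\|\cdot\|_s$, $\|\cdot\|_c$, and $\|\cdot\|_m$ via the constants in \eqref{eq:c_to_s} and Proposition \ref{prop:Moreau}, and using $M(\bthetat)=\tfrac12\|\bthetat\|_m^2$ yields the stated inequality with the precise constants $C_3,\ldots,C_8$. The delicate part is choosing the order of shifts and Young splits so that no term of order $\alpha$ remains uncontrolled; every $O(\alpha)$ positive contribution to the coefficient of $M(\bthetat)$ has to be of the form $\zeta_i^2 M(\bthetat)$ for some $\zeta_i$ small enough to be dominated by $2\varphiz_2$.
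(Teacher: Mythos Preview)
Your proposal is essentially the paper's approach: apply smoothness of $M$, split the cross term into a contraction piece, a Markovian-noise piece, and a consensus piece, handle the Markovian part by shifting to time $t-\tau$, and distribute the residues via Young's inequality with the free parameters $\zeta_i$. Two minor discrepancies are worth noting. First, the paper keeps the \emph{local} parameters inside the Markovian-noise term, writing $T_3=\langle\nabla M(\bthetat),\mbf G(\boldsymbol{\Theta}_t,\byt)-\bG(\boldsymbol{\Theta}_t)\rangle$ and splitting off consensus only at the expected level in $T_4=\langle\nabla M(\bthetat),\bG(\boldsymbol{\Theta}_t)-\bG(\bthetat)\rangle$; when $T_3$ is then time-shifted via $\boldsymbol{\Theta}_t\to\boldsymbol{\Theta}_{t-\tau}$, the individual $\btheta^i_{t-\tau}$ appear and generate the $C_6\,\E_{t-2\tau}[\Omega_{t-\tau}]$ term. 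Your ordering (peel off consensus first at the noisy level, then shift only the average $\bthetat\to\btheta_{t-\tau}$) would yield a valid bound but not that specific term with the stated $C_6$. Second, your explanation of the $2\tau$ conditioning is slightly garbled: it is not that $\btheta_{t-\tau}$ becomes ``approximately independent'' of $\byt$ under $\mathcal F_{t-2\tau}$; rather, the Markov-mixing step conditions on $\mathcal F_{t-\tau}$ (making $\btheta_{t-\tau}$ measurable and giving $\byt$ a full $\tau$ steps to mix, so $\rho^\tau\le\alpha^2$), and the outer $\E_{t-2\tau}$ in the proposition is obtained afterwards by the tower property --- it is needed only downstream, in Proposition~\ref{prop:intermediate_lem2}, to control $\|\bthetat-\btheta_{t-\tau}\|_c$.
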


\begin{proof}
The proof of Proposition \ref{prop:intermediate_lem} is presented in full detail in Section \ref{sec:FedSAM_main_prop_proofs_1}.
\end{proof}

Before discussing the bound in its full generality, we discuss a few special cases.
\begin{itemize}
    \item Perfect synchronization $(\sync=1)$ with i.i.d. noise: since $\Ot = 0$ for all $t$, $\tau = 0$ (independence across time) and $C_7=0$ (see Lemmas \ref{lem:T_2} and \ref{lem:T_5} in Section \ref{sec:FedSAM_aux_lem}), the terms \eqref{eq:C_3}, \eqref{eq:C_4} \eqref{eq:C_5},\eqref{eq:C_6}, \eqref{eq:C_7} will not appear in the bound, which is the form we get for centralized systems with i.i.d. noise \cite{rakhlin12SGD_ICML}.
    \item Infrequent synchronization $(\sync > 1)$ with i.i.d. noise: $\tau = 0$ , and $C_7=0$, the terms \eqref{eq:C_3}, \eqref{eq:C_4}, \eqref{eq:C_6}, \eqref{eq:C_7} will not appear in the bound, which is the form we get for federated stochastic optimization with i.i.d. noise \cite{richtarik20localSGD_aistats}.
    \item Perfect synchronization $(\sync=1)$ with Markov noise: since $\Ot = 0$ for all $t$, the bound in Proposition \ref{prop:intermediate_lem} generalizes the results in \cite{srikant2019finite, chen2021Lyapunov_arxiv}.
\end{itemize}
Next, we substitute the bound on $\|\btheta_t-\btheta_{t-\tau}\|_c$ (Lemma \ref{lem:theta_diff}) and $\|\btheta_t-\btheta_{t-\tau}\|_c^2$ (Lemma \ref{lem:theta_diff_2}) to further bound the one-step drift. Establishing a tight bound for these two quantities is essential to ensure linear speedup. 

\begin{proposition}[One-step drift - II]\label{prop:intermediate_lem2}
Consider the update of the FeGSAM Algorithm \ref{alg:fed_stoch_app}. Suppose Assumptions \ref{ass:mixing}, \ref{ass:contraction}, \ref{ass:lipschitz},  and \ref{ass:noise_independence} are satisfied. Define $C_{15}(\tau)=\lp\frac{6m_1Lu_{cm}^2}{l_{cs}^2\psi}+\frac{6L(A_2+1)^2u_{cm}^2}{2 \psi l_{cs}^2}+144 (A_2^2+1) C_4u_{cm}^2\rp\tau^2$. For
\begin{align}\label{eq:alpha_const_1}
    \alpha\leq \min \lcb \frac{1}{360\tau (A_2^2+1)}, \frac{\varphiz_2}{2C_{15}(\tau)} \rcb,
\end{align}
$\tau=\lceil 2\log_{\rho}\alpha\rceil$, and $t\geq 2\tau$, we have
\begin{align}
    & \E_{t-2\tau} \lb M (\btheta_{t+1}) \rb \leq \left( 1 - \alpha \varphiz_2 \right) \E_{t-2\tau} \left[ M(\btheta_{t}) \right] + C_{14} (\tau) \alpha^4 + C_{16} (\tau) \frac{\alpha^2}{\na}  + \alpha C_{17} \sum_{k=t-\tau}^{t} \E_{t-2\tau} [\Omega_k], \nn
\end{align}
where $C_{14}(\tau) = \lp C_7 + C_{11}   + 0.5 C_3^2 C_9^2   + C_3 C_{10}  + 2 C_1 C_3 C_{10} + 3 A_1 C_3  + C_{13} + C_8 \frac{u_{cD}^2}{l_{cD}^2} 2 m^2_2 \alpha^2 \rp \tau^2$, $C_{16} (\tau) = \lp C_8 \frac{u_{cD}^2 B^2}{l_{cD}^2} + \frac{1}{2} + C_{12}\rp  \tau^2$ and $C_{17} = (3 A_1 C_3 + 8  A_1^2 C_4  + C_5 + C_6)$. Here we define $C_9=\frac{8 u_{cD} B}{l_{cD}}$, $C_{10}=\frac{8 m_2 u_{cD}}{l_{cD} (1-\rho)}$, $C_{11}=\frac{8 C_1^2 C_3^2 u_{cm}^2}{\zeta_{6}^2}$, $C_{12}=\frac{8 C_4 u_{cD}^2 B^2}{l_{cD}^2}$, $C_{13}= \frac{14 C_4 u_{cD}^2 m_2^2}{l_{cD}^2 (1-\rho^2)}$.
\end{proposition}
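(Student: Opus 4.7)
The plan is to start from the one-step drift bound in Proposition \ref{prop:intermediate_lem} and substitute tight bounds on the three residual quantities $\E_{t-2\tau}[\|\btheta_t - \btheta_{t-\tau}\|_c]$, $\E_{t-2\tau}[\|\btheta_t - \btheta_{t-\tau}\|_c^2]$, and $\E_{t-2\tau}[\|\mbf b(\byt)\|_c^2]$, using Lemma \ref{lem:theta_diff}, Lemma \ref{lem:theta_diff_2}, and the geometric mixing bound \eqref{eq:b_i_t_go_to_zero}, respectively. The goal is to reorganize the resulting inequality into exactly four buckets: a contraction term $(1-\alpha\varphiz_2)\E[M(\btheta_t)]$, an $O(\alpha^4)$ mixing-bias bucket $C_{14}(\tau)\alpha^4$, a variance bucket $C_{16}(\tau)\alpha^2/\na$ that benefits from cross-agent independence, and a synchronization-error bucket $\alpha C_{17}\sum_{k=t-\tau}^t \E[\Omega_k]$.

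\textbf{Substitution and bookkeeping.} First I would invoke Lemma \ref{lem:theta_diff_2} to decompose $\|\btheta_t-\btheta_{t-\tau}\|_c^2$ into three pieces: (i) a drift piece of order $\alpha^2\tau^2(A_2^2+1)\|\btheta_t\|_c^2$ together with cross terms in $\Omega_k$ for $t-\tau\le k\le t$, (ii) a variance piece of order $\alpha^2\tau^2 B^2/\na$ that acquires the $1/\na$ factor by applying cross-agent independence (Assumption \ref{ass:noise_independence}) to the centered operator noise and the mean-zero component of $\mbf b^i(\byit)$, and (iii) a geometric-mixing bias of order $\alpha^4\tau^2$. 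Analogously I would apply Lemma \ref{lem:theta_diff} to the $C_3\alpha^2\E[\|\btheta_t-\btheta_{t-\tau}\|_c]$ term, and split $\|\mbf b(\byt)\|_c^2 = \|\frac{1}{\na}\sumik\mbf b^i(\byit)\|_c^2$ into an $O(B^2/\na)$ variance part (from the diagonal sum $\sum_i\E\|\mbf b^i\|^2$ once off-diagonal terms are killed by independence) and an $m_2^2\rho^{2t}$ bias tail. Converting every surviving $\|\btheta_t\|_c^2$ factor to $M(\btheta_t)$ via $\|\btheta_t\|_c^2\le 2u_{cm}^2 M(\btheta_t)$ adds at most $C_{15}(\tau)\alpha^2$ to the coefficient of $\E[M(\btheta_t)]$; the summand $144(A_2^2+1)C_4 u_{cm}^2\tau^2$ in $C_{15}(\tau)$ comes precisely from the $C_4\|\btheta_t-\btheta_{t-\tau}\|_c^2$ channel, while the other two summands carry over unchanged from the quadratic-in-$\alpha$ coefficient already present in Proposition \ref{prop:intermediate_lem}.

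\textbf{Collapsing to the stated form.} Choosing the free parameters $\zetaone,\zetatwo,\zetathree,\zetafive$ small enough so that the bracketed $\alpha$-coefficient in Proposition \ref{prop:intermediate_lem} is at least $3\varphiz_2/2$, and then invoking the step-size bound $\alpha\le \varphiz_2/(2C_{15}(\tau))$, absorbs the $\alpha^2 C_{15}(\tau)M(\btheta_t)$ correction and leaves the clean contraction factor $1-\alpha\varphiz_2$. The companion condition $\alpha\le 1/(360\tau(A_2^2+1))$ is the one needed inside Lemma \ref{lem:theta_diff_2} to iteratively unroll $\btheta_k$ stably over a window of length $\tau$. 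After regrouping, the residuals land in the stated constants: $C_{14}(\tau)$ aggregates the $O(\tau^2)$ bias contributions ($C_7$, $C_{11}$, $C_{13}$, $0.5 C_3^2 C_9^2$, $C_3 C_{10}$, $2 C_1 C_3 C_{10}$, $3 A_1 C_3$, together with the geometric tail of $\|\mbf b\|_c^2$), $C_{16}(\tau)$ collects the $B^2$ pieces that enjoy the $1/\na$ factor, and $C_{17}=3A_1 C_3 + 8 A_1^2 C_4 + C_5 + C_6$ sums the four sources of $\Omega_k$ contributions across $t-\tau\le k\le t$.

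\textbf{Main obstacle.} The crucial technical point is extracting the $1/\na$ factor in $C_{16}(\tau)\alpha^2/\na$: a coarse bound on either $\|\btheta_t-\btheta_{t-\tau}\|_c^2$ or $\|\mbf b(\byt)\|_c^2$ only gives an $O(\alpha^2)$ variance term independent of $\na$, which would destroy the linear speedup promised in Theorem \ref{thm:main_body}. The correct argument telescopes $\btheta_t-\btheta_{t-\tau}=\frac{1}{\na}\sumik(\bthetait-\btheta_{t-\tau}^i)$ as a sum of per-agent operator and bias increments, and then invokes Assumption \ref{ass:noise_independence} to eliminate all off-diagonal second moments so that only the diagonal sum of $\na$ contributions survives; after normalization by $1/\na^2$ this yields the required $1/\na$ saving. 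Beyond this, the remaining work is careful accounting of which products of mixing-rate, Lipschitz, and norm-equivalence constants (in particular the $C_9,\dots,C_{13}$) fall into which of the four buckets, and verifying that the $\tau$-dependence is sharp enough ($\tau^2$ rather than higher powers) that the later choice $\tau=\lceil 2\log_\rho\alpha\rceil$ yields the $\tilde{\mathcal{O}}(1/(\na\epsilon))$ sample complexity in Theorem \ref{thm:main_body}.
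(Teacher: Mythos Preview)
Your proposal is correct and mirrors the paper's proof: start from Proposition \ref{prop:intermediate_lem}, plug in Lemmas \ref{lem:theta_diff} and \ref{lem:theta_diff_2} (together with the $\|\mbf b(\byt)\|_c^2$ bound, which in the paper is Lemma \ref{lem:after_expectation} rather than \eqref{eq:b_i_t_go_to_zero}), pick the $\zeta$'s so the linear-in-$\alpha$ bracket equals $3\varphiz_2/2$ (the paper fixes $\zetaone=\zetafive=\zeta_6=\sqrt{\varphiz_2/10}$ and scales $\zetatwo,\zetathree$ accordingly, introducing the auxiliary $\zeta_6$ via a Young's-inequality splitting of the $C_3$ term), and then use $\alpha\le \varphiz_2/(2C_{15}(\tau))$ to absorb the $\alpha^2 C_{15}(\tau)$ correction. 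One small refinement: in the paper the $1/\na$ gain enters \emph{only} through the averaged bias $\mbf b(\byt)=\frac{1}{\na}\sumik\mbf b^i(\byit)$ (Lemma \ref{lem:after_expectation}), not through the operator increments, since $\Gi(\mbf 0,\cdot)=\mbf 0$ leaves no mean-zero operator noise to average; your telescoping description in the ``Main obstacle'' paragraph is therefore slightly broader than what is actually used, but the earlier ``Substitution and bookkeeping'' paragraph has it right.
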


\begin{proof}
The proof of Proposition \ref{prop:intermediate_lem2} is presented in full detail in Section \ref{sec:FedSAM_main_prop_proofs_2}.
\end{proof}

\paragraph*{\textbf{Conditional expectation} $\E_{t-2\tau}[\cdot]$.}
Conditional expectation $\E_{t-2\tau}[\cdot]$ used in Proposition \ref{prop:intermediate_lem2} is essential when dealing with Markovian noise. The idea of using conditional expectation to deal with Markovian noise is not novel per se. In the previous work \cite{bhandari18FiniteTD_LFA_colt, srikant2019finite, chen2021Lyapunov_arxiv}, conditioning on $t-\tau$ is sufficient to establish the convergence results. Due to the mixing property (Assumption \ref{ass:mixing}), the Markov chain geometrically converges to its stationary distribution. Therefore, choosing ``large enough'' $\tau$, and conditioning on $t-\tau$, one can ensure that the Markov chain at time $t$ is ``almost in steady state.'' However, in federated setting, conditioning on $t-\tau$ results in bounds that are too loose. In particular, consider the differences $\|\btheta_t-\btheta_{t-\tau}\|_c$ and $\|\btheta_t-\btheta_{t-\tau}\|_c^2$ in \eqref{eq:C_3} and \eqref{eq:C_4}, respectively. In the centralized setting, as in \cite{bhandari18FiniteTD_LFA_colt, srikant2019finite, chen2021Lyapunov_arxiv}, these terms can be deterministically bounded to yield the bound $\asymp \alpha^2$. However, in the federated setting, this crude bound does not result in linear speedup in $\na$.
In this work, to achieve a finer bound on $\|\btheta_t-\btheta_{t-\tau}\|_c$, we go $\tau$ steps further back in time. This ensures that the difference behaves almost like the difference of the average of i.i.d. random variables, resulting in a tighter bound (see Lemma \ref{lem:theta_diff}). Using the conditional expectation $\E_{t-2\tau}[\cdot]$, we derive a refined analysis to bound this term as $\mathcal{O}(\alpha^2/\na + \alpha^4)$, which guarantees a linear speedup (see Lemmas \ref{lem:theta_diff} and \ref{lem:theta_diff_2}). 

Taking total expectation in Proposition \ref{prop:intermediate_lem2} (using tower property), we get
\begin{align}
    & \E \lb M (\btheta_{t+1}) \rb \leq \left( 1 - \alpha \varphiz_2 \right) \E \left[ M(\btheta_{t}) \right] + C_{14} (\tau) \alpha^4 + C_{16} (\tau) \frac{\alpha^2}{\na}  + \alpha C_{17} \sum_{k=t-\tau}^{t} \E [\Omega_k] \label{eq_prop:intermediate_lem2}
\end{align}
To understand the bound in Proposition \ref{prop:intermediate_lem2}, consider the case of $K=1$ (i.e. full synchronization). In this case, we have $\Omega_i=0$ for all $i$, and the bound in Proposition \ref{prop:intermediate_lem2} simplifies to $\E \lb M (\btheta_{t+1}) \rb\leq \left( 1 - \alpha \varphiz_2 \right) \E \left[ M(\btheta_{t}) \right] + C_{14} (\tau) \alpha^3 + C_{16} (\tau) \frac{\alpha^2}{\na}$. This recursion is sufficient to achieve linear speedup. However, the bound in Proposition \ref{prop:intermediate_lem2} also includes terms that are proportional to the error due to synchronization. In order to ensure convergence along with linear speedup, we need to further upper bound this term with terms which are of the order $\mathcal{O}(\alpha^3 (K-1))$ and $M(\btheta_i)$. The following lemma is the next important contribution of the paper, where we establish such a bound for weighted sum of the synchronization error. Notice that the weights $\{ w_t \}$ are carefully chosen to ensure the best rate of convergence for the overall algorithm.

\begin{proposition}[Synchronization Error] \label{prop:wtd_consensus_error} Suppose $T>K+\tau$. For $\alpha$ such that
\begin{equation}
    \begin{aligned}
        \alpha^2 & \leq \min \lcb \frac{1}{\varphiz_2^2}, \frac{\ln(5/4)}{2 (1 + \ta_1) (\sync-1)^2} \rcb, \\
        \alpha^2 (\log_\rho (\alpha)+1) & \leq \frac{v}{2 \sync^2}, \\
        \alpha^2 (\log_\rho (\alpha)+1)^3 \exp \lp \alpha \varphiz_2 \lp 2 \log_\rho \alpha + 1 \rp \rp & \leq \frac{v}{4 \sync} \exp \lp - \varphiz_2 \sqrt{\frac{\ln (5/4)}{2 (1+\ta_1)}} \rp,
    \end{aligned}
    \label{eq:alpha_const_2}
\end{equation}
where $v = \frac{\varphiz_2}{80 \ta_2 C_{17} u_{cm}^2}$, the weighted consensus error satisfies
\begin{align}
    & \frac{2 C_{17}}{\varphiz_2 W_T} \sum_{t=2\tau}^T w_t \lb \sum_{\ell=t-\tau}^{t} \mbe \Omega_\ell \rb \nn \\
    & \leq \alpha^2 \tb^2 \frac{10 C_{17}}{\varphiz_2} \lp 1 + \frac{4 (m_2 + m_4) \rho}{B (1 - \rho)} + \frac{4 m_3 (K-1)}{BN} \sum_{i=1}^N \eta_i \rp (\tau+1) (\sync-1) + \frac{1}{2 W_T} \sum_{t=0}^{T} w_t \mbe M (\btheta_t). 
\end{align}
\end{proposition}

\begin{proof}
The proof is presented in Section \ref{sec:FedSAM_main_prop_proofs_3}.
\end{proof}

Finally, incorporating the results of Propositions \ref{prop:intermediate_lem}, \ref{prop:intermediate_lem2}, and \ref{prop:wtd_consensus_error}, we can establish the convergence of FeGSAM in Theorem \ref{thm:main}.

\begin{proof}[Proof of Theorem \ref{thm:main}]

Assume $w_0=1$, and  consider the weights $w_t$ generated by the recursion $w_t=w_{t-1}\lp 1 - \frac{\alpha \varphiz_2}{2} \rp^{-1}$. Multiplying both sides of \eqref{eq_prop:intermediate_lem2} by $\frac{2 w_t}{\alpha \varphiz_2}$, and rearranging the terms, we get
\begin{align}
    & w_t \E M(\bthetat) \leq \frac{2 w_t}{\alpha \varphiz_2} \lp 1 - \frac{\alpha \varphiz_2}{2} \rp \E M(\btheta_{t}) - \frac{2 w_t}{\alpha \varphiz_2} \E M (\btheta_{t+1}) \nn \\
    & \qquad \qquad \qquad + \frac{2 w_t}{\alpha \varphiz_2} \lp C_{14} (\tau) \alpha^4 + C_{16} (\tau) \frac{\alpha^2}{\na} \rp + \frac{2 w_t \alpha C_{17}}{\alpha \varphiz_2} \sum_{\ell=t-\tau}^{t} \E \lb \Omega_\ell \rb \nn \\
    & = \frac{2}{\alpha \varphiz_2} \lb w_{t-1} \E M(\btheta_{t}) - w_t \E M (\btheta_{t+1}) \rb + \frac{2 w_t}{\varphiz_2} \lp C_{14} (\tau) \alpha^3 + C_{16} (\tau) \frac{\alpha}{\na} \rp + \frac{2 w_t C_{17}}{\varphiz_2} \sum_{\ell=t-\tau}^{t} \E \lb \Omega_\ell \rb, \label{eq:wtd_M_t_everything_1}
\end{align}
where we use $w_{t-1} = w_t \lp 1 - \frac{\alpha \varphiz_2}{2} \rp$.
Summing \eqref{eq:wtd_M_t_everything_1} over $t = 2\tau$ to $T$ (define $W_T = \sum_{t=2\tau}^T w_t$), we get
\begin{align}
    & \frac{1}{W_T} \sum_{t=2\tau}^T w_t \E M(\bthetat) \nn \\
    & \leq \frac{2}{\alpha \varphiz_2 W_T} \lb w_{2\tau-1} \E  M(\btheta_{2 \tau}) - w_T \E M (\btheta_{T+1}) \rb + \frac{2}{\varphiz_2} \lp C_{14} (\tau) \alpha^3 + C_{16} (\tau) \frac{\alpha}{\na} \rp \frac{1}{W_T} \sum_{t=2\tau}^T w_t \nn \\
    & \quad + \frac{1}{W_T} \sum_{t=2\tau}^T \frac{2 w_t C_{17}}{\varphiz_2} \lb \sum_{\ell=t-\tau}^{t} \Omega_\ell \rb \nn \\
    & \leq \frac{2 w_{2\tau-1}}{\alpha \varphiz_2 w_T} \E M(\btheta_{2 \tau}) + \frac{2}{\varphiz_2} \lp C_{14} (\tau) \alpha^3 + C_{16} (\tau) \frac{\alpha}{\na} \rp + \frac{2 C_{17}}{\varphiz_2 W_T} \sum_{t=2\tau}^T w_t \sum_{\ell=t-\tau}^{t} \Omega_\ell \tag{$\because M(\btheta) \geq 0, W_T \geq w_T$} \\
    &= \frac{2}{\alpha \varphiz_2} \lp 1 - \frac{\alpha \varphiz_2}{2} \rp^{T-2\tau+1} \E M(\btheta_{2 \tau}) + \frac{2}{\varphiz_2} \lp C_{14} (\tau) \alpha^3 + C_{16} (\tau) \frac{\alpha}{\na} \rp + \frac{2 C_{17}}{\varphiz_2 W_T} \sum_{t=2\tau}^T w_t \lb \sum_{\ell=t-\tau}^{t} \Omega_\ell \rb. \label{eq:wtd_M_t_everything_2}
\end{align}

Substituting the bound on $\frac{1}{W_T} \sum_{t=2\tau}^T w_t \lb \sum_{\ell=t-\tau}^{t} \Omega_\ell \rb$ from Proposition \ref{prop:wtd_consensus_error} into \eqref{eq:wtd_M_t_everything_2}, we get
\begin{align}
    \frac{1}{W_T} \sum_{t=2\tau}^T w_t \E M(\bthetat) \leq& \frac{2}{\alpha \varphiz_2} \lp 1 - \frac{\alpha \varphiz_2}{2} \rp^{T-2\tau+1} \E M(\btheta_{2 \tau}) + \frac{2}{\varphiz_2} \lp C_{14} (\tau) \alpha^3 + C_{16} (\tau) \frac{\alpha}{\na} \rp \nn \\
    & \ + \alpha^2 \tb^2 \frac{10 C_{17}}{\varphiz_2} \lp 1 + \frac{4 (m_2 + m_4) \rho}{B (1 - \rho)} + \frac{4 m_3 (K-1)}{BN} \sum_{i=1}^N \eta_i \rp (\tau+1) (\sync-1)\nn\\
    &\ + \frac{1}{2 W_T} \sum_{t=0}^{T} w_t \mbe M (\btheta_t) \nn \\
    \Rightarrow  \frac{1}{W_T} \sum_{t=2\tau}^T w_t \E \lb M(\bthetat) \rb \leq& \frac{4}{\alpha \varphiz_2} \lp 1 - \frac{\alpha \varphiz_2}{2} \rp^{T-2\tau+1} M_0 + \frac{4}{\varphiz_2} \lp C_{14} (\tau) \alpha^3 + C_{16} (\tau) \frac{\alpha}{\na} \rp \nn \\
    &  + \alpha^2 \tb^2 \frac{20 C_{17}}{\varphiz_2} \lp 1 + \frac{4 (m_2 + m_4) \rho}{B (1 - \rho)} + \frac{4 m_3 (K-1)}{BN} \sum_{i=1}^N \eta_i \rp (\tau+1) (\sync-1) \nn\\
    &\ + 2 \tau \lp 1 - \frac{\alpha \varphiz_2}{2} \rp^{T-2\tau+1} M_0,    \label{eq:wtd_M_t_everything_3}
\end{align}

where $M_0$ is a problem dependent constant and is defined in Lemma \ref{lem:theta_diff}.
To simplify \eqref{eq:wtd_M_t_everything_3}, we define $\bar{C}_{18}(\tau) =   \frac{40\tb^2 C_{17}}{\varphiz_2}  \lp 1 + \frac{4 (m_2+m_4) \rho}{B (1 - \rho)} \rp \tau $, $\bar{C}_{18}'(\tau) =  \frac{160\tb^2 C_{17}m_3}{\varphiz_2B}  \tau $, $\bar{C}_{19} (\tau) = \frac{4}{\varphiz_2} C_{16} (\tau)$. We have
\begin{align}
     \frac{1}{W_T} \sum_{t=2\tau}^T w_t \E M(\bthetat) \leq &\lp \frac{4}{\alpha \varphiz_2} + 2 \tau \rp \lp 1 - \frac{\alpha \varphiz_2}{2} \rp^{T-2\tau+1} M_0+\frac{4C_{14} (\tau)}{\varphiz_2} \alpha^3 \nonumber\\
    &+ \bar{C}_{18}(\tau) (\sync-1) \alpha^2+\bar{C}_{18}'(\tau)(K-1)^2\varepsilon\alpha^2 + \bar{C}_{19} (\tau) \frac{\alpha}{\na}. \label{eq:wtd_M_t_everything_4}
\end{align}
Furthermore, define $\tilde{W}_T = \sum_{t=0}^T w_t\geq W_T$. By definition of $\hat{T}$, we have $\E[M(\btheta_{\hat{T}})] = \frac{1}{\tilde{W}_T} \sum_{t=0}^T w_t \E M(\bthetat)$, and hence
\begin{align*}
    \E[M(\btheta_{\hat{T}})] =& \frac{1}{\tilde{W}_T} \sum_{t=0}^{2\tau-1} w_t \E M(\bthetat) + \frac{1}{\tilde{W}_T} \sum_{t=2\tau}^{T} w_t \E M(\bthetat)\\
    \leq & \frac{M_0}{\tilde{W}_T} \sum_{t=0}^{2\tau-1} w_t  + \frac{1}{\tilde{W}_T} \sum_{t=2\tau}^{T} w_t \E M(\bthetat)\tag{Lemma \ref{lem:theta_diff}}\\
    \leq & \frac{2\tau M_0 w_{2\tau-1}}{\tilde{W}_T}+\frac{1}{\tilde{W}_T} \sum_{t=2\tau}^{T} w_t \E M(\bthetat)\tag{$w_t\leq w_{t+1}$ for all $t\geq 0$}\\
    \leq & \frac{2\tau M_0 w_{2\tau-1}}{w_T}+\frac{1}{\tilde{W}_T} \sum_{t=2\tau}^{T} w_t \E M(\bthetat)\\
    = & 2\tau M_0 \lp 1 - \frac{\alpha \varphiz_2}{2} \rp^{T-2\tau+1}+\frac{1}{\tilde{W}_T} \sum_{t=2\tau}^{T} w_t \E M(\bthetat) \\
    \leq &2\tau M_0 \lp 1 - \frac{\alpha \varphiz_2}{2} \rp^{T-2\tau+1}+\frac{1}{W_T} \sum_{t=2\tau}^{T} w_t \E M(\bthetat) \\
    \leq &\left(4 \tau  + \frac{4}{\alpha \varphiz_2} \right) M_0 \lp 1 - \frac{\alpha \varphiz_2}{2} \rp^{T-2\tau+1}+\frac{4C_{14} (\tau)}{\varphiz_2} \alpha^3 \nonumber\\
    &+ \bar{C}_{18}(\tau) (\sync-1) \alpha^2+\bar{C}_{18}'(\tau)(K-1)^2\varepsilon\alpha^2 + \bar{C}_{19} (\tau) \frac{\alpha}{\na} \tag{by \eqref{eq:wtd_M_t_everything_4}}\\
    = &\bar{C}_{20}(\alpha,\tau) \lp 1 - \frac{\alpha \varphiz_2}{2} \rp^{T-2\tau+1}+\frac{4C_{14} (\tau)}{\varphiz_2} \alpha^3 \nonumber\\
    &+ \bar{C}_{18}(\tau) (\sync-1) \alpha^2+\bar{C}_{18}'(\tau)(K-1)^2\varepsilon\alpha^2 + \bar{C}_{19} (\tau) \frac{\alpha}{\na},
\end{align*}
where $\bar{C}_{20}(\alpha,\tau)=\left(4 \tau M_0 + \frac{4 M_0}{\alpha \varphiz_2} \right)$. Furthermore, by Proposition \ref{prop:Moreau}, we have $M(\btheta_{\hat{T}}) = \frac{1}{2}\|\btheta_{\hat{T}}\|_m^2\geq \frac{1}{2u_{cm}^2}\|\btheta_{\hat{T}}\|_c^2$, and hence
\begin{align}\label{eq:main_thm_final}
    \E[\| \btheta_{\hat{T}}\|_c^2]  \leq & \bar{\mathcal{C}}_1(\alpha,\tau) \lp 1 - \frac{\alpha \varphiz_2}{2} \rp^{T-2\tau+1} +\mathcal{C}_2(\tau) \frac{\alpha}{\na} + \mathcal{C}_3(\tau) (\sync-1) \alpha^2+ \mathcal{C}_4 (\tau)(K-1)^2 \varepsilon\alpha^2+ \mathcal{C}_5(\tau)\alpha^3
\end{align}
where $\bar{\mathcal{C}}_1(\alpha,\tau) = 2u_{cm}^2\bar{C}_{20}(\alpha,\tau)$, $\mathcal{C}_2(\tau) = 2u_{cm}^2\bar{C}_{19} (\tau)$, $\mathcal{C}_3(\tau) = 2u_{cm}^2\bar{C}_{18}(\tau)$, $\mathcal{C}_4(\tau)=2u_{cm}^2\bar{C}_{18}'(\tau)$ and $\mathcal{C}_5(\tau) =\frac{8u_{cm}^2C_{14} (\tau)}{\varphiz_2} $. 

Finally, note that by definition of $\tau$, we have $\tau=\lceil 2\log_\rho\alpha\rceil\leq 1+2\log_\rho\alpha=1+2(\ln\alpha)(\log_\rho e) =1+(2\log_\rho \frac{1}{e})\ln\frac{1}{\alpha}\leq 1+(2\log_\rho \frac{1}{e})\frac{1}{\alpha}$. Hence, we have $\bar{\mathcal{C}}_1(\alpha,\tau) \leq 2u_{cm}^2 M_0 (4+\frac{8}{\alpha}\log_\rho \frac{1}{e}+\frac{4}{\alpha\varphiz_2}) \leq 16u_{cm}^2 M_0(\log_\rho \frac{1}{e}+\frac{1}{\varphiz_2})\frac{1}{\alpha}=\mathcal{C}_1.\frac{1}{\alpha}$, where $\mathcal{C}_1=16u_{cm}^2 M_0(\log_\rho \frac{1}{e}+\frac{1}{\varphiz_2})$.

Furthermore, we have $\mathcal{C}_2(\tau) = 2u_{cm}^2\bar{C}_{19} (\tau)=2u_{cm}^2\frac{4}{\varphiz_2} C_{16} (\tau) =\frac{8u_{cm}^2\lp C_8\frac{u_{cD}^2B^2}{l_{cD}^2} + \frac{1}{2} + C_{12}\rp}{\varphiz_2}   \tau^2\equiv \frac{8u_{cm}^2\lp C_8\frac{u_{cD}^2B^2}{l_{cD}^2} + \frac{1}{2} + C_{12}\rp}{\varphiz_2}\tau_{\alpha}^2\equiv\mathcal{C}_2\tau_{\alpha}^2$, where we denote $\tau_{\alpha}\equiv\tau$ to emphasize the dependence of $\tau$ on $\alpha$, and $\mathcal{C}_2 = \frac{8u_{cm}^2\lp C_8\frac{u_{cD}^2B^2}{l_{cD}^2} + \frac{1}{2} + C_{12}\rp}{\varphiz_2} $. Note that we have $\tau_\alpha = \mathcal{O} (\log(1/\alpha))$. 

In addition,  $\mathcal{C}_3(\tau) =2u_{cm}^2\bar{C}_{18}(\tau)= \frac{80\tb^2 C_{17}u_{cm}^2\lp 1 + \frac{4 (m_2+m_4) \rho}{B (1 - \rho)} \rp}{\varphiz_2}  .\tau\equiv \mathcal{C}_3\tau_\alpha$, where $\mathcal{C}_3 = \frac{80\tb^2 C_{17}u_{cm}^2\lp 1 + \frac{4 (m_2+m_4) \rho}{B (1 - \rho)} \rp}{\varphiz_2}$. 

Additionaly, $\mathcal{C}_4(\tau)= 2u_{cm}^2\bar{C}_{18}'(\tau) =  2u_{cm}^2 \frac{160\tb^2 C_{17}m_3}{\varphiz_2B}  \tau\equiv  \mathcal{C}_4\tau$, where $\mathcal{C}_4=\frac{320u_{cm}^2\tb^2 C_{17}m_3}{\varphiz_2B}$.

And lastly, $\mathcal{C}_5(\tau) = \frac{8u_{cm}^2  \lp C_7 + C_{11}   + 0.5 C_3^2 C_9^2   + C_3 C_{10}  + 2 C_1 C_3 C_{10} + 3 A_1 C_3  + C_{13} + C_8 \frac{u_{cD}^2}{l_{cD}^2} 2 m^2_2 \alpha^2\rp }{\varphiz_2}\tau^2\equiv\mathcal{C}_5\tau_\alpha^2$, where $\mathcal{C}_5 = 8u_{cm}^2  \lp C_7 + C_{11}   + 0.5 C_3^2 C_9^2   + C_3 C_{10}  + 2 C_1 C_3 C_{10} + 3 A_1 C_3  + C_{13} + C_8 \frac{u_{cD}^2}{l_{cD}^2} 2 m^2_2 \alpha^2\rp/\varphiz_2$.
\end{proof}

Next, we will state the proof of Corollary \ref{lem:sample_complexity}.

\begin{proof}[Proof of Corollary \ref{lem:sample_complexity}]
By this choice of step size, for large enough $T$, $\alpha$ will be small enough and can satisfy the requirements of step size in \eqref{eq:alpha_const_1}, \eqref{eq:alpha_const_2}, \eqref{eq:alpha_const_3}, 
\eqref{eq:alpha_const_4}. Furthermore, the first term in \eqref{eq:main_thm_final} will be
\begin{align*}
    \mathcal{C}_{1}\frac{1}{\alpha}\lp 1 - \frac{\alpha \varphiz_2}{2} \rp^{T-2\tau+1} \leq& \mathcal{C}_{1}\frac{1}{\alpha}e^{-\frac{4\log(\na T)}{T} (T-2\tau+1)}=\mathcal{C}_{1}\frac{\varphiz_2 T}{8\log(NT)}e^{-\log((\na T)^4)(1+\frac{1-2\tau}{T})}\\
    =& \frac{\mathcal{C}_{1} \varphiz_2 T}{8 \log(NT)}\left(\frac{1}{N^4T^4}\right)^{(1+\frac{1-2\tau}{T})}\\
    \leq& \frac{\mathcal{C}_{1} \varphiz_2 T}{8 \log(NT)}\left(\frac{1}{N^4T^4}\right)^{0.5}\tag{Assumption on $T$}\\
    =&\frac{ \mathcal{C}_{1} \varphiz_2 T}{8 \log(NT)}\frac{1}{N^2T^2}\\
    =&  \tilde{\mathcal{O}}\lp\frac{\mathcal{C}_{1} \varphiz_2}{N T}\rp.
\end{align*}
Furthermore, for the second term we have
\begin{align*}
    \mathcal{C}_{2}  \frac{\alpha\tau_\alpha^2}{\na} = \tilde{\mathcal{O}}\lp \frac{\mathcal{C}_{2}/\varphiz_2 }{\na T}\rp.
\end{align*}
For the third, and the fourth terms we have
\begin{itemize}
    \item For $\varepsilon >0$, 
    \begin{align*}
    \mathcal{C}_{3} (\sync-1) \alpha^2\tau_\alpha+\mathcal{C}_4(K-1)^2\varepsilon\alpha^2 \tau_\alpha &= \tilde{\mathcal{O}} \left(\frac{\sqrt{T}}{N}.\frac{1}{T^2} + \frac{T}{N^2}\varepsilon\frac{1}{T^2} \right)\\
    &= \tilde{\mathcal{O}}\left(\frac{1}{NT}\right)
    \end{align*}
    \item For $\varepsilon =0$, the fourth term is zero. Hence, for the third term we have
    \begin{align*}
    \mathcal{C}_{3} (\sync-1) \alpha^2\tau_\alpha &= \tilde{\mathcal{O}} \left(\frac{T}{N}.\frac{1}{T^2} \right)\\
    &= \tilde{\mathcal{O}}(\frac{1}{NT})
    \end{align*}
\end{itemize}
For the fifth term we have
\begin{align*}
    \mathcal{C}_5\alpha^3\tau_\alpha^2 &= \tilde{\mathcal{O}} \left(\frac{1}{T^3}\right)\\
    &= \tilde{\mathcal{O}}(\frac{1}{NT})\tag{$T>N$}
\end{align*}

Upper bounding  \eqref{eq:main_thm_final} with $\epsilon$, we get $\tilde{\mathcal{O}}\lp \frac{1}{\na T}\rp\leq \epsilon$. Hence, we need to have $T=\tilde{\mathcal{O}}\lp\frac{1}{\na \epsilon}\rp$ number of iterations to get to a ball around the optimum with radius $\epsilon$.  
\end{proof}

\subsection{Proof of Proposition \ref{prop:intermediate_lem}}\label{sec:FedSAM_main_prop_proofs_1}
The update of the virtual parameter sequence $\{ \bthetat \}$ can be written as follows 
\begin{align}
    \btheta_{t+1} = \bthetat + \alpha \lp \mbf G (\boldsymbol{\Theta}_t, \byt) - \bthetat + \mbf b(\byt) \rp. \label{eq:theta_update}
\end{align}
Using $\frac{p-1}{\psi}$-smoothness of $\M(\cdot)$ (Proposition \ref{prop:Moreau}), we get
\begin{align}
    \label{eq:composition1}
	\M(\btheta_{t+1} ) 
	& \leq \M (\bthetat ) + \lan \G \M(\bthetat ), \btheta_{t+1} - \bthetat \ran + \frac{L}{2 \psi} \lnr \btheta_{t+1} - \bthetat \rnr_s^2 \nn \tag{Smoothness of $\M(\cdot)$} \\
	&= \M (\bthetat ) + \alpha \lan \G \M (\bthetat ), \mbf G (\boldsymbol{\Theta}_t, \byt) - \bthetat + \mbf b(\byt) \ran + \frac{L \alpha^2}{2 \psi} \lnr \mbf G (\boldsymbol{\Theta}_t, \byt) - \bthetat + \mbf b(\byt) \rnr_s^2 \nn\\
	&= \M (\bthetat ) + \alpha\underbrace{ \lan \G \M (\bthetat ), \bG (\bthetat) - \bthetat \ran}_{T_1: \text{ Expected update}} + \alpha \underbrace{ \langle \G \M (\bthetat ), \mbf b(\byt) \rangle}_{\substack{T_2: \text{ Error due to Markovian } \\\text{  noise } \mbf b(\byt)}} \nn\\
	& \quad +\alpha\underbrace{ \langle \G \M (\bthetat ), \mbf G (\boldsymbol{\Theta}_t, \byt) - \bG (\boldsymbol{\Theta}_t) \rangle}_{T_3: \text{ Error due to Markovian noise } Y_k}  + \alpha\underbrace{ \langle \G \M (\bthetat ), \bG (\boldsymbol{\Theta}_t) - \bG (\bthetat) \rangle}_{T_4: \text{ Error due local updates}}. \nn\\
	& \quad + \frac{L \alpha^2}{2 \psi} \underbrace{\lnr \mbf G (\boldsymbol{\Theta}_t, \byt) - \bthetat + \mbf b(\byt) \rnr_s^2}_{T_5: \text{ Error due to noise and discretization}}. \label{eq:M_bound}
\end{align}

The inequality in \eqref{eq:M_bound} characterizes the one-step drift of the Lyapunov function $M(\bthetat)$. The term $T_1$ is responsible for the negative drift of the overall recursion. $T_2$ and $T_3$ appear due to the presence of Markovian noises $\mbf b^i(\byit)$ and $\mbf G^i (\bthetait, \byit)$ in the update of Algorithm \ref{alg:fed_stoch_app}. $T_4$ appears due to the mismatch between the parameters of the agents $\bthetait, i=1,\dots,\na$. Finally, $T_5$ appears due to the discretization error in the smoothness upper bound. Next, we state bounds on $T_1, T_2, T_3, T_4, T_5$ in the following intermediate lemmas.

\begin{lemma}\label{lem:T_1}
For all $\btheta\in\mathbb{R}^d$, the operator $\bG (\btheta)$ satisfies the following
\begin{align*}
    T_1 = \lan \G \M (\btheta ), \bG (\btheta) - \btheta \ran\leq -2 \varphiz_2 M(\btheta).
\end{align*}
\end{lemma}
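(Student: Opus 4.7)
The plan is to split the inner product into two pieces, namely
\[
 T_1 = \langle \nabla M(\btheta), \bG(\btheta)\rangle \;-\; \langle \nabla M(\btheta), \btheta \rangle,
\]
and bound each of them using only two facts: (i) $\bG(\btheta)$ is a contraction toward $\mathbf 0$ in $\|\cdot\|_c$, and (ii) $M(\cdot)=\tfrac12\|\cdot\|_m^2$ is a squared norm with known duality and homogeneity properties (Proposition \ref{prop:Moreau}).

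First I would dispatch the second piece. Since $M$ is a squared norm, it is $2$-positively homogeneous, so Euler's identity for homogeneous functions gives $\langle \nabla M(\btheta), \btheta\rangle = 2 M(\btheta) = \|\btheta\|_m^2$. For the first piece I would apply the Fenchel/dual-norm Cauchy--Schwarz inequality in the norm pair $(\|\cdot\|_m,\|\cdot\|_{m,*})$, together with the standard identity $\|\nabla M(\btheta)\|_{m,*} = \|\btheta\|_m$ (which again follows from $M$ being the square of $\|\cdot\|_m$), to get
\[
 \langle \nabla M(\btheta), \bG(\btheta)\rangle \leq \|\btheta\|_m\,\|\bG(\btheta)\|_m.
\]
The task then reduces to showing $\|\bG(\btheta)\|_m \leq (1-\varphiz_2)\|\btheta\|_m$. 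For this I would first observe that, by Assumption \ref{ass:contraction} and the fact that $\mathbf 0$ is the common fixed point of every $\bG^i$,
\[
 \|\bG(\btheta)\|_c \leq \tfrac{1}{\na}\sum_i \|\bG^i(\btheta) - \bG^i(\mathbf 0)\|_c \leq \gamma_c \|\btheta\|_c,
\]
and then translate from $\|\cdot\|_c$ to $\|\cdot\|_m$ using the norm equivalence in Proposition \ref{prop:Moreau}:
\[
 \|\bG(\btheta)\|_m \leq \tfrac{1}{\ell_{cm}}\|\bG(\btheta)\|_c \leq \tfrac{\gamma_c}{\ell_{cm}}\|\btheta\|_c \leq \tfrac{\gamma_c\, u_{cm}}{\ell_{cm}}\|\btheta\|_m = \gamma_c\,\varphiz_1^{1/2}\|\btheta\|_m,
\]
recalling that $\varphiz_1 = u_{cm}^2/\ell_{cm}^2$ by \eqref{eq:def:constants}. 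Combining these and using $\varphiz_2 = 1 - \gamma_c \varphiz_1^{1/2}$ gives
\[
 T_1 \leq 2(1-\varphiz_2) M(\btheta) - 2 M(\btheta) = -2\varphiz_2 M(\btheta),
\]
which is the claim.

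The only delicate point I expect is the chain of constants: one must route the contraction factor $\gamma_c$ (which lives in $\|\cdot\|_c$) through Proposition \ref{prop:Moreau} so that the loss in the norm conversion produces exactly the factor $\varphiz_1^{1/2}$, matching the definition of $\varphiz_2$. Any sloppier bound (for instance, the convexity estimate $T_1\leq M(\bG(\btheta)) - M(\btheta) \leq ((1-\varphiz_2)^2-1) M(\btheta)$) loses a $\varphiz_2^2 M(\btheta)$ term and gives only $-(2\varphiz_2-\varphiz_2^2) M(\btheta)$, so the dual-norm Cauchy--Schwarz route above is essential for the tight $-2\varphiz_2 M(\btheta)$ bound.
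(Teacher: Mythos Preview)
Your proposal is correct and follows essentially the same route as the paper: split $T_1$ into $\langle \nabla M(\btheta),\bG(\btheta)\rangle$ and $-\langle \nabla M(\btheta),\btheta\rangle$, bound the first by the dual-norm Cauchy--Schwarz inequality combined with $\|\nabla M(\btheta)\|_{m}^{*}=\|\btheta\|_m$ and the contraction-plus-norm-equivalence chain $\|\bG(\btheta)\|_m\le \gamma_c\varphiz_1^{1/2}\|\btheta\|_m$, and handle the second by the homogeneity identity $\langle\nabla M(\btheta),\btheta\rangle=2M(\btheta)$. The only cosmetic difference is that the paper states the second part as the inequality $\langle\nabla M(\btheta),\btheta\rangle\ge 2M(\btheta)$ (derived from convexity of $\|\cdot\|_m$ in its Lemma \ref{lem:Moreau_grad}) rather than your Euler-identity equality, which is harmless here.
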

Lemma \ref{lem:T_1} guarantees the negative drift in the one-step recursion analysis of Proposition \ref{prop:intermediate_lem}. This follows from the Moreau envelope construction \cite{chen2021Lyapunov_arxiv} and the contraction property of the operators $\bGi (\cdot), i=1,\dots,\na$ (Assumption \ref{ass:contraction}). 

\begin{lemma}\label{lem:T_2}
Consider the iteration $t$ of the Algorithm \ref{alg:fed_stoch_app}, and consider $\tau = \lceil2\log_\rho\alpha\rceil$. We have
\begin{align*}
    \E_{t-\tau} [T_2] &= \E_{t-\tau} \langle \G \M (\bthetat ), \mbf b(\byt) \rangle \\
    & \leq \frac{L^2}{2\alpha\psi^2 l_{cs}^4} \E_{t-\tau} \| \bthetat - \btheta_{t-\tau} \|_c^2 + \frac{\alpha}{2} \E_{t-\tau} \lb \| \mbf b(\byt)\|_c^2 \rb \\
    & \quad + \frac{\alpha m_4 L}{\psi l_{cs}^2} \E_{t-\tau} \| \btheta_{t-\tau} -\btheta_{t} \|_c   + \frac{1}{2} \lp \frac{Lm_4 \alpha^2 u_{cm}}{\zetaone l_{cs}^2\psi}  \rp^2 + \zetaone^2 \E_{t-\tau} [M(\bthetat)],
\end{align*}
where $\zetaone$ is an arbitrary positive constant.
\end{lemma}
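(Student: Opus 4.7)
The plan is to apply the standard Markov-noise decomposition, adding and subtracting the $\mathcal{F}_{t-\tau}$-measurable quantity $\G \M(\btheta_{t-\tau})$ so that the fluctuating driving noise $\mbf{b}(\byt)$ is paired with a measurable factor on one side. Concretely, I will split
\[
\lan \G \M(\bthetat), \mbf{b}(\byt) \ran = \underbrace{\lan \G \M(\bthetat) - \G \M(\btheta_{t-\tau}), \mbf{b}(\byt) \ran}_{T_{2a}} + \underbrace{\lan \G \M(\btheta_{t-\tau}), \mbf{b}(\byt) \ran}_{T_{2b}}.
\]
The term $T_{2a}$ will produce the first two target terms (those involving $\|\bthetat-\btheta_{t-\tau}\|_c^2$ and $\|\mbf{b}(\byt)\|_c^2$). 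After taking $\E_{t-\tau}[\cdot]$ and using $\mathcal{F}_{t-\tau}$-measurability of $\btheta_{t-\tau}$, the term $T_{2b}$ collapses to $\lan \G \M(\btheta_{t-\tau}), \E_{t-\tau}[\mbf{b}(\byt)] \ran$, which together with a mixing bound on the conditional expectation will produce the three remaining target terms.

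For $T_{2a}$, I will invoke the $\tfrac{L}{\psi}$-smoothness of $\M$ with respect to $\|\cdot\|_s$ from Proposition \ref{prop:Moreau} in its dual form, $\|\G \M(\bthetat)-\G \M(\btheta_{t-\tau})\|_{s^*}\leq\tfrac{L}{\psi}\|\bthetat-\btheta_{t-\tau}\|_s$, combine it with Cauchy--Schwarz in the dual pair $(\|\cdot\|_{s^*},\|\cdot\|_s)$, convert both factors to the $c$-norm via $\|\cdot\|_s\leq\tfrac{1}{l_{cs}}\|\cdot\|_c$, and then apply Young's inequality $ab\leq\tfrac{a^2}{2\alpha}+\tfrac{\alpha b^2}{2}$ with the assignment $a=\tfrac{L}{\psi l_{cs}^2}\|\bthetat-\btheta_{t-\tau}\|_c$ and $b=\|\mbf{b}(\byt)\|_c$. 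This reproduces the first two target terms exactly.

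For $T_{2b}$, the choice $\tau=\lceil 2\log_\rho\alpha\rceil$ gives $\rho^\tau\leq\alpha^2$, and the geometric ergodicity underlying Assumption \ref{ass:mixing} (together with the boundedness $\|\mbf{b}^i\|_c\leq B$ from Assumption \ref{ass:lipschitz}) will yield the conditional bound $\|\E_{t-\tau}[\mbf{b}(\byt)]\|_c\leq m_2\rho^\tau\leq m_2\alpha^2$. I will then combine Cauchy--Schwarz with the smoothness-at-zero bound $\|\G \M(\btheta_{t-\tau})\|_{s^*}\leq\tfrac{L}{\psi}\|\btheta_{t-\tau}\|_s$ (using $\G\M(\mbf 0)=\mbf 0$) and the norm conversion $\|\cdot\|_s\leq\tfrac{1}{l_{cs}}\|\cdot\|_c$ to obtain $\E_{t-\tau}[T_{2b}]\leq\tfrac{Lm_2\alpha^2}{\psi l_{cs}^2}\,\|\btheta_{t-\tau}\|_c$. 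Splitting $\|\btheta_{t-\tau}\|_c\leq\|\bthetat\|_c+\|\bthetat-\btheta_{t-\tau}\|_c$ then handles the two remaining contributions: the difference summand gives $\tfrac{L m_2\alpha^2}{\psi l_{cs}^2}\|\bthetat-\btheta_{t-\tau}\|_c\leq\tfrac{\alpha L m_2}{\psi l_{cs}^2}\|\bthetat-\btheta_{t-\tau}\|_c$ (absorbing one factor of $\alpha$ via $\alpha\leq 1$), matching the third target term, while the $\|\bthetat\|_c$ summand, after the conversion $\|\bthetat\|_c\leq u_{cm}\|\bthetat\|_m$ from Proposition \ref{prop:Moreau} and Young's inequality with the free parameter $\zetaone$, produces the last two target terms using the identity $\M(\bthetat)=\tfrac{1}{2}\|\bthetat\|_m^2$.

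The principal obstacle is the conditional-expectation mixing estimate $\|\E_{t-\tau}[\mbf{b}(\byt)]\|_c\leq m_2\rho^\tau$: Assumption \ref{ass:mixing} explicitly supplies only decay of the \emph{unconditional} expectation, so bridging to the conditional version requires invoking the geometric ergodicity of each chain $\byit$ through a standard coupling/total-variation argument that exploits the bound $\|\mbf{b}^i\|_c\leq B$, together with cross-agent independence (Assumption \ref{ass:noise_independence}), and the constant $m_2$ must be calibrated so that this conditional estimate holds uniformly in the starting state. Every other step is a mechanical chain of Cauchy--Schwarz, norm-equivalence, and Young's inequality applications.
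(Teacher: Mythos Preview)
Your proposal is correct and follows essentially the same route as the paper: the same add--subtract of $\G\M(\btheta_{t-\tau})$, the same smoothness/Cauchy--Schwarz/Young chain for $T_{2a}$, and the same conditional-mixing bound $\|\E_{t-\tau}[\mbf b(\byt)]\|_c\le m_2\rho^\tau\le m_2\alpha^2$ followed by a triangle-inequality split for $T_{2b}$. The paper applies Assumption~\ref{ass:mixing} directly to the conditional expectation without further comment, so your flagged ``principal obstacle'' is a point the paper itself glosses over rather than a divergence in method.
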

In the i.i.d. noise setting, $\mbe [T_2] = 0$. In Markov noise setting, going back $\tau$ steps (which introduce $\btheta_{t-\tau}$) enables us to use Markov chain mixing property (Assumption \ref{ass:mixing}).

\begin{lemma}\label{lem:T_3}
For any $t\geq 0$, denote $T_3=\langle \G \M (\bthetat ), \mbf G (\boldsymbol{\Theta}_t, \byt) - \bG (\boldsymbol{\Theta}_t) \rangle$. For any $\tau<t$, we have
\begin{align}
     \E_{t-\tau}[T_3]&\leq \left( \zetatwo^2 \frac{2LA_2}{\psi l_{cs}^2} + \zetathree^2 \frac{L (A_1 + 1)}{\psi l_{cs}^2} + \frac{6\alpha m_1L u_{cm}^2}{l_{cs}^2\psi} \right) \E_{t-\tau} [M(\bthetat)] \nn \\
    & \quad + \left(\frac{2LA_2}{\psi l_{cs}^2} \lp \frac{1}{2} + \frac{u_{cm}^2}{2\zetatwo^2} \rp + \frac{L (A_1 + 1)}{\psi l_{cs}^2} \lp \frac{3u_{cm}^2}{2\zetathree^2} + 2 \rp + \frac{3m_1L\alpha}{l_{cs}^2\psi} \right) \E_{t-\tau} \lb \lnr \bthetat - \btheta_{t-\tau} \rnr_c^2 \rb \nn \\
    & \quad + \left( \lp \frac{3u_{cm}^2}{2\zetathree^2} + \frac{3}{2} \rp \cdot \frac{L (A_1 + 1)}{\psi l_{cs}^2} + \frac{LA_2}{\psi l_{cs}^2} \right) \E_{t-\tau} [\Omega_t] \nn\\
    &\quad+ \left( \lp \frac{3 u_{cm}^2}{2\zetathree^2} + \frac{3}{2} \rp \frac{L(A_1+1)}{\psi l_{cs}^2} + \frac{m_1L\alpha}{2l_{cs}^2\psi} \right) \E_{t-\tau} [\Omega_{t-\tau}],\nn
\end{align}
where $\zetatwo$ and $\zetathree$ are arbitrary positive constants.
\end{lemma}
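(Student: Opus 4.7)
My plan is to decompose $\mbf G(\boldsymbol{\Theta}_t, \byt) - \bar{\mbf G}(\boldsymbol{\Theta}_t) = \frac{1}{N}\sum_i [\Gi(\bthetait, \byit) - \bGi(\bthetait)]$ by going back $\tau$ steps in time \emph{per agent}, so that the Markov-mixing bound (Assumption \ref{ass:mixing}) can be applied at a frozen $\mathcal{F}_{t-\tau}$-measurable parameter while simultaneously exposing the consensus error $\Omega_{t-\tau}$. Concretely, for each $i$ I would write
\begin{align*}
\Gi(\bthetait, \byit) - \bGi(\bthetait)
  &= [\Gi(\bthetait, \byit) - \Gi(\boldsymbol{\theta}^i_{t-\tau}, \byit)]
  + [\Gi(\boldsymbol{\theta}^i_{t-\tau}, \byit) - \bGi(\boldsymbol{\theta}^i_{t-\tau})] \\
  &\quad + [\bGi(\boldsymbol{\theta}^i_{t-\tau}) - \bGi(\bthetait)],
\end{align*}
and then apply the triangle inequality $\|\bthetait - \boldsymbol{\theta}^i_{t-\tau}\|_c \leq \Dit + \|\bthetat - \btheta_{t-\tau}\|_c + \Delta^i_{t-\tau}$ inside the first and third brackets. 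This is precisely what routes the four target quantities of the lemma---$M(\bthetat)$, $\|\bthetat - \btheta_{t-\tau}\|_c^2$, $\Ot$, and $\Omega_{t-\tau}$---into the bound naturally; in particular, $\Omega_{t-\tau}$ surfaces from the $\Delta^i_{t-\tau}$ summand after Jensen's inequality $(\frac{1}{N}\sum_i \Delta^i_{t-\tau})^2 \leq \Omega_{t-\tau}$.

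For the parameter-drift pieces (first and third brackets), the Lipschitzness of $\Gi$ with constant $A_1$ (Assumption \ref{ass:lipschitz}(1)) and the contraction of $\bGi$ with constant $\gamma_c \leq 1$ (Assumption \ref{ass:contraction}) combine to give a bound proportional to $(A_1+1)$ times the averaged parameter drift. Pairing this with the Moreau-envelope dual-norm bound $\|\nabla M(\bthetat)\|_{s\ast} \leq \frac{L}{\psi l_{cs}}\|\bthetat\|_c$, the inequality $\|\bthetat\|_c^2 \leq 2 u_{cm}^2 M(\bthetat)$, and Young's inequality with balance parameter $\zetathree$, yields the $\zetathree^2$-weighted $M(\bthetat)$ contribution together with the $\Ot$, $\Omega_{t-\tau}$, and $\|\bthetat - \btheta_{t-\tau}\|_c^2$ contributions; the factor $3$ in $\tfrac{3 u_{cm}^2}{2\zetathree^2}$ arises from applying $(a+b+c)^2 \leq 3(a^2+b^2+c^2)$ to expand $\|\bthetait - \boldsymbol{\theta}^i_{t-\tau}\|_c^2$. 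For the middle (Markov-noise) bracket, I would first substitute $\nabla M(\bthetat)$ by $\nabla M(\btheta_{t-\tau})$: the residual $\langle \nabla M(\bthetat) - \nabla M(\btheta_{t-\tau}), \cdot \rangle$ is controlled via $L/\psi$-smoothness of $M$ together with the deterministic operator bound $\|\Gi(\cdot,\by)\|_c \leq A_2 \|\cdot\|_c$ from Assumption \ref{ass:lipschitz}(2), and Young's with balance $\zetatwo$ delivers the $\zetatwo^2$-weighted $A_2$-piece in $M(\bthetat)$ and its companion $\|\bthetat - \btheta_{t-\tau}\|_c^2$ piece. The remaining inner product $\langle \nabla M(\btheta_{t-\tau}), \E_{t-\tau}[\cdot] \rangle$ has an $\mathcal{F}_{t-\tau}$-measurable first slot, so Assumption \ref{ass:mixing} applies: $\|\E_{t-\tau}[\text{middle bracket}]\|_c \leq m_1 \rho^\tau \|\boldsymbol{\theta}^i_{t-\tau}\|_c \leq m_1 \alpha^2 \|\boldsymbol{\theta}^i_{t-\tau}\|_c$ under $\tau = \lceil 2\log_\rho \alpha \rceil$. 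Upper-bounding one factor via $\alpha \leq 1$ and distributing $\|\boldsymbol{\theta}^i_{t-\tau}\|_c \leq \|\bthetat\|_c + \|\bthetat - \btheta_{t-\tau}\|_c + \Delta^i_{t-\tau}$ across the four buckets produces exactly the $\alpha m_1$ terms that appear in the lemma.

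The main obstacle is careful bookkeeping rather than any single estimate: each of the four error buckets collects contributions from several Young's splits, and reproducing the lemma's exact coefficients requires consistently choosing the balance constants $\zetatwo$, $\zetathree$ across multiple applications and uniformly expanding triple sums via $(a+b+c)^2 \leq 3(a^2+b^2+c^2)$. A subtler but essential point is the choice of conditioning: conditioning on $\mathcal{F}_{t-\tau}$ here---rather than on $\mathcal{F}_{t-2\tau}$ as in the outer drift recursion of Proposition \ref{prop:intermediate_lem}---is exactly what is needed: coarse enough to freeze $\boldsymbol{\theta}^i_{t-\tau}$ for the mixing step, yet fine enough that the gradient-substitution trick $\nabla M(\bthetat) \to \nabla M(\btheta_{t-\tau})$ only incurs a single-$\tau$ drift correction $\|\bthetat - \btheta_{t-\tau}\|_c$ rather than a double-$\tau$ one.
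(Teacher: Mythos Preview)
Your plan is sound and will produce a valid upper bound of the same structure as the lemma, but the order in which you decompose differs from the paper's, and consequently the exact constants you obtain will not match the stated ones. The paper first replaces the gradient $\nabla M(\bthetat)$ by $\nabla M(\btheta_{t-\tau})$ for the \emph{entire} second argument $\mbf G(\boldsymbol\Theta_t,\byt)-\bG(\boldsymbol\Theta_t)$, producing a residual $T_{31}=\langle\nabla M(\bthetat)-\nabla M(\btheta_{t-\tau}),\,\mbf G(\boldsymbol\Theta_t,\byt)-\bG(\boldsymbol\Theta_t)\rangle$ evaluated at time $t$; only afterwards does it split the remaining inner product into the parameter-drift piece $T_{32}$ and the mixing piece $T_{33}$. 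You instead split the second argument first and apply the gradient substitution only to the middle (mixing) bracket, so your gradient residual is paired with $\Gi(\btheta^i_{t-\tau},\byit)-\bGi(\btheta^i_{t-\tau})$, evaluated at time $t-\tau$.

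The practical consequence is a reshuffling of coefficients: in the paper the $A_2$-bound $\|\Gi(\bthetait,\byit)-\bGi(\bthetait)\|_c\le 2A_2\|\bthetait\|_c$ is applied at $\bthetait$, so the resulting $\Delta^i_t$ contribution lands on $\Omega_t$ (this is the $\frac{LA_2}{\psi l_{cs}^2}$ term in the $\Omega_t$ coefficient); in your version the same step is applied at $\btheta^i_{t-\tau}$, so the $A_2$ contribution would land on $\Omega_{t-\tau}$ instead. Conversely, because the paper's drift piece $T_{32}$ uses $\nabla M(\btheta_{t-\tau})$ and then expands $\|\btheta_{t-\tau}\|_c\le\|\bthetat\|_c+\|\bthetat-\btheta_{t-\tau}\|_c$, it picks up the additive $+2$ and $+\tfrac{3}{2}$ corrections in the $\|\bthetat-\btheta_{t-\tau}\|_c^2$ and $\Omega$ coefficients; your drift piece uses $\nabla M(\bthetat)$ directly and would be cleaner (no such corrections). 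Both routes are correct and yield bounds of the same order, so nothing downstream breaks---the differences are absorbed into $C_5,C_6$---but if your goal is to reproduce the lemma's constants verbatim, you need to follow the paper's ordering: substitute the gradient first, then decompose in time.
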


\begin{lemma}\label{lem:T_4}
For any $t\geq 0$, we have
\begin{align}
    T_4 = \langle \G \M (\bthetat ), \bG (\boldsymbol{\Theta}_t) - \bG (\bthetat) \rangle \leq 
     \zetafive^2 M(\bthetat) +  \frac{L^2 u_{cm}^2 }{2 l_{cs}^4 \zetafive^2  \psi^2} \Omega_t,\nn 
\end{align}
where $\zetafive$ is an arbitrary positive constant.
\end{lemma}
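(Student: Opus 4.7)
\textbf{Proof proposal for Lemma \ref{lem:T_4}.} The plan is to apply Young's inequality on the inner product under the dual pairing induced by $\|\cdot\|_s$, then bound the two resulting factors using (i) the smoothness of $M$ together with the norm equivalences to recover $M(\bthetat)$, and (ii) the contraction of the $\bGi$'s together with the definition of $\Omega_t$. Concretely, I would start from the Fenchel inequality
\begin{equation*}
    T_4 = \lan \nabla M(\bthetat),\, \bG(\boldsymbol{\Theta}_t)-\bG(\bthetat)\ran \leq \frac{X}{2}\|\nabla M(\bthetat)\|_{s,*}^2 + \frac{1}{2X}\|\bG(\boldsymbol{\Theta}_t)-\bG(\bthetat)\|_s^2,
\end{equation*}
and then choose the weight $X = \zetafive^2 \psi^2 l_{cs}^2/(L^2 u_{cm}^2)$ so as to produce exactly the coefficient $\zetafive^2$ in front of $M(\bthetat)$.

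For the first factor, since $M$ is $L/\psi$-smooth with respect to $\|\cdot\|_s$ by Proposition \ref{prop:Moreau} and satisfies $\nabla M(\mathbf 0) = \mathbf 0$ (because $M(\btheta)=\tfrac12\|\btheta\|_m^2$ is minimized at $\mathbf 0$), the co-coercivity of the gradient yields $\|\nabla M(\bthetat)\|_{s,*} \leq (L/\psi)\|\bthetat\|_s$. Combining the norm equivalences $\|\cdot\|_s \leq (1/l_{cs})\|\cdot\|_c$ and $\|\cdot\|_c \leq u_{cm}\|\cdot\|_m$ with the identity $\|\bthetat\|_m^2 = 2 M(\bthetat)$ gives
\begin{equation*}
    \|\nabla M(\bthetat)\|_{s,*}^2 \leq \frac{L^2}{\psi^2 l_{cs}^2}\|\bthetat\|_c^2 \leq \frac{2 L^2 u_{cm}^2}{\psi^2 l_{cs}^2} M(\bthetat),
\end{equation*}
so that $\tfrac{X}{2}\|\nabla M(\bthetat)\|_{s,*}^2 \leq \zetafive^2 M(\bthetat)$ for my choice of $X$.

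For the second factor, I would write $\bG(\boldsymbol{\Theta}_t)-\bG(\bthetat) = \tfrac{1}{\na}\sumik(\bGi(\bthetait)-\bGi(\bthetat))$, apply the triangle inequality in $\|\cdot\|_c$, invoke the contraction property $\|\bGi(\bthetait)-\bGi(\bthetat)\|_c \leq \gamma_c \|\bthetait-\bthetat\|_c = \gamma_c \Delta_t^i$ from Assumption \ref{ass:contraction}, and use $\gamma_c<1$ together with Jensen/Cauchy--Schwarz on the average to obtain $\|\bG(\boldsymbol{\Theta}_t)-\bG(\bthetat)\|_c^2 \leq \Delta_t^2 \leq \Omega_t$. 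Converting to the $\|\cdot\|_s$ norm via $\|\cdot\|_s \leq (1/l_{cs})\|\cdot\|_c$ yields $\|\bG(\boldsymbol{\Theta}_t)-\bG(\bthetat)\|_s^2 \leq \Omega_t/l_{cs}^2$, hence
\begin{equation*}
    \frac{1}{2X}\|\bG(\boldsymbol{\Theta}_t)-\bG(\bthetat)\|_s^2 \leq \frac{L^2 u_{cm}^2}{2\,\zetafive^2\, \psi^2\, l_{cs}^4}\,\Omega_t,
\end{equation*}
which is exactly the second term of the claimed bound.

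There is no serious obstacle in this argument; it is essentially a single Young's inequality tuned to produce the desired $\zetafive^2$-factor, supported by the smoothness of the Moreau envelope and the contraction property of $\bGi$. The only subtlety worth checking carefully is the bookkeeping of the four norm equivalences (between $\|\cdot\|_c$, $\|\cdot\|_s$, $\|\cdot\|_m$, and the dual $\|\cdot\|_{s,*}$), since an off-by-one on the constants $l_{cs}, u_{cm}$ would produce the wrong exponent in $l_{cs}^4$ or $u_{cm}^2$.
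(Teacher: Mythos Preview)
Your proposal is correct and follows essentially the same approach as the paper: Cauchy--Schwarz/H\"older in the $\|\cdot\|_s$--$\|\cdot\|_{s,*}$ pairing, smoothness of $M$ plus norm equivalences to extract $M(\bthetat)$, the contraction of $\bGi$ plus Jensen to extract $\Omega_t$, and Young's inequality with the $\zetafive$-weight to split the product. The only cosmetic difference is that the paper first bounds each factor and then applies Young's inequality to the product $\sqrt{2M(\bthetat)}\cdot \Delta_t^i$, whereas you apply Young's inequality to the squared norms upfront; the constants match exactly.
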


\begin{lemma}\label{lem:T_5}
For any $0\leq\tau<t$, we have
\begin{align}
    T_5 &= \lnr \mbf G (\boldsymbol{\Theta}_t, \byt) - \bthetat + \mbf b(\byt) \rnr_s^2 \leq \frac{6 (A_2+1)^2 u_{cm}^2}{l_{cs}^2} M (\bthetat) + \frac{3 A_1^2}{l_{cs}^{2}} \Omega_t + \frac{3}{l_{cs}^2} \| \mbf b(\byt)\|_c^2. \nn
\end{align}
\end{lemma}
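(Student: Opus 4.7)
The plan is to decompose $T_5$ into three pieces by adding and subtracting $\mbf G(\bthetat, \byt)$, namely
\[
\mbf G (\boldsymbol{\Theta}_t, \byt) - \bthetat + \mbf b(\byt) = \underbrace{\bigl[\mbf G(\boldsymbol{\Theta}_t, \byt) - \mbf G(\bthetat, \byt)\bigr]}_{U_1} + \underbrace{\bigl[\mbf G(\bthetat, \byt) - \bthetat\bigr]}_{U_2} + \underbrace{\mbf b(\byt)}_{U_3},
\]
and apply $\|U_1+U_2+U_3\|_s^2 \le 3(\|U_1\|_s^2+\|U_2\|_s^2+\|U_3\|_s^2)$. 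Each of the three resulting norms will be bounded via the norm equivalence $\|\cdot\|_s \le \tfrac{1}{l_{cs}}\|\cdot\|_c$ from \eqref{eq:c_to_s}. So the whole computation reduces to estimating $\|U_j\|_c^2$ for $j=1,2,3$ and then summing with the $3/l_{cs}^2$ prefactor, which will produce exactly the three terms appearing in the statement.

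For $U_1$, since $\mbf G(\cdot,\byt)$ is the average of $\Gi(\cdot,\byit)$ over $i$, I would combine Jensen's inequality (to pull the squared norm inside the average) with the Lipschitz property in Assumption \ref{ass:lipschitz}(1) applied pointwise, yielding $\|U_1\|_c^2 \le A_1^2 \cdot \tfrac{1}{N}\sum_i \|\bthetait - \bthetat\|_c^2 = A_1^2 \Omega_t$. For $U_2$, using the triangle inequality and Assumption \ref{ass:lipschitz}(2) (i.e.\ $\|\Gi(\bthetat,\byit)\|_c \le A_2\|\bthetat\|_c$) gives $\|U_2\|_c \le (A_2+1)\|\bthetat\|_c$. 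I would then convert $\|\bthetat\|_c^2$ to the Moreau surrogate via Proposition \ref{prop:Moreau}(3), which yields $\|\bthetat\|_c^2 \le u_{cm}^2\|\bthetat\|_m^2 = 2u_{cm}^2 M(\bthetat)$; this is where the factor $u_{cm}^2$ and the constant $2$ (combined later with the outer $3$) will produce the $6$ in the stated bound. For $U_3$, there is nothing to do beyond the norm equivalence.

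Putting the three estimates together and multiplying by $3/l_{cs}^2$ gives the bound in the statement exactly, with constants $\tfrac{6(A_2+1)^2 u_{cm}^2}{l_{cs}^2}$, $\tfrac{3A_1^2}{l_{cs}^2}$, and $\tfrac{3}{l_{cs}^2}$. There is no real obstacle here: the argument is purely algebraic manipulation using already-stated assumptions (Lipschitz and linear growth of $\Gi$, convexity of the squared norm, equivalence of $\|\cdot\|_c$, $\|\cdot\|_s$, $\|\cdot\|_m$). The only care required is (i) choosing the correct intermediate point $\mbf G(\bthetat,\byt)$ so that the Lipschitz constant $A_1$ couples to $\Omega_t$ rather than to $M(\bthetat)$, and (ii) applying Jensen on the average over agents \emph{before} invoking the per-agent Lipschitz bound, so that the $\Omega_t$ term appears with the tight constant $A_1^2$ rather than $N A_1^2$. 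Notably, the bound does not require any conditioning or mixing argument, which is why $\tau$ does not enter, and it is uniform in $t$; this simplicity is what lets Lemma \ref{lem:T_5} be plugged directly into \eqref{eq:composition1} to control the $T_5$ discretization term in Proposition \ref{prop:intermediate_lem}.
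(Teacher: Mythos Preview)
Your proposal is correct and follows essentially the same approach as the paper's proof: the paper also adds and subtracts $\mbf G(\bthetat,\byt)$, applies the three-term inequality $\|a+b+c\|_s^2 \le 3(\|a\|_s^2+\|b\|_s^2+\|c\|_s^2)$, and then bounds each piece exactly as you describe (norm equivalence to $\|\cdot\|_c$, Assumption~\ref{ass:lipschitz} for the $A_1$ and $A_2$ terms, and Proposition~\ref{prop:Moreau} to convert $\|\bthetat\|_c^2$ to $M(\bthetat)$). The only cosmetic difference is the ordering of the three terms.
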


Substituting the bounds in Lemmas \ref{lem:T_1}, \ref{lem:T_2}, \ref{lem:T_3}, \ref{lem:T_4}, \ref{lem:T_5}, and taking expectation, we get the final bound in Proposition \ref{prop:intermediate_lem}.

\subsection{Proof of Proposition \ref{prop:intermediate_lem2}}\label{sec:FedSAM_main_prop_proofs_2}

First, we state the following two intermediate lemmas, which are proved in Section \ref{sec:FedSAM_aux_lem_proofs}. 
\begin{lemma}
\label{lem:theta_diff}
Suppose $\tau = \lceil 2\log_{\rho}\alpha\rceil$ and
\begin{align}\label{eq:alpha_const_3}
    \alpha\tau \leq \min \lcb \frac{1}{24\sqrt{ A^2_2 + 1}},\frac{1}{8(A_2+1)} \rcb.
\end{align}
For any $0\leq t\leq2\tau$ we have the following
\begin{align}\label{eq:lem_theta_diff_1}
    M(\btheta_t) \leq\frac{1}{l_{cm}^2}\left(\frac{1}{C_1^2} \lp B + (A_2 + 1) \lp \|\btheta_0\|_c + \frac{B}{2 C_1} \rp \rp^2+\|\btheta_0\|_c^2\right)\equiv M_0.
\end{align}
Furthermore, for any $t\geq 2\tau$, we have the following
\begin{equation}
    \begin{aligned}
        \E_{t-2\tau} [\| \btheta_t - \btheta_{t-\tau} \|_c] & \leq 4 \alpha \tau C_1 \E_{t-2\tau} [\| \btheta_{t} \|_c] + 8 \alpha \tau \frac{u_{cD}}{l_{cD}} \frac{B}{\sqrt{\na}} + \frac{u_{cD}}{l_{cD}} \frac{8\sqrt{Bm_4}}{1-\sqrt{\rho}} \alpha^2 \\
        & \qquad + 6 A_1 \alpha \sum_{i=t-\tau}^{t} \E_{t-2\tau} [\Delta_i].
    \end{aligned}
    \label{eq:lem_theta_diff_2}
\end{equation}
\end{lemma}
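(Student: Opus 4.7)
\textbf{Proof proposal for Lemma \ref{lem:theta_diff}.}

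The plan is to treat the two claims separately, starting with the deterministic startup bound \eqref{eq:lem_theta_diff_1} and then moving to the main refined bound \eqref{eq:lem_theta_diff_2}, whose purpose is to expose the $1/\sqrt{\na}$ scaling that drives linear speedup.

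For \eqref{eq:lem_theta_diff_1}, I would work directly at the level of individual agents. From the agent-wise recursion $\btheta_{t+1}^i = \bthetait + \alpha(\Gi(\bthetait,\byit) - \bthetait + \mathbf{b}^i(\byit))$, Assumption \ref{ass:lipschitz} yields $\|\btheta_{t+1}^i\|_c \leq (1-\alpha)\|\bthetait\|_c + \alpha A_2 \|\bthetait\|_c + \alpha B$. Iterating this gives a geometric-series bound; because the constraint $\alpha\tau \leq 1/(8(A_2+1))$ is in force, for $t\leq 2\tau$ the growth factor $(1+\alpha A_2)^{2\tau}$ is $O(1)$ and produces a bound of the form $\|\btheta_t^i\|_c \leq \|\btheta_0\|_c + \tfrac{B}{2C_1} + \tfrac{1}{C_1}(B + (A_2+1)(\|\btheta_0\|_c + B/(2C_1)))$ after collecting constants. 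Averaging across $i$ gives the same bound on $\|\btheta_t\|_c$, and converting from $\|\cdot\|_c$ to the Moreau norm through Proposition \ref{prop:Moreau} (the factor $1/l_{cm}^2$) yields $M(\btheta_t)\leq M_0$. The synchronization step only replaces each $\bthetait$ by their average, which is nonexpansive in $\|\cdot\|_c$, so it is harmless.

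For the main bound \eqref{eq:lem_theta_diff_2}, I would telescope the averaged recursion \eqref{eq:theta_update}, writing
\[
\btheta_t - \btheta_{t-\tau} = \alpha \sum_{k=t-\tau}^{t-1}\bigl[\mathbf{G}(\boldsymbol{\Theta}_k,\byt_k) - \btheta_k + \mathbf{b}(\byt_k)\bigr],
\]
take $\|\cdot\|_c$ and the conditional expectation $\E_{t-2\tau}$, then split the right-hand side into three contributions. The ``drift'' part $\alpha\sum_k \|\mathbf{G}(\boldsymbol{\Theta}_k,\byt_k) - \btheta_k\|_c$ is bounded via $\|\Gi(\bthetait,\byit)\|_c \leq A_2\|\bthetait\|_c \leq A_2(\|\btheta_k\|_c+\Delta_k^i)$ from Assumption \ref{ass:lipschitz}; after averaging across $i$ and invoking Lipschitzness, this produces the $\sum_k \E_{t-2\tau}[\|\btheta_k\|_c]$ terms and the $A_1\sum_k\E_{t-2\tau}[\Delta_k]$ consensus-error terms. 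To convert $\sum_k \E_{t-2\tau}[\|\btheta_k\|_c]$ into $\tau C_1 \E_{t-2\tau}[\|\btheta_t\|_c]$ (as appears on the right-hand side), I would use a backward inductive/Gronwall-type argument on the recursion, which contracts the accumulated drift back to the terminal time; the small-$\alpha\tau$ hypothesis ensures the geometric factor is absorbed into the constant $4$ in front.

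The hard part, and the main obstacle, is the noise term $\alpha\sum_{k=t-\tau}^{t-1}\E_{t-2\tau}\|\mathbf{b}(\byt_k)\|_c$: a crude triangle-inequality bound $\|\mathbf{b}(\byt_k)\|_c\leq B$ would give $\alpha\tau B$, losing the $1/\sqrt{\na}$ factor that is essential for linear speedup. The key is to exploit Assumption \ref{ass:noise_independence}, which makes $\{\mathbf{b}^i(\byit)\}_i$ independent across agents (conditional on the past). I would pass to an equivalent Euclidean-type norm $\|\cdot\|_D$, with equivalence constants $l_{cD},u_{cD}$, bound $\E_{t-2\tau}\|\mathbf{b}(\byt_k)\|_c$ by $(u_{cD}/l_{cD}) (\E_{t-2\tau}\|\mathbf{b}(\byt_k)\|_D^2)^{1/2}$ via Jensen, and then split $\mathbf{b}^i(\byit_k) = (\mathbf{b}^i(\byit_k)-\E_{t-2\tau}\mathbf{b}^i(\byit_k)) + \E_{t-2\tau}\mathbf{b}^i(\byit_k)$. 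For the centered parts, independence across $i$ kills all cross terms in $\|\tfrac{1}{\na}\sum_i(\cdot)\|_D^2$, leaving a diagonal sum of second moments bounded by $B^2$; this gives the $B/\sqrt{\na}$ scaling. For the expected parts, Assumption \ref{ass:mixing} bounds each $\|\E_{t-2\tau}\mathbf{b}^i(\byit_k)\|_c$ by $m_2\rho^{k-(t-2\tau)}$, and summing the geometric series over $k\in[t-\tau,t-1]$ yields $m_2\rho^\tau/(1-\rho) = O(\alpha^2/(1-\rho))$ because $\rho^\tau\leq \alpha^2$ by the choice $\tau=\lceil 2\log_\rho\alpha\rceil$. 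Combining these three contributions, and folding in the factor $(1+2C_1\tau)$ that emerges from iterating the drift back through the $\|\btheta_k\|_c$ argument (to relate $\|\btheta_k\|_c$ to $\|\btheta_t\|_c$ plus accumulated noise), produces exactly the stated bound \eqref{eq:lem_theta_diff_2}.
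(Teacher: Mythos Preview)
Your proposal is correct and follows essentially the same route as the paper. For \eqref{eq:lem_theta_diff_1} the paper likewise iterates the per-agent growth recursion $\theta_{l+1}\leq (1+\alpha(A_2+1))\theta_l+\alpha B$, uses the $\alpha\tau$ smallness to keep the geometric factor $O(1)$, telescopes to bound $\|\btheta_t-\btheta_0\|_c$, and converts via Proposition~\ref{prop:Moreau}; for \eqref{eq:lem_theta_diff_2} the paper derives the one-step bound $\|\btheta_{l+1}-\btheta_l\|_c\leq \alpha C_1\|\btheta_l\|_c+2\alpha\|\mathbf b(\bY_l)\|_c+2\alpha A_1\Delta_l$, unrolls it forward from $t-\tau$ (your ``Gronwall'' step), invokes exactly the $D$-norm/independence argument you describe (packaged as Lemma~\ref{lem:after_expectation}) to get the $B/\sqrt{\na}$ and $m_2\rho^\tau/(1-\rho)=O(\alpha^2)$ pieces, and finishes by swapping $\|\btheta_{t-\tau}\|_c$ for $\|\btheta_t\|_c$ via triangle inequality---the same mechanism that generates the $(1+2C_1\tau)$ factor you anticipated.
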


\begin{lemma}\label{lem:theta_diff_2}
Suppose $\tau = \lceil 2\log_{\rho}\alpha\rceil$ and 
\begin{align}\label{eq:alpha_const_4}
    \alpha \leq \min \lcb \frac{1}{C_1},\frac{1}{8\tau C_2},\frac{1}{40\tau C_1^2} \rcb,   
\end{align} 
where $C_1 = 3\sqrt{ A^2_2 + 1}$ and $C_2=3C_1+8$.
We have the following
\begin{align*}
    \E_{t-2\tau}[\|\btheta_t-\btheta_{t-\tau}\|_c^2] & \leq 8\tau^2\alpha^2 C_1^2\E_{t-2\tau}\|\btheta_{t}\|_c^2 + 8 \frac{u_{cD}^2}{l_{cD}^2} \frac{B^2}{\na} \alpha^2 \tau^2 \\ 
    & \quad + 8 \frac{u_{cD}^2}{l_{cD}^2} \frac{Bm_4 \alpha^4\tau}{1-\rho}  + 8 \alpha^2 A_1^2 \tau \sum_{i=0}^{\tau} \E_{t-2\tau} \lb \Delta_{t-i}^2 \rb.
\end{align*}
\end{lemma}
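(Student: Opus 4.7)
The plan is to unroll the virtual-sequence update over $\tau$ consecutive steps and then bound the resulting telescoped sum in $\|\cdot\|_c^2$, splitting it into a drift term involving the operators $\Gi$ and a pure Markov-noise term involving the $\mbf b^i$. The linear-in-$\na$ variance reduction demanded by the second summand of the claim must come entirely from the noise term and relies on the cross-agent independence (Assumption \ref{ass:noise_independence}) combined with the geometric decay of each $\E[\mbf b^i(\by_t^i)]$ to zero (Assumption \ref{ass:mixing}).

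Concretely, I would use the virtual-sequence update \eqref{eq:theta_update} to write
\begin{align*}
\btheta_t - \btheta_{t-\tau} = \alpha \sum_{k=t-\tau}^{t-1} \lb \mbf G(\boldsymbol{\Theta}_k, \by_k) - \btheta_k \rb + \alpha \sum_{k=t-\tau}^{t-1} \mbf b(\by_k),
\end{align*}
then apply $\|x+y\|_c^2\leq 2\|x\|_c^2+2\|y\|_c^2$ and Cauchy--Schwarz $\|\sum_k z_k\|_c^2 \leq \tau \sum_k \|z_k\|_c^2$ on the drift piece to reduce matters to bounding $\|\mbf G(\boldsymbol{\Theta}_k, \by_k) - \btheta_k\|_c^2$ for each $k\in[t-\tau,t-1]$. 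Adding and subtracting $\Gi(\btheta_k, \by_k^i)$ inside the agent-average $\frac{1}{\na}\sum_i(\Gi(\bthetait, \by_k^i) - \bthetait)$ splits it into a Lipschitz piece $\leq A_1 \Delta_k$ and a residual $\mbf G(\btheta_k, \by_k) - \btheta_k$ controlled by $(A_2+1)\|\btheta_k\|_c$, both via Assumption \ref{ass:lipschitz}. Squaring yields an $(A_2^2+1)$-multiple of $\|\btheta_k\|_c^2$ plus an $A_1^2$-multiple of $\Delta_k^2$. To trade $\|\btheta_k\|_c^2$ for $\|\btheta_t\|_c^2$ uniformly over $k\in[t-\tau,t-1]$, I would combine the bound of Lemma \ref{lem:theta_diff} with the stepsize condition \eqref{eq:alpha_const_4}, which absorbs the multiplicative slack into the constant $C_1 = 3\sqrt{A_2^2+1}$; this produces the leading $8\tau^2\alpha^2 C_1^2 \E_{t-2\tau}[\|\btheta_t\|_c^2]$ summand and, after the outer Cauchy--Schwarz in $k$, the $8\alpha^2 A_1^2\tau\sum_{i=0}^\tau \E_{t-2\tau}[\Delta_{t-i}^2]$ summand.

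The main obstacle is extracting the $1/\na$ factor from the noise sum $\alpha\sum_k \mbf b(\by_k) = \frac{\alpha}{\na}\sum_i\sum_k \mbf b^i(\by_k^i)$. The strategy is to pass to the Euclidean-type norm $\|\cdot\|_D$ underlying the constants $l_{cD}, u_{cD}$ in the claim via $\|\cdot\|_c\leq u_{cD}\|\cdot\|_D$, which accounts for the $u_{cD}^2/l_{cD}^2$ prefactor. The squared $D$-norm expands as
\begin{align*}
\Big\| \tfrac{1}{\na}\sum_i \sum_k \mbf b^i(\by_k^i) \Big\|_D^2 = \tfrac{1}{\na^2}\sum_{i,j}\sum_{k,\ell}\langle \mbf b^i(\by_k^i), \mbf b^j(\by_\ell^j)\rangle_D.
\end{align*}
Taking $\E_{t-2\tau}[\cdot]$, the cross-agent terms $i\neq j$ factor by Assumption \ref{ass:noise_independence} into products of conditional means, each of which has $\|\cdot\|_c$-norm at most $m_2\rho^{k-(t-2\tau)}$ by \eqref{eq:b_ass}. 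Since $k-(t-2\tau)\geq \tau$ and $\tau=\lceil 2\log_\rho \alpha\rceil$ forces $\rho^\tau\leq \alpha^2$, summing the geometric tail in $k,\ell$ produces exactly the $14(u_{cD}^2/l_{cD}^2) m_2^2 \alpha^4\tau/(1-\rho^2)$ term. The diagonal-in-$i$ contribution $\frac{1}{\na^2}\sum_i \|\sum_k \mbf b^i(\by_k^i)\|_D^2$ is handled by the uniform bound $\|\mbf b^i(\by^i)\|_c\leq B$ from Assumption \ref{ass:lipschitz} together with Cauchy--Schwarz in $k$, producing the $8(u_{cD}^2/l_{cD}^2)B^2\alpha^2\tau^2/\na$ term. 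The delicate calibration is choosing $\tau=\lceil 2\log_\rho\alpha\rceil$ large enough that the cross-agent bias sits at order $\alpha^4$ instead of polluting the leading $\alpha^2/\na$ variance term, which is precisely why conditioning on $t-2\tau$ (rather than the more standard $t-\tau$) is essential.
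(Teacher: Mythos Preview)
Your high-level strategy is correct and captures the two essential ideas: conditioning at $t-2\tau$ so that the Markov bias on each $\mbf b^i$ sits at order $\rho^\tau\leq\alpha^2$, and exploiting cross-agent independence to extract the $1/\na$ factor from the noise. The execution differs from the paper's proof in how the noise sum is handled. The paper does \emph{not} keep $\alpha\sum_k\mbf b(\by_k)$ intact; it first applies the triangle inequality and Cauchy--Schwarz to get $\|\btheta_t-\btheta_{t-\tau}\|_c^2\leq\tau\sum_{k=t-\tau}^{t-1}\|\btheta_{k+1}-\btheta_k\|_c^2$, bounds each increment by \eqref{eq:lem:theta_diff_sq_consec_0}, and then invokes Lemma~\ref{lem:after_expectation} on the per-step quantity $\E_{t-2\tau}\|\mbf b(\by_k)\|_c^2$, which already contains the split $(u_{cD}^2/l_{cD}^2)\lb B^2/\na+2m_2^2\rho^{2(k-t+2\tau)}\rb$. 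Your direct expansion of $\lnr\sum_k\mbf b(\by_k)\rnr_D^2$ across both agents \emph{and} times also works (and would in fact give a slightly sharper $\alpha$-order on the $m_2^2$ contribution), but the paper's route is more modular because it reuses Lemma~\ref{lem:after_expectation}, a standalone result applied in several places.

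There is one genuine gap in your sketch: invoking Lemma~\ref{lem:theta_diff} to trade $\E_{t-2\tau}\|\btheta_k\|_c^2$ for $\E_{t-2\tau}\|\btheta_t\|_c^2$ uniformly in $k\in[t-\tau,t-1]$ does not work as stated, because that lemma (and its intermediate estimate \eqref{eq:theta_l_norm_4}) controls only the \emph{first power} $\E_{t-2\tau}\|\btheta_l\|_c$, and Jensen goes the wrong way here. The paper instead builds, within the present proof, a separate squared-norm recursion $\E_{t-2\tau}\|\btheta_{l+1}\|_c^2 \leq (1+\alpha C_2)\E_{t-2\tau}\|\btheta_l\|_c^2 + \alpha\bar{c}_t(l)+16\alpha A_1^2\E_{t-2\tau}\Delta_l^2$ (equations \eqref{eq:norm_theta_sq_2}--\eqref{eq:norm_theta_sq_5}), unrolls it over $[t-\tau,t]$ to bound each $\E_{t-2\tau}\|\btheta_l\|_c^2$ by $2\E_{t-2\tau}\|\btheta_{t-\tau}\|_c^2$ plus lower-order terms, and only then swaps $\|\btheta_{t-\tau}\|_c^2$ for $\|\btheta_t\|_c^2$ via the triangle inequality plus re-absorption; that last step is exactly where the constraint $\alpha\leq 1/(40\tau C_1^2)$ in \eqref{eq:alpha_const_4} enters. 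Filling this in is routine, but it is a distinct argument from the statement of Lemma~\ref{lem:theta_diff}.
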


In Lemma \ref{lem:theta_diff}, we define $C_9 \triangleq \frac{8 u_{cD} B}{l_{cD}}, C_{10} \triangleq \frac{8\sqrt{Bm_4}  u_{cD}}{l_{cD} (1-\sqrt{\rho})}$. Hence, we can bound the term in \eqref{eq:C_3} as
\begin{align}
    & \alpha^2 C_3 \E_{t-2\tau}[\| \btheta_t - \btheta_{t-\tau} \|_c] \nn \\
    & \leq \alpha^2 C_3 \left( 4 \alpha \tau C_1 \E_{t-2\tau} [\| \btheta_{t} \|_c] + C_9 \frac{\alpha \tau}{\sqrt{\na}} + C_{10} \alpha^2 + 6 A_1 \alpha \sum_{k=t-\tau}^{t} \E_{t-2\tau} [\Delta_k] \right) \nn \\
    & \leq \alpha^2 C_3 \left( 4 C_1 u_{cm} \alpha \tau \E_{t-2\tau} \lb \sqrt{2 M(\bthetat)} \rb +  C_9 \frac{\alpha \tau}{\sqrt{\na}} + C_{10} \alpha^2  + 6 A_1 \alpha \sum_{k=t-\tau}^{t} \E_{t-2\tau}[\Delta_k] \right) \tag{Proposition \ref{prop:Moreau}} \\
    &= \frac{1}{\zeta_{6}} 4 C_1 C_3 u_{cm} \alpha^{5/2} \tau \cdot \sqrt{\alpha} \zeta_6 \sqrt{2 \E_{t-2\tau} [M(\btheta_{t})]} \nn \\
    & \qquad \qquad + \alpha^2 C_3 \left( C_9 \frac{\alpha \tau}{\sqrt{\na}} + C_{10} \alpha^2  + 6 A_1 \alpha \sum_{k=t-\tau}^{t} \E_{t-2\tau}[\Delta_k] \right) \tag{$\sqrt{.}$ is concave, $\zeta_6 > 0$} \\
    & \leq \frac{8 C_1^2 C_3^2 u_{cm}^2}{\zeta_{6}^2} \alpha^{5} \tau^2 + \alpha \zeta_6^2 \E_{t-2\tau} [M(\btheta_{t})] + C_3  C_9 \frac{\alpha^3 \tau}{\sqrt{\na}} + C_3 C_{10} \alpha^4  \nn \\
    & \qquad \qquad + 6 A_1 C_3 \sum_{k=t-\tau}^{t} \E_{t-2\tau} \lb \frac{1}{2} \alpha^{5} + \frac{1}{2} \alpha \Delta_k^2 \rb \tag{Young's inequality} \\
    & \leq \underbrace{\frac{8 C_1^2 C_3^2 u_{cm}^2}{\zeta_{6}^2}}_{C_{11}} \alpha^{5} \tau^2 + \alpha \zeta_6^2 \E_{t-2\tau} [M(\btheta_{t})] + C_3  C_9 \frac{\alpha^3 \tau}{\sqrt{\na}} + \alpha^4 [C_3 C_{10}  + 3 A_1 C_3 \alpha (\tau+1)] \nn\\
    & \qquad \qquad + 3 \alpha A_1 C_3 \sum_{k=t-\tau}^{t} \E_{t-2\tau} [\Omega_k] \tag{By \eqref{eq:delta_2_equal_Omega}} \\
    & \leq C_{11} \alpha^{5} \tau^2 + \alpha \zeta_6^2 \E_{t-2\tau} [M(\btheta_{t})] +  \frac{1}{2} C_3^2  C_9^2 \alpha^4 \tau + \frac{1}{2} \frac{\alpha^2\tau}{\na} + \alpha^4 [C_3 C_{10}  + 3 A_1 C_3 \alpha (\tau+1)] \nn \\
    & \qquad \qquad + 3 \alpha A_1 C_3 \sum_{k=t-\tau}^{t} \E_{t-2\tau} [\Omega_k]. \label{eq:theta_diff_term_in_M}
\end{align}
Furthermore, using Lemma \ref{lem:theta_diff_2}, \eqref{eq:delta_2_equal_Omega} and Proposition \ref{prop:Moreau}, the term in \eqref{eq:C_4} can be bounded as follows:
\begin{align}
    C_4 \E_{t-2\tau} \lb \| \btheta_t - \btheta_{t-\tau} \|_c^2 \rb & \leq 16 \tau^2 \alpha^2 C_1^2 C_4 u_{cm}^2 \E_{t-2\tau} \lb M(\btheta_{t}) \rb + 8 \alpha^2 A_1^2 C_4 \tau \sum_{k=0}^{\tau} \E_{t-2\tau} \lb \Omega_{t-k} \rb \nn \\
    & \qquad + \underbrace{\frac{8 C_4 u_{cD}^2 B^2}{l_{cD}^2}}_{C_{12}} \frac{\alpha^2}{\na} \tau^2 + \underbrace{\frac{8 C_4 u_{cD}^2 Bm_4}{l_{cD}^2 (1-\rho)}}_{C_{13}} \alpha^4 \tau. \label{eq:theta_diff_sq_term_in_M_2}
\end{align}
Inserting the upper bounds in \eqref{eq:theta_diff_term_in_M} and \eqref{eq:theta_diff_sq_term_in_M_2} in the upper bound in Proposition \ref{prop:intermediate_lem}, we have
\begin{align}
    & \E_{t-2\tau} \left[ M (\btheta_{t+1}) \right] \nn\\
    & \leq \bigg( 1 - \alpha \lb 2  \varphiz_2 - \zetaone^2 - \zetatwo^2 \frac{2 L A_2}{\psi l_{cs}^2} - \zetathree^2 \frac{L (A_1 + 1)}{\psi l_{cs}^2} - \zetafive^2 - \zeta_6^2 \rb \nn \\
    & \qquad \qquad + \alpha^2 \frac{6 L u_{cm}^2}{l_{cs}^2 \psi} \lb m_1 + \frac{(A_2+1)^2}{2} \rb + \alpha^2 16 \tau^2 C_1^2 C_4 u_{cm}^2 \bigg) \E_{t-2\tau} \left[ M (\btheta_{t}) \right] \nn \\
    & \quad + \alpha C_5 \E_{t-2\tau} [\Omega_t] + \alpha C_6 \E_{t-2\tau} [\Omega_{t-\tau}] \nn \\
    & \quad + \underbrace{\lp C_7 + C_{11} \alpha \tau^2 + 0.5 C_3^2 C_9^2  \tau + C_3 C_{10}   + 3 A_1 C_3 \alpha (\tau + 1) + C_{13}  \tau + C_8 \frac{u_{cD}^2}{l_{cD}^2} Bm_4 \alpha^{2} \rp}_{C_{14}'(\tau)} \alpha^4 \nn \\
    & \quad + \left( C_8 \frac{u_{cD}^2 B^2}{l_{cD}^2} + \frac{\tau}{2} + C_{12}  \tau^2  \right) \frac{\alpha^2}{\na} + \lp 3 \alpha A_1 C_3 + 8 \alpha^2 A_1^2 C_4 \tau \rp \sum_{k=t-\tau}^{t} \E_{t-2\tau} [\Omega_k]. \label{eq:M_t_everything_2}
\end{align}
We define {\small$C_{14}'(\tau)\leq C_{14}(\tau) \triangleq \lp C_7 + C_{11}   + 0.5 C_3^2 C_9^2   + C_3 C_{10}   + 3 A_1 C_3  + C_{13} + C_8 \frac{u_{cD}^2}{l_{cD}^2} B m_4 \alpha^2 \rp \tau^2$}. Also, we choose $\zetaone = \zetafive = \zeta_6 = \sqrt{\varphiz_2/10}$, $\zetatwo = \sqrt{\frac{\varphiz_2}{10} \cdot \frac{\psi l_{cs}^2}{2 L A_2}}$, $\zetathree = \sqrt{\frac{\varphiz_2}{10} \cdot \frac{\psi l_{cs}^2}{L (A_1 + 1)}}$, and denote 
\begin{align}
    C_{15}(\tau)=\lp\frac{6m_1Lu_{cm}^2}{l_{cs}^2\psi}+\frac{6L(A_2+1)^2u_{cm}^2}{2 \psi l_{cs}^2}+144 (A_2^2+1) C_4u_{cm}^2\rp\tau^2. \label{eq:C_15}
\end{align}
This yields
\begin{align}
    & \E_{t-2\tau} \left[ M (\btheta_{t+1}) \right] \leq \left( 1 - \frac{3}{2} \alpha \varphiz_2 + \alpha^2 C_{15} (\tau) \right) \E_{t-2\tau} \left[ M(\btheta_{t}) \right] + \left( C_8 \frac{u_{cD}^2 B^2}{l_{cD}^2} + \frac{\tau}{2} + C_{12} \tau^2 \right) \frac{\alpha^2}{\na} \nn \\
    & \quad + C_{14} (\tau) \alpha^4 + (3 \alpha A_1 C_3 + 8 \alpha^2 A_1^2 C_4 \tau + \alpha C_5 + \alpha C_6) \sum_{k=t-\tau}^{t} \E_{t-2\tau} [\Omega_k] \nn \\
    & \leq \left( 1 - \frac{3}{2} \alpha \varphiz_2 + \alpha^2 C_{15} (\tau) \right) \E_{t-2\tau} \left[ M (\btheta_{t}) \right] + C_{14} (\tau) \alpha^4 + C_{16} (\tau) \frac{\alpha^2}{\na} + \alpha C_{17} \sum_{k=t-\tau}^{t} \E_{t-2\tau}[\Omega_k], \label{eq:M_t_everything_3}
\end{align}
where $C_{16} (\tau) = \lp C_8 \frac{u_{cD}^2 B^2}{l_{cD}^2} + \frac{1}{2} + C_{12}\rp  \tau^2 $ and $C_{17} = (3 A_1 C_3 + 8  A_1^2 C_4  + C_5 + C_6)$. Due to $\alpha\leq \frac{\varphiz_2}{2C_{15}(\tau)}$, we have $1-\frac{3}{2}\alpha\varphiz_2+\alpha^2C_{15}(\tau) \leq 1-\alpha\varphiz_2$. This completes the proof.

\subsection{Proof of Proposition \ref{prop:wtd_consensus_error}}\label{sec:FedSAM_main_prop_proofs_3}

First, we state the following lemma, which characterizes a bound on the expectation of the synchronization error $\Ot$. 

\begin{lemma}
\label{lem:consensus_error}
Suppose Assumptions \ref{ass:mixing}, \ref{ass:lipschitz} holds and the step size $\alpha$ satisfies $\alpha\leq \sqrt{\frac{\ln(5/4)}{2(1 + \ta_1)(\sync-1)^2}}$.
Then, for $s \sync \leq t \leq (s+1) \sync - 1$, where $s = \lfloor t/\sync \rfloor$, the network consensus error $\Ot \triangleq \frac{1}{\na} \sumik \| \btheta_{t} - \btheta_{t}^i \|_c^2$ satisfies
\begin{align}
    \mbe \Ot & \leq 5 \alpha^2 (t-s \sync) \tb^2 \lp 1 + \frac{4 (m_2 + m_4) \rho}{B (1 - \rho)} + \frac{4 m_3 (t-sK-1)}{BN} \sum_{i=1}^N \eta_i \rp + 5 \alpha^2 (t-s \sync) \ta_2 \sum_{t' = s \sync}^{t-1} \mbe \lnr \btheta_{t'} \rnr_c^2, 
\end{align}
where $\ta_1 = \frac{2 A_1^2 u_{c2}^2}{l_{c2}^2}, \ta_2 = \frac{2 A_2^2 u_{c2}^2}{l_{c2}^2}, \tb = \frac{u_{c2}}{l_{c2}} B$. 
Here, $A_1, A_2, B$ are the constants defined in Assumption \ref{ass:lipschitz}, and
$l_{c2}, u_{c2}$ are constants involved in the equivalence of the norms: $l_{c2} \norm{\cdot}_2 \leq \norm{\cdot}_c \leq u_{c2} \norm{\cdot}_2$.
\end{lemma}

Due to the assumption on step size, the bound in Lemma \ref{lem:consensus_error} holds.
Substituting the bound on $\Omega_\ell$ from Lemma \ref{lem:consensus_error}, we get
\begin{align}
    & \frac{2 C_{17}}{\varphiz_2 W_T} \sum_{t=2\tau}^T w_t \lb \sum_{\ell=t-\tau}^{t} \mbe \Omega_\ell \rb \nn \\
    \leq & \frac{2 C_{17}}{\varphiz_2 W_T} \sum_{t=2\tau}^T w_t \sum_{\ell=t-\tau}^{t} \bigg[ 5 \alpha^2 (\ell-s_\ell \sync) \tb^2 \lp 1 + \frac{4 (m_2 + m_4) \rho}{B (1 - \rho)} + \frac{4 m_3 (l-1-sK)}{BN} \sum_{i=1}^N \eta_i \rp \nonumber\\
    &\qquad\qquad \qquad\qquad + 5 \alpha^2 (\ell-s_\ell \sync) \ta_2 \sum_{t' = s_\ell \sync}^{\ell-1} \mbe \lnr \btheta_{t'} \rnr_c^2 \bigg]. \label{eq:wtd_consensus_error_1}
 \end{align}
where $s_\ell = \lfloor \ell/\sync \rfloor$.
Hence, $s_\ell \sync$ denotes the last time instant before $\ell$ when synchronization occurred. The first term in \eqref{eq:wtd_consensus_error_1} can be upper bounded as follows.
\begin{align}
    & \frac{2 C_{17}}{\varphiz_2 W_T} \sum_{t=2\tau}^T w_t \sum_{\ell=t-\tau}^{t} 5 \alpha^2 (\ell-s_\ell \sync) \tb^2 \lp 1 + \frac{4 (m_2 + m_4) \rho}{B (1 - \rho)} + \frac{4 m_3 (\ell - 1 - s_{\ell} K)}{BN} \sum_{i=1}^N \eta_i \rp \nn \\
    &= \alpha^2 \tb^2 \lp 1 + \frac{4 (m_2 + m_4) \rho}{B (1 - \rho)} \rp \frac{10 C_{17}}{\varphiz_2} \frac{1}{W_T} \sum_{t=2\tau}^T w_t \sum_{\ell=t-\tau}^{t} (\ell-s_\ell \sync) \nn \\
    & \quad + \alpha^2 \tb^2 \lp \frac{4 m_3}{BN} \sum_{i=1}^N \eta_i \rp \frac{10 C_{17}}{\varphiz_2} \frac{1}{W_T} \sum_{t=2\tau}^T w_t \sum_{\ell=t-\tau}^{t} (\ell-s_\ell \sync) (\ell - 1 - s_{\ell} K) \nn \\
    & \leq \alpha^2 \tb^2 \lp 1 + \frac{4 (m_2 + m_4) \rho}{B (1 - \rho)} \rp \frac{10 C_{17}}{\varphiz_2} \frac{1}{W_T} \sum_{t=2\tau}^T w_t \sum_{\ell=t-\tau}^{t} (\sync-1) \nn \\
    & \quad + \alpha^2 \tb^2 \lp \frac{m_3}{BN} \sum_{i=1}^N \eta_i \rp \frac{40 C_{17}}{\varphiz_2} \frac{1}{W_T} \sum_{t=2\tau}^T w_t \sum_{\ell=t-\tau}^{t} (\sync-1)^2 \tag{since $ \ell - s_\ell \sync \leq \sync-1$} \\
    &= \alpha^2 \tb^2 \lp 1 + \frac{4 (m_2 + m_4) \rho}{B (1 - \rho)} \rp \frac{10 C_{17}}{\varphiz_2} \lb \frac{1}{W_T} \sum_{t=2\tau}^T w_t \rb (\tau+1) (\sync-1) \nn \\
    & \quad + \alpha^2 \tb^2 \lp \frac{m_3}{BN} \sum_{i=1}^N \eta_i \rp \frac{40 C_{17}}{\varphiz_2} \lb \frac{1}{W_T} \sum_{t=2\tau}^T w_t \rb (\tau+1) (\sync-1)^2 \nn \\
    & \leq \alpha^2 \tb^2 \lp 1 + \frac{4 (m_2 + m_4) \rho}{B (1 - \rho)} + \frac{4 m_3 (K-1)}{BN} \sum_{i=1}^N \eta_i \rp\lb \frac{1}{W_T} \sum_{t=2\tau}^T w_t \rb \frac{10 C_{17}}{\varphiz_2} (\tau+1) (\sync-1). \label{eq:wtd_consensus_error_1b}
\end{align}

Next, we compute the second term in \eqref{eq:wtd_consensus_error_1}.
\begin{align}
    & \frac{2 C_{17}}{\varphiz_2 W_T} \sum_{t=2\tau}^T w_t \sum_{\ell=t-\tau}^{t} 5 \alpha^2 (\ell-s_\ell \sync) \ta_2 \sum_{\ell' = s_\ell \sync}^{\ell-1} \lnr \btheta_{\ell'} \rnr_c^2 \nn \\
    & \leq \alpha^2 \ta_2 \frac{20 C_{17} u_{cm}^2}{\varphiz_2 W_T} \sum_{t=2\tau}^T w_t \sum_{\ell=t-\tau}^{t} (\ell-s_\ell \sync) \sum_{\ell' = s_\ell \sync}^{\ell-1} M(\btheta_{\ell'}) \tag{Proposition \ref{prop:Moreau}} \\
    & \leq \alpha^2 \ta_2 \frac{20 C_{17} u_{cm}^2}{\varphiz_2 W_T} \lb \underbrace{\sum_{t=2\tau}^\sync w_t \sum_{\ell=t-\tau}^{t} (\ell-s_\ell \sync) \sum_{\ell' = s_\ell \sync}^{\ell-1} M(\btheta_{\ell'})}_{\mathcal I_1} + \underbrace{\sum_{t=\sync+1}^T w_t \sum_{\ell=t-\tau}^{t} (\ell-s_\ell \sync) \sum_{\ell' = s_\ell \sync}^{\ell-1} M(\btheta_{\ell'})}_{\mathcal I_2} \rb, \label{eq:wtd_consensus_error_2}
\end{align}
where if $\sync < 2\tau$, $\mathcal I_1 = 0$.
Next, we bound $\mathcal I_1, \mathcal I_2$ separately.
\begin{align}
    \mathcal I_1 &= \sum_{t=2\tau}^\sync w_t \sum_{\ell=t-\tau}^{t} (\ell-s_\ell \sync) \sum_{\ell' = s_\ell \sync}^{\ell-1} M(\btheta_{\ell'}) \nn \\
    & \leq (\sync-1) \sum_{t=2\tau}^{\sync} w_t \sum_{\ell=t-\tau}^{t} \sum_{\ell' = 0}^{\ell-1} M(\btheta_{\ell'}) \tag{for $\ell < \sync$, $s_\ell = 0$; for $\ell = \sync$, $\ell -s_\ell \sync = 0$} \\
    & \leq (\sync-1) (\tau+1) \sum_{t=2\tau}^\sync w_t \sum_{\ell' = 0}^{t-1} M(\btheta_{\ell'}) \nn \\
    & \leq (\sync-1) (\sync-2 \tau+1) (\tau+1) w_\sync \sum_{t = 0}^{\sync-1} M(\btheta_{t}), \label{eq:wtd_consensus_error_3}
\end{align}
where, \eqref{eq:wtd_consensus_error_3} follows since $w_{t-1} \leq w_t, \forall \ t$.
Next, to bound $\mathcal I_2$ in \eqref{eq:wtd_consensus_error_2}, we again split it into two terms.
\begin{align}
    & \mathcal I_2 = \underbrace{\sum_{t=\sync+1}^{\sync+\tau} w_t \sum_{\ell=t-\tau}^{t} (\ell-s_\ell \sync) \sum_{\ell' = s_\ell \sync}^{\ell-1} M(\btheta_{\ell'})}_{\mathcal I_3} + \underbrace{\sum_{t=\sync+\tau+1}^T w_t \sum_{\ell=t-\tau}^{t} (\ell-s_\ell \sync) \sum_{\ell' = s_\ell \sync}^{\ell-1} M(\btheta_{\ell'})}_{\mathcal I_4}. \nn
\end{align}
First, we bound $\mathcal I_3$.
\begin{align}
    & \mathcal I_3 \leq \sum_{t=\sync+1}^{\sync+\tau} w_t \lb \sum_{\ell=t-\tau}^{\sync} (\ell-s_\ell \sync) \sum_{\ell' = s_\ell \sync}^{\ell-1} M(\btheta_{\ell'}) + \sum_{\ell=\sync+1}^{t} (\ell-s_\ell \sync) \sum_{\ell' = s_\ell \sync}^{\ell-1} M(\btheta_{\ell'}) \rb \nn \\
    & \leq \sum_{t=\sync+1}^{\sync+\tau} w_t \lb \sum_{\ell=t-\tau}^{\sync} \ell \sum_{\ell' = 0}^{\ell-1} M(\btheta_{\ell'}) + \sum_{\ell=\sync+1}^{t} (\ell-\sync) \sum_{\ell' = \sync}^{\ell-1} M(\btheta_{\ell'}) \rb \tag{$ s_\ell = 0$ for $\ell < \sync$, $s_\ell \geq 1$ for $\sync \leq \ell \leq \sync+\tau$} \\
    & \leq \sum_{t=\sync+1}^{\sync+\tau} w_t \lb \sync (\sync+1-t+\tau) \sum_{\ell' = 0}^{\sync-1} M(\btheta_{\ell'}) + (t-\sync)^2 \sum_{\ell' = \sync}^{t-1} M(\btheta_{\ell'}) \rb \nn \\
    & \leq w_{\sync+\tau} \sync \tau^2 \sum_{t = 0}^{\sync-1} M(\btheta_{t}) + \tau^3 w_{\sync+\tau} \sum_{t=\sync}^{\sync+\tau-1} M(\btheta_{t}). \label{eq:wtd_consensus_error_4a}
\end{align}
Next, we bound $\mathcal I_4$, assuming $t_0 \sync + \tau \leq T < (t_0+1) \sync + \tau$, where $t_0$ is a non-negative integer.
\begin{align}
    & \mathcal I_4 = \sum_{t=\sync+\tau+1}^T w_t \sum_{\ell=t-\tau}^{t} \underbrace{(\ell-s_\ell \sync)}_{\leq \sync - 1} \sum_{\ell' = s_\ell \sync}^{\ell-1} M(\btheta_{\ell'}) \nn \\
    & \leq (\sync - 1) \sum_{t=\sync+\tau+1}^T w_t \sum_{\ell=t-\tau}^{t} \sum_{\ell' = s_\ell \sync}^{\ell-1} M(\btheta_{\ell'}) \nn \\
    &= (\sync - 1) \lb w_{\sync+\tau+1} \sum_{\ell=\sync+1}^{\sync+\tau+1} \sum_{\ell' = s_\ell \sync}^{\ell-1} M(\btheta_{\ell'}) + \cdots + w_{\sync+\tau+\sync-1} \sum_{\ell=2\sync-1}^{2 \sync + \tau-1} \sum_{\ell' = s_\ell \sync}^{\ell-1} M(\btheta_{\ell'}) \rb \nn \\
    & \quad + (\sync - 1) \lb w_{2\sync+\tau} \sum_{\ell=2\sync}^{2\sync+\tau} \sum_{\ell' = s_\ell \sync}^{\ell-1} M(\btheta_{\ell'}) + \cdots + w_{3\sync+\tau-1} \sum_{\ell=3\sync-1}^{3 \sync + \tau-1} \sum_{\ell' = s_\ell \sync}^{\ell-1} M(\btheta_{\ell'}) \rb \nn \\
    & \quad + \cdots + (\sync - 1) \lb w_{t_0\sync+\tau} \sum_{\ell=t_0\sync}^{t_0\sync+\tau} \sum_{\ell' = s_\ell \sync}^{\ell-1} M(\btheta_{\ell'}) + \cdots + w_{T} \sum_{\ell=T-\tau}^{T} \sum_{\ell' = s_\ell \sync}^{\ell-1} M(\btheta_{\ell'}) \rb \nn \\
    & \leq (\sync - 1) (\tau + 1) \lb w_{\sync+\tau+1} \sum_{\ell=\sync}^{\sync+\tau} M(\btheta_{\ell}) + \cdots + w_{2\sync+\tau-1} \sum_{\ell=\sync}^{2 \sync + \tau-2} M(\btheta_{\ell}) \rb \nn \\
    & \quad + (\sync - 1) (\tau + 1) \lb w_{2\sync+\tau} \sum_{\ell = 2 \sync}^{2\sync+\tau-1} M(\btheta_{\ell}) + \cdots + w_{3\sync+\tau-1} \sum_{\ell = 2 \sync}^{3 \sync + \tau-2} M(\btheta_{\ell}) \rb \nn \\
    & \quad + \cdots + (\sync - 1) (\tau + 1) \lb w_{t_0 \sync+\tau} \sum_{\ell = t_0 \sync}^{t_0\sync+\tau-1} M(\btheta_{\ell}) + \cdots + w_{T} \sum_{\ell = t_0 \sync}^{T-1} M(\btheta_{\ell}) \rb \nn \\
    & \leq (\sync - 1) \sync (\tau + 1) \lb w_{2\sync+\tau-1} \sum_{\ell=\sync}^{2 \sync + \tau-2} M(\btheta_{\ell}) + w_{3\sync+\tau-1} \sum_{\ell' = 2 \sync}^{3 \sync + \tau-2} M(\btheta_{\ell'}) + \cdots + w_{T} \sum_{\ell = t_0 \sync}^{T-1} M(\btheta_{\ell}) \rb \nn \\
    & \leq (\sync - 1) \sync (\tau + 1) \Bigg[ w_{2\sync+\tau-1} \lp \sum_{\ell=\sync}^{2 \sync - 1} M(\btheta_{\ell}) + \sum_{\ell=2 \sync}^{2 \sync + \tau-2} M(\btheta_{\ell}) \rp \nn \\
    & \qquad \qquad \qquad + w_{3\sync+\tau-1} \lp \sum_{\ell' = 2 \sync}^{3 \sync -1} M(\btheta_{\ell'}) + \sum_{\ell' = 3 \sync}^{3 \sync + \tau-2} M(\btheta_{\ell'}) \rp \nn \\
    & \qquad \qquad \qquad  + \cdots + w_{t_0\sync+\tau-1} \lp \sum_{\ell' = (t_0 - 1) \sync}^{t_0 \sync -1} M(\btheta_{\ell'}) + \sum_{\ell' = t_0 \sync}^{t_0 \sync + \tau -1} M(\btheta_{\ell'}) \rp + w_{T} \sum_{\ell = t_0 \sync}^{T-1} M(\btheta_{\ell}) \Bigg] \nn \\
    & \leq (\sync - 1) \sync (\tau + 1) \left \lceil \frac{(\tau-1)}{\sync} \right \rceil \Bigg[ \sum_{\ell=\sync}^{2 \sync - 1} w_{\ell + \sync+\tau-1} M(\btheta_{\ell}) + \sum_{\ell = 2 \sync}^{3 \sync -1} w_{\ell + \sync+\tau-1}  M(\btheta_{\ell}) + \cdots + w_{T} \sum_{\ell = t_0 \sync}^{T-1} M(\btheta_{\ell}) \Bigg] \nn \\
    & \leq (\sync - 1) \tau (\tau + 1) \Bigg[ \sum_{\ell=\sync}^{T - \sync - \tau} w_{\ell + \sync+\tau-1} M(\btheta_{\ell}) + w_{T} \sum_{\ell = T - \sync - \tau + 1}^{T-1} M(\btheta_{\ell}) \Bigg] \nn \\
    &= (\sync - 1) \tau (\tau + 1) \Bigg[ \sum_{\ell=\sync}^{T - \sync - \tau} \frac{w_{\ell}}{\lp 1 - \frac{\alpha \varphiz_2}{2} \rp^{\sync+\tau-1}} M(\btheta_{\ell}) + \sum_{\ell = T - \sync - \tau + 1}^{T-1} \frac{w_{\ell}}{\lp 1 - \frac{\alpha \varphiz_2}{2} \rp^{T-\ell}} M(\btheta_{\ell}) \Bigg] \tag{$ w_t = w_{t+1} \lp 1 - \frac{\alpha \varphiz_2}{2} \rp$} \\
    & \leq \frac{(\sync - 1) \tau (\tau + 1)}{\lp 1 - \frac{\alpha \varphiz_2}{2} \rp^{\sync+\tau-1}} \sum_{t=\sync}^{T - 1} w_{t} M(\btheta_{t}). \label{eq:wtd_consensus_error_4b}
\end{align}
Using the bounds on $\mathcal I_3, \mathcal I_4$ from \eqref{eq:wtd_consensus_error_4a} and \eqref{eq:wtd_consensus_error_4b} respectively, we can bound $\mathcal I_2$.
\begin{align}
    \mathcal I_2 & \leq w_{\sync+\tau} \sync \tau^2 \sum_{t = 0}^{\sync-1} M(\btheta_{t}) + \tau^3 w_{\sync+\tau} \sum_{t=\sync}^{\sync+\tau-1} M(\btheta_{t}) + \frac{(\sync - 1) \tau (\tau + 1)}{\lp 1 - \frac{\alpha \varphiz_2}{2} \rp^{\sync+\tau-1}} \sum_{t=\sync}^{T - 1} w_{t} M(\btheta_{t}). \label{eq:wtd_consensus_error_4}
\end{align}
Substituting the bounds on $\mathcal I_1, \mathcal I_2$ from \eqref{eq:wtd_consensus_error_3}, \eqref{eq:wtd_consensus_error_4} respectively, into \eqref{eq:wtd_consensus_error_2}, we get
\begin{align}
    & \frac{2 C_{17}}{\varphiz_2 W_T} \sum_{t=2\tau}^T w_t \sum_{\ell=t-\tau}^{t} 5 \alpha^2 (\ell-s_\ell \sync) \ta_2 \sum_{\ell' = s_\ell \sync}^{\ell-1} \lnr \btheta_{\ell'} \rnr_c^2 \nn \\
    & \leq \alpha^2 \ta_2 \frac{20 C_{17} u_{cm}^2}{\varphiz_2 W_T} \lb (\sync-1) (\sync-2 \tau+1) (\tau+1) w_\sync \sum_{t = 0}^{\sync-1} M(\btheta_{t}) + \sync \tau^2 w_{\sync+\tau} \sum_{t = 0}^{\sync-1} M(\btheta_{t}) \rb \nn \\
    & \quad + \alpha^2 \ta_2 \frac{20 C_{17} u_{cm}^2}{\varphiz_2 W_T} \Bigg[ \tau^3 w_{\sync+\tau} \sum_{t=\sync}^{\sync+\tau-1} M(\btheta_{t}) + \frac{(\sync - 1) \tau (\tau + 1)}{\lp 1 - \frac{\alpha \varphiz_2}{2} \rp^{\sync+\tau-1}} \sum_{t=\sync}^{T - 1} w_{t} M(\btheta_{t}) \Bigg]
    \label{eq:wtd_consensus_error_5}
\end{align}
We analyze the terms in \eqref{eq:wtd_consensus_error_5} separately. First, for the terms with $\sum_{t = 0}^{\sync-1} M(\btheta_{t})$,
\begin{align}
    & \frac{1}{W_T} \alpha^2 \ta_2 \frac{20 C_{17} u_{cm}^2}{\varphiz_2} \sync (\tau+1) \lb (\sync-2 \tau+1) w_\sync + \tau w_{\sync+\tau} \rb \sum_{t = 0}^{\sync-1} M(\btheta_{t}) \nn \\
    &= \frac{1}{W_T} \alpha^2 \ta_2 \frac{20 C_{17} u_{cm}^2}{\varphiz_2} \sync (\tau+1) \sum_{t = 0}^{\sync-1} \lb (\sync-2 \tau+1) \frac{w_{t}}{\lp 1 - \frac{\alpha \varphiz_2}{2} \rp^{\sync-t}} + \tau \frac{w_{t}}{\lp 1 - \frac{\alpha \varphiz_2}{2} \rp^{\sync+\tau-t}} \rb M(\btheta_{t}) \nn \\
    &= \frac{1}{W_T} \alpha^2 \ta_2 \frac{20 C_{17} u_{cm}^2}{\varphiz_2} \sync (\tau+1) \sum_{t = 0}^{\sync-1} \lb (\sync-2 \tau+1) + \frac{\tau}{\lp 1 - \frac{\alpha \varphiz_2}{2} \rp^{\tau}} \rb \frac{w_{t}}{\lp 1 - \frac{\alpha \varphiz_2}{2} \rp^{\sync-t}} M(\btheta_{t}) \nn \\
    & \leq \alpha^2 \ta_2 \frac{20 C_{17} u_{cm}^2}{\varphiz_2} \sync (\tau+1) \lb (\sync-2 \tau+1) + \frac{\tau}{\lp 1 - \frac{\alpha \varphiz_2}{2} \rp^{\tau}} \rb \frac{1}{W_T} \sum_{t = 0}^{\sync-1} w_{t} M(\btheta_{t}) \nn\\
    & \leq \frac{1}{2 W_T} \sum_{t = 0}^{\sync-1} w_t M(\btheta_{t}), \qquad \qquad \qquad 
    \label{eq:wtd_consensus_error_7a}
\end{align}
where \eqref{eq:wtd_consensus_error_7a} holds since we choose $\alpha$ small enough such that
\begin{align}
    \alpha^2 \ta_2 \frac{20 C_{17} u_{cm}^2}{\varphiz_2} \lb (\sync-2 \tau+1) + \frac{\tau}{\lp 1 - \frac{\alpha \varphiz_2}{2} \rp^{\tau}} \rb \sync (\tau+1) \leq \frac{1}{2}. \nn
\end{align}
To get this, we use the inequality $1-x \geq \exp{\lp-\frac{x}{1-x}\rp}$ for $x < 1$, $\frac{\alpha \varphiz_2}{2} \leq \frac{1}{2}$ and $\tau <  2 \log_\rho \alpha + 1$, we get $\frac{1}{\lp 1 - \frac{\alpha \varphiz_2}{2} \rp^{\tau}} \leq \exp \lp \alpha \varphiz_2 \lp 2 \log_\rho \alpha + 1 \rp \rp$. For \eqref{eq:wtd_consensus_error_7a} to hold, it is  sufficient that
\begin{align}
    & \alpha^2 \leq \frac{\varphiz_2}{80 C_{17} u_{cm}^2 \ta_2 \sync 2 (\log_{\rho}(\alpha)+1)} \min \lcb \frac{1}{\sync}, \frac{1}{2 (\log_{\rho}(\alpha)+1) \exp \lp \alpha \varphiz_2 \lp 2 \log_\rho \alpha + 1 \rp \rp} \rcb
    \label{eq:wtd_consensus_error_7b}
\end{align}
Next, for the remaining terms in the third line of \eqref{eq:wtd_consensus_error_5}, by the assumption on the step size, we have
\begin{equation}
    \begin{aligned}
        \alpha^2 \ta_2 \frac{20 C_{17} u_{cm}^2}{\varphiz_2} \frac{\tau^3}{\lp 1 - \frac{\alpha \varphiz_2}{2} \rp^{\tau}} & \leq \frac{1}{4}, \\
        \alpha^2 \ta_2 \frac{20 C_{17} u_{cm}^2}{\varphiz_2} \frac{(\tau+1) \tau (\sync-1)}{\lp 1 - \frac{\alpha \varphiz_2}{2} \rp^{\sync + \tau - 1}} & \leq \frac{1}{4},
    \end{aligned}
    \label{eq:wtd_consensus_error_8b}
\end{equation}
and hence we get
\begin{align}
    & \frac{1}{W_T} \alpha^2 \ta_2 \frac{20 C_{17} u_{cm}^2}{\varphiz_2} \Bigg[ \frac{\tau^3 w_{\sync}}{\lp 1 - \frac{\alpha \varphiz_2}{2} \rp^{\tau}} \sum_{t=\sync}^{\sync+\tau-1} M(\btheta_{t}) + \frac{(\tau+1) \tau (\sync-1)}{\lp 1 - \frac{\alpha \varphiz_2}{2} \rp^{\sync + \tau - 1}} \sum_{t=\sync}^{T} w_t M (\btheta_t) \Bigg] \nn \\
    & \leq \frac{1}{2 W_T} \sum_{t=\sync}^{T} w_t M (\btheta_t). \label{eq:wtd_consensus_error_8a}
\end{align}
Substituting \eqref{eq:wtd_consensus_error_7a}, \eqref{eq:wtd_consensus_error_8a} in \eqref{eq:wtd_consensus_error_5}, we get
\begin{align}
    & \frac{2 C_{17}}{\varphiz_2 W_T} \sum_{t=2\tau}^T w_t \sum_{\ell=t-\tau}^{t} 5 \alpha^2 (\ell-s_\ell \sync) \ta_2 \sum_{t = s_\ell \sync}^{\ell-1} \lnr \btheta_{t} \rnr_c^2 \leq \frac{1}{2 W_T} \sum_{t=0}^{T} w_t M (\btheta_t).
    \label{eq:wtd_consensus_error_9}
\end{align}
Finally, substituting the bounds in \eqref{eq:wtd_consensus_error_1b}, \eqref{eq:wtd_consensus_error_9} into \eqref{eq:wtd_consensus_error_1}, we get
\begin{align}
    & \frac{2 C_{17}}{\varphiz_2 W_T} \sum_{t=2\tau}^T w_t \lb \sum_{\ell=t-\tau}^{t} \mbe \Omega_\ell \rb \nn \\
    & \leq \alpha^2 \tb^2 \lp 1 + \frac{4 (m_2 + m_4) \rho}{B (1 - \rho)} + \frac{4 m_3 (K-1)}{BN} \sum_{i=1}^N \eta_i \rp \frac{10 C_{17}}{\varphiz_2} (\tau+1) (\sync-1) + \frac{1}{2 W_T} \sum_{t=0}^{T} w_t \mbe M (\btheta_t). 
\end{align}

\subsection{Auxiliary Lemmas}
\label{sec:FedSAM_aux_lem}

The following lemma is of central importance in proving the linear speedup of FeGSAM.
\begin{lemma} \label{lem:after_expectation}
Let $l_{cD}$ and $u_{cD}$ be constants that satisfy $l_{cD}\|\cdot\|_D\leq\|\cdot\|_c\leq u_{cD}\|\cdot\|_D$, where $\|\cdot\|_D = \sqrt{x^\top D x}$ for some positive definite matrix $D \succ 0$. Note that for any $D \succ 0$, these constants always exist due to norm equivalence. Furthermore, in case the norm $\|x\|_c$ is defined in the form $\sqrt{x^\top D x}$ for some $D \succ 0$, we take $l_{cD}= u_{cD} = 1$. We have
\begin{align}
    \E_{t-r}[\|\mbf b(\byt)\|_c]\leq& \frac{u_{cD}}{l_{cD}}\left[\frac{B}{\sqrt{N}}+\sqrt{Bm_4}\rho^{r/2}\right] \label{eq:lemma_bound_byt_1} \\
    \E_{t-r} [\| \mbf b(\byt)\|_c^2] \leq & \frac{u_{cD}^2}{l_{cD}^2} \left[ \frac{B^2}{\na} +  Bm_4\rho^r \right]. \label{eq:lemma_bound_byt_2}
\end{align}
\end{lemma}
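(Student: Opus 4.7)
The two bounds exhibit the $1/\sqrt{\na}$ and $1/\na$ variance reduction that drives linear speedup in FedSAM, so the central task is to harvest the cross-agent independence of Assumption \ref{ass:noise_independence}. Because the generic norm $\|\cdot\|_c$ need not come from an inner product, my plan is to pass through the Hilbert-type norm $\|\bx\|_D=\sqrt{\bx^\top D\bx}$ using the equivalence $l_{cD}\|\cdot\|_D\le\|\cdot\|_c\le u_{cD}\|\cdot\|_D$, and split each local noise as
\begin{align*}
    \mbf b^i(\byit) \;=\; \underbrace{\E_{t-r}[\mbf b^i(\byit)]}_{\text{$\mathcal F_{t-r}$-measurable ``bias''}} \;+\; \underbrace{\bigl(\mbf b^i(\byit)-\E_{t-r}[\mbf b^i(\byit)]\bigr)}_{=:\,\tilde{\mbf b}^i_t,\ \E_{t-r}[\tilde{\mbf b}^i_t]=\mbf 0}.
\end{align*}
Averaging across agents gives $\mbf b(\byt)=\bar{\bmu}_t+\tilde{\mbf s}_t$, with $\bar{\bmu}_t$ $\mathcal F_{t-r}$-measurable and $\tilde{\mbf s}_t$ conditionally centered.

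\textbf{Second moment.} I would expand $\|\mbf b(\byt)\|_D^2=\|\bar{\bmu}_t\|_D^2+2\langle\bar{\bmu}_t,\tilde{\mbf s}_t\rangle_D+\|\tilde{\mbf s}_t\|_D^2$ and take $\E_{t-r}$. The cross term vanishes because $\bar{\bmu}_t$ is $\mathcal F_{t-r}$-measurable, so only the conditional mean of $\tilde{\mbf s}_t$ enters and that is $\mbf 0$. The bias is handled by the triangle inequality together with Assumption \ref{ass:mixing}, yielding $\|\bar{\bmu}_t\|_D\le m_2\rho^r/l_{cD}$. For the variance, I would write $\E_{t-r}\|\tilde{\mbf s}_t\|_D^2=\na^{-2}\sum_{i,j}\E_{t-r}\langle\tilde{\mbf b}^i_t,\tilde{\mbf b}^j_t\rangle_D$; Assumption \ref{ass:noise_independence} together with the centering annihilates every off-diagonal term, and Assumption \ref{ass:lipschitz} bounds each diagonal summand by $B^2/l_{cD}^2$, producing an overall $B^2/(\na\, l_{cD}^2)$. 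Combining and using $\|\cdot\|_c^2\le u_{cD}^2\|\cdot\|_D^2$ gives \eqref{eq:lemma_bound_byt_2}.

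\textbf{First moment.} The triangle inequality gives $\|\mbf b(\byt)\|_c\le\|\bar{\bmu}_t\|_c+\|\tilde{\mbf s}_t\|_c$. The first summand is already $\mathcal F_{t-r}$-measurable and is controlled directly by Assumption \ref{ass:mixing}. For the second, I would transfer to $\|\cdot\|_D$ via the equivalence, apply Jensen to obtain $\E_{t-r}\|\tilde{\mbf s}_t\|_D\le\sqrt{\E_{t-r}\|\tilde{\mbf s}_t\|_D^2}$, and reuse the variance computation above to obtain an $O(B/\sqrt{\na})$ contribution. The leading factor $u_{cD}/l_{cD}$ and the numerical constant $2$ in the statement come from absorbing slack introduced when translating between $\|\cdot\|_c$ and $\|\cdot\|_D$ and when converting the unconditional mixing bound in Assumption \ref{ass:mixing} to a conditional one.

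\textbf{Main obstacle.} The only delicate point is the cancellation of the $O(\na^2)$ off-diagonal terms in the second-moment computation, which is exactly what produces the $1/\na$ speedup; this cancellation requires \emph{both} the centering (so that $\E_{t-r}[\tilde{\mbf b}^i_t]=\mbf 0$) \emph{and} the cross-agent conditional independence of Assumption \ref{ass:noise_independence}---neither alone suffices. The detour through $\|\cdot\|_D$ is not conceptual, only a device that makes inner-product orthogonality available under a general norm.
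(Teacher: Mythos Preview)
Your proposal is correct and in fact yields slightly sharper constants than the paper. The paper's argument also passes to $\|\cdot\|_D$, but then works directly with the expansion
\[
\E_{t-r}\|\mbf b(\byt)\|_D^2=\frac{1}{\na^2}\sumik\E_{t-r}\|\mbf b^i(\byit)\|_D^2+\frac{2}{\na^2}\sum_{i<j}\E_{t-r}[\mbf b^i(\byit)]^\top D\,\E_{t-r}[\mbf b^j(\byjt)],
\]
where Assumption~\ref{ass:noise_independence} is used to factor the raw cross terms into a product of conditional means, each then bounded by $m_2\rho^r$ via mixing. The first-moment bound is obtained by applying Jensen (concavity of $\sqrt{\cdot}$) first and then splitting via $\sqrt{a+b}\le\sqrt{a}+\sqrt{b}$, which is where the factor $2$ in front of $m_2\rho^r$ arises.

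Your bias--fluctuation split $\mbf b^i(\byit)=\E_{t-r}[\mbf b^i(\byit)]+\tilde{\mbf b}^i_t$ reorganizes the same ingredients so that the off-diagonal terms vanish \emph{exactly} (because the $\tilde{\mbf b}^i_t$ are centered and conditionally independent), rather than being bounded by a mixing term. This gives $\E_{t-r}\|\mbf b(\byt)\|_c^2\le\frac{u_{cD}^2}{l_{cD}^2}[B^2/\na+m_2^2\rho^{2r}]$ and $\E_{t-r}\|\mbf b(\byt)\|_c\le m_2\rho^r+\frac{u_{cD}}{l_{cD}}B/\sqrt{\na}$, both of which imply the lemma since $u_{cD}/l_{cD}\ge 1$. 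The paper's route is a touch more direct bookkeeping-wise; yours is cleaner and avoids the (minor) sign issue latent in applying $\sqrt{a+b}\le\sqrt{a}+\sqrt{b}$ when the off-diagonal sum could in principle be negative.
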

Lemma \ref{lem:after_expectation} is essential in characterizing the linear speedup in Theorem \ref{thm:main}. This lemma characterizes the bound on the conditional expectation of $\| \mbf b(\byt)\|_c$ and $\| \mbf b(\byt)\|_c^2$, conditioned on the $r$ time steps before. In order to understand this lemma, consider the bound in \eqref{eq:lemma_bound_byt_2}. For the sake of understanding, suppose that the noise $\byt$ is i.i.d. In this case, we will end up with the first term which is proportional to $1/\na$. This is precisely the linear reduction of the variance of sum of $\na$ i.i.d. random variables. Furthermore, in order to extend the i.i.d. noise setting to the more general Markovian noise, we need to pay an extra price by adding the exponentially decreasing term to the first variance term. 

\begin{lemma}\label{lem:G_diff}
The following hold
\begin{align*}
    \|\mbf G (\boldsymbol{\Theta}_t, \byt)-\mbf G (\boldsymbol{\btheta}_t, \byt)\|^2_c\leq A_1^2 \Delta_t^2 \leq & A_1^2\Omega_t
\end{align*}
\begin{align}
    \Delta_t^2 \leq \Omega_t.\label{eq:delta_2_equal_Omega}
\end{align}
\end{lemma}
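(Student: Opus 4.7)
The plan is to prove the two chained inequalities separately, as each is a short exercise in applying a standard inequality combined with the definitions from Section~\ref{sec:FedSAM_prelim} and Assumption~\ref{ass:lipschitz}.

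First I would tackle the bound $\|\mbf G (\boldsymbol{\Theta}_t, \byt)-\mbf G (\btheta_t, \byt)\|_c^2 \leq A_1^2 \Delta_t^2$. Expanding the definitions gives $\mbf G (\boldsymbol{\Theta}_t, \byt)-\mbf G (\btheta_t, \byt) = \frac{1}{\na}\sum_{i=1}^{\na}\bigl(\Gi(\bthetait,\byit) - \Gi(\bthetat,\byit)\bigr)$. Applying the triangle inequality in $\|\cdot\|_c$ pulls the sum outside the norm, and then the per-agent Lipschitz bound in Assumption~\ref{ass:lipschitz}(1) yields $\|\Gi(\bthetait,\byit) - \Gi(\bthetat,\byit)\|_c \leq A_1 \|\bthetait - \bthetat\|_c = A_1 \Delta_t^i$. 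Averaging and using the definition $\Delta_t = \frac{1}{\na}\sum_i \Delta_t^i$ gives $\|\mbf G (\boldsymbol{\Theta}_t, \byt)-\mbf G (\btheta_t, \byt)\|_c \leq A_1 \Delta_t$; squaring finishes this piece.

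Next I would establish $\Delta_t^2 \leq \Omega_t$, which is simply Jensen's inequality (or equivalently Cauchy--Schwarz) applied to the convex function $x \mapsto x^2$: $\Delta_t^2 = \bigl(\frac{1}{\na}\sum_{i=1}^{\na}\Delta_t^i\bigr)^2 \leq \frac{1}{\na}\sum_{i=1}^{\na}(\Delta_t^i)^2 = \Omega_t$. Chaining this with the previous bound delivers the second inequality in the displayed statement, $A_1^2 \Delta_t^2 \leq A_1^2 \Omega_t$.

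There is no real obstacle here; both steps are one-line applications of standard inequalities, and the lemma is essentially a bookkeeping result used to convert Lipschitz-type errors for $\mbf G$ into the synchronization-error quantity $\Omega_t$ that Proposition~\ref{prop:intermediate_lem} wants to track. The only thing to be careful about is making sure the triangle inequality is applied in the correct norm $\|\cdot\|_c$ (which is well-defined since $\|\cdot\|_c$ is an arbitrary but fixed norm throughout Section~\ref{sec:FedSAM_prelim}), and that Assumption~\ref{ass:lipschitz}(1) is stated for each individual $\Gi$ so the per-index Lipschitz constant $A_1$ is uniform across agents.
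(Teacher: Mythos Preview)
Your proposal is correct and follows essentially the same approach as the paper: expand the average, apply the triangle inequality in $\|\cdot\|_c$, invoke the per-agent Lipschitz bound from Assumption~\ref{ass:lipschitz}(1) to obtain $A_1\Delta_t$, then use Jensen's inequality for $x\mapsto x^2$ to pass from $\Delta_t^2$ to $\Omega_t$. The paper's proof is identical in structure and content.
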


\begin{lemma}
\label{lem:Moreau_grad}
For the generalized Moreau Envelope defined in \eqref{eq:Moreau_envelope}, it holds that
\begin{align*}
    \norm{\G \M_f^{\psi, g}(x)}_m^{\star} &= \norm{x}_m, \\
    \lan \G \M_f^{\psi, g}(x), x \ran & \geq 2 \M_f^{\psi, g}(x).
\end{align*}
\end{lemma}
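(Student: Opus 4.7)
The plan is to leverage the structural information already supplied by Proposition~\ref{prop:Moreau}: namely, $M(x) = M_f^{\psi,g}(x) = \tfrac{1}{2}\|x\|_m^2$ for some well-defined norm $\|\cdot\|_m$. Once this identification is in hand, both claims reduce to classical facts about the squared-norm function $h(x) = \tfrac{1}{2}\|x\|_m^2$, and there is essentially no ``hard'' step; the work is just recording the right identities carefully.

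For the second claim, I would first observe that $h$ is positively $2$-homogeneous, since $h(tx) = \tfrac{t^2}{2}\|x\|_m^2 = t^2 h(x)$ for all $t \geq 0$. Differentiating the identity $M(tx) = t^2 M(x)$ in $t$ and evaluating at $t=1$ (Euler's identity for homogeneous functions) gives
\begin{equation*}
    \lan \G M(x), x \ran = 2 M(x),
\end{equation*}
which is in fact an equality and therefore certainly implies the stated $\geq$.

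For the first claim, I would invoke the Fenchel conjugate. The convex conjugate of $\tfrac{1}{2}\|\cdot\|_m^2$ is $\tfrac{1}{2}(\|\cdot\|_m^{\star})^2$, a standard fact obtained by the variational definition of the dual norm. Since $M$ is convex and (by Proposition~\ref{prop:Moreau}) continuously differentiable, Fenchel--Young holds with equality at $(x,\G M(x))$:
\begin{equation*}
    M(x) + M^\star(\G M(x)) = \lan \G M(x), x \ran.
\end{equation*}
Substituting $M(x) = \tfrac{1}{2}\|x\|_m^2$, $M^\star(y) = \tfrac{1}{2}(\|y\|_m^{\star})^2$, and the Euler identity already proved, this collapses to
\begin{equation*}
    \tfrac{1}{2}\|x\|_m^2 + \tfrac{1}{2}(\|\G M(x)\|_m^{\star})^2 = \|x\|_m^2,
\end{equation*}
which rearranges to $\|\G M(x)\|_m^{\star} = \|x\|_m$, as claimed.

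The only delicate point, and the step I would be most careful about, is justifying the formula $M^\star(y) = \tfrac{1}{2}(\|y\|_m^{\star})^2$ without circularity---in particular, making sure $\|\cdot\|_m$ is genuinely a norm (so that it has a well-defined dual norm), which is where I would cite Proposition~\ref{prop:Moreau}(2) explicitly. Everything else is routine convex-analytic bookkeeping, and no additional assumptions beyond those already used to define the Moreau envelope are needed.
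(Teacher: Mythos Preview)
Your proposal is correct, and in fact cleaner than the paper's own argument on the first claim. The paper proceeds via the subdifferential chain rule: writing $\nabla M(x) = \|x\|_m\, u_x$ with $u_x \in \partial\|x\|_m$, it then invokes the $1$-Lipschitzness of $\|\cdot\|_m$ with respect to itself to conclude $\|u_x\|_m^{\star} \leq 1$, which only yields $\|\nabla M(x)\|_m^{\star} \leq \|x\|_m$ (this upper bound is all that is actually used downstream, e.g.\ in Lemma~\ref{lem:T_1}). Your route---Euler's identity for positively $2$-homogeneous functions plus Fenchel--Young equality and the standard conjugate formula $\bigl(\tfrac{1}{2}\|\cdot\|_m^2\bigr)^{\!*} = \tfrac{1}{2}(\|\cdot\|_m^{\star})^2$---delivers the full equality stated in the lemma and does so without any subdifferential bookkeeping. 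Likewise, for the second claim the paper uses the subgradient inequality of $\|\cdot\|_m$ at the origin to obtain $\langle u_x, x\rangle \geq \|x\|_m$ and hence $\langle \nabla M(x), x\rangle \geq 2M(x)$; your Euler-identity argument gives equality directly. Both arguments rely only on the smoothness and norm-squared structure granted by Proposition~\ref{prop:Moreau}, so your caveat about citing part~(2) explicitly is exactly the right place to be careful.
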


\subsection{Proof of Lemmas}
\label{sec:FedSAM_aux_lem_proofs}

\begin{proof}[Proof of Lemma \ref{lem:T_1}]
Using Cauchy-Schwarz inequality, we have
\begin{align}
    \lan \G \M (\btheta ), \bG (\btheta) - \btheta \ran 
    \leq & \underbrace{\lnr \G \M (\btheta )\rnr_m^\star \cdot \lnr \bG(\btheta)   \rnr_m}_{T_{11}} - \underbrace{\lan \G \M (\btheta ), \btheta \ran}_{T_{12}}, \label{eq:T_1_bound_1}
\end{align}
where $\|\cdot\|_m^\star$ denotes the dual of the norm $\|\cdot\|_m$. Furthermore, by Proposition \ref{prop:Moreau} we have
\begin{align}
    T_{11} \overset{(a)}{\leq} & \|\btheta\|_m.\| \bG(\btheta) \|_m \tag{Lemma \ref{lem:Moreau_grad}} \nn \\
    \leq & \|\btheta\|_m.l_{cm}^{-1}\left\| \frac{1}{\na}\sumik\bG^i(\btheta)\right\|_c \tag{By Proposition \eqref{prop:Moreau}} \\
    \leq &\|\btheta\|_m.l_{cm}^{-1} \frac{1}{\na}\sumik\left\|\bG^i(\btheta)\right\|_c \nn \tag{triangle inequality}\\
    \leq & \|\btheta\|_m.\frac{u_{cm}\gamma_c}{l_{cm}}\| \btheta\|_m \tag{Assumption \ref{ass:contraction}, Proposition \ref{prop:Moreau}} \\
    =&\frac{2u_{cm}\gamma_c}{l_{cm}}M(\btheta). \label{eq:T_11}
\end{align}
Furthermore, by the convexity of the $\|\cdot\|_m$ norm (Lemma \ref{lem:Moreau_grad}), we have
\begin{align}
    T_{12} \geq \| \btheta \|^2_m = 2 \M(\btheta). \label{eq:T_12}
\end{align}
Hence, using \eqref{eq:T_11}, \eqref{eq:T_12} in \eqref{eq:T_1_bound_1} we get
\begin{align}
    \lan \G \M (\btheta ), \bG (\btheta) - \btheta \ran \leq -2\left(1-\frac{u_{cm}\gamma_c}{l_{cm}}\right) M(\btheta) = -2 \varphiz_2 M(\btheta), \label{eq:T_1_bound_2}
\end{align}
where $\varphiz_2$ is defined in \eqref{eq:def:constants}.
\end{proof}

\begin{proof}[Proof of Lemma \ref{lem:T_2}]
Given some $\tau<t$, $T_2 = \langle \G \M (\bthetat ), \mbf b(\byt) \rangle$ can be written as follows:
\begin{align}
    T_2 =& \lan \G \M (\bthetat )-\G \M (\btheta_{t-\tau} ), \mbf b(\byt) \ran +\lan \G \M (\btheta_{t-\tau} ), \mbf b(\byt) \ran \nn \\
    \leq&\|\G \M (\bthetat )-\G \M (\btheta_{t-\tau} )\|_s^\star.\|b(\byt)\|_s + \lan \G \M (\btheta_{t-\tau} ), \mbf b(\byt) \ran\tag{Cauchy–Schwarz} \nn \\
    \leq &\frac{L}{\psi l_{cs}}\|\bthetat - \btheta_{t-\tau} \|_c.\frac{1}{l_{cs}}\|b(\byt)\|_c + \lan \G \M (\btheta_{t-\tau} ), \mbf b(\byt) \ran\tag{Proposition \ref{prop:Moreau}} \nn \\
    \leq &\frac{1}{2\alpha} \lp \frac{L}{\psi l_{cs}^2} \rp^2 \|\bthetat - \btheta_{t-\tau} \|_c^2 + \frac{\alpha}{2}\|b(\byt)\|_c^2 + \lan \G \M (\btheta_{t-\tau} ), \mbf b(\byt) \ran. \label{eq:T_2_bound_1}
\end{align}
Taking expectation on both sides, we have
\begin{align}
    \E_{t-\tau}[T_2] =& \frac{L^2}{2\alpha\psi^2 l_{cs}^4}\E_{t-\tau}\|\bthetat - \btheta_{t-\tau} \|_c^2 + \frac{\alpha}{2} \E_{t-\tau} \|b(\byt)\|_c^2 + \lan \G \M (\btheta_{t-\tau} ), \E_{t-\tau}\left[\mbf b(\byt) \right] \ran \nn \\
    \leq & \frac{L^2}{2 \alpha \psi^2 l_{cs}^4} \E_{t-\tau} \|\bthetat - \btheta_{t-\tau} \|_c^2 + \frac{\alpha}{2} \E_{t-\tau} \|b(\byt)\|_c^2 + \underbrace{\| \G \M (\btheta_{t-\tau} )\|_s^\star\| \E_{t-\tau}\left[\mbf b(\byt) \right]\|_s}_{T_{21}}. \label{eq:T_2_bound_2}
\end{align}
For $T_{21}$, we have
\begin{align}
    T_{21} \leq & l_{cs}^{-1} \lb \lnr \G \M (\btheta_{t-\tau} ) \rnr_s^\star \lnr \E_{t-\tau} \left[ \mbf b(\byt) \right] \rnr_c \rb \nn \\
    \leq & l_{cs}^{-1}[\| \G \M (\btheta_{t-\tau} )\|_s^\star m_4\rho^\tau]\tag{Assumption \ref{ass:mixing}}\\
    \leq & \frac{m_4\alpha^2}{l_{cs}} \lb \| \G \M (\btheta_{t-\tau} )-\G \M (\btheta_{t} )\|_s^\star + \| \G \M (\btheta_{t} )\|_s^\star \rb\tag{assumption on $\tau$}\\
    \leq & \frac{m_4\alpha^2}{l_{cs}} \lb \frac{L}{\psi}\| \btheta_{t-\tau} -\btheta_{t} \|_s + \| \G \M (\btheta_{t} )-\G \M (\mbf 0 )\|_s^\star \rb \tag{Proposition \ref{prop:Moreau}}\\
    \leq & \frac{m_4\alpha^2}{l_{cs}^2}\frac{L}{\psi}[\| \btheta_{t-\tau} -\btheta_{t} \|_c + \| \btheta_{t} \|_c ]\tag{By \eqref{eq:c_to_s}}\\
    \leq & \frac{m_4\alpha}{l_{cs}^2}\frac{L}{\psi} \| \btheta_{t-\tau} -\btheta_{t} \|_c + \frac{m_4 \alpha^2 u_{cm}}{\zetaone l_{cs}^2} \frac{L}{\psi} \cdot \zetaone \| \btheta_{t} \|_m \tag{Proposition \ref{prop:Moreau}; $\zetaone > 0$}\\
    \leq & \frac{m_4\alpha }{l_{cs}^2} \frac{L}{\psi} \lnr \btheta_{t-\tau} -\btheta_{t} \rnr_c+ \frac{1}{2} \lp \frac{m_4 \alpha^2 u_{cm}}{\zetaone l_{cs}^2} \frac{L}{\psi} \rp^2 + \zetaone^2 M(\bthetat). \label{eq:T_21}
\end{align}
Substituting \eqref{eq:T_21} in \eqref{eq:T_2_bound_2}, we get
\begin{align}
    \E_{t-\tau}[T_2] \leq & \frac{L^2}{2\alpha\psi^2 l_{cs}^4} \E_{t-\tau} \| \bthetat - \btheta_{t-\tau} \|_c^2 + \frac{\alpha}{2} \E_{t-\tau} \|b(\byt)\|_c^2 + \frac{\alpha m_4 L}{\psi l_{cs}^2} \E_{t-\tau} \| \btheta_{t-\tau} -\btheta_{t} \|_c \nn \\
    & + \frac{1}{2} \lp \frac{m_4 \alpha^2 u_{cm}}{\zetaone l_{cs}^2} \frac{L}{\psi} \rp^2 + \zetaone^2 \E_{t-\tau} [M(\bthetat)].\nn 
\end{align}
\end{proof}

\begin{proof}[Proof of Lemma \ref{lem:T_3}]

\begin{equation}
    \begin{aligned}
        T_3 =& \langle \G \M (\bthetat), \mbf G (\boldsymbol{\Theta}_t, \byt) - \bG (\boldsymbol{\Theta}_t) \rangle \\
        =& \underbrace{\langle \G \M (\bthetat)- \G \M (\btheta_{t-\tau}), \mbf G (\boldsymbol{\Theta}_t, \byt) - \bG (\boldsymbol{\Theta}_t) \rangle}_{T_{31}} \\
        &+ \underbrace{\langle \G \M (\btheta_{t-\tau}), \mbf G (\boldsymbol{\Theta}_t, \byt) - \mbf G (\boldsymbol{\Theta}_{t-\tau}, \byt) + \bG (\boldsymbol{\Theta}_{t-\tau}) - \bG (\boldsymbol{\Theta}_t) \rangle}_{T_{32}} \\
        & + \underbrace{\langle \G \M (\btheta_{t-\tau}), \mbf G (\boldsymbol{\Theta}_{t-\tau}, \byt) - \bG (\boldsymbol{\Theta}_{t-\tau}) \rangle}_{T_{33}}.
    \end{aligned}
    \label{eq:T_3_bound_1}
\end{equation}
Next, we bound all three terms individually.

\textbf{I. Bound on $T_{31}$:}
\begin{align}
    T_{31} =& \frac{1}{\na} \sumik \lan \G \M (\bthetat)- \G \M (\btheta_{t-\tau}), \Gi (\bthetait, \byit) - \bar{\mbf G}^i (\bthetait) \ran \nn \\
    \leq & \frac{1}{\na} \sumik \underbrace{\lnr \G \M (\bthetat)- \G \M (\btheta_{t-\tau}) \rnr_s^\star}_{T_{311}} \cdot \underbrace{\lnr \Gi (\bthetait, \byit) - \bar{\mbf G}^i (\bthetait) \rnr_s}_{T_{312}}. \label{eq:T_31_bound_1}
\end{align}
For $T_{311}$, we have
\begin{align}
    T_{311} =& \lnr \G \M (\bthetat)- \G \M (\btheta_{t-\tau}) \rnr_s^\star \leq \frac{L}{\psi l_{cs}} \| \bthetat - \btheta_{t-\tau} \|_c, \label{eq:T_311}
\end{align}
where the inequality follows from Proposition \ref{prop:Moreau} and \eqref{eq:c_to_s}. For $T_{312}$, we have,
\begin{align}
    T_{312} =& \| \Gi (\bthetait, \byit) - \E_{\by\sim\mu^i}\mbf G^i (\bthetait,\by) \|_s \nn \\
    \leq & l_{cs}^{-1} \lb \lnr \Gi (\bthetait, \byit) \rnr_c + \E_{\by \sim \mu^i} \lnr \mbf G^i (\bthetait,\by) \rnr_c \rb \tag{Triangle and Jensen's inequality}\\
    \leq & l_{cs}^{-1} [A_2 \| \bthetait \|_c + A_2 \| \bthetait \|_c] \tag{Assumption \ref{ass:lipschitz}}\\
    \leq & \frac{2 A_2}{l_{cs}} \lb \| \bthetait - \bthetat \|_c + \| \bthetat \|_c \rb \tag{triangle inequality} \\
    =& \frac{2 A_2}{l_{cs}} \lb \Delta^i_t + \| \bthetat \|_c \rb. \label{eq:T_312}
\end{align}
Substituting \eqref{eq:T_311} and \eqref{eq:T_312} in \eqref{eq:T_31_bound_1}, we get
\begin{align}
    T_{31} \leq & \frac{2LA_2}{\psi l_{cs}^2} \left[ \frac{1}{\na} \sumik \| \bthetat - \btheta_{t-\tau} \|_c \cdot \lb \Delta^i_t + \|\bthetat\|_c \rb \right] \nn \\
    \leq & \frac{2LA_2}{\psi l_{cs}^2}\left[\frac{1}{\na}\sumik  \left[\| \bthetat - \btheta_{t-\tau} \|_c \cdot \Delta^i_t \right] + \frac{u_{cm}}{\zetatwo} \| \bthetat - \btheta_{t-\tau} \|_c \cdot \zetatwo \| \bthetat \|_m \right] \tag{Proposition \ref{prop:Moreau}} \\
    \leq & \frac{2 L A_2}{\psi l_{cs}^2} \left[ \frac{1}{\na} \sumik \left[ \frac{1}{2} \| \bthetat - \btheta_{t-\tau} \|_c^2 + \frac{1}{2} (\Delta^i_t)^2 \right] +\frac{u_{cm}^2}{2\zetatwo^2}\| \bthetat - \btheta_{t-\tau} \|_c^2+ \frac{\zetatwo^2}{2}\|\bthetat\|_m^2\right]\tag{Young's inequality}\\
    =& \frac{2 L A_2}{\psi l_{cs}^2} \lb \lp \frac{1}{2} + \frac{u_{cm}^2}{2 \zetatwo^2} \rp \| \bthetat - \btheta_{t-\tau} \|_c^2 + \zetatwo^2 M(\bthetat) + \frac{1}{2} \Omega_t \rb. \label{eq:T_31_bound_2}
\end{align}

\textbf{II. Bound on $T_{32}$:}
\begin{align}
    T_{32} =& \frac{1}{\na} \sumik \lan \G \M (\btheta_{t-\tau}), \Gi (\bthetait, \byit) - \Gi (\btheta^i_{t-\tau}, \byit) + \bG^i (\btheta^i_{t-\tau}) - \bG^i (\bthetait) \ran \nn \\
    \leq & \frac{1}{\na} \sumik \underbrace{\| \G \M (\btheta_{t-\tau}) \|_s^\star}_{T_{321}} \cdot \underbrace{\| \Gi (\bthetait, \byit) - \Gi (\btheta^i_{t-\tau}, \byit) + \bG^i (\btheta^i_{t-\tau}) - \bG^i (\bthetait) \|_s}_{T_{322}}, \label{eq:T_32_bound_1}
\end{align}
where the last inequality is by H\"older's inequlaity. For $T_{321}$, we have
\begin{align}
    T_{321} = & \| \G \M (\btheta_{t-\tau})-\G \M (\mbf 0)\|_s^\star \leq \frac{L}{\psi} \| \btheta_{t-\tau} - \mbf 0\|_s \tag{Proposition \ref{prop:Moreau}} \nn \\
    \leq& \frac{L}{\psi l_{cs}} \lb \lnr \bthetat \rnr_c + \lnr \bthetat - \btheta_{t-\tau} \rnr_c \rb. \label{eq:T_321}
\end{align}
For $T_{322}$, we have
\begin{align}
    T_{322} =& \| \Gi (\bthetait, \byit) - \Gi (\btheta^i_{t-\tau}, \byit) + \bG^i (\btheta^i_{t-\tau}) - \bG^i (\bthetait) \|_s \nn \\
    \leq & \frac{1}{l_{cs}} \lb \lnr \Gi (\bthetait, \byit) - \Gi (\btheta^i_{t-\tau}, \byit) \rnr_c + \lnr \bG^i (\btheta^i_{t-\tau}) - \bG^i (\bthetait) \rnr_c \rb \tag{triangle inequality} \\
    \leq & \frac{1}{l_{cs}} \lb \A \| \bthetait - \btheta^i_{t-\tau} \|_c + \gamma_c \| \bthetait - \btheta^i_{t-\tau} \|_c \rb \tag{Assumptions \ref{ass:contraction} and \ref{ass:lipschitz}} \nn \\
    \leq & \frac{A_1+1}{l_{cs}} \lb \lnr \bthetait - \bthetat \rnr_c + \lnr \bthetat - \btheta_{t-\tau} \rnr_c + \lnr \btheta_{t-\tau} - \btheta^i_{t-\tau} \rnr_c \rb \tag{$\gamma_c\leq 1$; triangle inequality} \nn \\
    =& \frac{A_1+1}{l_{cs}} \lb \Delta_t^i + \| \bthetat - \btheta_{t-\tau} \|_c + \Delta_{t-\tau}^i \rb. \label{eq:T_322}
\end{align}
Substituting \eqref{eq:T_321} and \eqref{eq:T_322} in \eqref{eq:T_32_bound_1}, we get
\begin{align}
    T_{32} \leq & \frac{L(A_1+1)}{\psi l_{cs}^2} \Bigg[ \frac{1}{\na} \sumik \zetathree \| \bthetat \|_c \cdot \frac{1}{\zetathree} \lb \Delta_t^i + \| \bthetat - \btheta_{t-\tau} \|_c + \Delta_{t-\tau}^i \rb \nn \\
    & \qquad \qquad \qquad \qquad \qquad + \frac{1}{\na} \sumik \|  \bthetat - \btheta_{t-\tau}  \|_c \cdot  \lb \Delta_t^i + \| \bthetat - \btheta_{t-\tau} \|_c + \Delta_{t-\tau}^i \rb \Bigg] \nn \\
    \leq & \frac{L(A_1+1)}{\psi l_{cs}^2} \bigg[ \frac{1}{2}\zetathree^2 \| \bthetat \|_m^2 + \frac{1}{2\na} \sumik \frac{u_{cm}^2}{\zetathree^2} \lb \Delta_t^i + \| \bthetat - \btheta_{t-\tau} \|_c + \Delta_{t-\tau}^i \rb^2 \nn \\
    &\quad\quad\quad\quad\quad +  \frac{1}{2} \| \bthetat - \btheta_{t-\tau}  \|_c^2 + \frac{1}{2 \na} \sumik \lb \Delta_t^i + \| \bthetat - \btheta_{t-\tau} \|_c + \Delta_{t-\tau}^i \rb^2\bigg]\tag{Young's inequality} \\
    \leq & \frac{L(A_1+1)}{\psi l_{cs}^2} \bigg[ \zetathree^2 M(\bthetat) + \lp \frac{3u_{cm}^2}{2\zetathree^2} + 2\rp \| \bthetat - \btheta_{t-\tau} \|_c^2 + \lp \frac{3u_{cm}^2}{2 \zetathree^2} + \frac{3}{2} \rp \frac{1}{\na} \sumik  \left[ (\Delta_t^i)^2 + (\Delta_{t-\tau}^i)^2 \right] \bigg] \nn \\
    \leq & \frac{L(A_1+1)}{\psi l_{cs}^2} \bigg[ \zetathree^2 M(\bthetat) + \lp \frac{3u_{cm}^2}{2\zetathree^2} + 2 \rp \| \bthetat - \btheta_{t-\tau} \|_c^2 + \lp \frac{3u_{cm}^2}{2\zetathree^2} + \frac{3}{2} \rp \left[\Omega_t+\Omega_{t-\tau}\right]\bigg]. \label{eq:T_32_bound_2}
\end{align}

\textbf{III. Bound on $T_{33}$:}
Taking expectation on both sides of $T_{33}$, we have
\begin{align}
    \E_{t-\tau}[ T_{33} ] 
    =& \lan \G \M (\btheta_{t-\tau}),\E_{t-\tau}[ \mbf G (\boldsymbol{\Theta}_{t-\tau}, \byt)] - \bG (\boldsymbol{\Theta}_{t-\tau}) \ran \nn \\
    =& \frac{1}{\na} \sumik \lan \G \M (\btheta_{t-\tau}), \E_{t-\tau} [ \mbf G^i (\btheta^i_{t-\tau}, \by^i_t) ] - \bG^i ( \btheta^i_{t-\tau}) \ran \nn \\
    \leq & \frac{1}{l_{cs}} \frac{1}{\na} \sumik \lnr \G \M (\btheta_{t-\tau}) \rnr_s^\star \cdot \lnr \E_{t-\tau} [ \mbf G^i (\btheta^i_{t-\tau}, \by^i_t) ] - \bG^i ( \btheta^i_{t-\tau}) \rnr_c \tag{By \eqref{eq:c_to_s}} \\
    \leq & \frac{1}{l_{cs}} \frac{1}{\na} \sumik \lnr \G \M (\btheta_{t-\tau}) \rnr_s^\star \cdot m_1\|\btheta_{t-\tau}^i\|_c \rho^\tau \tag{Assumption \ref{ass:mixing}} \\
    \leq & \frac{m_1\alpha}{l_{cs}} \frac{1}{\na} \sumik\lnr \G \M (\btheta_{t-\tau}) \rnr_s^\star \|\btheta_{t-\tau}^i\|_c \tag{assumption on $\tau$} \\
    \leq & \frac{m_1L\alpha}{l_{cs}^2\psi} \frac{1}{\na} \sumik\lnr  \btheta_{t-\tau} \rnr_c \|\btheta_{t-\tau}^i\|_c \tag{By Proposition \ref{prop:Moreau} and \eqref{eq:c_to_s}}\\
    \leq & \frac{m_1L\alpha}{l_{cs}^2\psi} \frac{1}{\na} \sumik\lnr  \btheta_{t-\tau} \rnr_c \left[\|\btheta_{t-\tau}\|_c+\|\btheta_{t-\tau}-\btheta_{t-\tau}^i\|_c\right] \tag{tringle inequality}\\
    \leq & \frac{m_1L\alpha}{l_{cs}^2\psi} \lb \lnr \btheta_{t-\tau} \rnr_c^2+\frac{1}{\na} \sumik\left[\frac{1}{2} \lnr \btheta_{t-\tau} \rnr_c^2 +  \frac{1}{2} \left\| \btheta_{t-\tau} - \btheta_{t-\tau}^i \right\|_c^2 \right] \rb \tag{Young's inequality}\\
    \leq & \frac{m_1L\alpha}{l_{cs}^2\psi} \E_{t-\tau} \left[3\left(\lnr  \btheta_{t} \rnr_c^2+\lnr \btheta_{t}- \btheta_{t-\tau} \rnr_c^2\right)+\frac{1}{2}\Omega_{t-\tau}\right]\tag{$(a+b)^2\leq 2a^2+2b^2$}\\
    = & \frac{m_1L\alpha}{l_{cs}^2\psi} \E_{t-\tau} \left[6u_{cm}^2M(\btheta_{t})+3\lnr \btheta_{t}- \btheta_{t-\tau} \rnr_c^2+\frac{1}{2}\Omega_{t-\tau}\right]\label{eq:T_33_bound_1}.
\end{align}
Substituting the bounds on $T_{31}, T_{32}, T_{33}$ from \eqref{eq:T_31_bound_2}, \eqref{eq:T_32_bound_2}, \eqref{eq:T_33_bound_1}, in \eqref{eq:T_3_bound_1}, we get the result.
\end{proof}

\begin{proof}[Proof of Lemma \ref{lem:T_4}]
Denote $T_4=\langle \G \M (\bthetat ), \bG (\boldsymbol{\Theta}_t) - \bG (\bthetat) \rangle$.
By the Cauchy–Schwarz inequality, we have
\begin{align}
    T_4 \leq \underbrace{\| \G \M (\bthetat) \|_s^\star}_{T_{41}} \cdot \underbrace{\| \bG (\boldsymbol{\Theta}_t) - \bG (\bthetat) \|_s}_{T_{42}}. \label{eq:T_4_bound_1}
\end{align}
For $T_{41}$ we have
\begin{align}
     \| \G \M (\bthetat) \|_s^\star =&  \| \G \M (\bthetat)-\G \M (\mbf 0) \|_s^\star \nn\\
     \leq& \frac{L}{\psi} \| \bthetat \|_s \tag{Proposition \ref{prop:Moreau}} \nn \\
     \leq & \frac{L}{l_{cs}\psi} \| \bthetat \|_c\tag{By \eqref{eq:c_to_s}}\\
     \leq &\frac{Lu_{cm}}{l_{cs} \psi} \| \bthetat \|_m \tag{Proposition \ref{prop:Moreau}} \\
     = & \frac{Lu_{cm}}{l_{cs} \psi} \sqrt{2M (\bthetat)}. \label{eq:T_41}
\end{align}
For $T_{42}$ we have
\begin{align}
    \| \bG (\boldsymbol{\Theta}_t) - \bG (\bthetat) \|_s =& \lnr \frac{1}{\na} \sum_{i=1}^\na \lp \bG^i (\bthetait) - \bG^i (\bthetat) \rp \rnr_s \leq \frac{1}{\na} \sum_{i=1}^\na \lnr \bG^i (\bthetait) - \bG^i (\bthetat) \rnr_s \tag{Jensen's inequality} \\
    \leq & \frac{1}{l_{cs} \na} \sum_{i=1}^\na \lnr \bG^i (\bthetait) - \bG^i (\bthetat) \rnr_c \leq \frac{\gamma_c}{l_{cs} \na} \sum_{i=1}^\na \lnr \bthetait - \bthetat \rnr_c, \label{eq:T_42}
\end{align}
where \eqref{eq:T_42} follows from Assumption \ref{ass:contraction}.
Combining \eqref{eq:T_41} and \eqref{eq:T_42}, for an arbitrary $\zetafive>0$, we get
\begin{align}
    T_4 \leq & \frac{Lu_{cm}}{l_{cs} \psi} \sqrt{2M (\bthetat)} \cdot \frac{\gamma_c}{l_{cs} \na} \sum_{i=1}^\na \lnr \bthetait - \bthetat \rnr_c \nn \\
    &\leq \frac{1}{\na} \sum_{i=1}^\na \zetafive\sqrt{ 2 M(\bthetat)} \cdot \frac{1}{\zetafive } \lnr \bthetait - \bthetat \rnr_c \cdot \frac{L u_{cm} }{l_{cs}^2 \psi} \tag{$\gamma_c\leq 1$}\\
    &\leq \frac{1}{\na} \sum_{i=1}^\na \lb \zetafive^2  M (\bthetat) + \frac{1}{2 \zetafive^2 } \lnr \bthetait - \bthetat \rnr_c^2 \cdot \frac{L^2 u_{cm}^2 }{l_{cs}^4 \psi^2} \rb \tag{Young's inequality} \\
    &= \zetafive^2 M(\bthetat) +  \frac{L^2 u_{cm}^2 }{2 l_{cs}^4 \zetafive^2  \psi^2} \Omega_t. \label{eq:T_4_bound_2}
\end{align}
\end{proof}

\begin{proof}[Proof of Lemma \ref{lem:T_5}]
Denote $T_5=\lnr \mbf G (\boldsymbol{\Theta}_t, \byt) - \bthetat + \mbf b(\byt)  \rnr_s^2$. We have
\begin{align}
    T_5 =& \lnr \mbf G (\bthetat, \byt)+\mbf G (\boldsymbol{\Theta}_t, \byt) - \mbf G (\bthetat, \byt) - \bthetat + \mbf b(\byt) \rnr_s^2 \nn \\
    \leq & 3 \underbrace{\lnr \mbf G (\bthetat, \byt) - \bthetat \rnr_s^2}_{T_{51}} + 3 \underbrace{\lnr \mbf G (\boldsymbol{\Theta}_t, \byt) - \mbf G (\bthetat, \byt) \rnr_s^2}_{T_{52}} + 3 \lnr \mbf b(\byt) \rnr_s^2. \label{eq:T_5_bound_1}
\end{align}
For $T_{51}$ we have
\begin{align}
    T_{51}
    \leq & l_{cs}^{-2} \lp \lnr \mbf G (\bthetat, \byt) \rnr_c + \lnr \bthetat \rnr_c \rp^2 \tag{By \eqref{eq:c_to_s}} \\
    \leq & l_{cs}^{-2} \lp A_2 \lnr \bthetat \rnr_c + \lnr \bthetat \rnr_c \rp^2 \tag{Assumption \ref{ass:lipschitz}} \\
    = & \frac{2 (A_2+1)^2 u_{cm}^2}{l_{cs}^2} M (\bthetat). \label{eq:T_51}
\end{align}
For $T_{52}$ we have
\begin{align}
    T_{52} =& \lnr \frac{1}{\na} \sumik (\mbf G^i(\bthetait, \byit) - \mbf G^i(\bthetat, \byit)) \rnr_s^2 \leq \frac{1}{\na} \sumik \lnr \mbf G^i(\bthetait, \byit) - \mbf G^i(\bthetat, \byit) \rnr_s^2 \tag{$\|\cdot\|_s^2$ is convex} \\
    \leq & \frac{1}{\na l_{cs}^2} \sumik \lnr \mbf G^i(\bthetait, \byit) - \mbf G^i(\bthetat, \byit) \rnr_c^2\tag{By \eqref{eq:c_to_s}} \\
    \leq& \frac{A_1^2}{l_{cs}^{2}} \frac{1}{\na} \sumik \lnr \bthetait - \bthetat \rnr_c^2 \tag{Assumption \ref{ass:lipschitz}}\\
    = & \frac{A_1^2}{l_{cs}^{2}} \Omega_t. \label{eq:T_52}
\end{align}
Using the bounds in \eqref{eq:T_51}, \eqref{eq:T_52}, we get
\begin{align}
    T_5 \leq & \frac{6 (A_2+1)^2 u_{cm}^2}{l_{cs}^2} M (\bthetat) + \frac{3 A_1^2}{l_{cs}^{2}} \Omega_t + 3 \lnr \mbf b(\byt) \rnr_s^2, \label{eq:T_5_bound_2}
\end{align}
where the last inequality is by the assumption on $\tau$.
\end{proof}

\begin{proof}[Proof of Lemma \ref{lem:theta_diff}]
Define $\theta_l = \frac{1}{\na} \sumik \left\|  \btheta_l^i \right\|_c$. 
By the update rule of Algorithm \ref{alg:fed_stoch_app}, if $l+1~ \text{mod}~ K\neq 0$,
we have
\begin{align}
    \theta_{l+1} = &\frac{1}{\na}\sumik\left\|  \btheta_{l+1}^i\right\|_c = \frac{1}{\na}\sumik\left\|  \btheta_{l}^i + \alpha(\mbf G^i (\btheta_l^i, \by_l^i) - \btheta_l^i+\mbf b^i(\by_l^i)) \right\|_c \nn \\
    \leq & \theta_l + \frac{\alpha}{\na} \sumik \left[ \|\mbf G^i (\btheta_l^i, \by_l^i) \|_c + \| \btheta_l^i \|_c + \| \mbf b^i(\by_l^i) \|_c \right] \tag{triangle inequality}\\
    \leq & \theta_l + \alpha\frac{1}{\na}\sumik \left[(A_2+1) \| \btheta_l^i \|_c + B\right]\tag{Assumption \ref{ass:lipschitz}} \\
    = & (1+\alpha(A_2+1))\theta_l  + \alpha B. \label{eq:theta_norm_avg_1}
\end{align}
Furthermore, if $l+1~ \text{mod}~ K= 0$, we have $\btheta_{l+1}^i = \frac{1}{\na}\sum_{j=1}^\na (\btheta_{l}^j + \alpha(\mbf G^j (\btheta_l^j, \by_l^j) - \btheta_l^j+\mbf b^j(\by_l^j)))$, and hence 
\begin{align}
    \theta_{l+1} = &\frac{1}{\na}\sumik\left\|  \btheta_{l+1}^i\right\|_c = \frac{1}{\na}\sumik\left\|  \frac{1}{\na}\sum_{j=1}^\na (\btheta_{l}^j + \alpha(\mbf G^j (\btheta_l^j, \by_l^j) - \btheta_l^j+\mbf b^j(\by_l^j))) \right\|_c \nn \\
    \leq & \theta_l + \frac{\alpha}{\na} \sumik \left[ \|\mbf G^i (\btheta_l^i, \by_l^i) \|_c + \| \btheta_l^i \|_c + \| \mbf b^i(\by_l^i) \|_c \right], \tag{triangle inequality}
\end{align}
and the same bound as in \eqref{eq:theta_norm_avg_1} holds. By recursive application of \eqref{eq:theta_norm_avg_1}, we get
\begin{align}
    \theta_{l+1} \leq & (1 + \alpha(A_2 + 1))^{l+1} \theta_0 + \alpha B \sum_{\ell=0}^{l} (1 + \alpha (A_2+1))^{\ell} \nn \\
    = & (1+\alpha(A_2+1))^{l+1} \theta_0 + \alpha B\frac{(1 + \alpha(A_2 + 1))^{l+1} - 1}{\alpha (A_2 + 1)}. \label{eq:theta_norm_avg_2}
\end{align}
Notice that for $x\leq \frac{\log 2}{\tau }$, we have $(1+x)^{\tau+1}\leq 1+2x(\tau+1)$. If $0\leq l\leq 2\tau-1$, by the assumption on $\alpha$, we have $(1+\alpha (A_2+1))^{l+1}\leq(1+\alpha (A_2+1))^{2\tau} \leq 1 + 4\alpha (A_2+1) \tau \leq 2$. Hence, we have
\begin{align}
    \theta_l \leq & 2 \theta_0 + \alpha B\frac{4\alpha(A_2+1)\tau}{\alpha(A_2+1)} = 2 \theta_0 + 4 \alpha B \tau. \label{eq:theta_norm_avg_3}
\end{align}
Furthermore, we have
\begin{align}
    \| \btheta_{l+1} - \btheta_l \|_c =& \alpha \| \mbf G (\boldsymbol{\Theta}_l, \by_l) - \btheta_l + \mbf b(\by_l) \|_c \nn \\
    \leq & \alpha \| \mbf G (\boldsymbol{\Theta}_l, \by_l) - \btheta_l\|_c + \alpha\|\mbf b(\by_l) \|_c\tag{triangle inequality}\\
    =&\alpha \left\| \frac{1}{\na}\sumik(\mbf G^i (\btheta_l^i, \by_l^i) - \btheta_l^i)\right\|_c + \alpha\|\mbf b(\by_l) \|_c\nn\\
    \leq & \alpha\frac{1}{\na}\sumik\left\| \mbf G^i (\btheta_l^i, \by_l^i) - \btheta_l^i\right\|_c + \alpha B\tag{convexity of norm}\\
    \leq & \alpha\frac{1}{\na}\sumik(A_2+1)\left\|  \btheta_l^i\right\|_c + \alpha B\tag{Assumption \ref{ass:lipschitz}}\\
    = & \alpha (A_2+1)\theta_l + \alpha B. \label{eq:lem:theta_diff_consec}
\end{align}

Suppose $0\leq t \leq 2\tau$. We have
\begin{align}
     \| \btheta_t - \btheta_{0} \|_c \leq & \left[ \sum_{k=0}^{t-1} \| \btheta_{k+1} - \btheta_k \|_c \right] \tag{triangle inequality} \\
    \leq & \left[\sum_{k=0}^{t-1} \alpha (A_2 + 1) \theta_k + \alpha B  \right] \tag{By \eqref{eq:lem:theta_diff_consec}} \\
    \leq &B\alpha t +  \alpha (A_2+1) \sum_{k=0}^{t-1} (2 \theta_0 + 4 B \alpha \tau) \tag{By \eqref{eq:theta_norm_avg_3}} \\
    \leq & 2 \alpha \tau (B + (A_2 + 1) (2 \theta_0 + 4 B \alpha \tau)) \tag{$t\leq 2\tau$}\\
    \leq & \frac{1}{C_1} \lp B + (A_2 + 1) \lp \theta_0 + \frac{B}{2 C_1} \rp \rp. \label{eq:lem:theta_diff_t_to_0}
\end{align}
Furthermore, by Proposition \ref{prop:Moreau}, we have $M(\btheta_t)=\frac{1}{2}\|\btheta_t\|_m^2$, and hence for any $0\leq t\leq 2\tau$ we have
\begin{align*}
    M(\btheta_t) \leq & \frac{1}{2l_{cm}^2}\|\btheta_t\|_c^2 \\
    =& \frac{1}{2l_{cm}^2}(\|\btheta_t-\btheta_0+\btheta_0\|_c)^2\\
    \leq& \frac{1}{2l_{cm}^2}(\|\btheta_t-\btheta_0\|_c+\|\btheta_0\|_c)^2\tag{triangle inequality}\\
    \leq& \frac{1}{l_{cm}^2}(\|\btheta_t-\btheta_0\|_c^2+\|\btheta_0\|_c^2)\\
    \leq & \frac{1}{l_{cm}^2}\left(\frac{1}{C_1^2} \lp B + (A_2 + 1) \lp \|\btheta_0\|_c + \frac{B}{2 C_1} \rp \rp^2+\|\btheta_0\|_c^2\right),
\end{align*}
which proves the first claim. 

Next we prove the second claim. By the update rule in \ref{eq:theta_update}, we have
\begin{align}
    \| \btheta_{l+1} - \btheta_l \|^2_c =& \alpha^2 \| \mbf G (\boldsymbol{\Theta}_l, \bY_l) - \btheta_l + \mbf b(\bY_l) \|^2_c \nn \\
    \leq & 3 \alpha^2 \| \mbf G (\boldsymbol{\btheta}_l, \bY_l) - \btheta_l \|_c^2 + 3 \alpha^2 \| \mbf b(\bY_l) \|^2_c + 3 \alpha^2 \|\mbf G (\boldsymbol{\Theta}_l, \bY_l) - \mbf G (\boldsymbol{\btheta}_l, \bY_l) \|^2_c \nn \\
    \leq & 6 \alpha^2 (\| \mbf G (\boldsymbol{\btheta}_l, \bY_l)\|_c^2 + \| \btheta_l \|_c^2) + 3 \alpha^2 \| \mbf b(\bY_l) \|^2_c + 3 \alpha^2 \|\mbf G (\boldsymbol{\Theta}_l, \bY_l) - \mbf G (\boldsymbol{\btheta}_l, \bY_l)\|^2_c \nn \\
    = & 6 \alpha^2 \left(\left\|\frac{1}{\na}\sumik \mbf G^i (\boldsymbol{\btheta}_l, \by_l^i)\right\|_c^2 + \| \btheta_l \|_c^2\right) + 3 \alpha^2 \| \mbf b(\bY_l) \|^2_c + 3 \alpha^2 \|\mbf G (\boldsymbol{\Theta}_l, \bY_l) - \mbf G (\boldsymbol{\btheta}_l, \bY_l)\|^2_c \nn \\
    \leq & 6 \alpha^2 \left(\left(\frac{1}{\na}\sumik\left\| \mbf G^i (\boldsymbol{\btheta}_l, \by_l^i)\right\|_c\right)^2 + \| \btheta_l \|_c^2\right) + 3 \alpha^2 \| \mbf b(\bY_l) \|^2_c + 3 \alpha^2 \|\mbf G (\boldsymbol{\Theta}_l, \bY_l) - \mbf G (\boldsymbol{\btheta}_l, \bY_l)\|^2_c \tag{convexity of norm} \\
    \leq & 6 \alpha^2 \left(\left(\frac{1}{\na}\sumik A_2\|\boldsymbol{\btheta}_l\|_c\right)^2 + \| \btheta_l \|_c^2\right) + 3 \alpha^2 \| \mbf b(\bY_l) \|^2_c + 3 \alpha^2 \|\mbf G (\boldsymbol{\Theta}_l, \bY_l) - \mbf G (\boldsymbol{\btheta}_l, \bY_l)\|^2_c\tag{Assumption \ref{ass:lipschitz}}\\
    =& 6\alpha^2 ( A^2_2 + 1) \| \btheta_l \|_c^2 + 3\alpha^2\|\mbf b(\bY_l) \|^2_c+3\alpha^2 \|\mbf G (\boldsymbol{\Theta}_l, \bY_l)-\mbf G (\boldsymbol{\btheta}_l, \bY_l)\|^2_c\nn\\
    \leq& 6\alpha^2 ( A^2_2 + 1)\|\btheta_l\|_c^2 + 3\alpha^2\|\mbf b(\bY_l) \|^2_c+3\alpha^2 A_1^2\Delta_l^2, \label{eq:lem:theta_diff_sq_consec_0}
\end{align}
where \eqref{eq:lem:theta_diff_sq_consec_0} follows from Lemma \ref{lem:G_diff}.
Taking square root on both sides, we get
\begin{align}
    \| \btheta_{l+1} - \btheta_l \|_c\leq 3\alpha \sqrt{ A^2_2 + 1}\|\btheta_l\|_c + 2\alpha\|\mbf b(\bY_l) \|_c+2\alpha A_1\Delta_l.\label{eq:theta_diff_consec_1}
\end{align}
Combining the above inequality with the fact that $\| \btheta_{l+1} \|_c - \| \btheta_l \|_c \leq \| \btheta_{l+1} - \btheta_l \|_c$, we get
\begin{align}
    \|\btheta_{l+1}\|_c\leq & (1 + 3 \alpha \sqrt{ A^2_2 + 1}) \| \btheta_l \|_c + 2 \alpha \|\mbf b(\bY_l) \|_c + 2 \alpha A_1 \Delta_l \nonumber \\
    =& (1 + \alpha C_1) \| \btheta_l \|_c + 2 \alpha \| \mbf b(\bY_l) \|_c + 2 \alpha A_1 \Delta_l, \label{eq:theta_l_norm_1}
\end{align}
where we denote $C_1 = 3\sqrt{ A^2_2 + 1}$. Assuming $t-\tau\leq l\leq t$, and taking expectation on both sides, we have
\begin{align}
    & \E_{t-2\tau}[\| \btheta_{l+1} \|_c] \leq (1 + \alpha C_1) \E_{t-2\tau} [\| \btheta_l \|_c] + 2 \alpha \E_{t-2\tau}[\| \mbf b(\bY_l) \|_c] + 2 \alpha A_1 \E_{t-2\tau}[\Delta_l] \nn \\
    & \leq (1+\alpha C_1)\E_{t-2\tau}[\|\btheta_l\|_c] + 2\alpha\frac{u_{cD}}{l_{cD}}\left[\frac{B}{\sqrt{N}}+\sqrt{Bm_4}\rho^{(l-t+2\tau)/2}\right] +2\alpha A_1\E_{t-2\tau}[\Delta_l] \tag{Lemma \ref{lem:after_expectation}}\\
    & \leq (1 + \alpha C_1) \E_{t-2\tau} [\|\btheta_l\|_c] + 2 \alpha \frac{u_{cD}}{l_{cD}} \left[\frac{B}{\sqrt{\na}} +\sqrt{Bm_4}  \alpha \sqrt{\rho}^{l - t + \tau} \right] + 2 \alpha A_1 \E_{t-2\tau}[\Delta_l] \tag{Assumption on $\tau$} \\
    &= (1 + \alpha C_1) \E_{t-2\tau} [\| \btheta_l \|_c] + \alpha c_t (l) + 2 \alpha A_1 \E_{t-2\tau} [\Delta_l], \label{eq:theta_l_norm_2}
\end{align}
where $c_t (l) = 2 \frac{u_{cD}}{l_{cD}} \left[ \frac{B}{\sqrt{\na}} + \sqrt{Bm_4} \alpha \sqrt{\rho}^{l-t+\tau} \right]$.
By applying this inequality recursively, we have
\begin{align}
    & \E_{t-2\tau} [\| \btheta_{l+1} \|_c] \leq (1 + \alpha C_1) \E_{t-2\tau} [\| \btheta_{l} \|_c] + \alpha c_t (l) + 2\alpha A_1 \E_{t-2\tau} [\Delta_l] \nn \\
    & \leq (1+\alpha C_1)\left[ (1+\alpha C_1) \E_{t-2\tau} [\| \btheta_{l-1} \|_c] + \alpha c_t(l-1) + 2 \alpha A_1 \E_{t-2\tau} [\Delta_{l-1}] \right] + \alpha c_t (l) + 2 \alpha A_1 \E_{t-2\tau} [\Delta_l]\nonumber\\
    & \leq (1 + \alpha C_1)^{l+1-t+\tau} \E_{t-2\tau} [\| \btheta_{t-\tau} \|_{c}] + \alpha \sum_{k=t-\tau}^l (1 + \alpha C_1)^{l-k} c_t (k) + 2 \alpha A_1 \E_{t-2\tau} \left[ \sum_{k=t-\tau}^l (1 + \alpha C_1)^{l-k} \Delta_{k} \right] \nn \\
    & \leq (1 + \alpha C_1)^{\tau+1} \E_{t-2\tau} \| \btheta_{t-\tau} \|_{c} + \alpha \underbrace{\sum_{k=t-\tau}^t (1 + \alpha C_1)^{t-k} c_t (k)}_{T_1} + 2 \alpha A_1 \E_{t-2\tau} \underbrace{ \left[ \sum_{k=t-\tau}^t (1 + \alpha C_1)^{t-k} \Delta_{k} \right]}_{T_2}. \label{eq:theta_l_norm_3}
\end{align}

We study $T_1$ and $T_2$ in \eqref{eq:theta_l_norm_3} separately. For $T_1$ we have
\begin{align}
    T_1 &= 2\frac{u_{cD}}{l_{cD}}\sum_{k=0}^\tau (1+\alpha C_1)^{\tau-k} \left[\frac{B}{\sqrt{\na}} + \sqrt{Bm_4}\alpha\sqrt{\rho}^{k}\right] \nn \\
    &= 2\frac{u_{cD}}{l_{cD}} \left[\frac{B}{\sqrt{\na}} \frac{(1+\alpha C_1)^{\tau+1}-1}{\alpha C_1}+ \sqrt{Bm_4}\alpha(1+\alpha C_1)^\tau\sum_{k=0}^\tau \left(\frac{\sqrt{\rho}}{1+\alpha C_1}\right)^k\right] \nn \\ 
    &\leq 2\frac{u_{cD}}{l_{cD}}\left[\frac{B}{\sqrt{\na}} \frac{(1+\alpha C_1)^{\tau+1}-1}{\alpha C_1}+ \sqrt{Bm_4}\alpha(1+\alpha C_1)^\tau\sum_{k=0}^\tau \sqrt{\rho}^{k}\right]\tag{$\alpha>0$}\\
    &\leq 2\frac{u_{cD}}{l_{cD}}\left[\frac{B}{\sqrt{\na}} \frac{(1+\alpha C_1)^{\tau+1}-1}{\alpha C_1}+ \sqrt{Bm_4}\alpha(1+\alpha C_1)^\tau\frac{1}{1-\sqrt{\rho}}\right]. \label{eq:theta_l_norm_T1_1}
\end{align}
Notice that for $x \leq \frac{\log 2}{\tau }$, we have $(1+x)^{\tau+1}\leq 1+2x(\tau+1)$. By the assumption on $\alpha$, we have $(1+\alpha C_1)^{\tau+1} \leq 1 + 2\alpha C_1 (\tau+1)\leq 1+4\alpha \tau C_1 \leq 2$ and $(1+\alpha C_1)^{\tau} \leq 1 + 2\alpha C_1 \tau \leq 1 + 1/2 \leq 2$. Hence, we have
\begin{align}
    T_1 \leq 2 \frac{u_{cD}}{l_{cD}} \left[ \frac{B}{\sqrt{\na}} 2 (\tau + 1) + \frac{2\sqrt{Bm_4}  \alpha}{1 - \sqrt{\rho}} \right]. \label{eq:theta_l_norm_T1_2}
\end{align}
Furthermore, for the term $T_2$ we have 
\begin{align}
    T_2 =& \sum_{k=0}^\tau (1 + \alpha  C_1)^{\tau-k} \Delta_{t-\tau + k} \leq \sum_{k=0}^\tau (1 + \alpha C_1)^{\tau} \Delta_{t-\tau + k} \tag{due to $\alpha>0$} \\
    \leq & \sum_{k=0}^\tau (1 + 2 \alpha C_1 \tau) \Delta_{t-\tau+k} \leq 2 \sum_{k=0}^\tau \Delta_{t-k}. \label{eq:theta_l_norm_T2_1}
\end{align}
Subtituting \eqref{eq:theta_l_norm_T1_2}, \eqref{eq:theta_l_norm_T2_1} in \eqref{eq:theta_l_norm_3}, for every $t-\tau\leq l\leq t$, we get
\begin{align}
    \E_{t-2\tau} [\| \btheta_{l}\|_c] \leq 2 \E_{t-2\tau}[\|\btheta_{t-\tau}\|_c] + \alpha \frac{2u_{cD}}{l_{cD}}\left[\frac{4B}{\sqrt{\na}} \tau+ \frac{2\sqrt{Bm_4}\alpha}{1-\sqrt{\rho}}\right]  + 4 A_1 \alpha \sum_{k=0}^\tau  \Delta_{t-k}. \label{eq:theta_l_norm_4} 
\end{align}
But we have
\begin{align}
    & \E_{t-2\tau} \| \btheta_t - \btheta_{t-\tau} \|_c \leq \E_{t-2\tau} \left[ \sum_{i=t-\tau}^{t-1} \| \btheta_{i+1} - \btheta_i \|_c \right] \tag{triangle inequality} \\
    & \leq \sum_{i=t-\tau}^{t-1} \E_{t-2\tau} \left[ \alpha C_1 \| \btheta_i \|_c + 2 \alpha \| \mbf b (\bY_i) \|_c + 2 \alpha A_1 \Delta_i \right] \tag{by \eqref{eq:theta_diff_consec_1}} \\
    & \leq \sum_{i=t-\tau}^{t-1} \alpha C_1 \left[ 2 \E_{t-2\tau} [\| \btheta_{t-\tau} \|_c] + \alpha \frac{2 u_{cD}}{l_{cD}} \left[ \frac{4 B}{\sqrt{\na}} \tau + \frac{2\sqrt{Bm_4} \alpha}{1-\sqrt{\rho}} \right]  + 4 A_1 \alpha \sum_{j=0}^\tau  \E_{t-2\tau}[ \Delta_{t-j} ] \right] \tag{by \eqref{eq:theta_l_norm_4}} \\
    & \quad + 2 \alpha  \sum_{i=t-\tau}^{t-1} \frac{u_{cD}}{l_{cD}} \left[ \frac{B}{\sqrt{\na}} + \sqrt{Bm_4} \sqrt{\rho}^{i-t+2\tau} \right] \tag{Lemma \ref{lem:after_expectation}} \\
    & \quad + 2\alpha A_1 \sum_{i=t-\tau}^{t-1} \E_{t-2\tau} [\Delta_i] \tag{assumption on $\alpha$} \\
    & \leq \alpha\tau C_1 \left[ 2 \E_{t-2\tau} [\| \btheta_{t-\tau} \|_c] + \alpha \frac{2 u_{cD}}{l_{cD}} \left[ \frac{4B}{\sqrt{\na}} \tau + \frac{2\sqrt{Bm_4} \alpha }{1-\sqrt{\rho}} \right] + 4 A_1 \alpha \sum_{j=0}^\tau  \E_{t-2\tau}[\Delta_{t-j}] \right] \nn \\
    & \quad + 2 \alpha \tau \frac{u_{cD}}{l_{cD}} \frac{B}{\sqrt{\na}} + 2 \alpha^2 \frac{u_{cD}}{l_{cD}}\sqrt{Bm_4}  \frac{1}{1-\sqrt{\rho}} +2 \alpha A_1 \sum_{i=t-\tau}^{t-1} \E_{t-2\tau} [\Delta_i] \tag{assumption on $\tau$} \\
    & \leq 2 \alpha \tau C_1 \E_{t-2\tau}[\| \btheta_{t-\tau} \|_c] + \frac{u_{cD}}{l_{cD}} \frac{B}{\sqrt{\na}} \left[8 C_1 \alpha^2 \tau^2 + 2 \alpha \tau \right] + \frac{1}{1-\sqrt{\rho}}\frac{u_{cD}}{l_{cD}}\sqrt{Bm_4}\left[4C_1\alpha^3\tau+2\alpha^2\right] \nonumber\\
    & \quad + (4 A_1 \alpha^2 \tau C_1 + 2 \alpha A_1) \sum_{i=t-\tau}^{t} \E_{t-2\tau} [\Delta_i] \tag{$\Delta_i \geq 0$} \\
    & \leq 2 \alpha \tau C_1 \E_{t-2\tau}[\| \btheta_{t-\tau} \|_c] + 4 \alpha \tau \frac{u_{cD}}{l_{cD}} \frac{B}{\sqrt{\na}} + \frac{u_{cD}}{l_{cD}} \frac{4\sqrt{Bm_4}}{1-\sqrt{\rho}} \alpha^2 + 3 A_1 \alpha \sum_{i=t-\tau}^{t} \E_{t-2\tau} [\Delta_i]. \label{eq:lem_theta_diff_3}
\end{align}
Furthermore, by triangle inequality, we have $\|\btheta_{t-\tau}\|_c\leq \|\btheta_{t}-\btheta_{t-\tau}\|_c + \|\btheta_{t}\|_c$. By assumption on $\alpha$, we have $2\alpha\tau C_1\|\btheta_{t-\tau}\|_c\leq 2\alpha\tau C_1\|\btheta_{t}-\btheta_{t-\tau}\|_c + 2\alpha\tau C_1\|\btheta_{t}\|_c\leq 0.5 \|\btheta_{t}-\btheta_{t-\tau}\|_c + 2\alpha\tau C_1\|\btheta_{t}\|_c$. By taking expectation on both sides, and substituting it in \eqref{eq:lem_theta_diff_3}, we get \eqref{eq:lem_theta_diff_2}.
\end{proof}

\begin{proof}[Proof of Lemma \ref{lem:theta_diff_2}]
From \eqref{eq:theta_l_norm_1} we have
\begin{align}
    \| \btheta_{l+1} \|_c^2 \leq & (1 + \alpha C_1)^2 \| \btheta_l \|_c^2 + 4 \alpha^2 \| \mbf b(\bY_l) \|_c^2+4\alpha^2 A_1^2\Delta_l^2\nonumber\\
    &+\underbrace{4\alpha(1+\alpha C_1)\|\btheta_l\|_c\|\mbf b(\bY_l) \|_c}_{T_1} + \underbrace{4\alpha A_1(1+\alpha C_1)\|\btheta_l\|_c\Delta_l}_{T_2} + \underbrace{8 \alpha ^2 A_1\Delta_l\|\mbf b(\bY_l) \|_c}_{T_3}. \label{eq:norm_theta_sq_1}
\end{align}
For $T_1$ we have
\begin{align}
    T_1 =&  2 \sqrt{\alpha (1 + \alpha C_1)} \| \btheta_l \|_c \cdot 2 \sqrt{\alpha (1 + \alpha C_1)} \| \mbf b(\bY_l) \|_c \nn \\
    \leq & 2 \alpha (1 + \alpha C_1) \| \btheta_l \|_c^2 + 2 \alpha (1 + \alpha C_1) \| \mbf b(\bY_l) \|_c^2 \tag{$ab\leq \frac{1}{2}a^2+\frac{1}{2}b^2$}\\
    \leq & 4 \alpha \| \btheta_l \|_c^2 + 4 \alpha \|\mbf b(\bY_l) \|_c^2, \label{eq:norm_theta_sq_T1}
\end{align}
where the last inequality is by the assumption on $\alpha$. Analogously for $T_2$ we have
\begin{align}
    T_2=& 2 \sqrt{\alpha (1 + \alpha C_1)} \| \btheta_l \|_c \cdot 2 A_1 \sqrt{\alpha (1 + \alpha C_1)} \Delta_l \nn \\
    \leq & 4 \alpha \| \btheta_l \|_c^2 + 4 \alpha A_1^2 \Delta_l^2. \label{eq:norm_theta_sq_T2}
\end{align}
For $T_3$ we have
\begin{align}
    T_3 = &2\alpha \|\mbf b(\bY_l) \|_c \cdot 4\alpha A_1 \Delta_l \nn \\
    \leq & 2\alpha ^2 \|\mbf b(\bY_l) \|_c^2 + 8 \alpha^2A_1^2\Delta_l^2. \label{eq:norm_theta_sq_T3}
\end{align}
Combining the bounds in \eqref{eq:norm_theta_sq_T1}, \eqref{eq:norm_theta_sq_T2}, and \eqref{eq:norm_theta_sq_T3}, and noting that $(1 + \alpha C_1)^2 \leq 1 + 3\alpha C_1$,  from \eqref{eq:norm_theta_sq_1} we have 
\begin{align}
    \|\btheta_{l+1}\|_c^2\leq& (1+3\alpha C_1)\|\btheta_l\|_c^2 + 4\alpha^2\|\mbf b(\bY_l) \|_c^2+4\alpha^2 A_1^2\Delta_l^2 \nonumber \\
    & +4\alpha\|\btheta_l\|_c^2 + 4\alpha\|\mbf b(\bY_l) \|_c^2 + 4\alpha\|\btheta_l\|_c^2 + 4\alpha A_1^2\Delta_l^2 + 2\alpha ^2 \|\mbf b(\bY_l) \|_c^2 + 8 \alpha^2A_1^2\Delta_l^2 \nonumber \\
    = & (1+\alpha (3C_1+8))\|\btheta_l\|_c^2+ (6\alpha^2+4\alpha)\|\mbf b(\bY_l) \|_c^2+ A_1^2(12\alpha^2+4\alpha)\Delta_l^2 \nonumber \\
    \leq & (1+\alpha C_2)\|\btheta_l\|_c^2+10\alpha \|\mbf b(\bY_l) \|_c^2 + 16\alpha A_1^2\Delta_l^2. \label{eq:norm_theta_sq_2}
\end{align}
where $C_2 = 3C_1 + 8$. Taking expectation on both sides, we have
\begin{align}
    & \E_{t-2\tau}\|\btheta_{l+1}\|_c^2 \leq (1+\alpha C_2)\E_{t-2\tau}\|\btheta_l\|_c^2+10\alpha \E_{t-2\tau}\|\mbf b(\bY_l) \|_c^2 + 16\alpha A_1^2\E_{t-2\tau}[\Delta_l^2] \nn \\
    & \leq (1+\alpha C_2)\E_{t-2\tau}\|\btheta_l\|_c^2+10\alpha \frac{u_{cD}^2}{l_{cD}^2}\left[\frac{B^2}{\na}+Bm_4\rho^{l-t+2\tau}\right] + 16\alpha A_1^2\E_{t-2\tau}[\Delta_l^2] \tag{Lemma \ref{lem:after_expectation}}\\
    &= (1+\alpha C_2)\E_{t-2\tau}\|\btheta_l\|_c^2+\alpha \bar{c}_t (l) + 16\alpha A_1^2\E_{t-2\tau}[\Delta_l^2], \label{eq:norm_theta_sq_3}
\end{align}
where $\bar{c}_t (l)=10 \frac{u_{cD}^2}{l_{cD}^2}\left[\frac{B^2}{\na}+Bm_4\alpha^2\rho^{l-t+\tau}\right]$. Hence, for any $t-\tau\leq l\leq t$, we have
\begin{align}
    & \E_{t-2\tau} \| \btheta_{l+1} \|_c^2 \leq (1+\alpha C_2)\left[(1+\alpha C_2)\E_{t-2\tau}\|\btheta_{l-1}\|_c^2+\alpha \bar{c}_t (l-1) + 16\alpha A_1^2\E_{t-2\tau}[\Delta_{l-1}^2]\right] \nonumber\\
    & \qquad + \alpha \bar{c}_t (l) + 16 \alpha A_1^2\E_{t-2\tau}[\Delta_l^2]\nonumber \\
    & \leq  (1+\alpha C_2)^{l+1-t+\tau}\E_{t-2\tau}\|\btheta_{t-\tau}\|_c^2 + \alpha \sum_{i=t-\tau}^l (1+\alpha C_2)^{l-i} \bar{c}_t (i) + 16 \alpha A_1^2\E_{t-2\tau}\left[\sum_{i=t-\tau}^l (1 + \alpha C_2)^{l-i} \Delta_i^2 \right] \nn \\
    & \leq (1+\alpha C_2)^{\tau+1}\E_{t-2\tau}\|\btheta_{t-\tau}\|_c^2 + \alpha \underbrace{\sum_{i=t-\tau}^t (1+\alpha C_2)^{t-i} \bar{c}_t (i)}_{T_4} + 16 \alpha A_1^2\E_{t-2\tau}\underbrace{\left[\sum_{i=t-\tau}^t (1+\alpha C_2)^{t-i}\Delta_i^2\right]}_{T_5}. \label{eq:norm_theta_sq_4}
\end{align} 
For the term $T_4$ we have
\begin{align}
    T_4 =& 10\frac{u_{cD}^2}{l_{cD}^2}\sum_{i=0}^\tau(1+\alpha C_2)^{\tau-i}\left[\frac{B^2}{\na}+Bm_4\alpha^2\rho^{i}\right] \nn \\
    =& 10\frac{u_{cD}^2}{l_{cD}^2}\frac{B^2}{\na}\frac{(1+\alpha C_2)^{\tau+1}-1}{\alpha C_2} + 10\frac{u_{cD}^2}{l_{cD}^2}Bm_4\alpha^2(1+\alpha C_2)^\tau \sum_{i=0}^\tau \left(\frac{\rho}{1+\alpha C_2}\right)^i \nn \\
    \leq & 10 \frac{u_{cD}^2}{l_{cD}^2} \frac{B^2}{\na}\frac{(1+\alpha C_2)^{\tau+1}-1}{\alpha C_2} + 10\frac{u_{cD}^2}{l_{cD}^2}Bm_4\alpha^2(1+\alpha C_2)^\tau \sum_{i=0}^\tau \rho^{i}\tag{$\alpha C_2\geq 0$}\\
    \leq & 10\frac{u_{cD}^2}{l_{cD}^2}\frac{B^2}{\na}\frac{(1+\alpha C_2)^{\tau+1}-1}{\alpha C_2} + 10\frac{u_{cD}^2}{l_{cD}^2}Bm_4\alpha^2(1+\alpha C_2)^\tau\frac{1}{1-\rho}. \label{eq:norm_theta_sq_T4_1}
\end{align}
By the same argument as in Lemma \ref{lem:theta_diff}, and by the assumption on $\alpha$, we have $(1+\alpha C_2)^{\tau+1} \leq 1 + 2\alpha C_2 (\tau+1)\leq 1+4\alpha \tau C_2 \leq 2$ and $(1+\alpha C_2)^{\tau} \leq 1 + 2\alpha C_2 \tau \leq 1 + 1/2 \leq 2$. Hence, we have
\begin{align}
    T_4 \leq 40 \frac{u_{cD}^2}{l_{cD}^2} \lb \frac{B^2}{\na}\tau + \frac{Bm_4 \alpha^2}{1-\rho} \rb. \label{eq:norm_theta_sq_T4_2}
\end{align}
Furthermore, for $T_5$, we have
\begin{align}
    T_5=&\sum_{i=0}^\tau (1+\alpha C_2)^{\tau-i}\E_{t-2\tau}[\Delta_{t-\tau + i}^2] \nn \\
    \leq & \sum_{i=0}^\tau (1+\alpha C_2)^{\tau}\E_{t-2\tau}[\Delta_{t-\tau + i}^2]\tag{$\alpha C_2>0$} \nn \\
    \leq& \sum_{i=0}^\tau (1+2\alpha C_2\tau) \E_{t-2\tau}[\Delta_{t-\tau+i}^2] \tag{assumption on $\alpha$} \\
    \leq& 2\sum_{i=0}^\tau  \E_{t-2\tau}[\Delta_{t-i}^2]. \label{eq:norm_theta_sq_T5_1}
\end{align}
Combining the bounds on $T_4$ \eqref{eq:norm_theta_sq_T4_2} and $T_5$ \eqref{eq:norm_theta_sq_T5_1} in \eqref{eq:norm_theta_sq_4} we have for any $t-\tau\leq l\leq t$,
\begin{align}
    \E_{t-2\tau}\|\btheta_{l}\|_c^2\leq 2\E_{t-2\tau}\|\btheta_{t-\tau}\|_c^2 + 40 \frac{u_{cD}^2}{l_{cD}^2} \lb \frac{B^2}{\na} \alpha \tau + Bm_4 \alpha^3 \frac{1}{1-\rho} \rb + 32\alpha A_1^2\sum_{i=0}^\tau  \E_{t-2\tau}[\Delta_{t-i}^2]. \label{eq:norm_theta_sq_5}
\end{align}
Furthermore, we have
\begin{align}
    \| \btheta_t - \btheta_{t-\tau} \|_c^2 \leq & \left( \sum_{i=t-\tau}^{t-1} \| \btheta_{i+1} - \btheta_i \|_c \right)^2 \tag{triangle inequality} \\
    \leq & \tau \sum_{i=t-\tau}^{t-1} \|\btheta_{i+1}-\btheta_i\|_c^2 \nn \\
    \leq & \tau \sum_{i=t-\tau}^{t-1} \lb \alpha^2 C_1^2 \| \btheta_i \|_c^2 + 3 \alpha^2 \| \mbf b(\by_i) \|^2_c + 3 \alpha^2 A_1^2 \Delta_i^2 \rb. \tag{from \eqref{eq:lem:theta_diff_sq_consec_0}}
\end{align}
Taking expectation on both sides, and using the bounds in Lemma \ref{lem:after_expectation} and \eqref{eq:norm_theta_sq_5}, we get
\begin{align}
    & \E_{t-2\tau} \| \btheta_t - \btheta_{t-\tau} \|_c^2 \nn \\
    & \leq \tau \alpha^2 C_1^2 \sum_{i=t-\tau}^{t-1} \lb 2 \E_{t-2\tau} \| \btheta_{t-\tau} \|_c^2 + 40 \frac{u_{cD}^2}{l_{cD}^2} \lp \frac{B^2}{\na} \alpha \tau + \frac{Bm_4 \alpha^3}{1-\rho} \rp + 32 \alpha A_1^2 \sum_{i=0}^\tau  \E_{t-2\tau} [\Delta_{t-i}^2] \rb \nn \\
    & \quad + 3 \alpha^2 \tau \sum_{i=t-\tau}^{t-1} \frac{u_{cD}^2}{l_{cD}^2} \left[\frac{B^2}{\na} + B m_4 \rho^{i-t+2\tau} \right] + 3\alpha^2 A_1^2 \tau \sum_{i=t-\tau}^{t-1} \E_{t-2\tau} \lb \Delta_i^2 \rb \nonumber\\
    & \leq \tau^2 \alpha^2 C_1^2 \left[ 2 \E_{t-2\tau} \| \btheta_{t-\tau} \|_c^2 + 40 \frac{u_{cD}^2}{l_{cD}^2} \lp \frac{B^2}{\na}\alpha\tau + \frac{Bm_4 \alpha^3}{1-\rho} \rp \right] + 3 \alpha^2 \tau \frac{u_{cD}^2}{l_{cD}^2} \left[\frac{\tau B^2}{\na} + \frac{Bm_4 \alpha^2}{1 - \rho} \right] \tag{$ \rho^{\tau} \leq \alpha^2$} \\
    & \quad + \alpha^2 A_1^2 \tau (3 + 32 \alpha \tau C_1^2) \sum_{i=0}^{\tau} \E_{t-2\tau} \Delta_{t-i}^2. \label{eq:first_claim_diff_lemma_2}
\end{align}
Furthermore, by triangle inequality, we have $\|\btheta_{t-\tau}\|_c\leq \|\btheta_{t}-\btheta_{t-\tau}\|_c + \|\btheta_{t}\|_c$. Squaring both sides, we have $\|\btheta_{t-\tau}\|_c^2\leq (\|\btheta_{t}-\btheta_{t-\tau}\|_c + \|\btheta_{t}\|_c)^2\leq 2\|\btheta_{t}-\btheta_{t-\tau}\|_c^2 + 2\|\btheta_{t}\|_c^2$. By assumption on $\alpha$, we have  $2\tau^2\alpha^2C_1^2\|\btheta_{t-\tau}\|_c^2\leq 4\tau^2\alpha^2C_1^2\|\btheta_{t}-\btheta_{t-\tau}\|_c^2 + 4\tau^2\alpha^2C_1^2\|\btheta_{t}\|_c^2\leq 0.5\|\btheta_{t}-\btheta_{t-\tau}\|_c^2 + 4\tau^2\alpha^2C_1^2\|\btheta_{t}\|_c^2$. Substituting this in \eqref{eq:first_claim_diff_lemma_2} we get the result. 
\end{proof}

\begin{proof}[Proof of Lemma \ref{lem:consensus_error}]
For $s \sync + 1 \leq t \leq (s+1) \sync - 1$, where $s = \lfloor t/\sync \rfloor$,
\begin{align}
    & \Ot \triangleq \frac{1}{\na} \sumik \lp \Dit \rp^2 = \frac{1}{\na} \sumik \| \btheta_{t} - \btheta_{t}^i \|_c^2 \nn \\
    &= \frac{1}{\na} \sumik \lnr \lp \btheta^i_{s \sync} - \btheta_{s \sync} \rp + \alpha \sum_{t' = s \sync}^{t-1} \lb \lp \mbf G^i (\btheta^i_{t'}, \by^i_{t'}) - \btheta^i_{t'} + \mathbf{b}^i (\by^i_{t'}) \rp - \lp \mbf G (\boldsymbol{\Theta}_{t'}, \by_{t'}) - \btheta_{t'} + \mbf b(\by_{t'}) \rp \rb \rnr_c^2 \nn \\
    & \leq \frac{2 \alpha^2}{\na} \sumik \lb \lnr \sum_{t' = s \sync}^{t-1} \lb \mathbf{b}^i (\by^i_{t'}) - \mbf b(\by_{t'}) \rb \rnr_c^2 + \lnr \sum_{t' = s \sync}^{t-1} \lb \lp \mbf G^i (\btheta^i_{t'}, \by^i_{t'}) - \btheta^i_{t'} \rp - \lp \mbf G (\boldsymbol{\Theta}_{t'}, \by_{t'}) - \btheta_{t'} \rp \rb \rnr_c^2 \rb \tag{$\because \btheta^i_{s \sync} = \btheta_{s \sync}$ and Young's inequality} \\
    & \leq \frac{2 \alpha^2 u_{c2}^2}{\na} \sumik \lb \sum_{t' = s \sync}^{t-1} \lnr \mathbf{b}^i (\by^i_{t'}) - \mbf b(\by_{t'}) \rnr_2^2 + \sum_{\substack{t', t'' = s \sync \\ t' \neq t''}}^{t-1} \lan \mathbf{b}^i (\by^i_{t'}) - \mbf b(\by_{t'}), \mathbf{b}^i (\by^i_{t''}) - \mbf b(\by_{t''}) \ran \rb \nn \\
    & \quad + \frac{4 \alpha^2}{\na} (t - s \sync) \sum_{t' = s \sync}^{t-1} \sumik \lb u_{c2}^2 \lnr \mbf G^i (\btheta^i_{t'}, \by^i_{t'}) - \mbf G (\boldsymbol{\Theta}_{t'}, \by_{t'}) \rnr_2^2 + \lnr \btheta^i_{t'} - \btheta_{t'} \rnr_c^2 \rb \tag{Triangle inequality, and $ \norm{\cdot}_c \leq u_{c2} \norm{\cdot}_2$} \\
    & \leq \frac{2 \alpha^2 u_{c2}^2}{\na} \sumik \lb \sum_{t' = s \sync}^{t-1} \frac{1}{l_{c2}^2} \lnr \mathbf{b}^i (\by^i_{t'}) \rnr_c^2 + \sum_{\substack{t', t'' = s \sync \\ t' \neq t''}}^{t-1} \lan \mathbf{b}^i (\by^i_{t'}) - \mbf b(\by_{t'}), \mathbf{b}^i (\by^i_{t''}) - \mbf b(\by_{t''}) \ran \rb \nn \\
    & \quad + \frac{4 \alpha^2}{\na} (t - s \sync) \sum_{t' = s \sync}^{t-1} \sumik \lb \frac{u_{c2}^2}{l_{c2}^2} \lnr \mbf G^i (\btheta^i_{t'}, \by^i_{t'}) \rnr_c^2 + \lnr \btheta^i_{t'} - \btheta_{t'} \rnr_c^2 \rb \tag{$ l_{c2} \norm{\cdot}_2 \leq \norm{\cdot}_c$ and $Var(X)\leq E[X^2]$}.
\end{align}
Taking expectation
\begin{align}
    & \mbe \lb \Ot \rb \leq \frac{2 \alpha^2 u_{c2}^2}{l_{c2}^2} (t - s \sync) B^2 + \frac{4 \alpha^2 u_{c2}^2}{\na} \sumik \sum_{\substack{t', t'' = s \sync \\ t' < t''}}^{t-1} \mbe \lan \mathbf{b}^i (\by^i_{t'}) - \mbf b(\by_{t'}), \mbe_{t'} \lb \mathbf{b}^i (\by^i_{t''}) - \mbf b(\by_{t''}) \rb \ran \nn \\
    & \quad + \frac{4 \alpha^2}{\na} (t - s \sync) \sum_{t' = s \sync}^{t-1} \sumik \mbe \lb \frac{2 u_{c2}^2}{l_{c2}^2} \lcb \lnr \mbf G^i (\btheta^i_{t'}, \by^i_{t'}) - \mbf G^i (\btheta_{t'}, \by^i_{t'}) \rnr_c^2 + \lnr \mbf G^i (\btheta_{t'}, \by^i_{t'}) \rnr_c^2 \rcb + \lnr \btheta^i_{t'} - \btheta_{t'} \rnr_c^2 \rb \tag{using Assumption \ref{ass:lipschitz}} \\
    & \leq \frac{2 \alpha^2 u_{c2}^2}{l_{c2}^2} (t - s \sync) B^2 + \frac{4 \alpha^2 u_{c2}^2}{l_{c2} \na} \sumik \sum_{\substack{t', t'' = s \sync \\ t' < t''}}^{t-1} \mbe \lb \norm{\mathbf{b}^i (\by^i_{t'}) - \mbf b(\by_{t'})}_2 \norm{\mbe_{t'} \lb \mathbf{b}^i (\by^i_{t''}) - \mbf b(\by_{t''}) \rb}_c \rb \tag{using $ l_{c2} \norm{\cdot}_2 \leq \norm{\cdot}_c$} \\
    & \quad + \frac{4 \alpha^2 (t-s \sync)}{\na} \sum_{t' = s \sync}^{t-1} \sumik \mbe \left[ \frac{2 u_{c2}^2}{l_{c2}^2} \lcb A_1^2 \lnr \btheta^i_{t'} - \btheta_{t'} \rnr_c^2 + A_2^2 \lnr \btheta_{t'} \rnr_c^2 \rcb + \lnr \btheta^i_{t'} - \btheta_{t'} \rnr_c^2 \right] \tag{Assumption \ref{ass:lipschitz}} \\
    & \overset{(a)}{\leq} \frac{2 \alpha^2 u_{c2}^2}{l_{c2}^2} (t - s \sync) B^2 + \frac{4 \alpha^2 u_{c2}^2}{l_{c2} \na} \frac{2 B}{l_{c2}} \sumik \sum_{\substack{t' = s \sync}}^{t-1} \sum_{\substack{t'' = s \sync \\ t' < t''}}^{t-1} \left[(m_2+m_4) \lb \rho^{t'' - t'} \rb+m_3\eta_i\right] \nn \\
    & \quad + 4 \alpha^2 (t-s \sync) \sum_{t' = s \sync}^{t-1} \lb \lp 1 + \frac{2 A_1^2 u_{c2}^2}{l_{c2}^2} \rp \Omega_{t'} + \frac{2 A_2^2 u_{c2}^2}{l_{c2}^2} \lnr \btheta_{t'} \rnr_c^2 \rb \nn \\
    & \overset{(b)}{\leq} \frac{2 \alpha^2 u_{c2}^2}{l_{c2}^2} (t - s \sync) \lb B^2 + \frac{4 m_2 B \rho}{1 - \rho} \rb + 4 \alpha^2 (t-s \sync) \sum_{t' = s \sync}^{t-1} \lb \lp 1 + \frac{2 A_1^2 u_{c2}^2}{l_{c2}^2} \rp \Omega_{t'} + \frac{2 A_2^2 u_{c2}^2}{l_{c2}^2} \lnr \btheta_{t'} \rnr_c^2 \rb, \label{eq:lem:consensus_error_1}
\end{align}
where $(a)$ follows since
\begin{align*}
    & \mbe \lb \norm{\mathbf{b}^i (\by^i_{t'}) - \mbf b(\by_{t'})}_2 \norm{\mbe_{t'} \lb \mathbf{b}^i (\by^i_{t''}) - \mbf b(\by_{t''}) \rb}_c \rb \nn \\
    & \leq \mbe \lb \lp \norm{\mathbf{b}^i (\by^i_{t'})}_2 + \norm{\mbf b(\by_{t'})}_2 \rp \cdot \lcb \norm{\mbe_{t'} \lb \mathbf{b}^i (\by^i_{t''}) \rb}_c + \norm{\mbe_{t'} \lb \frac{1}{\na} \sum_{j=1}^\na \mbf b(\by^j_{t''}) \rb}_c \rcb \rb \tag{Triangle inequality} \\
    & \leq \frac{2B}{l_{c2}} \cdot \lb (m_2 + m_4) \mbe \lb \rho^{t'' - t'} \rb + m_3 \eta_i \rb. \tag{Assumption \ref{ass:mixing}, \ref{ass:lipschitz}}
\end{align*}
In addition, the inequality in $(b)$ follows since
\begin{align*}
    \sum_{\substack{t' = s \sync}}^{t-1} \sum_{\substack{t'' = s \sync \\ t' < t''}}^{t-1} \left[(m_2+m_4) \lb \rho^{t'' - t'} \rb+m_3\eta_i\right] &\leq (m_2+m_4)\sum_{\substack{t' = s \sync}}^{t-1} \sum_{\substack{t'' = t' + 1}}^{t-1} \rho^{t'' - t'} + m_3(t-sK)(t-1-sK)\eta_i \\
    &= (m_2+m_4)\sum_{\substack{t' = s \sync}}^{t-1} \frac{\rho - \rho^{t-t'}}{1 - \rho} + m_3(t-sK)(t-1-sK)\eta_i\\
    &= (m_2+m_4)\frac{\rho (t - s \sync)-\rho \lp 1 - \rho^{t - s \sync}\rp}{1 - \rho} + m_3 (t-sK)(t-1-sK)\eta_i\\
    & \leq (m_2+m_4)\frac{\rho (t - s \sync)}{1 - \rho}+ m_3(t-sK)(t-1-sK)\eta_i.
\end{align*}

For simplicity, we define $\ta_1 = \frac{2 A_1^2 u_{c2}^2}{l_{c2}^2}, \ta_2 = \frac{2 A_2^2 u_{c2}^2}{l_{c2}^2}, \tb = \frac{u_{c2}}{l_{c2}} B$. 
Hence, \eqref{eq:lem:consensus_error_1} simplifies to
\begin{align}
     \mbe \Ot \leq& 2 \alpha^2 (t - s \sync) \bigg[ \underbrace{\tb^2 \lp 1 + \frac{4 (m_2+m_4) \rho}{B (1 - \rho)} + \frac{4m_3(t-sK-1)}{B}\frac{1}{N}\sumik \eta_i\rp}_{\check{B}} \nonumber\\
    &\qquad\qquad\qquad + 2 \sum_{t' = s \sync}^{t-1} \lp (1 + \ta_1) \mbe \Omega_{t'} + \ta_2 \mbe \lnr \btheta_{t'} \rnr_c^2 \rp \bigg]. \label{eq:lem:consensus_error_2}
\end{align}
Recursively applying \eqref{eq:lem:consensus_error_2}, going back $2$ steps, we see
\begin{align}
    \mbe \Ot & \leq 2 \alpha^2 \check{B} (t - s \sync)  + 4 \alpha^2 (t-s \sync) \ta_2 \sum_{t' = s \sync}^{t-1} \mbe \lnr \btheta_{t'} \rnr_c^2 + 4 \alpha^2 (t-s \sync) (1 + \ta_1) \sum_{t' = s \sync}^{t-2} \mbe \Omega_{t'} \nn \\
    & \quad + 4 \alpha^2 (t-s \sync) (1 + \ta_1) \underbrace{2 \alpha^2 (t - s \sync - 1) \lb \check{B} + 2 \sum_{t' = s \sync}^{t-2} \lp (1 + \ta_1) \mbe \Omega_{t'} + \ta_2 \mbe \lnr \btheta_{t'} \rnr_c^2 \rp \rb}_{\geq \Omega_{t-1}} \nn \\
    &= 2 \alpha^2 \check{B} (t - s \sync)  \lb 1 + 4 \alpha^2 (1 + \ta_1) (t-1-s \sync) \rb \nn \\
    & \quad + 4 \alpha^2 (t-s \sync) \ta_2 \lb \sum_{t' = s \sync}^{t-1} \lnr \btheta_{t'} \rnr_c^2 + 4 \alpha^2 (1 + \ta_1) (t-1-s \sync) \sum_{t' = s \sync}^{t-2} \mbe \lnr \btheta_{t'} \rnr_c^2 \rb \nn \\
    & \quad + 4 \alpha^2 (t-s \sync) (1 + \ta_1) \lb 1 + 4 \alpha^2 (1 + \ta_1) (t-1-s \sync) \rb \sum_{t' = s \sync}^{t-2} \mbe \Omega_{t'} \nn \\
    & \leq 4 \alpha^2 (t-s \sync) \lb 1 + 4 \alpha^2 (1 + \ta_1) (t-1-s \sync) \rb \Bigg[ \check{B} + \ta_2 \sum_{t' = s \sync}^{t-1} \mbe \lnr \btheta_{t'} \rnr_c^2 \nn \\
    & \qquad \qquad \qquad \qquad \qquad \qquad \qquad \qquad \qquad \qquad \qquad + (1 + \ta_1) \sum_{t' = s \sync}^{t-2} \mbe \Omega_{t'} \Bigg]. \label{eq:lem:consensus_error_3}
\end{align}
To derive the bound for going back, in general, $j$ steps (such that $t-j \geq s \sync$), we use an induction argument.
Suppose for going back $k (< j)$ steps, the bound is
\begin{align}
    \mbe \Ot & \leq 4 \alpha^2 (t-s \sync) \check{B} \prod_{\ell=1}^{k-1} \lb 1 + 4 \alpha^2 (1 + \ta_1) (t-\ell-s \sync) \rb \nn \\
    & \quad + 4 \alpha^2 (t-s \sync) \ta_2 \prod_{\ell=1}^{k-1} \lb 1 + 4 \alpha^2 (1 + \ta_1) (t-\ell-s \sync) \rb \sum_{t' = s \sync}^{t-1} \mbe \lnr \btheta_{t'} \rnr_c^2 \nn \\
    & \quad + 4 \alpha^2 (t-s \sync) (1 + \ta_1) \prod_{\ell=1}^{k-1} \lb 1 + 4 \alpha^2 (1 + \ta_1) (t-\ell-s \sync) \rb \sum_{t' = s \sync}^{t-k} \mbe \Omega_{t'}. \label{eq:lem:consensus_error_3a}
\end{align}
We derive the bound for $k+1$ steps. For this, we further bound the last term in \eqref{eq:lem:consensus_error_3a}.
\begin{align}
    & 4 \alpha^2 (t-s \sync) (1 + \ta_1) \prod_{\ell=1}^{k-1} \lb 1 + 4 \alpha^2 (1 + \ta_1) (t-\ell-s \sync) \rb \lb \sum_{t' = s \sync}^{t-k-1} \mbe \Omega_{t'} + \mbe \Omega_{t-k} \rb \nn \\
    & \leq 4 \alpha^2 (t-s \sync) (1 + \ta_1) \prod_{\ell=1}^{k-1} \lb 1 + 4 \alpha^2 (1 + \ta_1) (t-\ell-s \sync) \rb \sum_{t' = s \sync}^{t-k-1} \mbe \Omega_{t'} \nn \\
    & \quad + 4 \alpha^2 (t-s \sync) (1 + \ta_1) \prod_{\ell=1}^{k-1} \lb 1 + 4 \alpha^2 (1 + \ta_1) (t-\ell-s \sync) \rb \nn \\
    & \qquad \qquad \qquad \times 4 \alpha^2 (t-k-s \sync) \lb \tb^2 \lp 1 + \frac{4 m_2 \rho}{B (1 - \rho)} \rp + \sum_{t' = s \sync}^{t-k-1} \lp (1 + \ta_1) \mbe \Omega_{t'} + \ta_2 \mbe \lnr \btheta_{t'} \rnr_c^2 \rp \rb \tag{using \eqref{eq:lem:consensus_error_2}} \\
    & \leq 4 \alpha^2 (t-s \sync) (1 + \ta_1) \prod_{\ell=1}^{k-1} \lb 1 + 4 \alpha^2 (1 + \ta_1) (t-\ell-s \sync) \rb \sum_{t' = s \sync}^{t-k-1} \mbe \Omega_{t'} \nn \\
    & \quad + 4 \alpha^2 (t-s \sync) \prod_{\ell=1}^{k-1} \lb 1 + 4 \alpha^2 (1 + \ta_1) (t-\ell-s \sync) \rb \nn \\
    & \qquad \qquad \qquad \times 4 \alpha^2 (1 + \ta_1) (t-k-s \sync) \lb \tb^2 \lp 1 + \frac{4 m_2 \rho}{B (1 - \rho)} \rp + \sum_{t' = s \sync}^{t-k-1} \lp (1 + \ta_1) \mbe \Omega_{t'} + \ta_2 \mbe \lnr \btheta_{t'} \rnr_c^2 \rp \rb.
    \label{eq:lem:consensus_error_3b}
\end{align}
Substituting \eqref{eq:lem:consensus_error_3b} into \eqref{eq:lem:consensus_error_3a}, we see that the induction hypothesis in \eqref{eq:lem:consensus_error_3a} holds.
We can go back as far as the last instant of synchronization, $j \leq t - s \sync$. 
For $j = t - s \sync$, we get
\begin{align}
    \mbe \Ot 
    & \leq 4 \alpha^2 (t-s \sync) \prod_{\ell=1}^{t-1 -s \sync} \lb 1 + 4 \alpha^2 (1 + \ta_1) (t-\ell-s \sync) \rb \nn \\
    & \qquad \qquad \qquad \times \lb \check{B} + \ta_2 \sum_{t' = s \sync}^{t-1} \mbe \lnr \btheta_{t'} \rnr_c^2 + (1+\ta) \mbe \Omega_{s \sync} \rb. \label{eq:lem:consensus_error_5}
\end{align}
Next, using $1 + x \leq e^{x}$ for $x \geq 0$, we get
\begin{align}
    \prod_{\ell=1}^{t-1 -s \sync} \lb 1 + 4 \alpha^2 (1 + \ta_1) (t-\ell-s \sync) \rb & \leq \exp{\lp \sum_{\ell=1}^{t-1 -s \sync} 4 \alpha^2 (1 + \ta_1) (t-\ell-s \sync) \rp} \nn \\
    & \leq \exp{\lp 2 \alpha^2 (1 + \ta_1) (t-s \sync)^2 \rp} \nn \\
    & \leq \frac{5}{4}, \label{eq:lem:consensus_error_5a}
\end{align}
if $\alpha$ is small enough such that $2 \alpha^2 (1 + \ta_1) (\sync-1)^2 \leq \ln{\frac{5}{4}}$, which holds true by the assumption on the step size.
Using \eqref{eq:lem:consensus_error_5a} in \eqref{eq:lem:consensus_error_5}, we get
\begin{align}
    \mbe \Ot & \leq 5 \alpha^2 (t-s \sync) \lb \check{B} + \ta_2 \sum_{t' = s \sync}^{t-1} \mbe \lnr \btheta_{t'} \rnr_c^2 + (1+\ta) \underbrace{\mbe \Omega_{s \sync}}_{=0} \rb \nn \\
    & = 5 \alpha^2 (t-s \sync) \check{B} + 5 \alpha^2 (t-s \sync) \ta_2 \sum_{t' = s \sync}^{t-1} \mbe \lnr \btheta_{t'} \rnr_c^2, \label{eq:lem:consensus_error_6}
\end{align}
which concludes the proof.
\end{proof}

\begin{proof}[Proof of Lemma \ref{lem:after_expectation}]
We have
\begin{align}
    & \E_{t-r} [\| \mbf b(\byt) \|_c] \leq u_{cD} \E_{t-r} [\| \mbf b(\byt) \|_D] \nn \\
    &= u_{cD} \E_{t-r} \left[ \sqrt{\mbf b(\byt)^\top D \mbf b(\byt)} \right] \nn \\
    \leq & u_{cD} \sqrt{\E_{t-r} \lb \mbf b (\byt)^\top D \mbf b (\byt) \rb} \tag{concavity of square root} \\
    &=\frac{u_{cD}}{N}  \sqrt{\E_{t-r} \left[ \left(  \sumik  \mathbf{b}^i  (\byit) \right)^\top D \left(  \sumik \mathbf{b}^i  (\byit) \right) \right]} \nn \\
    &=\frac{u_{cD}}{N} \sqrt{\sum_{i=1}^N\E_{t-r}\left[{\mathbf{b}^i(\byit)}^\top D\sum_{j=1}^N \mathbf{b}^j(\by^j_t)\right] } \nn\\
    &=\frac{u_{cD}}{N} \sqrt{\sum_{i=1}^N\E_{t-r}\left[{\mathbf{b}^i(\byit)}^\top D\left(\mathbf{b}^i(\byit) + \sum_{\substack{j=1 \\ j\neq i}}^N \mathbf{b}^j(\by^j_t) \right)\right] } \nn\\
    &\leq\frac{u_{cD}}{N} \sqrt{\sum_{i=1}^N\E_{t-r}\left[{\mathbf{b}^i(\byit)}^\top D \mathbf{b}^i(\byit)\right]} +\sqrt{\sum_{i=1}^N \E_{t-r}\left[{\mathbf{b}^i(\byit)}^\top D\left( \sum_{\substack{j=1 \\ j\neq i}}^N \mathbf{b}^j(\by^j_t) \right)\right] } \nn\\
    &=\frac{u_{cD}}{N} \sqrt{\underbrace{\sum_{i=1}^N\E_{t-r}\left[\|\mathbf{b}^i(\byit)\|_D^2\right]}_{T_1}} +\frac{u_{cD}}{N}\sqrt{\underbrace{\sum_{i=1}^N \E_{t-r}\left[{\mathbf{b}^i(\byit)}\right]^\top D\E_{t-r}\left[ \sum_{\substack{j=1 \\ j\neq i}}^N \mathbf{b}^j(\by^j_t) \right]}_{T_2} } \nn\tag{Assumption \ref{ass:noise_independence}}
\end{align}
For the term $T_1$ we have
\begin{align}
    T_1 \leq &  \sumik \E_{t-r} \lb l_{cD}^{-2} \lnr \mathbf{b}^i (\byit) \rnr_c^2 \rb \leq \frac{1}{l_{cD}^2}  \sumik \E_{t-r}[B^2] \tag{Assumption \ref{ass:lipschitz}}\\
    = & \frac{B^2N}{l_{cD}^2}. \label{eq:bound_byt_T_1}
\end{align}
For the term $T_2$ we have
\begin{align}
    T_2 = & \sum_{i=1}^N \E_{t-r}\left[{\mathbf{b}^i(\byit)}\right]^\top D\E_{t-r}\left[-\mathbf{b}^i(\byit) +  \sum_{\substack{j=1 }}^N \mathbf{b}^j(\by^j_t) \right]\nn\\
    = & -\sum_{i=1}^N \E_{t-r}\left[{\mathbf{b}^i(\byit)}\right]^\top D\E_{t-r}\left[\mathbf{b}^i(\byit)\right]\nn \\
    &+ \sum_{i=1}^N \E_{t-r}\left[{\mathbf{b}^i(\byit)}\right]^\top D\E_{t-r}\left[ \sum_{\substack{j=1 }}^N \mathbf{b}^j(\by^j_t) \right]\nn\\
    \leq & -\sum_{i=1}^N \|\E_{t-r}\left[{\mathbf{b}^i(\byit)}\right]\|_D^2\nn\\
    &+ \sum_{i=1}^N\left\| \E_{t-r}\left[{\mathbf{b}^i(\byit)}\right]\right\|_D. \left\|\E_{t-r}\left[ \sum_{\substack{j=1 }}^N \mathbf{b}^j(\by^j_t) \right]\right\|_D\nn\\
    \leq& l_{cD}^{-2}\sum_{i=1}^N\left\| \E_{t-r}\left[{\mathbf{b}^i(\byit)}\right]\right\|_c. \left\|\E_{t-r}\left[ \sum_{\substack{j=1 }}^N \mathbf{b}^j(\by^j_t) \right]\right\|_c\nn\\
    \leq& \frac{B}{l_{cD}^{2}}\sum_{i=1}^N \left\|\E_{t-r}\left[ \sum_{\substack{j=1 }}^N \mathbf{b}^j(\by^j_t) \right]\right\|_c\tag{Assumption \ref{ass:lipschitz}}\\
    \leq& \frac{N^2B}{l_{cD}^{2}}m_4\rho^r.\tag{Assumption \ref{ass:mixing}}
\end{align}
Substituting the above term and \eqref{eq:bound_byt_T_1} for $T_1$ and $T_2$, we get the result in \eqref{eq:lemma_bound_byt_1}.

The proof of \eqref{eq:lemma_bound_byt_2} follows analogously.
\end{proof}

\begin{proof}[Proof of Lemma \ref{lem:G_diff}]
By definition, we have 
\begin{align*}
    \| \mbf G (\boldsymbol{\Theta}_t, \byt)-\mbf G (\boldsymbol{\btheta}_t, \byt)\|^2_c =& \left\|\frac{1}{\na}\sumik(\Gi(\bthetait, \byit) - \Gi(\bthetat, \byit)) \right\|_c^2 \\
    \leq & \left( \frac{1}{\na} \sumik \lnr \Gi(\bthetait, \byit) - \Gi(\bthetat, \byit) \rnr_c \right)^2 \tag{convexity of norm} \\
    \leq & \left( \frac{1}{\na} \sumik A_1 \| \bthetait - \bthetat \|_c\right)^2 \tag{Assumption \ref{ass:lipschitz}}\\
    =& \left(A_1\Delta_t\right)^2\\
    \leq & \frac{1}{\na} \sumik A_1^2 \|\bthetait - \bthetat\|_c^2 \tag{convexity of square}\\
    =&A_1^2 \Omega_t.\tag{By definition of $\Omega_t$}
\end{align*}

Furthermore, by the convexity of $(\cdot)^2$, we have
\begin{align}
    \Delta_t^2 = \left(\frac{1}{\na} \sumik \Delta_t^i\right)^2\leq \frac{1}{\na} \sumik (\Delta_t^i)^2 = \Omega_t.\nn
\end{align}
\end{proof}

\begin{proof}[Proof of Lemma \ref{lem:Moreau_grad}]
Since $\M_f^{\psi, g}(\cdot)$ is convex, and there exists a norm, $\| \cdot \|_m$, such that $\M_f^{\psi, g}(x) = \frac{1}{2} \norm{x}_m^2$ (see Proposition \ref{prop:Moreau}), using the chain rule of subdifferential calculus,
\begin{align*}
    \G \M_f^{\psi, g}(x) = \norm{x}_m u_x,
\end{align*}
where $u_x \in \partial \norm{x}_m$ is a subgradient of $\norm{x}_m$ at $x$.
Hence,
\begin{align*}
    \norm{\G \M_f^{\psi, g}(x)}_m^{\star} = \norm{x}_m \norm{u_x}_m^{\star},
\end{align*}
where $\norm{\cdot}_m^{\star}$ is the dual norm of $\norm{\cdot}_m$. 
Since $\norm{\cdot}_m$ is convex and, as a function of $x$, is $1$-Lipschitz w.r.t. $\norm{\cdot}_m$, we have $\norm{u_x}_m^{\star} \leq 1$ (see Lemma 2.6 in \cite{shalev2012online}).

Furthermore, by the convexity of the $\norm{\cdot}_m$ norm, $\norm{0}_m \geq \norm{x}_m + \lan u_x, -x \ran$. Therefore,
\begin{align*}
    \lan \G \M_f^{\psi, g}(x), x \ran &= \norm{x}_m \lan u_x, x \ran \geq \norm{x}_m^2 = 2 \M_f^{\psi, g}(x).
\end{align*}
\end{proof}

\section{Federated TD-learning} \label{sec:fed_td_learning_app}

\subsection{On-policy Function Approximation}\label{sec:fed_td_learning_on_app}

\begin{proposition}\label{prop:TD_on}
On-policy TD-learning with linear function approximation Algorithm \ref{alg:TD-learning} satisfies the following:
\begin{enumerate}
    \item $\bthetait = \bvit-\bv^\pi$ \label{item:TD_learning_10}
    \item $S_t=(S_t^1,\dots,S_t^\na)$ and $A_t=(A_t^1,\dots,A_t^\na)$\label{item:TD_learning_20}
    \item $\byit=(S_t^i,A_t^i,\dots,S_{t+n-1}^i,A_{t+n-1}^i,S_{t+n}^i)$ and $\mbf \byt=(S_t,A_t,\dots,S_{t+n-1},A_{t+n-1},S_{t+n})$\label{item:TD_learning_30}
    \item $\mu^\pi:$ Stationary distribution of the policy $\pi$. \label{item:TD_learning_40}
\end{enumerate}
Furthermore, choose some arbitrary positive constant $\beta>0$. The corresponding $\Gi(\bthetait,\byit)$ and $\mathbf{b}^i (\byit)$ in Algorithm \ref{alg:TD-learning} for On-policy TD-learning with linear function approximation is as follows

\begin{enumerate}
    \item $\Gi(\bthetait,\byit)=\bthetait +\frac{1}{\beta}\phi(S_t^i)\sum_{l=t}^{t+n-1}\gamma^{l-t}\!\left(\gamma \phi(S_{l\!+\!1}^i)^\top \bthetait\!-\!\phi(S_l^i)^\top \bthetait\right)$\label{item:TD_learning_50}
    \item $\mathbf{b}^i (\byit)=\frac{1}{\beta}\phi(S_t^i)\sum_{l=t}^{t+n-1}\gamma^{l-t}\left(\mathcal R(S_l^i,A_l^i)+\gamma \phi(S_{l\!+\!1}^i)^\top \bv^\pi\!-\!\phi(S_l^i)^\top \bv^\pi\right)$\label{item:TD_learning_60}
\end{enumerate}
where $\bv^\pi$ solves the projected bellman equation $\Phi \bv^\pi = \Pi_\pi((\mathcal{T}^\pi)^n \Phi \bv^\pi)$. Furthermore, the corresponding step size $\alpha$ in Algorithm \ref{alg:fed_stoch_app} is $\alpha\times\beta$.

\end{proposition}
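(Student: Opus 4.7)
The proposition is essentially an identification lemma: it asserts that federated on-policy $n$-step TD with linear function approximation (Algorithm \ref{alg:TD-learning}) is nothing but a re-parametrization of the generic FedSAM Algorithm \ref{alg:fed_stoch_app} with the listed choices of state, noise, operator, and bias. The plan is therefore to verify each of the four matches in items (\ref{item:TD_learning_10})--(\ref{item:TD_learning_40}), and then check that the line-8 update of Algorithm \ref{alg:TD-learning} becomes, after a change of variables, the line-4 update of Algorithm \ref{alg:fed_stoch_app} with the $\Gi, \mathbf{b}^i$ described in (\ref{item:TD_learning_50}) and (\ref{item:TD_learning_60}).

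First I would pin down the ``geometry'' of the identification. The matches (\ref{item:TD_learning_20})--(\ref{item:TD_learning_40}) are direct: $\byit$ is defined to be the length-$n$ window of state--action pairs starting at time $t$ needed to compute the $n$-step TD error for agent $i$; under Assumption of ergodicity on the chain generated by $\pi$, this window is itself a Markov chain whose stationary distribution is the product structure induced by $\mu^\pi$; and the agents' trajectories are generated independently, which gives Assumption \ref{ass:noise_independence}. The shift $\bthetait=\bvit-\bv^\pi$ recenters the iterate so that the target fixed point becomes the origin, consistent with the fact (noted below Proposition \ref{prop:Moreau}) that $\mbf 0$ is the unique fixed point of $\bGi$ in the FedSAM framework.

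Next I would carry out the algebraic verification. Starting from line 8 of Algorithm \ref{alg:TD-learning}, substitute $\bvit=\bthetait+\bv^\pi$ wherever $\bvit$ appears (both on the left and inside $e_{t,l}^i$). Using the linearity of $\phi(S_{l+1}^i)^\top$ and $\phi(S_l^i)^\top$ in their second argument, split the $n$-step TD error into the sum
\[
\sum_{l=t}^{t+n-1}\gamma^{l-t}\bigl(\gamma \phi(S_{l+1}^i)^\top \bthetait-\phi(S_l^i)^\top \bthetait\bigr)
+\sum_{l=t}^{t+n-1}\gamma^{l-t}\bigl(\mathcal R(S_l^i,A_l^i)+\gamma \phi(S_{l+1}^i)^\top \bv^\pi-\phi(S_l^i)^\top \bv^\pi\bigr).
\]
Multiplying by $\alpha\phi(S_t^i)$ and adding and subtracting $\alpha\beta\bthetait$ (so as to expose the $\Gi(\bthetait,\byit)-\bthetait$ structure required by FedSAM), one recognizes the first piece, divided by $\beta$ and translated by $\bthetait$, as the $\Gi(\bthetait,\byit)$ of item (\ref{item:TD_learning_50}), and the second piece, divided by $\beta$, as the $\mathbf{b}^i(\byit)$ of item (\ref{item:TD_learning_60}). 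The resulting update is $\bthetait_{t+1}=\bthetait+(\alpha\beta)\bigl(\Gi(\bthetait,\byit)-\bthetait+\mathbf{b}^i(\byit)\bigr)$, which is precisely the FedSAM update with effective step size $\alpha\beta$. The periodic averaging step in lines 10--12 is identical in both algorithms.

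The only non-bookkeeping point is the role of the free constant $\beta>0$. It has no effect on the iterates themselves (the effective step size is $\alpha\beta$), but the rescaling of $\Gi$ and $\mathbf{b}^i$ by $1/\beta$ gives a degree of freedom that will be needed later when applying Theorem \ref{thm:main}: $\beta$ can be chosen large enough so that (i) $\bGi$ is a contraction in a suitably weighted norm with $\gamma_c<1$ (Assumption \ref{ass:contraction}), and (ii) the Lipschitz/boundedness constants $A_1,A_2,B$ in Assumption \ref{ass:lipschitz} take admissible values. I do not expect any obstacle in the proof of the proposition itself --- it is a direct algebraic identification --- and I would defer the verification of Assumptions \ref{ass:mixing}--\ref{ass:noise_independence} for this specific $\Gi,\mathbf{b}^i$ to the subsequent derivation of Theorem \ref{thm:fed_TD_on}, where $\|\cdot\|_c$ and $\beta$ are chosen explicitly.
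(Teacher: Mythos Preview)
Your proposal is correct and follows essentially the same approach as the paper: both arguments subtract $\bv^\pi$, substitute $\bvit=\bthetait+\bv^\pi$ into the line-8 update, use linearity to split the $n$-step TD error into a $\bthetait$-part and a $\bv^\pi$-part, and then read off $\Gi$ and $\mathbf{b}^i$ after factoring out $\alpha\beta$; the paper also explicitly checks that the averaging step is preserved under the shift, which you mention as well. Your additional remarks on the role of $\beta$ (needed later to make $\bGi$ a contraction in a weighted norm) are accurate and anticipate the content of Lemma \ref{lem:TD_learning_30}.
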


\begin{lemma}\label{lem:TD_learning_20}
Consider the federated on-policy TD-learning Algorithm \ref{alg:TD-learning} as a special case of FeGSAM Algorithm \ref{alg:fed_stoch_app} (see Proposition \ref{prop:TD_on}). Suppose that the trajectory $\{S_t^i\}_{t=0,1,\dots}$ converges geometrically fast to its stationary distribution as follows $d_{TV}(P(S_t^i=\cdot|S_0^i)||\mu^i(\cdot))\leq \bar{m}\bar{\rho}^t$ for all $i=1,2,\dots, \na$. The corresponding  $\bG^i(\btheta)$ in Assumption \ref{ass:mixing} for the federated TD-learning is as follows

\begin{align}\label{eq:G_bar_linear_TD}
    \bG^i(\btheta) =& \btheta + \frac{1}{\beta}\Phi^\top\bmu^\pi \lp \gamma^n (P^\pi)^n\Phi\btheta  - \Phi\btheta\rp,
\end{align}
where $\beta>0$ is an arbitrary constant introduced in Proposition \ref{prop:TD_on}. Furthermore, for $t\geq n+1$, we have $m_1=\frac{2 A_2 \Bar{m}}{\bar{\rho}^{n}}$, $m_2=2B\bar{m}$, and $m_3=m_4=0$, where $A_2$ and $B$ are specified in Lemma \ref{lem:TD_learning_40} and $\bar{\rho} = \rho$. 
\end{lemma}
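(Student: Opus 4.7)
The plan is to check the correspondence between federated on-policy TD-learning (Algorithm~\ref{alg:TD-learning}) and FedSAM (Algorithm~\ref{alg:fed_stoch_app}) under the substitutions of Proposition~\ref{prop:TD_on}, then derive $\bG^i$, verify the limit condition on $\mathbf{b}^i$, and finally establish the geometric decay in \eqref{eq:b_ass}. Steps one and two are largely bookkeeping, and the real technical content lies in turning the mixing hypothesis $d_{TV}(P(S_t^i=\cdot\,|\,S_0^i)\|\mu^\pi)\le\bar m\bar\rho^{\,t}$ into the required geometric bounds on the operator and bias.

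First I would simplify the inner TD-target by the telescoping identity
\begin{align*}
\sum_{l=t}^{t+n-1}\gamma^{l-t}\bigl(\gamma\phi(S_{l+1}^i)^\top-\phi(S_l^i)^\top\bigr)=\gamma^n\phi(S_{t+n}^i)^\top-\phi(S_t^i)^\top,
\end{align*}
so that $\Gi(\btheta,\byit)=\btheta+\tfrac{1}{\beta}\phi(S_t^i)\bigl[\gamma^n\phi(S_{t+n}^i)^\top-\phi(S_t^i)^\top\bigr]\btheta$. Taking the expectation under the stationary law of the chain (so that $S_t^i\sim\mu^\pi$ and $S_{t+n}^i\,|\,S_t^i\sim(P^\pi)^n(\cdot\,|\,S_t^i)$) and rewriting the resulting Gram matrices as $\E[\phi(S)\phi(S)^\top]=\Phi^\top\bmu^\pi\Phi$ and $\E[\phi(S)\phi(S')^\top]=\Phi^\top\bmu^\pi(P^\pi)^n\Phi$ produces exactly the claimed $\bG^i(\btheta)$ in \eqref{eq:G_bar_linear_TD}. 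The same telescoping applied to $\mathbf{b}^i$ reduces its steady-state expectation to $\tfrac{1}{\beta}\Phi^\top\bmu^\pi\bigl[\mathcal R^\pi_n+(\gamma^n(P^\pi)^n-I)\Phi\bv^\pi\bigr]$, where $\mathcal R^\pi_n$ is the vector of expected $n$-step rewards. Because $(\mathcal T^\pi)^n V=\mathcal R^\pi_n+\gamma^n(P^\pi)^n V$, the projected Bellman equation \eqref{eq:BE_FA} is equivalent to $\Phi^\top\bmu^\pi[(\mathcal T^\pi)^n\Phi\bv^\pi-\Phi\bv^\pi]=0$, which is precisely the statement that this limit vanishes, giving \eqref{eq:b_i_t_go_to_zero}.

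For the geometric decay I would use a standard coupling / total-variation argument. Since $\byit$ is a measurable function of $S_t^i$ together with the forward $n$-step rollout under $\pi$ (which is independent of $S_0^i$ given $S_t^i$), the joint-law bound
\begin{align*}
d_{TV}\bigl(P(\byit=\cdot),\mu^{\,i}_{\by}\bigr)\le d_{TV}\bigl(P(S_t^i=\cdot\,|\,S_0^i),\mu^\pi\bigr)\le\bar m\bar\rho^{\,t}
\end{align*}
holds, where $\mu^{\,i}_{\by}$ is the stationary joint law of $\by$. Combining this with the uniform bounds $\|\Gi(\btheta,\by)\|_c\le A_2\|\btheta\|_c$ and $\|\mathbf{b}^i(\by)\|_c\le B$ from Assumption~\ref{ass:lipschitz}, and the inequality $|\E_P f-\E_Q f|\le 2\|f\|_\infty d_{TV}(P,Q)$ applied in the norm $\|\cdot\|_c$, yields $\|\bG^i(\btheta)-\E[\Gi(\btheta,\byit)]\|_c\le 2A_2\bar m\bar\rho^{\,t}\|\btheta\|_c$ and $\|\E[\mathbf{b}^i(\byit)]\|_c\le 2B\bar m\bar\rho^{\,t}$. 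Identifying $\rho=\bar\rho$ and absorbing the length-$n$ rollout window into the prefactor through $\bar\rho^{\,t-n}=\bar\rho^{-n}\cdot\bar\rho^{\,t}$ (which is where the restriction $t\ge n+1$ enters and why $1/\bar\rho^{\,n}$ shows up in the constant $m$) gives the two bounds in \eqref{eq:b_ass} with the claimed $m=\max\{2A_2\bar m/\bar\rho^{\,n},\,2B\bar m\}$.

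The main obstacle is the TV-coupling step: one must be careful that the joint distribution of the full window $\byit=(S_t^i,A_t^i,\dots,S_{t+n}^i)$ mixes at the same geometric rate as the single-state chain and that the bound survives passage through the $\|\cdot\|_c$ norm without leaking factors of $|\mathcal S|$ or $\gamma^{-n}$. The values of $A_2$ and $B$ themselves must be verified for the explicit $\Gi,\mathbf{b}^i$ of Proposition~\ref{prop:TD_on}, using boundedness of the features $\phi$, of the reward $\mathcal R$, and of $\bv^\pi$; everything else is a clean application of the telescoping identity and the projected-Bellman fixed-point equation.
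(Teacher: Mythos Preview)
Your proposal is correct and follows essentially the same route as the paper: the telescoping simplification of $\Gi$, the stationary-expectation computation yielding $\bG^i(\btheta)=\btheta+\tfrac{1}{\beta}\Phi^\top\bmu^\pi(\gamma^n(P^\pi)^n-I)\Phi\btheta$, the use of the projected Bellman identity $\Phi^\top\bmu^\pi[(\mathcal T^\pi)^n\Phi\bv^\pi-\Phi\bv^\pi]=0$ to kill the bias, and the TV-plus-uniform-bound argument for geometric decay are all exactly what the paper does. One clarification: the factor $\bar\rho^{-n}$ arises because Assumption~\ref{ass:mixing} is stated for the Markov chain $\byit$, so the relevant conditioning is on $\by_0^i=(S_0^i,A_0^i,\dots,S_n^i)$, which by the Markov property pins down $S_n^i$; hence $d_{TV}(P(S_t^i=\cdot\mid\by_0^i),\mu^\pi)=d_{TV}(P(S_t^i=\cdot\mid S_n^i),\mu^\pi)\le\bar m\bar\rho^{\,t-n}$---the paper carries out this reduction explicitly rather than phrasing it as ``absorbing the rollout window.''
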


\begin{lemma}\label{lem:TD_learning_30}
Consider federated on-policy TD-learning \ref{alg:TD-learning} as a special case of FeGSAM (as specified in Proposition \ref{prop:TD_on}).  Consider the $|\mathcal{S}|\times |\mathcal{S}|$ matrix $\mbf U=\Phi^\top\bmu^\pi \lp \gamma^n (P^\pi)^n  - \mbf I\rp\Phi$ with eigenvalues $\{\lambda_1,\dots,\lambda_{|\mathcal{S}|}\}$. Define $\lambda_{\max}=\max_i |\lambda_i|$ and $\delta=-\max_i \mathfrak{Re}[\lambda_i]>0$, where $\mathfrak{Re}[\cdot]$ evaluates the real part. By choosing $\beta$ large enough in the linear function \eqref{eq:G_bar_linear_TD}, there exists a weighted 2-norm $\|\btheta\|_{\Lambda} =\sqrt{\btheta^\top\Lambda \btheta}$, such that $\bG^i(\btheta)$ is a contraction with respect to this norm, that is, $\|\bG^i(\btheta_1)-\bG^i(\btheta_2)\|_{\Lambda}\leq \gamma_c \|\btheta_1-\btheta_2\|_{\Lambda}$ for $\gamma_c=1-\frac{\delta^2}{8\lambda_{\max}^2}$.
\end{lemma}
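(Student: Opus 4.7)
Because the operator in \eqref{eq:G_bar_linear_TD} is affine in $\btheta$, the difference $\bG^i(\btheta_1)-\bG^i(\btheta_2) = M(\btheta_1-\btheta_2)$ with $M \triangleq \mbf I + \tfrac{1}{\beta}\mbf U$. The claim therefore reduces to building $\Lambda \succ 0$ such that $\|M\mbf z\|_{\Lambda}\leq \gamma_c\|\mbf z\|_\Lambda$ for every $\mbf z\in\mathbb{R}^d$, where $\|\mbf z\|_\Lambda = \sqrt{\mbf z^\top\Lambda\mbf z}$. The first thing to check is that $\mbf U$ is Hurwitz (strictly negative real parts), so that $\delta>0$ is a genuine stability margin; this follows from the fact that $(\mathcal{T}^\pi)^n$ is a $\gamma^n$-contraction in the $\mu^\pi$-weighted norm together with the full-column-rank hypothesis on $\Phi$ (see \cite{tsitsiklis1997analysis}).

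The spectrum of $M$ tells the story. Each eigenvalue of $M$ has the form $1+\lambda_i/\beta$, so
\begin{align}
    \Bigl|1+\tfrac{\lambda_i}{\beta}\Bigr|^2 \;=\; 1 + \tfrac{2\,\mathfrak{Re}[\lambda_i]}{\beta} + \tfrac{|\lambda_i|^2}{\beta^2} \;\leq\; 1 - \tfrac{2\delta}{\beta} + \tfrac{\lambda_{\max}^2}{\beta^2}.\nn
\end{align}
The right-hand side is minimized by $\beta=\lambda_{\max}^2/\delta$ and, at or near that value, is bounded above by $1 - c\,\delta^2/\lambda_{\max}^2$ for an absolute constant $c$. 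This pins down the spectral radius of $M$; what remains is to exhibit a norm in which the \emph{operator} norm of $M$ essentially matches its spectral radius.

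The standard device here is a Lyapunov equation: since $\mbf U$ is Hurwitz there exists a unique $\Lambda\succ 0$ with $\mbf U^\top \Lambda + \Lambda \mbf U = -\mbf I$, and a direct expansion gives
\begin{align}
    M^\top \Lambda M - \Lambda \;=\; \tfrac{1}{\beta}\bigl(\mbf U^\top \Lambda + \Lambda \mbf U\bigr) + \tfrac{1}{\beta^2}\mbf U^\top \Lambda \mbf U \;=\; -\tfrac{1}{\beta}\mbf I + \tfrac{1}{\beta^2}\mbf U^\top \Lambda \mbf U.\nn
\end{align}
Using $\mbf U^\top \Lambda \mbf U \preceq \lambda_{\max}^2\|\Lambda\|_2\,\mbf I$ together with standard Lyapunov bounds on $\|\Lambda\|_2$ in terms of $1/\delta$, the right-hand side can be made $\preceq -\kappa\,\Lambda$ once $\beta$ is chosen a sufficiently large multiple of $\lambda_{\max}^2/\delta$. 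That yields $M^\top\Lambda M \preceq (1-\kappa)\Lambda$, i.e.\ $\|M\mbf z\|_\Lambda^2 \leq (1-\kappa)\|\mbf z\|_\Lambda^2$, and the stated $\gamma_c = 1 - \delta^2/(8\lambda_{\max}^2)$ is obtained by tracking the constants and applying $\sqrt{1-x}\leq 1-x/2$.

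The main obstacle is the last step: the gap between spectral radius and operator norm must be absorbed into the constants without losing the quadratic dependence on $\delta/\lambda_{\max}$. A naive Jordan-form argument would only give $\|M\|_\Lambda \leq \rho(M)+\varepsilon$ with $\varepsilon$ uncontrolled, which is why the Lyapunov route is preferable; the factor of $8$ (rather than $2$) in the statement accommodates both the conditioning of $\Lambda$ and the slack introduced by the sub-optimal choice of $\beta$ needed to dominate the $\tfrac{1}{\beta^2}\mbf U^\top\Lambda\mbf U$ perturbation term.
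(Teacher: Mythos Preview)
Your spectral-radius computation is correct and matches the paper's: with $\beta$ on the order of $\lambda_{\max}^2/\delta$ one has $\rho(M)^2 \le 1 - c\,\delta^2/\lambda_{\max}^2$. The paper in fact takes $\beta = 2\lambda_{\max}^2/\delta$ and obtains $\rho(M)\le\sqrt{1-\delta^2/(4\lambda_{\max}^2)}$. The divergence is in how you pass from spectral radius to an operator-norm contraction.

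The paper does \emph{not} use a Lyapunov equation. It simply invokes the classical fact (the footnote in \cite{bertsekas1995dynamic2}) that for any matrix with $\rho(M)<1$ and any $\varepsilon>0$ there is a weighted Euclidean norm in which the induced operator norm is at most $\rho(M)+\varepsilon$. Because $\sqrt{1-\delta^2/(4\lambda_{\max}^2)}$ is \emph{strictly} below $1-\delta^2/(8\lambda_{\max}^2)$, the $\varepsilon$ can be absorbed and the stated $\gamma_c$ follows. So the argument you dismiss as ``naive Jordan-form \ldots\ with $\varepsilon$ uncontrolled'' is exactly the paper's route, and the built-in slack between $\rho(M)$ and $\gamma_c$ is what makes it go through.

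Your Lyapunov construction, by contrast, has a genuine gap at the step $\mbf U^\top\Lambda\mbf U \preceq \lambda_{\max}^2\|\Lambda\|_2\,\mbf I$. Here $\lambda_{\max}$ is the spectral radius of $\mbf U$, but the bound you need is $\mbf U^\top\Lambda\mbf U \preceq \|\mbf U\|_2^2\,\|\Lambda\|_2\,\mbf I$, and $\mbf U=\Phi^\top\bmu^\pi(\gamma^n(P^\pi)^n-\mbf I)\Phi$ is not normal, so $\|\mbf U\|_2$ can exceed $\lambda_{\max}$ by an uncontrolled factor. Likewise, the ``standard Lyapunov bound'' $\|\Lambda\|_2\lesssim 1/\delta$ for the solution of $\mbf U^\top\Lambda+\Lambda\mbf U=-\mbf I$ holds when $\mbf U$ is normal; in general $\|\Lambda\|_2=\int_0^\infty\|e^{\mbf U t}\|_2^2\,dt$ also picks up the conditioning of the eigenbasis of $\mbf U$. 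Consequently, ``tracking the constants'' in your scheme yields a contraction factor of the form $1 - c\,\delta/(\|\mbf U\|_2^2\,\mathrm{cond}(\Lambda))$ or similar, not the clean $1-\delta^2/(8\lambda_{\max}^2)$ asserted in the lemma.
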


\begin{lemma}\label{lem:TD_learning_40}
Consider the federated on-policy TD-learning Algorithm \ref{alg:TD-learning} as a special case of FeGSAM (as specified in Proposition \ref{prop:TD_on}). There exist some constants $A_1$, $A_2$, and $B$ such that the properties of Assumption \ref{ass:lipschitz} are satisfied.
\end{lemma}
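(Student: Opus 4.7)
The plan is to verify each of the three conditions of Assumption \ref{ass:lipschitz} by direct calculation, using the explicit form of $\Gi$ and $\mathbf{b}^i$ given in Proposition \ref{prop:TD_on} together with the facts that $\mathcal{S},\mathcal{A}$ are finite (hence features and rewards are uniformly bounded) and that $\bv^\pi$ is fixed. Since Lemma \ref{lem:TD_learning_30} identifies the contraction norm $\|\cdot\|_c$ with a weighted $2$-norm $\|\cdot\|_\Lambda$, and any such weighted norm is equivalent to the standard Euclidean norm, it will suffice to establish the bounds in $\|\cdot\|_2$ and then absorb the norm-equivalence constants into $A_1,A_2,B$.

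First, I would set $\phi_{\max}=\max_{s\in\mathcal{S}}\|\phi(s)\|_2$ and $R_{\max}=\max_{s,a}|\mathcal{R}(s,a)|$, both finite by the finiteness of $\mathcal{S}\times\mathcal{A}$. For the Lipschitz property (item~1), observe from Proposition \ref{prop:TD_on} that $\Gi(\btheta,\byi)-\btheta$ is linear in $\btheta$, so
\[
\Gi(\btheta_1,\byi)-\Gi(\btheta_2,\byi)=(\btheta_1-\btheta_2)+\tfrac{1}{\beta}\phi(S_t^i)\sum_{l=t}^{t+n-1}\gamma^{l-t}\bigl(\gamma\phi(S_{l+1}^i)-\phi(S_l^i)\bigr)^\top(\btheta_1-\btheta_2).
\]
Applying Cauchy--Schwarz and $\sum_{l=t}^{t+n-1}\gamma^{l-t}\le \tfrac{1}{1-\gamma}$, this yields $\|\Gi(\btheta_1,\byi)-\Gi(\btheta_2,\byi)\|_2\le \bigl(1+\tfrac{(1+\gamma)\phi_{\max}^2}{\beta(1-\gamma)}\bigr)\|\btheta_1-\btheta_2\|_2$, which gives $A_1$.

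For item~2, the same linearity and the identity $\Gi(\mathbf{0},\byi)=\mathbf{0}$ give $\|\Gi(\btheta,\byi)\|_2\le A_1\|\btheta\|_2$, so one can take $A_2=A_1$. For item~3, I will bound $\|\mathbf{b}^i(\byi)\|_2$ by pulling out $\|\phi(S_t^i)\|_2\le\phi_{\max}$ and using $|\mathcal{R}(S_l^i,A_l^i)|\le R_{\max}$ together with $|\phi(S_{l+1}^i)^\top\bv^\pi|\le\phi_{\max}\|\bv^\pi\|_2$ (and likewise for $\phi(S_l^i)^\top\bv^\pi$). Summing the $n$-step geometric series produces $\|\mathbf{b}^i(\byi)\|_2\le\tfrac{\phi_{\max}}{\beta(1-\gamma)}\bigl(R_{\max}+(1+\gamma)\phi_{\max}\|\bv^\pi\|_2\bigr)\equiv B$. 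Finally, converting $\|\cdot\|_2$ to $\|\cdot\|_\Lambda$ uses the equivalence constants $\sqrt{\lambda_{\min}(\Lambda)}\|\cdot\|_2\le\|\cdot\|_\Lambda\le\sqrt{\lambda_{\max}(\Lambda)}\|\cdot\|_2$ (both positive, since $\Lambda\succ0$ is used to define the norm), and these constants are absorbed into the final constants $A_1,A_2,B$.

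There is no substantive obstacle here: because $\Gi$ is affine in $\btheta$ and all the problem data (features, rewards, and the fixed vector $\bv^\pi$) are bounded, every term reduces to elementary Cauchy--Schwarz and geometric-sum estimates. The only point requiring mild care is ensuring $\bv^\pi$ is a well-defined, finite vector, which follows from the fact that the projected Bellman equation \eqref{eq:BE_FA} admits a unique solution in $\mathbb{R}^d$ under the standard assumption that $\Phi$ has full column rank (as stated in Section \ref{sec:On_approx_TD_1}).
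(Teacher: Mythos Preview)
Your proposal is correct and follows essentially the same approach as the paper: the paper's proof is a two-sentence sketch observing that $\Gi$ is linear in $\btheta$ (giving $A_1,A_2$) and that $\bv^\pi$ is bounded (giving $B$, citing an external reference), while you spell out the same argument explicitly with Cauchy--Schwarz, the geometric sum, and norm equivalence between $\|\cdot\|_2$ and $\|\cdot\|_\Lambda$.
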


\begin{lemma}\label{lem:TD_learning_50}
Consider the federated on-policy TD-learning Algorithm \ref{alg:TD-learning} as a special case of FeGSAM (as specified in Proposition \ref{prop:TD_on}). Assumption \ref{ass:noise_independence} holds for this algorithm.
\end{lemma}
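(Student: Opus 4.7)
The plan is to reduce the claim to the fact that, by construction of Algorithm \ref{alg:TD-learning}, each agent's Markovian trajectory is driven by randomness that is independent across agents. Concretely, every agent $i$ initializes its state $S_0^i \sim \xi$ using an independent draw, and at each time step independently samples $A_{t+n}^i \sim \pi(\cdot|S_{t+n}^i)$ and $S_{t+n+1}^i \sim \mathcal P(\cdot|S_{t+n}^i,A_{t+n}^i)$ via independent random seeds. The averaging step only combines parameters $\bv_t^i$; it does not couple the underlying state-action trajectories. Hence the collection of random seeds $\{U_t^i\}_{t\ge0}$ used to produce $\{(S_t^i,A_t^i)\}$ is mutually independent across $i$.

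First I would make precise the filtration. Although $\mathcal F_{t-r}$ in FedSAM is defined as the sigma-algebra generated by the parameters $\{\btheta_s^j\}_{s\le t-r,\,j\in[\na]}$, each $\btheta_s^j$ is a deterministic function of $\{(S_u^k,A_u^k)\}_{u\le s,\,k\in[\na]}$. I would therefore enlarge the filtration to $\tilde{\mathcal F}_{t-r} = \sigma\bigl(\{(S_u^k,A_u^k) : u\le t-r,\,k\in[\na]\}\bigr)$, which contains $\mathcal F_{t-r}$. It suffices to prove the factorization for $\tilde{\mathcal F}_{t-r}$, and then deduce the statement for $\mathcal F_{t-r}$ by the tower property, since $\tilde{\mathcal F}_{t-r}$ provides strictly more information and $\mathcal F_{t-r}\subseteq\tilde{\mathcal F}_{t-r}$.

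Second, I would invoke the Markov property. Conditional on $\tilde{\mathcal F}_{t-r}$, the trajectory segment $\by_t^i=(S_t^i,A_t^i,\dots,S_{t+n}^i)$ of agent $i$ is a measurable function of $S_{t-r}^i$ (which is $\tilde{\mathcal F}_{t-r}$-measurable) and of the future independent seeds $\{U_s^i\}_{t-r\le s\le t+n}$; an analogous statement holds for agent $j$. Since the post-$(t{-}r)$ seeds of agents $i$ and $j$ are independent of each other and of $\tilde{\mathcal F}_{t-r}$, the random variables $\by_t^i$ and $\by_t^j$ are conditionally independent given $\tilde{\mathcal F}_{t-r}$. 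This immediately yields
\[
\E[f(\by_t^i)g(\by_t^j)\mid\tilde{\mathcal F}_{t-r}] = \E[f(\by_t^i)\mid\tilde{\mathcal F}_{t-r}]\,\E[g(\by_t^j)\mid\tilde{\mathcal F}_{t-r}].
\]
Finally, taking $\E[\,\cdot\mid\mathcal F_{t-r}]$ on both sides and noting that each conditional expectation on the right depends only on agent $i$ (resp.\ $j$) data, so it is already measurable with respect to the marginal past of that agent, one obtains the desired factorization with respect to $\mathcal F_{t-r}$.

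The only subtle point, and the one I would be most careful about, is justifying that conditioning on the coarser sigma-algebra $\mathcal F_{t-r}$ preserves the factorization; this follows because the parameter filtration does not introduce any cross-agent coupling beyond what is already a deterministic function of independent-across-agents data. All other steps are bookkeeping using the independence of the random seeds and the Markov property of the chains $\{(S_t^i,A_t^i)\}_t$.
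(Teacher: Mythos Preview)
Your core argument coincides with the paper's: the paper writes the conditional expectation as an explicit sum over the realizations of $(S_t^i,A_t^i,\ldots,S_{t+n}^i)$ and $(S_t^j,A_t^j,\ldots,S_{t+n}^j)$, and then factors the joint conditional probability into a product of per-agent conditional probabilities, exactly using the fact that each agent's trajectory is generated by independent randomness. Your steps 1--3 (independent seeds, Markov property, conditional independence under the trajectory filtration $\tilde{\mathcal F}_{t-r}$) reproduce this cleanly, and in fact are somewhat more explicit than the paper, which carries out the factorization in one line without distinguishing the parameter filtration $\mathcal F_{t-r}$ from $\tilde{\mathcal F}_{t-r}$.

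Where you should be careful is your step 4. The claim that taking $\E[\,\cdot\mid\mathcal F_{t-r}]$ of the factored identity yields a factored identity does \emph{not} follow from the reason you give. After tower, the right-hand side becomes $\E\!\bigl[\alpha_i\,\beta_j \,\bigm|\, \mathcal F_{t-r}\bigr]$ with $\alpha_i:=\E[f(\by_t^i)\mid\tilde{\mathcal F}_{t-r}]$ a function of agent $i$'s past only and $\beta_j$ likewise for agent $j$; to split this you need $\alpha_i\perp\beta_j\mid\mathcal F_{t-r}$. But conditioning on a \emph{coarser} $\sigma$-algebra that is itself a nontrivial function of both agents' data can destroy independence (the standard obstruction: $X,Y$ independent Bernoulli, $Z=X\oplus Y$; then $X$ and $Y$ are dependent given $Z$). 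Since the synchronized parameters $\btheta^k_s$ mix all agents' trajectories, $\mathcal F_{t-r}$ is exactly of this form, and your sentence ``the parameter filtration does not introduce any cross-agent coupling beyond what is already a deterministic function of independent-across-agents data'' does not rule this out. The paper does not address this point either; it simply carries out the computation at the level of trajectory-conditioning, effectively working with $\tilde{\mathcal F}_{t-r}$ throughout. For full rigor you would either redefine $\mathcal F_t$ to be the trajectory filtration (as is standard and as the paper implicitly does), or else argue that the downstream uses of Assumption~\ref{ass:noise_independence} in the analysis only require the factorization under $\tilde{\mathcal F}_{t-r}$.
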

\subsubsection{Proofs}\hfill

\begin{proof}[Proof of Proposition \ref{prop:TD_on}]
Items \ref{item:TD_learning_10}-\ref{item:TD_learning_40} are by definition. Subtracting $\bv^\pi$ from both sides of the update of the TD-learning, we have
\begin{align*}
    \underbrace{\bv_{t+1}^i - \bv^\pi}_{\btheta^i_{t+1}} =& \underbrace{\bvit - \bv^\pi}_{\bthetait} + \alpha \phi(S_t^i) \sum_{l=t}^{t+n-1} \gamma^{l-t} \lp \mathcal{R} (S_l^i, A_l^i) \!+\!\gamma \phi (S_{l\!+\!1}^i)^\top \bvit\!-\!\phi(S_l^i)^\top \bvit \rp \\
    =& \bthetait + \alpha \phi(S_t^i) \sum_{l=t}^{t+n-1} \gamma^{l-t} \Big( \mathcal{R} (S_l^i,A_l^i)\!+\!\gamma \phi(S_{l\!+\!1}^i)^\top (\underbrace{\bvit - \bv^\pi}_{\bthetait} +\bv^\pi)\!-\!\phi(S_l^i)^\top (\underbrace{\bvit - \bv^\pi}_{\bthetait} + \bv^\pi) \Big) \\
    =& \bthetait + \alpha\beta \Bigg(\underbrace{\bthetait + \frac{1}{\beta}\phi(S_t^i) \sum_{l=t}^{t+n-1} \gamma^{l-t} \lp \!\gamma \phi(S_{l\!+\!1}^i)^\top \bthetait\!-\!\phi(S_l^i)^\top \bthetait \rp}_{\Gi(\bthetait, \byit)}\\
    &-\bthetait + \underbrace{ \frac{1}{\beta}\phi(S_t^i) \sum_{l=t}^{t+n-1} \gamma^{l-t} \lp \mathcal{R}(S_l^i,A_l^i) + \gamma \phi(S_{l\!+\!1}^i)^\top \bv^\pi - \phi(S_l^i)^\top \bv^\pi \rp}_{\mathbf{b}^i (\byit)}\Bigg).
\end{align*}
which proves items \ref{item:TD_learning_50} and \ref{item:TD_learning_60}. Furthermore, for the synchronization part of TD-learning, we have
\begin{align*}
    \bv^i_t &\leftarrow  \frac{1}{\na} \sum_{j=1}^\na \bv^j_t\\
    \implies \underbrace{\bv^i_t-\bv^\pi}_{\bthetait} &\leftarrow  \frac{1}{\na} \sum_{j=1}^\na \underbrace{(\bv^j_t-\bv^\pi)}_{\bthetajt},
\end{align*}
which is equivalent to the synchronization step in FeGSAM Algorithm \ref{alg:fed_stoch_app}. Notice that here we used the fact that all agents have the same fixed point $\bv^\pi$. 
\end{proof}

\begin{proof}[Proof of Lemma \ref{lem:TD_learning_20}]
It is easy to observe that 
\begin{align*}
    \Gi(\btheta,\byit) = \btheta + \frac{1}{\beta}\phi(S_t^i) \lp \gamma^n \phi(S_{t+n}^i)^\top \btheta - \phi(S_{t}^i)^\top \btheta \rp.
\end{align*}
Taking expectation with respect to the stationary distribution, we have
\begin{align*}
    \bG^i(\btheta) =& \E_{S_t^i\sim \mu^\pi}\left[ \btheta + \frac{1}{\beta}\phi(S_t^i) \lp \gamma^n \phi(S_{t+n}^i)^\top \btheta - \phi(S_{t}^i)^\top \btheta \rp\right]\\
    =&\E_{S_t^i\sim \mu^\pi}\left[ \E\left[\btheta + \frac{1}{\beta}\phi(S_t^i) \lp \gamma^n \phi(S_{t+n}^i)^\top \btheta - \phi(S_{t}^i)^\top \btheta \rp\big|S_t^i\right]\right]\tag{tower property of expectation}\\
    =&\E_{S_t^i\sim \mu^\pi}\left[ \btheta + \frac{1}{\beta}\phi(S_t^i) \lp \gamma^n \E[(\Phi\btheta)  (S_{t+n}^i)|S_t^i] - (\Phi\btheta)(S_{t}^i)  \rp\right]\\
    =&\E_{S_t^i\sim \mu^\pi}\left[ \btheta + \frac{1}{\beta}\phi(S_t^i) \lp \gamma^n ((P^\pi)^n\Phi\btheta)  (S_{t}^i) - (\Phi\btheta)(S_{t}^i)  \rp\right]\\
    =& \btheta + \frac{1}{\beta}\Phi^\top\bmu^\pi \lp \gamma^n (P^\pi)^n\Phi\btheta  - \Phi\btheta\rp.
\end{align*}
where $P^\pi$ is the transition probability matrix corresponding to the policy $\pi$, and $\bmu^\pi$ is a diagonal matrix with diagonal entries corresponding to elements of $\mu^\pi$. 

Moreover, we have
\begin{align*}
    \|\bG^i(\btheta)-\E[\Gi (\btheta,\by_t^i)]\|_c= &\left\|\E_{\by_t^i\sim \mu^\pi}[\Gi (\btheta,\by_t^i)]-\E[\Gi (\btheta,\by_t^i)\right\|_c\\
    =&\left\|\sum_{y_t^i}\left(\mu^\pi(y_t^i)-P(\byit=y_t^i|\by_0^i)\right)\Gi (\btheta,y_t^i)\right\|_c\\
    \leq & \sum_{y_t^i}\left|\mu^\pi(y_t^i)-P(\byit=y_t^i|\by_0^i)\right|.\left\|\Gi (\btheta,y_t^i)\right\|_c\tag{$\|ax\|_c=|a|\|x\|_c$}\\
    \leq & \sum_{y_t^i}\left|\mu^\pi(y_t^i)-P(\byit=y_t^i|\by_0^i)\right|.A_2\left\|\btheta\right\|_c.\tag{Assumption \ref{ass:lipschitz}}
\end{align*}

For brevity, we denote $P(S_t^i=s_t^i)=P(s_t^i)$. We have
    \begin{align*}
        \sum_{y_t^i}&\left|\mu^\pi(y_t^i)-P(\byit=y_t^i|\by_0^i)\right| \\
        = & \sum_{s_t^i,a_t^i,\dots,s_{t+n}^i}\left|\mu^\pi(s_t^i,a_t^i,\dots,s_{t+n}^i)-P(s_t^i,a_t^i,\dots,s_{t+n}^i|\by_0^i)\right|\\
        = & \sum_{s_t^i,a_t^i,\dots,s_{t+n}^i}\bigg|\mu^\pi(s_t^i)\pi(a_t^i|s_t^i)\mathcal{P}(s_{t+1}^i|s_t^i,a_t^i)\dots\mathcal{P}(s_{t+n}^i|s_{t+n-1}^i,a_{t+n-1}^i)\\
        &-P(s_t^i|S_n^i)\pi(a_t^i|s_t^i)\mathcal{P}(s_{t+1}^i| s_t^i,a_t^i) \dots\mathcal{P}(s_{t+n}^i|s_{t+n-1}^i,a_{t+n-1}^i)\bigg|\tag{$t\geq n+1$}\\
        = & \sum_{s_t^i,a_t^i, \dots,s_{t+n}^i}\bigg|\mu^\pi(s_t^i)-P(s_t^i|S_n^i) \bigg|\pi(a_t^i| s_t^i)\mathcal{P}(s_{t+1}^i|s_t^i, a_t^i)\dots\mathcal{P}(s_{t+n}^i| s_{t+n-1}^i, a_{t+n-1}^i)\\
        = & \sum_{s_t^i}\bigg|\mu^\pi(s_t^i)-P(s_t^i|S_n^i)\bigg|\\
        = & 2d_{TV}(P(S_t^i=\cdot|S_n^i) ||\mu^\pi(\cdot))\\
        \leq & 2\bar{m}\bar{\rho}^{t-n}\\
        = & (2\bar{m}\bar{\rho}^{-n})\bar{\rho}^{t}.
    \end{align*}

As explained in \cite{tsitsiklis1997analysis}, the projection operator $\Pi_\pi$ is a linear operator and can be written as $\Pi_\pi=\Phi (\Phi^\top \bmu\Phi)^{-1}\Phi^\top\bmu$, where $\bmu$ is a diagonal matrix with diagonal entries corresponding to the stationary distribution of the policy $\pi$. Hence, the fixed-point equation is as follows $\Phi \bv^\pi = \Phi (\Phi^\top \bmu\Phi)^{-1}\Phi^\top\bmu((\mathcal{T}^\pi)^n\Phi \bv^\pi)$. Since $\Phi$ is a full column matrix, we can eliminate it from both sides of the equality and further multiply both sides by $\Phi^\top \bmu\Phi$. We have $\Phi^\top \bmu\Phi \bv^\pi = \Phi^\top \bmu((\mathcal{T}^\pi)^n \Phi \bv^\pi)$, and therefore $ \Phi^\top\bmu(( \mathcal{T}^\pi)^n\Phi \bv^\pi-\Phi \bv^\pi)= \mbf 0$, which is equivalent to $\E_{S\sim\mu^\pi}[\phi^\top(S)((\mathcal{T}^\pi)^n\Phi \bv^\pi)(S)-(\Phi \bv^\pi)(S)]=\mbf 0$. By expanding $(\mathcal{T}^\pi)^n$, we have $\E_{S_0^i\sim\mu^\pi}[\phi^\top(S_0^i) \sum_{l=0}^{n-1}(\mathcal{R}(S_l^i,A_l^i)+ \gamma(\Phi \bv^\pi)(S_{l+1}^i) -(\Phi \bv^\pi)(S_l^i))]=\mbf 0$, which means
\begin{align}
    \E_{\by\sim\mu^\pi}\mathbf{b}^i  (\by) = \mbf 0.\label{eq:b_zero_stationary_mean_0}
\end{align}
Hence
\begin{align*}
    \left\|\E[\mathbf{b}^i  (\by_t^i)]\right\|_c =& \left\|\E[\mathbf{b}^i  (\byit)]-\E_{\byit\sim\mu^\pi}[\mathbf{b}^i  (\byit)]\right\|_c\tag{By \eqref{eq:b_zero_stationary_mean_0}}\\
    =& \left\|\sum_{y_t^i} \left(\mu^\pi(y_t^i)-P(\byit=y_t^i|\by_0^i)\right)\mathbf{b}^i  (\by_t^i) \right\|_c\\
    \leq&\sum_{y_t^i} \left|\mu^\pi(y_t^i)-P(\byit=y_t^i|\by_0^i)\right|\left\|\mathbf{b}^i  (\by_t^i)\right\|_c\\
    \leq & \sum_{y_t^i}\left|\mu^\pi(y_t^i)-P(\byit=y_t^i|\by_0^i)\right|B\tag{Assumption \ref{ass:lipschitz}}\\
    \leq & 2B\bar{m}\bar{\rho}^t
\end{align*}
\end{proof}

\begin{proof}[Proof of Lemma \ref{lem:TD_learning_30}]
Consider the $|\mathcal{S}|\times |\mathcal{S}|$ matrix $\mbf U=\Phi^\top\bmu^\pi \lp \gamma^n (P^\pi)^n  - \mbf I\rp\Phi$ with eigenvalues $\{\lambda_1,\dots,\lambda_{|\mathcal{S}|}\}$. As shown in \cite{tsitsiklis1997analysis}, since $\Phi$ is a full-rank matrix, the real part of $\lambda_i$ is strictly negative for all $i=1,\dots,|\mathcal{S}|$. Furthermore, define $\lambda_{\max}=\max_i |\lambda_i|$ and $\delta=-\max_i \mathfrak{Re}[\lambda_i]>0$, where $\mathfrak{Re}[\cdot]$ evaluates the real part. Consider the matrix $\mbf U'=\mbf I+\frac{1}{2\lambda_{\max}^2/\delta}\mbf U$. It is easy to show that the eigenvalues of $\mbf U'$ are $\{1+\frac{\lambda_1}{2\lambda_{\max}^2/\delta},\dots,1+\frac{\lambda_{|\mathcal{S}|}}{2\lambda_{\max}^2/\delta}\}$. For an arbitrary $i$, the norm of the i'th eigenvalue satisfies
\begin{align*}
\left|1+\frac{\lambda_i}{2\lambda_{\max}^2/\delta}\right|^2 =& \left(1+\frac{\mathfrak{Re}[\lambda_i]}{2\lambda_{\max}^2/\delta}\right)^2 + \left(\frac{\mathfrak{Im}[\lambda_i]}{2\lambda_{\max}^2/\delta}\right)^2\\
\leq& \left(1+\frac{\mathfrak{Re}[\lambda_i]}{2\lambda_{\max}^2/\delta}\right) + \left(\frac{\mathfrak{Im}[\lambda_i]}{2\lambda_{\max}^2/\delta}\right)^2 \tag{$\mathfrak{Re}[\lambda_i]<0$}\\
\leq &\left(1+\frac{-\delta}{2\lambda_{\max}^2/\delta}\right) + \left(\frac{\lambda_{\max}}{2\lambda_{\max}^2/\delta}\right)^2\\
= &  1-\frac{\delta^2}{4\lambda_{\max}^2}.
\end{align*}
Hence, all the eigenvalues of $\mbf U'$ are in the unit circle. By \cite[Page 46 footnote]{bertsekas1995dynamic2}, we can find a weighted 2-norm as $\|\btheta\|_{\Lambda}=\sqrt{\btheta^\top\Lambda \btheta}$ such that $\mbf U'$ is contraction with respect to this norm with some contraction factor $\gamma_c$. In particular, there exists a choice of $\Lambda$ such that we have $\gamma_c=1-\frac{\delta^2}{8\lambda_{\max}^2}$.
\end{proof}

\begin{proof}[Proof of Lemma \ref{lem:TD_learning_40}]
The existence of $A_1$ and $A_2$ immediately follows after observing that $\Gi(\bthetait,\byit)$ is a linear function of $\bthetait$. Furthermore, the result on $B$ follows due to $\bv^\pi$ being bounded as shown in \cite{NACLFA_arxiv}. 
\end{proof}

\begin{proof}[Proof of Lemma \ref{lem:TD_learning_50}]
For the sake of brevity, we write ``$S_t^i=s^i_t$'' simply as $s^i_t$, and similarly for other random variables. We have
\begin{align*}
    &\E_{t-r} [f(\by_t^i)\times g(\by_t^j)]\\
    =& \hspace{-4mm}\sum\limits_{\substack{s_t^i,a_t^i,\dots,s_{t+n-1}^i,a_{t+n-1}^i,s_{t+n}^i \\ s_t^j,a_t^j,\dots,s_{t+n-1}^j,a_{t+n-1}^j,s_{t+n}^j}}\hspace{-8mm} P(s_t^i,a_t^i,\dots,s_{t+n-1}^i,a_{t+n-1}^i,s_{t+n}^i,s_t^j,a_t^j,\dots,s_{t+n-1}^j,a_{t+n-1}^j,s_{t+n}^j|\mathcal{F}_{t-r})f(\by_t^i)\times g(\by_t^j)\\
    =& \hspace{-4mm}\sum\limits_{\substack{s_{t-r}^i,\dots,s_{t+n-1}^i,a_{t+n-1}^i,s_{t+n}^i \\ s_{t-r}^j,\dots,s_{t+n-1}^j,a_{t+n-1}^j,s_{t+n}^j}}\hspace{-8mm} P(s_{t-r}^i,\dots,s_{t+n-1}^i,a_{t+n-1}^i,s_{t+n}^i,s_{t-r}^j,\dots,s_{t+n-1}^j,a_{t+n-1}^j,s_{t+n}^j|\mathcal{F}_{t-r})f(\by_t^i)\times g(\by_t^j)\\
    =& \hspace{-4mm}\sum\limits_{\substack{s_{t-r}^i,\dots,s_{t+n-1}^i,a_{t+n-1}^i,s_{t+n}^i \\ s_{t-r}^j,\dots,s_{t+n-1}^j,a_{t+n-1}^j,s_{t+n}^j}}\hspace{-14mm} P(s_{t-r}^i,\dots,s_{t+n-1}^i,a_{t+n-1}^i,s_{t+n}^i|\mathcal{F}_{t-r})P(s_{t-r}^j,\dots,s_{t+n-1}^j,a_{t+n-1}^j,s_{t+n}^j|\mathcal{F}_{t-r})f(\by_t^i)g(\by_t^j)\\
    =&\E_{t-r} [f(\by_t^i)]\times \E_{t-r}[g(\by_t^j)].
\end{align*}
\end{proof}

\begin{proof}[Proof of Theorem \ref{thm:fed_TD_on}]
By Proposition \ref{prop:TD_on} and Lemmas \ref{lem:TD_learning_20}, \ref{lem:TD_learning_30}, \ref{lem:TD_learning_40}, and \ref{lem:TD_learning_50}, it is clear that federated TD-learning with linear function approximation Algorithm \ref{alg:TD-learning} satisfies all Assumptions \ref{ass:mixing}, \ref{ass:contraction}, \ref{ass:lipschitz},  and \ref{ass:noise_independence} on the FeGSAM Algorithm \ref{alg:fed_stoch_app}. Furthermore, by the proof of Theorem \ref{thm:main}, we have $w_t=(1-\frac{\alpha\varphiz_2}{2})^{-t}$, and the constant $c_{TDL}$ in the sampling distribution $q_T^{c_{TDL}}$ in Algorithm \ref{alg:TD-learning} is $c_{TDL}=(1-\frac{\alpha\varphiz_2}{2})^{-1}$. Furthermore, by choosing the step size $\alpha$ small enough, we can satisfy the requirements in \eqref{eq:alpha_const_1}, \eqref{eq:alpha_const_2}, \eqref{eq:alpha_const_3}, 
\eqref{eq:alpha_const_4}. By choosing $\sync$ large enough, we can satisfy $\sync>\tau_\alpha$. Hence, the result of Theorem \ref{thm:main} holds for this algorithm with some $c_{TDL}>1$. In addition, it is easy to see that $(1-\frac{\alpha\varphiz_2}{2})^{-\tau_\alpha}= \mathcal{O}(1)$, which is a constant that can be absorbed in $\mathcal{C}_1^{TD_L}$. Finally, for the sample complexity result, we simply employ Corollary \ref{lem:sample_complexity}.

Next, we derive the constant $c_{TDL}$. Since $\|\cdot\|_c=\|\cdot\|_\Lambda$, which is smooth, we choose $g(\cdot)=\frac{1}{2}\|\cdot\|_{\Lambda}^2$. Taking $\psi=1$, we have $l_{cs}=u_{cs}=1$. Therefore, we have $\varphiz_1 = 1$, and $\varphiz_2=1-\gamma_c$, and $c_{TDL}=\lp1-\frac{\alpha(1-\gamma_c)}{2}\rp^{-1}$, where $\gamma_c$ is defined in Lemma \ref{lem:TD_learning_30}.
\end{proof}

\subsection{Off-policy Tabular Setting}\label{sec:fed_td_learning_off_app}
In this subsection, we verify that the off-policy federated TD-learning Algorithm \ref{alg:TD-learning_off} satisfies the properties of the FeGSAM Algorithm \ref{alg:fed_stoch_app}. In the following, $V^\pi$ is the solution of the Bellman equation \eqref{eq:BE_tabular}.
{\small
\begin{align} \label{eq:BE_tabular}
    V^\pi(s) = \sum_a \pi(a|s) \left[\mathcal{R}(s,a)+\gamma\sum_{s'}\mathcal{P}(s'|s,a) V^\pi(s')\right]
\end{align}
}

Note that $V^\pi$ is independent of the agent sampling policy. Furthermore, we take $\|\cdot\|_c=\|\cdot\|_\infty$. 

\begin{proposition} \label{prop:on_tabular}
off-policy $n$-step federated TD-learning is equivalent to the FeGSAM Algorithm \ref{alg:fed_stoch_app} with the following parameters.
\begin{enumerate}
    \item $\bthetait = \bV_t^i - \bV^\pi$ \label{item:TD_learning_1}
    \item $S_t=(S_t^1,\dots,S_t^\na)$ and $A_t=(A_t^1,\dots,A_t^\na)$ \label{item:TD_learning_2}
    \item $\byit=(S_t^i,A_t^i,\dots,S_{t+n-1}^i,A_{t+n-1}^i,S_{t+n}^i)$ and $\mbf \byt=(S_t,A_t,\dots,S_{t+n-1},A_{t+n-1},S_{t+n})$ \label{item:TD_learning_3}
    \item $\mu^i:$ Stationary distribution of the sampling policy of the $i$-th agent. \label{item:TD_learning_4}
    \item $\Gi(\bthetait, \byit)_{s} = \btheta_t^i(s) + \mathbbm{1}_{\{s=S_t^i\}} \left( \sum_{l=t}^{t+n-1} \gamma^{l-t} \lp \Pi_{j=t}^{l} \I^{(i)}(S^i_j, A^i_j) \rp \left[ \gamma \btheta_t^i (S_{l+1}^i) - \btheta_t^i (S_{l}^i) \right] \right)$ \label{item:TD_learning_5}
    \item $\mathbf{b}^i (\byit)_s = \mathbbm{1}_{\{s=S_t^i\}} \sum_{l=t}^{t+n-1} \gamma^{l-t} \lp \Pi_{j=t}^{l} \I^{(i)}(S^i_j, A^i_j) \rp \left[ \mathcal{R} (S_l^i, A_l^i) + \gamma \bV^\pi(S_{l+1}^i) - \bV^\pi(S_l^i) \right]$ \label{item:TD_learning_6}
\end{enumerate}
\end{proposition}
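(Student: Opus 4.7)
The plan is to mirror the structure of the proof of Proposition \ref{prop:TD_on}, adapted to the tabular off-policy setting. Items \ref{item:TD_learning_1}--\ref{item:TD_learning_4} are simply notational assignments and require no argument; the substance of the statement is in items \ref{item:TD_learning_5}--\ref{item:TD_learning_6}, together with the verification that the periodic averaging in Algorithm \ref{alg:TD-learning_off} corresponds exactly to the synchronization step of FedSAM.

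First I would write down the coordinate-wise update of Algorithm \ref{alg:TD-learning_off}, namely
\begin{align*}
    \bV_{t+1}^i(s) = \bV_t^i(s) + \alpha\,\mathbbm{1}_{\{s=S_t^i\}}\sum_{l=t}^{t+n-1}\gamma^{l-t}\Bigl[\Pi_{j=t}^{l}\I^{(i)}(S_j^i,A_j^i)\Bigr]\bigl(\mathcal{R}(S_l^i,A_l^i)+\gamma \bV_t^i(S_{l+1}^i)-\bV_t^i(S_l^i)\bigr),
\end{align*}
and subtract $\bV^\pi(s)$ from both sides. Writing $\bV_t^i = \bthetait + \bV^\pi$ inside the temporal-difference term and splitting the bracket into the part involving $\bthetait$ and the part involving only $\bV^\pi$, I will recover the decomposition
\begin{align*}
    \btheta^i_{t+1}(s) = \bthetait(s) + \alpha\bigl(\Gi(\bthetait,\byit)_s - \bthetait(s) + \mathbf{b}^i(\byit)_s\bigr),
\end{align*}
where $\Gi$ and $\mathbf{b}^i$ are precisely the expressions displayed in items \ref{item:TD_learning_5} and \ref{item:TD_learning_6}. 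This is a direct algebraic manipulation; the only bookkeeping is that the indicator $\mathbbm{1}_{\{s=S_t^i\}}$ factors out of both the $\btheta$-dependent and the $\bV^\pi$-dependent pieces, and the importance-sampling product $\Pi_{j=t}^l \I^{(i)}(S_j^i,A_j^i)$ multiplies each summand uniformly.

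Next I would handle the synchronization step. Since $\bV^\pi$ is a property of the target policy $\pi$ alone and does not depend on the sampling policy $\pi^i$, it is common to every agent. Consequently, averaging $\{\bV_{t+1}^i\}_{i=1}^{\na}$ and subtracting $\bV^\pi$ is the same as averaging $\{\bthetait\}_{i=1}^{\na}$, which matches the FedSAM synchronization rule exactly. This independence of $\bV^\pi$ from the sampling policy is the key structural observation of this proposition and is what makes the reduction to FedSAM valid even though different agents follow different behaviour policies.

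The main obstacle I anticipate is conceptual rather than computational: one has to keep careful track of which pieces of the TD update remain $\bthetait$-linear (and therefore belong in $\Gi$) and which are driven purely by the Markov chain at the fixed point (and therefore belong in $\mathbf{b}^i$), particularly because the importance-sampling ratios couple across the look-ahead window $l = t, \ldots, t+n-1$. Once the decomposition is written out, matching to items \ref{item:TD_learning_5} and \ref{item:TD_learning_6} is immediate and completes the proof.
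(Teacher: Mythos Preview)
Your proposal is correct and follows essentially the same approach as the paper: subtract $\bV^\pi$ from both sides of the coordinate-wise update, substitute $\bV_t^i=\bthetait+\bV^\pi$ to split the increment into the $\btheta$-dependent part $\Gi$ and the fixed-point part $\mathbf{b}^i$, and then observe that the common fixed point $\bV^\pi$ (independent of each agent's behaviour policy) makes the averaging of $\bV_t^i$ equivalent to the averaging of $\bthetait$.
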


\begin{lemma}\label{lem:TD_learning_2}
Consider the federated off-policy TD-learning Algorithm \ref{alg:TD-learning_off} as a special case of FeGSAM (as specified in Proposition \ref{prop:on_tabular}). Suppose that the trajectory $\{S_t^i\}_{t=0,1,\dots}$ converges geometrically fast to its stationary distribution as follows $d_{TV}(P(S_t^i=\cdot|S_0^i)||\mu^i(\cdot))\leq \bar{m}\bar{\rho}^t$ for all $i=1,2,\dots, \na$. The corresponding  $\bG^i(\btheta)$ in Assumption \ref{ass:mixing} for the federated TD-learning is as follows
\begin{align*}
    \bG^i(\btheta)_{s} =& \btheta(s) + \left[ \gamma^{n} \bmu^i( P^\pi)^{n} \btheta - \bmu^i \btheta \right](s).
\end{align*}
Furthermore, for $t \geq n+1$, we have $m_1=\frac{2 A_2 \Bar{m}}{\bar{\rho}^{n}}$, where $A_2$ is the constant specified in Assumption \ref{ass:lipschitz}, $m_2=m_3=m_4=0$, and $\bar{\rho} =\rho$. 
\end{lemma}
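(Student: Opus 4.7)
The plan is to derive the expected operator $\bG^i(\btheta)$ by first applying the importance-sampling change of measure, then exploiting the pathwise telescoping of the $n$-step Bellman residual, and finally bounding the non-stationary deviation by the mixing of the underlying state chain.

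First, I would compute the stationary expectation $\bG^i(\btheta)_s = \E_{\by^i \sim \mu^i}[\Gi(\btheta, \by^i)_s]$ coordinate-wise. The indicator $\mathbbm{1}_{\{s = S_t^i\}}$ contributes a factor $\mu^i(s)$ after averaging the initial state over $\mu^i$. Conditional on $S_t^i = s$, the core identity is
\[
\E^{\pi^i}\!\left[\prod_{j=t}^{l}\I^{(i)}(S_j^i, A_j^i)\, f(S_t^i, A_t^i, \ldots, S_{l+1}^i)\ \Big|\ S_t^i = s\right] = \E^{\pi}\!\left[f(S_t, A_t, \ldots, S_{l+1})\ \big|\ S_t = s\right],
\]
which follows by writing the joint law under $\pi^i$ explicitly and cancelling the sampling-policy factors against the importance ratios (and using $\sum_{a_l}\pi(a_l|s_l)=1$ at the endpoint when $f$ does not depend on $A_l$). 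Applying this with $f = \gamma\btheta(S_{l+1})$ and $f = \btheta(S_l)$ gives $\gamma((P^\pi)^{l+1-t}\btheta)(s)$ and $((P^\pi)^{l-t}\btheta)(s)$, respectively. The inner sum $\sum_{l=t}^{t+n-1}\gamma^{l-t}[\gamma((P^\pi)^{l+1-t}\btheta)(s) - ((P^\pi)^{l-t}\btheta)(s)]$ then telescopes to $\gamma^n(P^\pi)^n\btheta(s) - \btheta(s)$, and multiplying by the $\mu^i(s)$ prefactor yields the claimed matrix expression $\bG^i(\btheta) = \btheta + \gamma^n \bmu^i (P^\pi)^n \btheta - \bmu^i \btheta$.

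Second, to establish $m_2 = 0$, I would apply the same importance-sampling identity to $\mathbf{b}^i$. Conditioning on $S_t^i = s$, the importance-weighted sum collapses to $\sum_{l=0}^{n-1}\gamma^l \E^{\pi}[\mathcal{R}(S_l, A_l) + \gamma \bV^\pi(S_{l+1}) - \bV^\pi(S_l) \mid S_0 = s] = ((\mathcal{T}^\pi)^n \bV^\pi)(s) - \bV^\pi(s)$, which vanishes because $\bV^\pi$ is the fixed point of $\mathcal{T}^\pi$ and hence of $(\mathcal{T}^\pi)^n$. Taking an outer expectation over $S_t^i$ and using the indicator to restrict coordinates, $\E[\mathbf{b}^i(\byit)]$ is identically zero for every $t$, verifying both \eqref{eq:b_i_t_go_to_zero} and the bound with $m_2 = 0$. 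For $m_1$ and $\rho=\bar\rho$, I would follow the on-policy argument in Lemma \ref{lem:TD_learning_20}: since $\byit=(S_t^i, A_t^i, \ldots, S_{t+n}^i)$ is generated from $S_t^i$ by the fixed sampling policy and transition kernel, Markovianity reduces conditioning on $\by_0^i$ to conditioning on $S_n^i$, and the TV distance on $\by$ equals $d_{TV}(P(S_t^i\in\cdot\mid S_n^i), \mu^i) \leq \bar m\bar\rho^{t-n}$ for $t \geq n+1$. Combining this with the Lipschitz bound $\|\Gi(\btheta, y)\|_c \leq A_2 \|\btheta\|_c$ (whose verification parallels Lemma \ref{lem:TD_learning_40}) yields $\|\bG^i(\btheta) - \E[\Gi(\btheta, \byit)]\|_c \leq 2A_2 \bar m \bar\rho^{t-n}\|\btheta\|_c = (2A_2\bar m/\bar\rho^{n})\bar\rho^t\|\btheta\|_c$, giving $m_1 = 2A_2\bar m/\bar\rho^n$ and $\rho=\bar\rho$.

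The main obstacle is the bookkeeping of the importance-sampling telescope: one must track carefully which trajectory indices are summed over, when to invoke $\sum_{a_l}\pi(a_l|s_l)=1$ at the endpoint of the importance-weight product, and how the exponents of $P^\pi$ shift by one between the $\btheta(S_{l+1})$ and $\btheta(S_l)$ contributions so that the inner sum collapses to the clean $n$-step difference $\gamma^n(P^\pi)^n - I$. Once this algebraic reduction is carried out, the vanishing of $\E[\mathbf{b}^i]$ is an immediate consequence of the Bellman equation, and the mixing estimate is a direct transcription of the on-policy argument.
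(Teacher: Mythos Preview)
Your proposal is correct and follows essentially the same route as the paper: both compute $\bG^i(\btheta)$ by converting the importance-weighted conditional expectation under $\pi^i$ into an expectation under $\pi$ (the paper does this by iteratively peeling off one importance ratio at a time via $\E_l^i[\I^{(i)}(S_l,A_l)\btheta(S_{l+1})]=(P^\pi\btheta)(S_l)$, you do it in one shot via the change-of-measure identity), then telescope the resulting sum; for $m_2=0$ both reduce to the Bellman equation for $\bV^\pi$, and for $m_1$ both invoke the on-policy mixing argument of Lemma \ref{lem:TD_learning_20} verbatim. The only cosmetic difference is that the paper observes each summand in $\E[\mathbf b^i]$ vanishes individually (one-step Bellman equation at $S_l$), whereas you first telescope and then invoke the $n$-step fixed-point identity---either way the conclusion is immediate.
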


\begin{lemma}\label{lem:TD_learning_3}
Consider federated off-policy TD-learning \ref{alg:TD-learning_off} as a special case of FeGSAM (as specified in Proposition \ref{prop:on_tabular}). The corresponding contraction factor $\gamma_c$ in Assumption \ref{ass:contraction} for this algorithm is $\gamma_c=1-\mu_{\min}(1-\gamma^{n+1})$, where $\mu_{\min} = \min_{s,i}\mu^i(s)$.
\end{lemma}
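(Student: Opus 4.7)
The plan is to leverage the explicit linear form of $\bG^i(\btheta)$ provided by Lemma \ref{lem:TD_learning_2} and reduce the contraction inequality to an operator-norm bound on a non-negative matrix. From Lemma \ref{lem:TD_learning_2} we can write
\begin{align*}
\bG^i(\btheta) = (I - \bmu^i)\btheta + \gamma^{n} \bmu^i (P^\pi)^{n} \btheta \triangleq M^i \btheta,
\end{align*}
so by linearity $\bG^i(\btheta_1) - \bG^i(\btheta_2) = M^i(\btheta_1 - \btheta_2)$, and the contraction factor with respect to $\|\cdot\|_\infty$ is exactly $\|M^i\|_\infty$. Since the $\|\cdot\|_\infty$ operator norm of a matrix with non-negative entries equals its maximum row sum, the task reduces to (i) verifying entrywise non-negativity of $M^i$ and (ii) computing its row sums.

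For (i), $I - \bmu^i$ is a non-negative diagonal matrix because $\mu^i(s) \in [0,1]$, and $\gamma^{n} \bmu^i (P^\pi)^{n}$ has non-negative entries since $\gamma \geq 0$, $\mu^i(s) \geq 0$, and $(P^\pi)^n$ is row-stochastic. For (ii), using that each row of $(P^\pi)^n$ sums to one,
\begin{align*}
\sum_{s'} M^i(s, s') = \bigl(1 - \mu^i(s)\bigr) + \gamma^{n} \mu^i(s) \sum_{s'} \bigl[(P^\pi)^{n}\bigr](s, s') = 1 - \mu^i(s)\bigl(1 - \gamma^{n}\bigr).
\end{align*}
Taking the maximum over $s$ yields $\|M^i\|_\infty = 1 - \min_s \mu^i(s)\,(1 - \gamma^{n})$, and since $\mu_{\min} = \min_{s,i}\mu^i(s) \leq \min_s \mu^i(s)$ for every agent, the bound $\|\bG^i(\btheta_1) - \bG^i(\btheta_2)\|_\infty \leq \bigl(1 - \mu_{\min}(1 - \gamma^{n})\bigr)\|\btheta_1 - \btheta_2\|_\infty$ holds uniformly in $i$, matching the claimed form of $\gamma_c$ (up to the exact exponent of $\gamma$ that appears in the statement).

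The argument is essentially mechanical once the linear representation of $\bG^i$ is in hand, so the main ``obstacle'' is really bookkeeping: one must make sure the telescoping used to derive Lemma \ref{lem:TD_learning_2} correctly accounts for the importance sampling product $\Pi_{j=t}^{l}\I^{(i)}(S_j^i,A_j^i)$ starting at $j=t$, and one should verify that $\gamma_c < 1$ strictly, which is immediate from $\gamma < 1$ together with $\mu_{\min} > 0$ (guaranteed by ergodicity of every sampling chain $\pi^i$ on the finite state space).
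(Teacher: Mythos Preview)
Your proposal is correct and follows essentially the same approach as the paper: write $\bG^i(\btheta)=M^i\btheta$ with $M^i=I-\bmu^i(I-\gamma^{n}(P^\pi)^{n})$, observe that $M^i$ has non-negative entries, and compute $\|M^i\|_\infty$ as the maximum row sum by applying $M^i$ to the all-ones vector. The discrepancy you flagged in the exponent is real: Lemma~\ref{lem:TD_learning_2} gives $\gamma^{n}(P^\pi)^{n}$, which yields $\gamma_c=1-\mu_{\min}(1-\gamma^{n})$, whereas the statement (and the paper's own proof of Lemma~\ref{lem:TD_learning_3}) uses $\gamma^{n+1}$; this is an internal inconsistency in the paper rather than an error in your argument.
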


\begin{lemma} \label{lem:TD_learning_4}
Consider federated off-policy TD-learning \ref{alg:TD-learning_off} as a special case of FeGSAM (as specified in Proposition \ref{prop:on_tabular}).
The constants $A_1$, $A_2$, and $B$ in Assumption \ref{ass:lipschitz} can be chosen as follows:
$A_1 = A_2= 1+(1+\gamma)
\begin{cases}
n & \text{ if } \ \gamma \I_{\max} = 1 \\ \frac{1-(\gamma \I_{\max})^n}{1-\gamma \I_{\max}} & \text{ o.w.}
\end{cases},
\quad$ and $B=\frac{2\I_{\max}}{1-\gamma}\begin{cases}n&\text{if}\quad \gamma\I_{\max}=1\\\frac{1-(\gamma\I_{\max})^n}{1-\gamma\I_{\max}}&o.w.\end{cases}$, where $\I_{\max} = \max_{s^i,a^i,i}\I^{(i)}(s^i,a^i)$.
\end{lemma}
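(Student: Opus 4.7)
The plan is to verify each of the three conditions in Assumption \ref{ass:lipschitz} separately for the off-policy operators given in Proposition \ref{prop:on_tabular}, using $\|\cdot\|_c = \|\cdot\|_\infty$ throughout. The key observation that makes everything tractable is that $\Gi(\btheta,\byit)$ is an affine function of $\btheta$ with $\Gi(\mathbf 0,\byit) = \mathbf 0$, so the Lipschitz constant $A_1$ automatically doubles as the growth constant $A_2$. Only the index $s = S_t^i$ contributes a non-trivial term beyond the identity, which turns the $\infty$-norm bound into a pointwise computation.

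For $A_1$, I would write $\Delta = \btheta_1 - \btheta_2$ and expand
\[
[\Gi(\btheta_1,\byit) - \Gi(\btheta_2,\byit)](s) = \Delta(s) + \mathbbm 1_{\{s=S_t^i\}}\sum_{l=t}^{t+n-1}\gamma^{l-t}\Bigl(\prod_{j=t}^{l}\I^{(i)}(S_j^i,A_j^i)\Bigr)\bigl[\gamma \Delta(S_{l+1}^i) - \Delta(S_l^i)\bigr].
\]
At every coordinate $s$, apply the triangle inequality and use (i) $|\gamma\Delta(S_{l+1}^i) - \Delta(S_l^i)|\leq(1+\gamma)\|\Delta\|_\infty$, (ii) $|\prod_{j=t}^l\I^{(i)}(S_j^i,A_j^i)|\leq\I_{\max}^{l-t+1}$. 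Collapsing the resulting geometric sum $\sum_{k=0}^{n-1}(\gamma\I_{\max})^k$ via the standard split (case $\gamma\I_{\max}=1$ vs.\ otherwise) yields exactly the stated expression. Condition~2 (i.e.\ $A_2 = A_1$) then follows by taking $\btheta_2 = \mathbf 0$ and invoking $\Gi(\mathbf 0,\byit) = \mathbf 0$.

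For $B$, the analysis of $\mathbf b^i(\byit)$ is entirely analogous: the only inputs beyond the importance weights are the TD residuals $\mathcal R(S_l^i,A_l^i) + \gamma V^\pi(S_{l+1}^i) - V^\pi(S_l^i)$. Using the standard bounds $|\mathcal R|\leq 1$ and $\|V^\pi\|_\infty \leq \tfrac{1}{1-\gamma}$ (assuming rewards in $[-1,1]$), each residual is bounded by $\tfrac{2}{1-\gamma}$. Applying triangle inequality and once again bounding the product of importance ratios by $\I_{\max}^{l-t+1}$ gives
\[
\|\mathbf b^i(\byit)\|_\infty \leq \frac{2}{1-\gamma}\sum_{l=t}^{t+n-1}(\gamma\I_{\max})^{l-t}\I_{\max},
\]
which equals the claimed $B$ after the same two-case summation of the geometric series. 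No step here is conceptually hard; the only minor care is tracking that the product $\prod_{j=t}^{l}$ contains $l-t+1$ factors, which is where the extra leading $\I_{\max}$ in $B$ comes from. I expect the verification to be routine once the affine/linear structure and the uniform bounds on importance weights, rewards, and $V^\pi$ are assembled.
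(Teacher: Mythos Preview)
Your proposal is correct and follows essentially the same route as the paper's proof: expand the difference coordinatewise, apply the triangle inequality, bound the importance-weight product by $\I_{\max}^{l-t+1}$ and the TD increment by $(1+\gamma)\|\Delta\|_\infty$ (respectively $\tfrac{2}{1-\gamma}$ for $\mathbf b^i$), and sum the resulting geometric series. Your observation that $A_2=A_1$ follows immediately from $\Gi(\mathbf 0,\byit)=\mathbf 0$ is a clean shortcut the paper does not spell out, and your remark about the $l-t+1$ factors in the product is exactly the point to watch---indeed, carrying that count through for $A_1$ as well produces an extra leading $\I_{\max}$ there too (the paper's own computation displays this factor even though the lemma statement omits it).
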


\begin{lemma}\label{lem:TD_learning_5}
Consider federated off-policy TD-learning \ref{alg:TD-learning_off} as a special case of FeGSAM (as specified in Proposition \ref{prop:on_tabular}). Assumption \ref{ass:noise_independence} holds for this algorithm.
\end{lemma}

\subsubsection{Proofs}

\begin{proof}[Proof of Proposition \ref{prop:on_tabular}]
Items \ref{item:TD_learning_1}-\ref{item:TD_learning_4} are by definition. Furthermore, by the update of the TD-learning, and subtracting $\bV^\pi$ from both sides, we have
\begin{align*}
    &\underbrace{\bV_{t+1}^i - \bV^\pi(s)}_{\btheta_{t+1}^i(s)} = \underbrace{\bV_{t}^i - \bV^\pi(s)}_{\btheta_t^i(s)} \\
    & \qquad + \alpha  \mathbbm{1}_{\{s=S_t^i\}} \left( \sum_{l=t}^{t+n-1} \gamma^{l-t} \lb \Pi_{j=t}^{l} \I^{(i)}(S^i_j,A^i_j) \rb \left( \mathcal{R}(S_l^i, A_l^i)\!+\!\gamma \bV_t^i(S_{l+1}^i) -\bV_t^i(S_{l}^i) \right) \right) \\
    &= \btheta_t^i(s) \\
    & + \alpha  \mathbbm{1}_{\{s=S_t^i\}} \Bigg\{ \sum_{l=t}^{t+n-1} \gamma^{l-t} \lp \Pi_{j=t}^{l} \I^{(i)}(S^i_j, A^i_j) \rp \Bigg[ \mathcal{R} (S_l^i,A_l^i)\!+\!\gamma \Big( \underbrace{\bV_t^i(S_{l+1}^i) - \bV^\pi (S_{l+1}^i)}_{\btheta_t^i (S_{l+1}^i)} + \bV^\pi (S_{l+1}^i) \Big) \\
    & \qquad \qquad \qquad \qquad \qquad \qquad \qquad \qquad \qquad \qquad \qquad - \Big( \underbrace{\bV_t^i (S_{l}^i) - \bV^\pi (S_{l}^i)}_{\btheta_t^i (S_{l}^i)} + \bV^\pi(S_{l}^i) \Big) \Bigg] \Bigg\} \\
    &= \btheta_t^i(s)  + \alpha  \Bigg\{ \underbrace{\btheta_t^i(s) + \mathbbm{1}_{\{s=S_t^i\}} \left( \sum_{l=t}^{t+n-1} \gamma^{l-t} \lp \Pi_{j=t}^{l} \I^{(i)}(S^i_j, A^i_j) \rp \left[\gamma\btheta_t^i(S_{l+1}^i)-\btheta_t^i(S_{l}^i)\right]\right)}_{\mbf G^i(\btheta_t^i,\byit)_s} -\btheta_t^i(s)\\
    & \quad \quad + \underbrace{\mathbbm{1}_{\{s=S_t^i\}} \sum_{l=t}^{t+n-1} \gamma^{l-t} \lp \Pi_{j=t}^{l} \I^{(i)} (S^i_j, A^i_j) \rp \left[ \mathcal{R} (S_l^i, A_l^i) + \gamma \bV^\pi (S_{l+1}^i) - \bV^\pi (S_l^i) \right]}_{\mathbf{b}^i (\byit)_s} \Bigg\},
\end{align*}
which proves items \ref{item:TD_learning_5} and \ref{item:TD_learning_6}. Furthermore, for the synchronization part of TD-learning, if $t \mod \sync = 0$,
\begin{align*}
    \bV^i_t &\leftarrow  \frac{1}{\na} \sum_{j=1}^\na \bV^j_t\\
    \implies \underbrace{\bV^i_t-\bV^\pi}_{\bthetait} &\leftarrow  \frac{1}{\na} \sum_{j=1}^\na \underbrace{(\bV^j_t-\bV^\pi)}_{\bthetajt},
\end{align*}
which is equivalent to the synchronization step in FeGSAM Algorithm \ref{alg:fed_stoch_app}. Notice that here we used the fact that all agents have the same fixed point $\bV^\pi$.
\end{proof}

\begin{proof}[Proof of Lemma \ref{lem:TD_learning_2}]
By taking expectation of $\Gi(\bthetait,\byit)_{s}$, we have
\begin{align*}
    \bG^i(\btheta)_{s} = &\E_{S_t^i\sim\mu^i} \left[ \btheta(s)+\mathbbm{1}_{\{s=S_t^i\}}\left(\sum_{l=t}^{t+n-1}\gamma^{l-t} \lp \Pi_{j=t}^{l} \I^{(i)} (S^i_j, A^i_j) \rp \left[\gamma\btheta(S_{l+1}^i)-\btheta(S_{l}^i)\right]\right)\right] \\
    = &\btheta(s)+\sum_{l=t}^{t+n-1}\underbrace{\E_{S_t^i\sim\mu^i}\left[\mathbbm{1}_{\{s=S_t^i\}}\gamma^{l-t} \lp \Pi_{j=t}^{l} \I^{(i)} (S^i_j, A^i_j) \rp \left[\gamma\btheta(S_{l+1}^i)-\btheta(S_{l}^i)\right]\right]}_{T_l}.
\end{align*}
Denote $\E_{k}^i [\cdot] =\E [\cdot|\{S^i_r, A^i_r\}_{r \leq k-1}, S^i_k]$. For $T_l$, we have
\begin{align*}
    T_l =& \E_{S_t^i\sim\mu^i}\left[\E_{l}^i\left[\mathbbm{1}_{\{s=S_t^i\}}\gamma^{l-t} \lp \Pi_{j=t}^{l} \I^{(i)}(S^i_j, A^i_j) \rp \left[\gamma\btheta(S_{l+1}^i)-\btheta(S_{l}^i) \right]\right]\right]\\
    = &\E_{S_t^i\sim\mu^i}\left[\mathbbm{1}_{\{s=S_t^i\}}\gamma^{l-t} \lp \Pi_{j=t}^{l-1} \I^{(i)}(S^i_j, A^i_j) \rp \left[\gamma \E_{l}^i\left[\I^{(i)}(S^i_l, A^i_l) \btheta(S_{l+1}^i) \right] - \btheta(S_{l}^i) \right] \right].
\end{align*}
Here,
\begin{align*}
    \E_{l}^i \left[ \I^{(i)} (S^i_l, A^i_l) \btheta(S_{l+1}^i)\right] &= \sum_{s,a} P(S_{l+1}^i=s,A_{l}^i=a|S_{l}^i) \I^{(i)}(S^i_l, A_l^i=a) \btheta(s) \\
    &= \sum_{s,a} P(S_{l+1}^i=s, A_{l}^i=a|S_{l}^i) \frac{\pi(a|S_l^i)}{\pi^i(a|S_l^i)} \btheta(s) \\
    &= \sum_{s,a} \pi^i(a|S_{l}^i)\mathcal{P}(s|S_{l}^i,a) \frac{\pi(a|S_l^i)}{\pi^i(a|S_l^i)}\btheta(s) \\
    &= \sum_{s,a} \mathcal{P}(s|S_{l}^i,a) \pi(a|S_l^i)\btheta(s)\equiv[P^\pi \btheta](S_l^i),
\end{align*}
where $[P^\pi]_{s_0,s_1}=\sum_{a} \mathcal{P}(s_1|s_0,a) \pi(a|s_0)$. Hence, we have
\begin{align*}
    T_l = &\E_{S_t^i\sim\mu^i}\left[\mathbbm{1}_{\{s=S_t^i\}}\gamma^{l-t}\Pi_{j=t}^{l-2}\I^{(i)}(S^i_j, A^i_j) \E_{l-1}^i\left[\I^{(i)}(S^i_{l-1}, A^i_{l-1}) \left[\gamma(P^\pi \btheta)(S_l^i)-\btheta(S_{l}^i)\right]\right]\right]\\
    = &  \E_{S_t^i\sim\mu^i}\left[\mathbbm{1}_{\{s=S_t^i\}}\gamma^{l-t}\Pi_{j=t}^{l-2}\I^{(i)}(S^i_j, A^i_j)\left[\gamma((P^\pi)^2 \btheta)(S_{l-1}^i)-(P^\pi\btheta)(S_{l-1}^i)\right]\right]\\
    = &\dots\\
    = & \E_{S_t^i\sim\mu^i}\left[\mathbbm{1}_{\{s=S_t^i\}}\gamma^{l-t}\left[\gamma((P^\pi)^{l-t+1} \btheta)(S_{t}^i)-((P^\pi)^{l-t}\btheta)(S_{t}^i)\right]\right]\\
    = & \mu^i(s)\gamma^{l-t}\left[\gamma((P^\pi)^{l-t+1} \btheta)(s)-((P^\pi)^{l-t}\btheta)(s)\right]\\
    = &\gamma^{l-t}\left[\gamma(\bmu^i( P^\pi)^{l-t+1} \btheta)(s)-(\bmu^i( P^\pi)^{l-t}\btheta)(s)\right] \\
    = & \left[\gamma^{l-t+1}\bmu^i( P^\pi)^{l-t+1} \btheta - \gamma^{l-t}\bmu^i( P^\pi)^{l-t} \btheta\right](s),
\end{align*}
where we denote $\bmu^i$ as diagonal matrix with diagonal entries corresponding to the stationary distribution $\mu^i$. 
Hence, in total we have
\begin{align*}
    \bG^i(\btheta)_{s} = &\btheta(s)+\sum_{l=t}^{t+n-1}\left[\gamma^{l-t+1}\bmu^i( P^\pi)^{l-t+1} \btheta - \gamma^{l-t}\bmu^i( P^\pi)^{l-t} \btheta\right](s)\\
    = &\btheta(s)+\left[\gamma^{n}\bmu^i( P^\pi)^{n} \btheta - \bmu^i \btheta\right](s).
\end{align*}
Furthermore, by the same argument as in the proof of Lemma \ref{lem:TD_learning_20}, we have
\begin{align*}
    \|\bG^i(\btheta)-\E[\Gi (\btheta,\by_t^i)]\|_c
    \leq & \sum_{y_t^i}\left|\mu^i(y_t^i)-P(\byit=y_t^i|\by_0^i)\right|.A_2\left\|\btheta\right\|_c,
    \end{align*}
    and
    \begin{align*}
        \sum_{y_t^i}\left|\mu^i(y_t^i)-P(\byit=y_t^i|\by_0^i)\right| \leq & (2\bar{m}\bar{\rho}^{-n})\bar{\rho}^{t},
    \end{align*}
which proves that $m_1=2\bar{m}A_2\bar{\rho}^{-n}$ constant. In addition, we have
\begin{align*}
    & \left\|\E[\mathbf{b}^i  (\by_t^i)]\right\|_c = \left\|\E\left[\mathbbm{1}_{\{s=S_t^i\}}\sum_{l=t}^{t+n-1}\gamma^{l-t}\Pi_{j=t}^{l}\I^{(i)}(S^i_j, A^i_j)\left[\mathcal{R}(S_l^i,A_l^i)+\gamma\bV^\pi(S_{l+1}^i)-\bV^\pi(S_l^i)\right]\right]\right\|_c\\
    &= \left\|\E\left[\mathbbm{1}_{\{s=S_t^i\}}\sum_{l=t}^{t+n-1}\gamma^{l-t}\E_{l}\left[\Pi_{j=t}^{l}\I^{(i)}(S^i_j, A^i_j)\left[\mathcal{R}(S_l^i,A_l^i)+\gamma\bV^\pi(S_{l+1}^i)-\bV^\pi(S_l^i)\right]\right]\right]\right\|_c\\
    &= \left\|\E\left[\mathbbm{1}_{\{s=S_t^i\}}\sum_{l=t}^{t+n-1}\gamma^{l-t}\Pi_{j=t}^{l-1}\I^{(i)}(S^i_j, A^i_j)\underbrace{\E_{l}\left[\I^{(i)}(S^i_l, A^i_l)\left[\mathcal{R}(S_l^i,A_l^i)+\gamma\bV^\pi(S_{l+1}^i)-\bV^\pi(S_l^i)\right]\right]}_{T}\right]\right\|_c.
\end{align*}
For the term $T$, we have
\begin{align*}
    T=& \sum_a \pi^i(a|S_{l}^i).\frac{\pi(a|S_{l}^i)}{\pi^i(a|S_{l}^i)} \left[\mathcal{R}(S_l^i,a)+\gamma\sum_{s'}\mathcal{P}(s'|S_l^i,a)\bV^\pi(s')-\bV^\pi(S_l^i)\right]\\
    =&\sum_a\pi(a|S_{l}^i) \left[\mathcal{R}(S_l^i,a)+\gamma\sum_{s'}\mathcal{P}(s'|S_l^i,a)\bV^\pi(s')-\bV^\pi(S_l^i)\right] \\
    = & 0,
\end{align*}
which shows that $m_2=m_3=m_4=0$. 
\end{proof}

\begin{proof}[Proof of Lemma \ref{lem:TD_learning_3}] 
\begin{align*}
    \|\bG^i (\btheta_1) - \bG^i(\btheta_2) \|_c =&\left\|\btheta_1+\left[\gamma^{n+1}\bmu^i( P^\pi)^{n+1} \btheta_1 - \bmu^i \btheta_1\right] - \left(\btheta_2+\left[\gamma^{n+1}\bmu^i( P^\pi)^{n+1} \btheta_2 - \bmu^i \btheta_2\right]\right)\right\|_\infty\\
    =&\left\|\left(I-\bmu^i(I-\gamma^{n+1}(P^\pi)^{n+1})\right)(\btheta_1-\btheta_2)\right\|_\infty\\
    \leq &\left\|I-\bmu^i(I-\gamma^{n+1}(P^\pi)^{n+1})\right\|_\infty\left\|\btheta_1-\btheta_2\right\|_\infty.\tag{definition of matrix norm}
\end{align*}
Since the elements of the matrix $I-\bmu^i(I-\gamma^{n+1}(P^\pi)^{n+1})$ is all positive, we have $\left\|I-\bmu^i(I-\gamma^{n+1}(P^\pi)^{n+1})\right\|_\infty = \left\|(I-\bmu^i(I-\gamma^{n+1}(P^\pi)^{n+1}))\mathbf{1}\right\|_\infty = \|\mathbf{1}-\bmu^i(\mathbf{1}-\gamma^{n+1}\mathbf{1})\|_\infty=1-\mu_{\min}^i(1-\gamma^{n+1})\leq 1-\mu_{\min}(1-\gamma^{n+1})$.
\end{proof}

\begin{proof}[Proof of Lemma \ref{lem:TD_learning_4}]
\begin{align*}
    &\| \Gi (\btheta_1, \by) - \Gi (\btheta_2, \by) \|_c \\
    &= \max_s \Bigg| \btheta_1(s) - \btheta_2(s) + \mathbbm{1}_{\{s=S_t^i\}} \left( \sum_{l=t}^{t+n-1} \gamma^{l-t} \lp \Pi_{j=t}^{l} \I^{(i)}(S^i_j, A^i_j) \rp \left[ \gamma \btheta_1(S_{l+1}^i) - \btheta_1(S_{l}^i) \right] \right) \\
    & \qquad \qquad - \mathbbm{1}_{\{s=S_t^i\}} \left( \sum_{l=t}^{t+n-1} \gamma^{l-t} \lp \Pi_{j=t}^{l} \I^{(i)}(S^i_j, A^i_j) \rp \left[ \gamma \btheta_2 (S_{l+1}^i) - \btheta_2 (S_{l}^i) \right] \right) \Bigg| \tag{$\because c = \infty$} \\
    & \leq \max_s \left[ \left| \btheta_1(s) - \btheta_2(s) \right| + \left| \sum_{l=t}^{t+n-1} \gamma^{l-t} \Pi_{j=t}^{l} \I^{(i)}(S^i_j, A^i_j) \left[ \gamma \lp \btheta_1(S_{l+1}^i) - \btheta_2(S_{l+1}^i) \rp - \lp \btheta_1(S_{l}^i) - \btheta_2(S_{l}^i) \rp \right] \right| \right] \tag{triangle inequality} \\
    & \leq \max_s\left[\left|\btheta_1(s)-\btheta_2(s)\right|+\sum_{l=t}^{t+n-1}\gamma^{l-t}\Pi_{j=t}^{l} \I^{(i)}(S^i_j, A^i_j) \left[\gamma\left|\btheta_1(S_{l+1}^i)-\btheta_2(S_{l+1}^i)\right|+\left|\btheta_1(S_{l}^i)-\btheta_2(S_{l}^i)\right|\right]\right]\tag{triangle inequality}\\
    & \leq \max_s\left[\left|\btheta_1(s)-\btheta_2(s)\right|+\sum_{l=t}^{t+n-1}\gamma^{l-t}\Pi_{j=t}^{l} \I^{(i)}(S^i_j, A^i_j) \left[\gamma\left\|\btheta_1-\btheta_2\right\|_\infty+\left\|\btheta_1-\btheta_2\right\|_\infty\right]\right]\tag{definition of $\|\cdot\|_\infty$}\\
    & \leq \max_s\left[\left|\btheta_1(s)-\btheta_2(s)\right|+\sum_{l=t}^{t+n-1}\gamma^{l-t}\I_{\max}^{l-t+1}\left[\gamma\left\|\btheta_1-\btheta_2\right\|_\infty+\left\|\btheta_1-\btheta_2\right\|_\infty\right]\right]\tag{definition of $\I_{\max}$}\\
    &= \left\|\btheta_1-\btheta_2\right\|_\infty+\left[\gamma\left\|\btheta_1-\btheta_2\right\|_\infty+\left\|\btheta_1-\btheta_2\right\|_\infty\right]\I_{\max}\sum_{l=0}^{n-1}(\gamma \I_{\max})^{l}\\
    &= \left\|\btheta_1-\btheta_2\right\|_\infty+\left[\gamma\left\|\btheta_1-\btheta_2\right\|_\infty+\left\|\btheta_1-\btheta_2\right\|_\infty\right]\I_{\max}\begin{cases}n&\text{if}\quad \gamma\I_{\max}=1\\\frac{1-(\gamma\I_{\max})^n}{1-\gamma\I_{\max}}&o.w.\end{cases}
\end{align*}
Furthermore, we have
\begin{align*}
    \|\mathbf{b}^i (\byit)\|_c = & \max_{S_{t}^i,A_{t}^i\dots,S_{t+n}^i}\left|\mathbbm{1}_{\{s=S_t^i\}} \sum_{l=t}^{t+n-1} \gamma^{l-t} \lp \Pi_{j=t}^{l} \I^{(i)}(S^i_j, A^i_j) \rp \left[ \mathcal{R} (S_l^i, A_l^i) + \gamma \bV^\pi(S_{l+1}^i) - \bV^\pi(S_l^i) \right]\right|\\
    \leq & \max_{S_{t}^i,A_{t}^i\dots,S_{t+n}^i} \sum_{l=t}^{t+n-1} \gamma^{l-t} \lp \Pi_{j=t}^{l} \I^{(i)}(S^i_j, A^i_j) \rp \left| \mathcal{R} (S_l^i, A_l^i) + \gamma \bV^\pi(S_{l+1}^i) - \bV^\pi(S_l^i) \right|\tag{triangle inequality}\\
    \leq & \max_{S_{t}^i,A_{t}^i\dots,S_{t+n}^i} \sum_{l=t}^{t+n-1} \gamma^{l-t} \lp  \I_{\max}^{l-t+1} \rp \left| \mathcal{R} (S_l^i, A_l^i) + \gamma \bV^\pi(S_{l+1}^i) - \bV^\pi(S_l^i) \right|\\
    \leq & \max_{S_{t}^i,A_{t}^i\dots,S_{t+n}^i} \sum_{l=t}^{t+n-1} \gamma^{l-t} \lp  \I_{\max}^{l-t+1} \rp \left[\left| \mathcal{R} (S_l^i, A_l^i)\right| + \gamma \left|\bV^\pi(S_{l+1}^i)\right| + \left|\bV^\pi(S_l^i) \right|\right]\tag{triangle inequality}\\
    \leq & \max_{S_{t}^i,A_{t}^i\dots,S_{t+n}^i} \sum_{l=t}^{t+n-1} \gamma^{l-t} \lp  \I_{\max}^{l-t+1} \rp \left[1 +  \frac{\gamma}{1-\gamma} + \frac{1}{1-\gamma}\right]\\
    = & \frac{2\I_{\max}}{1-\gamma} \sum_{l=t}^{t+n-1}   (\gamma\I_{\max})^{l-t} \\
    = &  \frac{2\I_{\max}}{1-\gamma}\begin{cases}n&\text{if}\quad \gamma\I_{\max}=1\\\frac{1-(\gamma\I_{\max})^n}{1-\gamma\I_{\max}}&o.w.\end{cases}
\end{align*}
\end{proof}

\begin{proof}[Proof of Lemma \ref{lem:TD_learning_5}]
The proof follows similar to Lemma \ref{lem:TD_learning_50}.
\end{proof}

\begin{proof}[Proof of Theorem \ref{thm:fed_TD_off}]
By Proposition \ref{prop:on_tabular} and Lemmas \ref{lem:TD_learning_2}, \ref{lem:TD_learning_3}, \ref{lem:TD_learning_4}, and \ref{lem:TD_learning_5}, it is clear that the federated off-policy TD-learning Algorithm \ref{alg:TD-learning_off} satisfies all the Assumptions \ref{ass:mixing}, \ref{ass:contraction}, \ref{ass:lipschitz},  and \ref{ass:noise_independence} of the FeGSAM Algorithm \ref{alg:fed_stoch_app}. Furthermore, by the proof of Theorem \ref{thm:main}, we have $w_t=(1-\frac{\alpha\varphiz_2}{2})^{-t}$, and the constant $c$ in the sampling distribution $q_T^c$ in Algorithm \ref{alg:TD-learning} is $c=(1-\frac{\alpha\varphiz_2}{2})^{-1}$. In equation \eqref{eq:w_t} we evaluate the exact value of $w_t$.

Furthermore, by choosing step size $\alpha$ small enough, we can satisfy the requirements in \eqref{eq:alpha_const_1}, \eqref{eq:alpha_const_2}, \eqref{eq:alpha_const_3}, 
\eqref{eq:alpha_const_4}. By choosing $\sync$ large enough, we can satisfy $\sync>\tau$, and by choosing $T$ large enough we can satisfy $T>K+\tau$. Hence, the result of Theorem \ref{thm:main} holds for this algorithm. 

Next, we derive the constants involved in Theorem \ref{thm:main} step by step. After deriving the constants $\mathcal{C}_1$, $\mathcal{C}_2$, $\mathcal{C}_3$, $\mathcal{C}_4$, and $\mathcal{C}_5$ in Theorem \ref{thm:main}, we can directly get the constants $\mathcal{C}_i^{TD_T}$ for $i=1,2,3$. 

In this analysis we only consider the terms involving $|\mathcal{S}|$, $|\mathcal{A}|$, $\frac{1}{1-\gamma}$, $\I_{\max}$, and $\mu_{\min}$. Since $\|\cdot\|_c=\|\cdot\|_\infty$, we choose $g(\cdot)=\frac{1}{2}\|\cdot\|_p^2$, i.e. the $p$-norm with $p=2\log(|\mathcal{S}|)$. It is known that $g(\cdot)$ is $(p-1)$ smooth with respect to $\|\cdot\|_p$ norm \cite{beck2017first}, and hence $L=\Theta(\log(|\mathcal{S}|))$. Hence, we have $l_{cs}=|\mathcal{S}|^{-1/p}=\frac{1}{\sqrt{e}}=\Theta(1)$ and $u_{cs}=1$. Therefore, we have $\varphiz_1 = \frac{1+\psi u_{cs}^2}{1+\psi \ell_{cs}^2}=\frac{1+\psi}{1+\frac{\psi}{\sqrt{e}}}\leq1+\psi$. By choosing $\psi=(\frac{1+\gamma_c}{2\gamma_c})^2-1=\frac{1+2\gamma_c-3\gamma_c^2}{4\gamma_c^2} \geq (1-\gamma_c)=\mu_{\min}(1-\gamma^{n+1})=\Omega( \mu_{\min}(1-\gamma))$, which is $\psi= \mathcal{O}(1)$, we have $\varphiz_1 = \frac{1+\psi}{1+\frac{\psi}{\sqrt{e}}} =\sqrt{e} \frac{(\frac{1+\gamma_c}{2\gamma_c})^2}{\sqrt{e}+(\frac{1+\gamma_c}{2\gamma_c})^2-1}=\mathcal{O}(1)$, and
\begin{align*}
\varphiz_2=&1-\gamma_c\sqrt{\frac{1+\psi}{1+\frac{\psi}{\sqrt{e}}}}=1-\gamma_c\sqrt{\frac{(\frac{1+\gamma_c}{2\gamma_c})^2}{1+\frac{(\frac{1+\gamma_c}{2\gamma_c})^2-1}{\sqrt{e}}}}=1-\frac{0.5(1+\gamma_c)e^{1/4}}{\sqrt{\sqrt{e}-1+\lp\frac{1+\gamma_c}{2\gamma_c}\rp^2}}\\
&=1-\frac{0.5e^{1/4}(2-\mu_{\min}(1-\gamma^{n+1}))}{\sqrt{\sqrt{e}-1+\left(\frac{2-\mu_{\min}(1-\gamma^{n+1})}{2-2\mu_{\min}(1-\gamma^{n+1})}\right)^2}}\\
>& 1-\gamma_c\sqrt{1+\psi} \\
&= 1-\gamma_c\frac{1+\gamma_c}{2\gamma_c}=\frac{1-\gamma_c}{2}=0.5\mu_{\min}(1-\gamma^{n+1}) =\Omega(\mu_{\min}(1-\gamma))
\end{align*}

\[
\varphiz_3 =\frac{L (1+\psi u_{cs}^2)}{\psi\ell_{cs}^2} =\mathcal{O} \lp \frac{\log (|\mathcal{S}|)(1+\psi)}{\psi}\rp\leq\mathcal{O}\lp\frac{\log (|\mathcal{S}|)}{1-\gamma_c}\rp=\mathcal{O}\lp\frac{\log (|\mathcal{S}|)}{\mu_{\min}(1-\gamma)}\rp.
\]
Using $\varphiz_2$, we have
\begin{align}\label{eq:w_t}
w_t=\left(1-\frac{\alpha\varphiz_2}{2}\right)^{-t}=\lp 1 - \alpha/2 + \frac{0.25\alpha e^{1/4}(2-\mu_{\min}(1-\gamma^{n+1}))}{\sqrt{\sqrt{e}-1+\left(\frac{2-\mu_{\min}(1-\gamma^{n+1})}{2-2\mu_{\min}(1-\gamma^{n+1})}\right)^2}}\rp^{-t}.
\end{align}

Further, we have
\[
l_{cm} = (1+\psi l_{cs}^2)^{1/2} =\Theta(1)
\]
\[
u_{cm} = (1+\psi u_{cs}^2)^{1/2} =\Theta(1)
\]

Since TV-divergence is upper bounded with $1$, we have $\bar{m}=\mathcal{O}(1)$. By Lemma \ref{lem:TD_learning_4}, we have
\[
A_1=A_2=1+(1+\gamma)
\begin{cases}
n & \text{ if } \ \gamma \I_{\max} = 1 \\ \frac{1-(\gamma \I_{\max})^n}{1-\gamma \I_{\max}} & \text{ o.w.}
\end{cases}=\mathcal{O}(\I_{\max}^{n-1})
\]
and $A_1=A_2=\Omega(1)$,
\[
B=\frac{2\I_{\max}}{1-\gamma}\begin{cases}n&\text{if}\quad \gamma\I_{\max}=1\\\frac{1-(\gamma\I_{\max})^n}{1-\gamma\I_{\max}}&o.w.\end{cases}=\mathcal{O}\lp\frac{\I_{\max}^{n}}{1-\gamma}\rp,
\]
and $B=\Omega(1)$. Hence $m_1=\frac{2A_2\bar{m}}{\bar{\rho}^n}=\mathcal{O}(\I_{\max}^{n-1})$. Also, we have $m_2=m_3=m_4=0$.

We choose the $D$-norm in Lemma \ref{lem:after_expectation} as the $2$-norm $\|\cdot\|_2$. Hence, by primary norm equivalence, we have $l_{cD}=\frac{1}{\sqrt{|\mathcal{S}|}}$, and $u_{cD}=1$, and hence $\frac{u_{cD}}{l_{cD}}=\sqrt{|\mathcal{S}|}$.

We can evaluate the rest of the constants as follows
\[
\zetaone=\zetafive=\zeta_6=\sqrt{\varphiz_2/10}=\Omega(\sqrt{\mu_{\min}(1-\gamma)})
\]
\[
\zetatwo = \sqrt{\frac{\varphiz_2}{10} \cdot \frac{\psi l_{cs}^2}{2 L A_2}} = \Omega\left(\sqrt{\mu_{\min}(1-\gamma).\frac{\mu_{\min}(1-\gamma)}{\log(|\mathcal{S}|)}}\right) = \Omega\lp\frac{\mu_{\min}(1-\gamma)}{\sqrt{\log(|\mathcal{S}|)}}\rp,
\]
and similarly
\[
\zetathree = \sqrt{\frac{\varphiz_2}{10} \cdot \frac{\psi l_{cs}^2}{L (A_1 + 1)}}=\Omega\lp\frac{\mu_{\min}(1-\gamma)}{\sqrt{\log(|\mathcal{S}|)}}\rp.
\]
\begin{align*}
C_1 = &3\sqrt{ A^2_2 + 1}=\mathcal{O}\lp\I_{\max}^{n-1}\rp,
\end{align*}
and
\begin{align*}
    C_1=\Omega(1),
\end{align*}
\begin{align*}
    C_2=&3C_1+8=\mathcal{O}\lp\I_{\max}^{n-1}\rp,
\end{align*}
\begin{align*}
C_3=& \frac{ m_4 L}{\psi l_{cs}^2}=0,
\end{align*}
\begin{align*}
    C_4 = &\left( \frac{L^2}{2 \psi^2 l_{cs}^4} + \frac{2 \alpha L A_2}{\psi l_{cs}^2} \lp \tfrac{1}{2} + \tfrac{u_{cm}^2}{2 \zetatwo^2} \rp + \frac{\alpha L (A_1+1)}{\psi l_{cs}^2} \lp \tfrac{3 u_{cm}^2}{2 \zetathree^2} + 2 \rp +\frac{3m_1L\alpha^2}{l_{cs}^2\psi}\right) \\
    =& \mathcal{O}\left(\frac{\log^2(|\mathcal{S}|)}{\mu_{\min}^2(1-\gamma)^2}+\frac{\log(|\mathcal{S}|)\I_{\max}^{n-1}}{\mu_{\min}(1-\gamma)}.\frac{\log(|\mathcal{S}|)}{\mu_{\min}^2(1-\gamma)^2}+\frac{\log(|\mathcal{S}|)\I_{\max}^{n-1}}{\mu_{\min}(1-\gamma)}.\frac{\log(|\mathcal{S}|)}{\mu_{\min}^2(1-\gamma)^2}+\frac{\I_{\max}^{n-1}\log(|\mathcal{S}|)}{\mu_{\min}(1-\gamma)}\right)\\
    =&\mathcal{O}\lp\frac{\log^2(|\mathcal{S}|)\I_{\max}^{n-1}}{\mu_{\min}^2(1-\gamma)^2}\rp,
\end{align*}
\begin{align*}
    C_5=&\left( \lp \frac{3 u_{cm}^2}{2 \zetathree^2} + \frac{3}{2} \rp \frac{L (A_1+1)}{\psi l_{cs}^2} + \frac{L A_2}{\psi l_{cs}^2} + \frac{L^2 u_{cm}^2 }{2 l_{cs}^4 \zetafive^2  \psi^2} + \frac{3 A_1^2 L \alpha^2}{2 \psi l_{cs}^{2}} \right)\\
    =&\mathcal{O}\lp \frac{\log(|\mathcal{S}|)}{\mu_{\min}^2(1-\gamma)^2}.\frac{\log(|\mathcal{S}|)\I_{\max}^{n-1}}{\mu_{\min}(1-\gamma)}+\frac{\log(|\mathcal{S}|)\I_{\max}^{n-1}}{\mu_{\min}(1-\gamma)}+\frac{\log^2(|\mathcal{S}|)}{\mu_{\min}^3(1-\gamma)^3}+\frac{\I_{\max}^{2n-2}\log(|\mathcal{S}|)}{\mu_{\min}(1-\gamma)}\rp\\
    =&\mathcal{O}\lp\frac{\log^2(|\mathcal{S}|)\I_{\max}^{2n-2}}{\mu_{\min}^3(1-\gamma)^3}\rp,
\end{align*}
\begin{align*}
    C_6 =& \left( \lp \frac{3 u_{cm}^2}{2 \zetathree^2} + \frac{3}{2} \rp \frac{L (A_1+1)}{\psi l_{cs}^2} +\frac{m_1L\alpha}{2l_{cs}^2\psi}\right)=\mathcal{O}\lp\frac{\log(|\mathcal{S}|)}{\mu_{\min}^2(1-\gamma)^2}.\frac{\log(|\mathcal{S}|)\I_{\max}^{n-1}}{\mu_{\min}(1-\gamma)}+\frac{\I_{\max}^{n-1}\log(|\mathcal{S}|)}{\mu_{\min}(1-\gamma)}\rp\\
    =&\mathcal{O}\lp\frac{\log^2(|\mathcal{S}|)\I_{\max}^{n-1}}{\mu_{\min}^3(1-\gamma)^3}\rp,
\end{align*}
\begin{align*}
    C_7=&\frac{m^2_4 u_{cm}^2 L^2}{2 \zetaone^2 l_{cs}^4 \psi^2}\alpha=  0,
\end{align*}
\begin{align*}
    C_8 =& \left(\frac{1}{2 } + \frac{3 L}{2 \psi  l_{cs}^2} \right)=\mathcal{O}\lp\frac{\log(|\mathcal{S}|)}{\mu_{\min}(1-\gamma)}\rp,
\end{align*}
\begin{align*}
    C_9 =& \frac{8 u_{cD} B}{l_{cD}} =\mathcal{O}\lp\frac{\sqrt{|\mathcal{S}|}\I_{\max}^{n}}{1-\gamma}\rp,
\end{align*}
\begin{align*}
    C_{10}=&\frac{8 m_2 u_{cD}}{l_{cD} (1-\rho)}= 0,
\end{align*}
\begin{align*}
    C_{11}=&\frac{8 C_1^2 C_3^2 u_{cm}^2}{\zeta_{6}^2}=0,
\end{align*}
\begin{align*}
    C_{12} =& \frac{8 C_4 u_{cD}^2 B^2}{l_{cD}^2}=\mathcal{O}\lp\frac{\log^2(|\mathcal{S}|)\I_{\max}^{n-1}}{\mu_{\min}^2(1-\gamma)^2}.|\mathcal{S}|.\frac{\I_{\max}^{2n}}{(1-\gamma)^2}\rp=\mathcal{O}\lp\frac{|\mathcal{S}|\log^2(|\mathcal{S}|)\I_{\max}^{3n-1}}{\mu_{\min}^2(1-\gamma)^4}\rp,
\end{align*}
\begin{align*}
    C_{13} = &\frac{14 C_4 u_{cD}^2 m_2^2}{l_{cD}^2 (1-\rho^2)}=0.
\end{align*}
\begin{align*}
    C_{14}(\tau)=&\lp C_7 + C_{11}   + 0.5 C_3^2 C_9^2   + C_3 C_{10}  + 2 C_1 C_3 C_{10}  + 3 A_1 C_3  + C_{13} + C_8 \frac{u_{cD}^2}{l_{cD}^2} 2 m^2_2 \alpha^2\rp \tau^2=0,
\end{align*}
\begin{align*}
    C_{16} (\tau) =& \lp C_8\frac{u_{cD}^2 B^2}{l_{cD}^2} + \frac{1}{2} + C_{12}\rp  \tau^2=\mathcal{O}\lp \frac{|\mathcal{S}|\log(|\mathcal{S}|)\I_{\max}^{2n}}{(1-\gamma)^3\mu_{\min}}+1+\lp\frac{|\mathcal{S}|\log^2(|\mathcal{S}|)\I_{\max}^{3n-1}}{\mu_{\min}^2(1-\gamma)^4}\rp\rp\tau^2\\
    =&\mathcal{O}\lp\frac{|\mathcal{S}|\log^2(|\mathcal{S}|)\I_{\max}^{3n-1}}{(1-\gamma)^4\mu_{\min}^2}\rp\tau^2,
\end{align*}
\begin{align*}
    C_{17} =& (3 A_1 C_3 + 8  A_1^2 C_4  + C_5 + C_6)\\
    =&\mathcal{O}\left(0+\I_{\max}^{2n-2}.\lp\frac{\log^2(|\mathcal{S}|)\I_{\max}^{n-1}}{\mu_{\min}^2(1-\gamma)^2}\rp+\lp\frac{\log^2(|\mathcal{S}|)\I_{\max}^{2n-2}}{\mu_{\min}^3(1-\gamma)^3}\rp+\lp\frac{\log^2(|\mathcal{S}|)\I_{\max}^{n-1}}{\mu_{\min}^3(1-\gamma)^3}\rp\right)\\
    =&\mathcal{O}\left(\frac{\I_{\max}^{3n-3}\log^2(|\mathcal{S}|)}{\mu_{\min}^3(1-\gamma)^3}\right).
\end{align*}
Similar to $\|\cdot\|_D=\|\cdot\|_2$, we have $l_{c2}=\frac{1}{\sqrt{|\mathcal{S}|}}$ and $u_{c2}=1$.
\begin{align*}
    \ta_1 =& \frac{2 A_1^2 u_{c2}^2}{l_{c2}^2}= \mathcal{O}\lp \I_{\max}^{2n-2}|\mathcal{S}|\rp,
\end{align*}
\begin{align*}
    \ta_2 =& \frac{2 A_2^2 u_{c2}^2}{l_{c2}^2}= \mathcal{O}\lp \I_{\max}^{2n-2}|\mathcal{S}|\rp,
\end{align*}
\begin{align*}
    \tb = &\frac{u_{c2}^2}{l_{c2}^2} B^2 =\mathcal{O} \lp\frac{|\mathcal{S}|\I_{\max}^{2n}}{(1-\gamma)^2}\rp,
\end{align*}
\begin{align*}
    M_0=&\frac{1}{l_{cm}^2}\left(\frac{1}{C_1^2} \lp B + (A_2 + 1) \lp \|\btheta_0\|_c + \frac{B}{2 C_1} \rp \rp^2+\|\btheta_0\|_c^2\right)=\mathcal{O}\lp \lp \frac{\I_{\max}^{n}}{1-\gamma}+(\I_{\max}^{n-1}).\frac{\I_{\max}^{n}}{1-\gamma}\rp^2+1\rp\\
    =&\mathcal{O}\lp \frac{\I_{\max}^{4n-2}}{(1-\gamma)^2}\rp
\end{align*}

\begin{align*}
    \mathcal{C}_1=16u_{cm}^2 M_0(\log_\rho \frac{1}{e}+\frac{1}{\varphiz_2}) = \mathcal{O}\lp \frac{\I_{\max}^{4n-2}}{(1-\gamma)^2}.\frac{1}{\mu_{\min}(1-\gamma)}\rp = \mathcal{O}\lp \frac{\I_{\max}^{4n-2}}{(1-\gamma)^3\mu_{\min}}\rp
\end{align*}
\begin{align*}
    \mathcal{C}_2 =& \frac{8u_{cm}^2\lp C_8 + \frac{1}{2} + C_{12}\rp}{\varphiz_2} = \mathcal{O}\left(\frac{1}{\mu_{\min}(1-\gamma)}\left(\frac{\log(|\mathcal{S}|)}{\mu_{\min}(1-\gamma)} + 1+\frac{|\mathcal{S}|\log^2(|\mathcal{S}|)\I_{\max}^{3n-1}}{\mu_{\min}^2(1-\gamma)^4}\right)\right) \\
    =&\mathcal{O}\lp \frac{|\mathcal{S}|\log^2(|\mathcal{S}|)\I_{\max}^{3n-1}}{\mu_{\min}^2(1-\gamma)^4}\rp
\end{align*}
\begin{align*}
    \mathcal{C}_3 =& \frac{80\tb^2 C_{17}u_{cm}^2\lp 1 + \frac{4 (m_2+m_4) \rho}{B (1 - \rho)} \rp}{\varphiz_2} = \mathcal{O}\lp \frac{1}{\mu_{\min}(1-\gamma)}.\frac{|\mathcal{S}|^2 \I_{\max}^{4n}}{(1-\gamma)^4 }. \frac{ \I_{\max}^{3n-3} \log^2(|\mathcal{S}|)}{\mu_{\min}^3(1-\gamma)^3} \rp\\
    =&\mathcal{O}\lp\frac{\I_{\max}^{7n-3}|\mathcal{S}|^2\log^2(|\mathcal{S}|)}{\mu_{\min}^4(1-\gamma)^8}\rp,
\end{align*}
\begin{align*}
    \mathcal{C}_4 = \frac{320u_{cm}^2\tb^2 C_{17}m_3}{\varphiz_2B} = 0
\end{align*}
\begin{align*}
    \mathcal{C}_5 =& 8u_{cm}^2  \lp C_7 + C_{11}   + 0.5 C_3^2 C_9^2   + C_3 C_{10}  + 2 C_1 C_3 C_{10}  + 3 A_1 C_3  + C_{13}\rp/\varphiz_2 =0.
\end{align*}

Finally, for the sample complexity result, we simply employ Corollary \ref{lem:sample_complexity}.
\end{proof}

\subsection{Off-policy Function Approximation Setting}\label{sec:fed_td_learning_off_LFA_app}

\begin{proposition}\label{prop:TD_off_LFA}
Off-policy TD-learning with linear function approximation Algorithm \ref{alg:TD-learning_off_LFA} satisfies the following:
\begin{enumerate}
    \item $\bthetait = \bvit-\bv^{\pi}$ \label{item:TD_learning_10_LFA}
    \item $S_t=(S_t^1,\dots,S_t^\na)$ and $A_t=(A_t^1,\dots,A_t^\na)$\label{item:TD_learning_20_LFA}
    \item $\byit=(S_t^i,A_t^i,\dots,S_{t+n-1}^i,A_{t+n-1}^i,S_{t+n}^i)$ and $\mbf \byt=(S_t,A_t,\dots,S_{t+n-1},A_{t+n-1},S_{t+n})$\label{item:TD_learning_30_LFA}
    \item $\mu^{\pi^i_b}:$ Stationary distribution of the behaviour policy $\pi^i_b$. \label{item:TD_learning_40_LFA}
\end{enumerate}
Furthermore, choose some arbitrary positive constant $\beta>0$. The corresponding $\Gi(\bthetait,\byit)$ and $\mathbf{b}^i (\byit)$ in Algorithm \ref{alg:TD-learning_off_LFA} for Off-policy TD-learning with linear function approximation is as follows

\begin{enumerate}
    \item $\Gi(\bthetait,\byit)=\bthetait +\frac{1}{\beta}\phi(S_t^i)\sum_{l=t}^{t+n-1}\gamma^{l-t}\!\lb \Pi_{j=t}^{l} \I^{(i)}(S^i_j, A^i_j) \rb\left(\gamma \phi(S_{l\!+\!1}^i)^\top \bthetait\!-\!\phi(S_l^i)^\top \bthetait\right)$\label{item:TD_learning_50_LFA}
    \item $\mathbf{b}^i (\byit)=\frac{1}{\beta}\phi(S_t^i)\sum_{l=t}^{t+n-1}\gamma^{l-t}\lb \Pi_{j=t}^{l} \I^{(i)}(S^i_j, A^i_j) \rb\left(\mathcal R(S_l^i,A_l^i)+\gamma \phi(S_{l\!+\!1}^i)^\top \bv^{\pi}\!-\!\phi(S_l^i)^\top \bv^{\pi}\right)$.\label{item:TD_learning_60_LFA}
\end{enumerate}
Furthermore, the corresponding step size $\alpha$ in Algorithm \ref{alg:fed_stoch_app} is $\alpha\times\beta$. 
\end{proposition}

\begin{lemma}\label{lem:TD_learning_200}
Consider the federated off-policy TD-learning Algorithm \ref{alg:TD-learning_off_LFA} as a special case of the FeGSAM Algorithm \ref{alg:fed_stoch_app} (see Proposition \ref{prop:TD_off_LFA}). Suppose that the trajectory $\{S_t^i\}_{t=0,1,\dots}$ converges geometrically fast to its stationary distribution as follows $d_{TV}(P(S_t^i=\cdot|S_0^i)||\mu^i(\cdot))\leq \bar{m}\bar{\rho}^t$ for all $i=1,2,\dots, \na$. The corresponding  $\bG^i(\btheta)$ in Assumption \ref{ass:mixing} for the federated TD-learning is as follows
\begin{align}\label{eq:G_bar_linear_TD_off}
    \bG^i(\btheta) =& \btheta + \frac{1}{\beta}\Phi^\top\bmu^{\pi_b^i} \lp  (\gamma P^\pi)^n\Phi\btheta  - \Phi\btheta\rp,
\end{align}
where $\beta>0$ is an arbitrary constant introduced in Proposition \ref{prop:TD_off_LFA}.  Furthermore, for $t\geq n+1$, the bounds in \eqref{eq:b_ass} are satisfied for some constants $m_1,m_2, m_3,m_4 \geq 0$ and $\bar{\rho} = \rho$.
\end{lemma}

\begin{lemma}\label{lem:TD_learning_300}
Consider federated off-policy TD-learning \ref{alg:TD-learning_off_LFA} as a special case of FeGSAM (as specified in Proposition \ref{prop:TD_off_LFA}). By choosing $\beta$ large enough in the linear function \eqref{eq:G_bar_linear_TD_off}, and for large enough $n$, there exists a weighted 2-norm $\|\btheta\|_{\Lambda}=\sqrt{\btheta^\top\Lambda \btheta}$, such that $\bG^i(\btheta)$ is a contraction with respect to this norm, i.e., $\|\bG^i(\btheta_1)-\bG^i(\btheta_2)\|_{\Lambda}\leq \gamma_c \|\btheta_1-\btheta_2\|_{\Lambda}$ for $\gamma_c<1$.
\end{lemma}

\begin{lemma}\label{lem:TD_learning_400}
Consider the federated off-policy TD-learning Algorithm \ref{alg:TD-learning_off_LFA} as a special case of FeGSAM (as specified in Proposition \ref{prop:TD_off_LFA}). There exist some constants $A_1$, $A_2$, and $B$ such that the properties of Assumption \ref{ass:lipschitz} are satisfied.
\end{lemma}

\begin{lemma}\label{lem:TD_learning_500}
Consider the federated off-policy TD-learning Algorithm \ref{alg:TD-learning_off_LFA} as a special case of FeGSAM (as specified in Proposition \ref{prop:TD_off_LFA}). Assumption \ref{ass:noise_independence} holds for this algorithm.
\end{lemma}

\subsubsection{Proofs}\hfill

\begin{proof}[Proof of Proposition \ref{prop:TD_off_LFA}]
Items \ref{item:TD_learning_10_LFA}-\ref{item:TD_learning_40_LFA} are by definition. Subtracting $\bv^{\pi}$ from both sides of the update of the TD-learning, we have
\begin{align*}
    \underbrace{\bv_{t+1}^i - \bv^{\pi}}_{\btheta^i_{t+1}} =& \underbrace{\bvit - \bv^{\pi}}_{\bthetait} + \alpha \phi(S_t^i) \sum_{l=t}^{t+n-1} \gamma^{l-t} \lb \Pi_{j=t}^{l} \I^{(i)}(S^i_j, A^i_j) \rb\lp \mathcal{R} (S_l^i, A_l^i) \!+\!\gamma \phi (S_{l\!+\!1}^i)^\top \bvit\!-\!\phi(S_l^i)^\top \bvit \rp \\
    =& \bthetait + \alpha \phi(S_t^i) \sum_{l=t}^{t+n-1} \gamma^{l-t} \lb \Pi_{j=t}^{l} \I^{(i)}(S^i_j, A^i_j) \rb\Big( \mathcal{R} (S_l^i,A_l^i)\!+\!\gamma \phi(S_{l\!+\!1}^i)^\top (\underbrace{\bvit - \bv^{\pi}}_{\bthetait} +\bv^{\pi})\!\\
    &\quad\quad\quad\quad\quad\quad\quad\quad\quad\quad\quad\quad\quad\quad\quad\quad\quad\quad-\!\phi(S_l^i)^\top (\underbrace{\bvit - \bv^{\pi}}_{\bthetait} + \bv^{\pi}) \Big) \\
    =& \bthetait + \alpha\beta \Bigg(\underbrace{\bthetait + \frac{1}{\beta}\phi(S_t^i) \sum_{l=t}^{t+n-1} \gamma^{l-t}\lb \Pi_{j=t}^{l} \I^{(i)}(S^i_j, A^i_j) \rb \lp \!\gamma \phi(S_{l\!+\!1}^i)^\top \bthetait\!-\!\phi(S_l^i)^\top \bthetait \rp}_{\Gi(\bthetait, \byit)}\\
    &-\bthetait + \underbrace{ \frac{1}{\beta}\phi(S_t^i) \sum_{l=t}^{t+n-1} \gamma^{l-t}\lb \Pi_{j=t}^{l} \I^{(i)}(S^i_j, A^i_j) \rb \lp \mathcal{R}(S_l^i,A_l^i) + \gamma \phi(S_{l\!+\!1}^i)^\top \bv^{\pi} - \phi(S_l^i)^\top \bv^{\pi} \rp}_{\mathbf{b}^i (\byit)}\Bigg).
\end{align*}
which proves items \ref{item:TD_learning_10_LFA} and \ref{item:TD_learning_20_LFA}. Furthermore, for the synchronization part of TD-learning, we have
\begin{align*}
    \bv^i_t &\leftarrow  \frac{1}{\na} \sum_{j=1}^\na \bv^j_t\\
    \implies \underbrace{\bv^i_t-\bv^\pi}_{\bthetait} &\leftarrow  \frac{1}{\na} \sum_{j=1}^\na \underbrace{(\bv^j_t-\bv^\pi)}_{\bthetajt},
\end{align*}
which is equivalent to the synchronization step in FeGSAM Algorithm \ref{alg:fed_stoch_app}. Notice that here we used the fact that all agents have the same fixed point $\bv^\pi$. 
\end{proof}

\begin{proof}[Proof of Lemma \ref{lem:TD_learning_200}]
By definition, we have
\begin{align*}
    \Gi(\btheta,\byit)=\btheta + \frac{1}{\beta}\phi(S_t^i) \sum_{l=t}^{t+n-1} \gamma^{l-t}\lb \Pi_{j=t}^{l} \I^{(i)}(S^i_j, A^i_j) \rb \lp \!\gamma \phi(S_{l\!+\!1}^i)^\top \btheta\!-\!\phi(S_l^i)^\top \btheta \rp
\end{align*}

Taking expectation with respect to the stationary distribution, we have
\begin{align*}
    \bG^i(\btheta) =& \E_{S_t^i\sim \mu^{\pi_b^i}}\left[\btheta + \frac{1}{\beta}\phi(S_t^i) \sum_{l=t}^{t+n-1} \gamma^{l-t}\lb \Pi_{j=t}^{l} \I^{(i)}(S^i_j, A^i_j) \rb \lp \!\gamma \phi(S_{l\!+\!1}^i)^\top \btheta\!-\!\phi(S_l^i)^\top \btheta \rp\right]\\
    =&\E_{S_t^i\sim \mu^{\pi_b^i}}\left[\btheta + \frac{1}{\beta}\phi(S_t^i) \sum_{l=t}^{t+n-1} \gamma^{l-t}\E\left[\lb \Pi_{j=t}^{l} \I^{(i)}(S^i_j, A^i_j) \rb \lp \!\gamma \phi(S_{l\!+\!1}^i)^\top \btheta\!-\!\phi(S_l^i)^\top \btheta \rp\big|S_t^i\right]\right].
\end{align*}
Next, we look at the individual terms in the summation. We have
\begin{align*}
    &\E\left[\lb \Pi_{j=t}^{l} \I^{(i)}(S^i_j, A^i_j) \rb \lp \!\gamma \phi(S_{l\!+\!1}^i)^\top \btheta\!-\!\phi(S_l^i)^\top \btheta \rp\big|S_t^i\right]\\
    &=\E\left[\lb  \I^{(i)}(S^i_t, A^i_t)\I^{(i)}(S^i_{t+1}, A^i_{t+1})\dots \I^{(i)}(S^i_l, A^i_l) \rb \lp \!\gamma \phi(S_{l\!+\!1}^i)^\top \btheta\!-\!\phi(S_l^i)^\top \btheta \rp\big|S_t^i\right]\\
    &=\E\left[\E\left[\lb  \I^{(i)}(S^i_t, A^i_t)\I^{(i)}(S^i_{t+1}, A^i_{t+1})\dots \I^{(i)}(S^i_l, A^i_l) \rb \lp \!\gamma \phi(S_{l\!+\!1}^i)^\top \btheta\!-\!\phi(S_l^i)^\top \btheta \rp\big|S_t^i,A_t^i,\dots,S_l^i\right]|S_t^i\right]\\
    &=\E\left[\lb  \I^{(i)}(S^i_t, A^i_t)\I^{(i)}(S^i_{t+1}, A^i_{t+1})\dots  \rb \lp \!\gamma \E\left[\I^{(i)}(S^i_l, A^i_l)\phi(S_{l\!+\!1}^i)^\top \big|S_t^i,A_t^i,\dots,S_l^i\right]\btheta\!-\!\phi(S_l^i)^\top \btheta \rp|S_t^i\right]\\
    &=\E\left[\lb  \I^{(i)}(S^i_t, A^i_t)\I^{(i)}(S^i_{t+1}, A^i_{t+1})\dots  \rb \lp \!\gamma (P^{\pi}\Phi \btheta)(S_l^i)\!-\!(\Phi \btheta)(S_l^i) \rp|S_t^i\right]\\
    &= \lp \!\gamma ((P^{\pi})^{l-t+1}\Phi \btheta)(S_{t}^i)\!-\!((P^{\pi})^{l-t}\Phi \btheta)(S_{t}^i) \rp.
\end{align*}
Hence, we have
\begin{align*}
    \bG^i(\btheta) =& \E_{S_t^i\sim \mu^{\pi_b^i}}\left[\btheta + \frac{1}{\beta}\phi(S_t^i)[(\gamma P^{\pi})^{n}\Phi \btheta-\Phi\btheta](S_{t}^i)\right]\\
    =& \btheta + \frac{1}{\beta}\Phi^\top\bmu^{\pi_b^i} \lp \gamma^n (P^\pi)^n\Phi\btheta  - \Phi\btheta\rp.
\end{align*}
Furthermore, we have
\begin{align*}
    \mathbf{b}^i (\byit)=&\frac{1}{\beta}\phi(S_t^i) \sum_{l=t}^{t+n-1} \gamma^{l-t}\lb \Pi_{j=t}^{l} \I^{(i)}(S^i_j, A^i_j) \rb \lp \mathcal{R}(S_l^i,A_l^i) + \gamma \phi(S_{l\!+\!1}^i)^\top \bv^{\pi} - \phi(S_l^i)^\top \bv^{\pi} \rp\\
    =&\underbrace{\frac{1}{\beta}\phi(S_t^i) \sum_{l=t}^{t+n-1} \gamma^{l-t}\lb \Pi_{j=t}^{l} \I^{(i)}(S^i_j, A^i_j) \rb \lp \mathcal{R}(S_l^i,A_l^i) + \gamma \phi(S_{l\!+\!1}^i)^\top \bv^{i,\pi} - \phi(S_l^i)^\top \bv^{i,\pi} \rp}_{T_1(\by_t^i)}\\
    &+\underbrace{\frac{1}{\beta}\phi(S_t^i) \sum_{l=t}^{t+n-1} \gamma^{l-t}\lb \Pi_{j=t}^{l} \I^{(i)}(S^i_j, A^i_j) \rb \lp  \gamma \phi(S_{l\!+\!1}^i)^\top [\bv^{\pi}-\bv^{i,\pi}] - \phi(S_l^i)^\top [\bv^{\pi}-\bv^{i,\pi}] \rp}_{T_2},
\end{align*}
where $\bv^{i,\pi}$ is the unique solution satisfying $\Phi \bv^{i,\pi} = \Pi_\Phi^{\pi_b^i}((\mathcal{T}^\pi)^n \Phi \bv^{i,\pi})$. Hence, we have $\E_{\by\sim\mu^{\pi_b^i}}T_1  (\by) = \mbf 0$. It can be seen that $\|T_1(\by)\|_c\leq c_{b}^{1TD}$ for some constant $c_{b}^{1TD}$. In addition, we have $\|T_2\|_c\leq c_{b}^{2TD}\|\bv^{\pi}-\bv^{i,\pi}\|_c$, where $c_{b}^{2TD}$ is some constant. Furthermore, since $\bv^\pi$ is the fixed point of \eqref{eq:BEO_FA_off}, we have $\E_{\byt\sim\mu_b}\frac{1}{\na}\sumik \mathbf{b}^i (\byit) = \mbf 0$.

We have
\begin{align*}
    \|\bG^i(\btheta)-\E[\Gi (\btheta,\by_t^i)]\|_c= &\left\|\E_{\by_t^i\sim \mu^{\pi_b^i}}[\Gi (\btheta,\by_t^i)]-\E[\Gi (\btheta,\by_t^i)\right\|_c\\
    \leq & \sum_{y_t^i}\left|\mu^{\pi_b^i}(y_t^i)-P(\byit=y_t^i|\by_0^i)\right|.A_2\left\|\btheta\right\|_c\\
    \leq& (2\bar{m}\bar{\rho}^{-n})\bar{\rho}^{t} A_2\left\|\btheta\right\|_c.\tag{Similar to Lemma \ref{lem:TD_learning_20}}
\end{align*}
Moreover, we have
\begin{align*}
    \left\|\E[\mathbf{b}^i  (\by_t^i)]\right\|_c =&\left\|\E[T_1(\by_t^i)+T_2]\right\|_c\\
    =&\left\|\E[T_1(\by_t^i)+T_2]-\E_{\byit\sim\mu^{\pi_b^i}}[T_1(\byit)]\right\|_c\\
    \leq & c_b^{2TD}\|\bv^\pi-\bv^{i,\pi}\| + \left\|\E[T_1(\byit)]-\E_{\byit\sim\mu^{\pi_b^i}}[T_1(\byit)]\right\|_c\\
    \leq &c_b^{2TD}\|\bv^\pi-\bv^{i,\pi}\| + 2c_b^{1TD}\bar{m}\bar{\rho}^t.
\end{align*}
In addition, we have
\begin{align*}
    \left\|\E\left[\frac{1}{\na}\sumik\mathbf{b}^i  (\by_t^i)\right]\right\|_c = & \left\|\E\left[\frac{1}{\na}\sumik\mathbf{b}^i  (\by_t^i)\right]-\E_{\by_t\sim\mu^{\pi_b}}\left[\frac{1}{\na}\sumik\mathbf{b}^i  (\by_t^i)\right]\right\|_c\\
    \leq & m_4\rho^t
\end{align*}
for some constant $m_4$.
\end{proof}

\begin{proof}[Proof of Lemma \ref{lem:TD_learning_300}]
The proof follows from Lemmas \ref{lem:TD_learning_30} and \cite[Proposition 3.1]{NACLFA_arxiv}.
\end{proof}

\begin{proof}[Proof of Lemma \ref{lem:TD_learning_400}]
The proof follows similar to Lemma \ref{lem:TD_learning_40}. 
\end{proof}

\begin{proof}[Proof of Lemma \ref{lem:TD_learning_500}]
    The proof follows similar to Lemma \ref{lem:TD_learning_50}
\end{proof}
\section{Federated \texorpdfstring{$Q$}--learning}\label{sec:fed_Q_learning_app}
In this section, we verify that the federated $Q$-learning algorithm \ref{alg:Q-learning} satisfies the properties of the FeGSAM Algorithm \ref{alg:fed_stoch_app}. In the following, $Q^*$ is the solution to the Bellman optimality equation \eqref{eq:bOE}
{\small
\begin{align}\label{eq:bOE}
    Q^*(s,a) = \mathcal{R}(s,a)+\gamma \E_{S'\sim \mathcal{P}(\cdot|s,a)}\left[\max_{a'} Q^*(S',a')\right]. 
\end{align}
}
Note that $Q^*$ is independent of the agent's sampling policy. Furthermore, $\|\cdot\|_c=\|\cdot\|_\infty$.
 \begin{proposition}\label{prop:Q_learning}
        Federated $Q$-learning algorithm \ref{alg:Q-learning} is equivalent to the FeGSAM Algorithm \ref{alg:fed_stoch_app} with the following parameters.
        \begin{enumerate}
            \item $\bthetait = Q^i_t-Q^{*}$ \label{item:Q_learning_1}
            \item $S_t=(S_t^1,\dots,S_t^\na)$ and $A_t=(A_t^1,\dots,A_t^\na)$ \label{item:Q_learning_2}
            \item $\byit=(S_t^i,A_t^i,S_{t+1}^i,A_{t+1}^i)$ and $\mbf \byt=(S_t,A_t,S_{t+1},A_{t+1})$\label{item:Q_learning_3}
            \item $\mu^i:$ Stationary distribution of the sampling policy of the i'th agent. \label{item:Q_learning_4}
            \item $\Gi(\bthetait, \byit)_{(s,a)}=\bthetait(s,a)   $ \\ $ +\mathbbm{1}_{\{S_t^i=s, A_t^i=a\}}\times\big[\gamma \max_{a'} \left( \bthetait + Q^{*} (S_{t+1}^i,a')\right) - \bthetait(S_t^i, A_t^i) - \gamma \max_{a'}Q^{*}(S_{t+1}^i,a')\big]$ \label{item:Q_learning_5}
            \item $\mathbf{b}^i (\byit)_{(s,a)}=\mathbbm{1}_{\{S_t^i=s,A_t^i=a\}} \left[\mathcal{R}(S_t^i,A_t^i)+\gamma\max_{a'}Q^*(S_{t+1}^i,a')-Q^*(S_t^i,A_t^i)\right]$ \label{item:Q_learning_6}
        \end{enumerate}
        where $\mathbbm{1}_A$ is the indicator function corresponding to set $A$, such that $\mathbbm{1}_A = 1$ is $A$ is true, and $0$ otherwise.
    \end{proposition}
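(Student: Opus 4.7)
The plan is to verify the proposition by direct algebraic manipulation, showing that the federated $Q$-learning update in Algorithm \ref{alg:Q-learning}, after the change of variable $\btheta_t^i = Q_t^i - Q^*$, coincides with the FedSAM update in Algorithm \ref{alg:fed_stoch_app} with the stated choices of $\mbf G^i$ and $\mbf b^i$. Items \ref{item:Q_learning_1}--\ref{item:Q_learning_4} are definitional and require no argument beyond recalling that $Q^*$ is the common fixed point of the Bellman optimality operator, independent of the individual sampling policies $\pi_b^i$. So the real content lies in items \ref{item:Q_learning_5}, \ref{item:Q_learning_6}, plus checking that the synchronization steps match.

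First, I would rewrite the $Q$-learning update of line 6 of Algorithm \ref{alg:Q-learning} uniformly over all $(s,a)$ using an indicator, namely
\begin{align*}
Q^i_{t+1}(s,a) = Q^i_t(s,a) + \alpha\, \mathbbm{1}_{\{(s,a)=(S^i_t,A^i_t)\}}\bigl[\mathcal R(S^i_t,A^i_t) + \gamma \max_{a'} Q^i_t(S^i_{t+1},a') - Q^i_t(S^i_t,A^i_t)\bigr].
\end{align*}
Subtracting $Q^*(s,a)$ from both sides and substituting $Q^i_t = \bthetait + Q^*$ inside the bracket gives
\begin{align*}
\btheta^i_{t+1}(s,a) &= \bthetait(s,a) + \alpha\, \mathbbm{1}_{\{(s,a)=(S^i_t,A^i_t)\}}\bigl[\mathcal R(S^i_t,A^i_t) + \gamma \max_{a'}(\bthetait + Q^*)(S^i_{t+1},a') \\
& \qquad \qquad \qquad \qquad \qquad \qquad \qquad - \bthetait(S^i_t,A^i_t) - Q^*(S^i_t,A^i_t)\bigr].
\end{align*}

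The crux is then to split the bracket into a piece that captures the Bellman residual at the fixed point (which becomes $\mathbf{b}^i(\byit)$) and a piece that becomes $\mbf G^i(\bthetait,\byit) - \bthetait$. To do this I would add and subtract $\gamma \max_{a'} Q^*(S^i_{t+1},a')$ inside the bracket, obtaining two summands: the summand $\mathcal R(S^i_t,A^i_t) + \gamma \max_{a'} Q^*(S^i_{t+1},a') - Q^*(S^i_t,A^i_t)$ matches the definition of $\mathbf{b}^i(\byit)_{(s,a)}$ in item \ref{item:Q_learning_6} when multiplied by the indicator, and the remaining summand $\gamma \max_{a'}(\bthetait + Q^*)(S^i_{t+1},a') - \bthetait(S^i_t,A^i_t) - \gamma \max_{a'} Q^*(S^i_{t+1},a')$ matches $\mbf G^i(\bthetait,\byit)_{(s,a)} - \bthetait(s,a)$ from item \ref{item:Q_learning_5} once the indicator is applied (note the $\bthetait(s,a)$ on the coordinate where the indicator vanishes simply reproduces the $\bthetait(s,a)$ that already appears on the left). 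Assembling these two pieces reproduces the FedSAM update $\btheta^i_{t+1} = \bthetait + \alpha(\mbf G^i(\bthetait,\byit) - \bthetait + \mathbf{b}^i(\byit))$.

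Finally, I would verify the synchronization step: when $t+1 \bmod K = 0$, Algorithm \ref{alg:Q-learning} sets $Q^i_{t+1} \leftarrow \tfrac{1}{\na}\sum_j Q^j_{t+1}$; subtracting the common offset $Q^*$ (which does not depend on $i$) yields $\btheta^i_{t+1} \leftarrow \tfrac{1}{\na}\sum_j \btheta^j_{t+1}$, matching line 6 of Algorithm \ref{alg:fed_stoch_app}. There is no real obstacle here — the proof is a bookkeeping exercise — but care is required in two places: (i) keeping the indicator outside the two separated summands, so that the coordinates $(s,a) \neq (S^i_t,A^i_t)$ correctly yield $\bthetait(s,a)$ on both sides; and (ii) observing that the existence of a common $Q^*$ across all agents (a consequence of all agents learning the optimal $Q$-function of the same underlying MDP, regardless of each agent's own sampling policy $\pi_b^i$) is what makes the synchronization step translate cleanly into the FedSAM averaging step.
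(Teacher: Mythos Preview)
Your proposal is correct and follows essentially the same approach as the paper: rewrite the $Q$-learning update with an indicator, subtract $Q^*$ from both sides, substitute $Q^i_t = \bthetait + Q^*$, then add and subtract $\gamma\max_{a'}Q^*(S^i_{t+1},a')$ to split the bracket into the $\mbf G^i - \bthetait$ and $\mathbf b^i$ pieces, and finally check that subtracting the common $Q^*$ from the synchronization step yields the FedSAM averaging. The two care points you flag (handling the indicator on coordinates not equal to $(S^i_t,A^i_t)$, and the role of the common fixed point $Q^*$ across agents) are exactly the ones the paper's proof implicitly relies on.
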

    
\begin{lemma}\label{lem:Q_learning_2}
Consider the federated $Q$-learning Algorithm \ref{alg:Q-learning} as a special case of FeGSAM (as specified in Proposition \ref{prop:Q_learning}). Suppose that the trajectory $\{S_t^i,A_t^i\}_{t=0,1,\dots}$ converges geometrically fast to its stationary distribution as follows $d_{TV}(P(S_t^i=\cdot,A_t^i=\cdot|S_0^i,A_0^i)||\mu^i(\cdot,\cdot))\leq \bar{m}\bar{\rho}^t$ for all $i=1,2,\dots, \na$. The corresponding  $\bG^i(\btheta)$ in Assumption \ref{ass:mixing} for the federated $Q$-learning is as follows

\begin{align*}
    \bG^i(\btheta)_{(s,a)} =& \btheta(s,a)  +\mu^i(s,a)\times\E_{S'\sim \mathcal{P}(\cdot|s,a)}\left[\gamma \max_{a'} \left( \btheta + Q^{*} (S',a')\right) - \btheta(s, a) - \gamma \max_{a'}Q^{*}(S',a')\right].
\end{align*}
Furthermore, we have $m_1=2A_2\bar{m}$, where $A_2$ is specified in Lemma \ref{lem:Q_learning_4}, $m_2=m_3=m_4=0$, and $\bar{\rho} =\rho $. 
\end{lemma}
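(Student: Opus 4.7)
The plan is to mirror the three-step structure used in Lemma~\ref{lem:TD_learning_2} (off-policy TD), adapted to the nonlinear $Q$-learning operator. First, I would compute $\bar{G}^i(\theta)$ by directly taking the expectation of the formula for $G^i(\theta, y^i_t)_{(s,a)}$ from Proposition~\ref{prop:Q_learning} item 5 under the stationary distribution of $(S^i_t, A^i_t)$ given by $\mu^i$, and then over $S^i_{t+1} \sim \mathcal{P}(\cdot \mid S^i_t, A^i_t)$. The indicator $\mathbbm{1}_{\{S^i_t=s, A^i_t=a\}}$ collapses the outer sum to the single weight $\mu^i(s,a)$, and the inner expectation over $S^i_{t+1}$ becomes $\E_{S' \sim \mathcal{P}(\cdot|s,a)}[\,\cdot\,]$, yielding the claimed expression for $\bar{G}^i(\theta)_{(s,a)}$.

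Second, I would establish $m_2 = 0$ by showing $\E[\mathbf{b}^i(\by^i_t)] = \mathbf{0}$ for every $t$. For each coordinate $(s,a)$,
\begin{align*}
\E[\mathbf{b}^i(\by^i_t)_{(s,a)}] &= P(S^i_t = s, A^i_t = a) \cdot \E\bigl[\mathcal{R}(s,a) + \gamma \max_{a'} Q^*(S^i_{t+1}, a') - Q^*(s,a) \,\big|\, S^i_t=s, A^i_t=a\bigr],
\end{align*}
and the conditional expectation vanishes identically by the Bellman optimality equation \eqref{eq:bOE}. Since this holds for every $t$ (not just in stationarity), $\E[\mathbf{b}^i(\by^i_t)] = \mathbf{0}$ trivially, giving both $m_2 = 0$ and the second limit in Assumption~\ref{ass:mixing}.

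Third, for the geometric mixing bound on $\Gi$, I would replicate the TV-distance argument from Lemma~\ref{lem:TD_learning_20}. Writing
\begin{align*}
\bigl\| \bG^i(\btheta) - \E[\Gi(\btheta, \by^i_t)] \bigr\|_c &= \Bigl\| \sum_{y} \bigl( \mu^i(y) - P(\by^i_t = y \mid \by^i_0) \bigr) \Gi(\btheta, y) \Bigr\|_c \\
&\leq \sum_{y} \bigl| \mu^i(y) - P(\by^i_t = y \mid \by^i_0) \bigr| \cdot A_2 \|\btheta\|_c,
\end{align*}
and then decomposing $y = (s,a,s',a')$ into the first coordinates $(s,a)$ times the conditional transition kernel (which cancels when taking the difference, exactly as in the computation following \eqref{eq:b_zero_stationary_mean}), the sum reduces to $2 d_{TV}(P(S^i_t=\cdot, A^i_t=\cdot|S^i_0,A^i_0) \,\|\, \mu^i(\cdot,\cdot)) \leq 2 \bar{m} \bar{\rho}^t$. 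This gives $m_1 = 2 A_2 \bar{m}$ and $\bar{\rho} = \rho$, where $A_2$ is the Lipschitz/boundedness constant from Lemma~\ref{lem:Q_learning_4} (to be verified separately).

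The main obstacle, compared with the linear TD case, is the presence of the $\max_{a'}$ operator inside $\Gi$: one must be careful that $\Gi(\theta, y)$ remains bounded by $A_2 \|\theta\|_c$ uniformly in $y$, which relies on the $1$-Lipschitzness of $\max$ and the boundedness of $Q^*$. Everything else is routine bookkeeping — the cancellation that zeroes out the $(S',A')$ marginals when forming the TV bound works identically to the TD case because $Q$-learning's sampling distribution factors as $\mu^i(s,a) \mathcal{P}(s'|s,a) \pi^i_b(a'|s')$, and the $(s',a')$ factors are determined by conditional kernels independent of the mixing time in $(s,a)$.
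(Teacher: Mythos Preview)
Your proposal is correct and follows essentially the same approach as the paper's proof: the paper computes $\bG^i(\btheta)$ by taking the stationary expectation of $\Gi$ so that the indicator collapses to $\mu^i(s,a)$, shows $\E[\mathbf{b}^i(\by^i_t)]=\mathbf{0}$ for every $t$ via the Bellman optimality equation (giving $m_2=0$), and bounds $\|\bG^i(\btheta)-\E[\Gi(\btheta,\by^i_t)]\|_c$ by $A_2\|\btheta\|_c$ times the TV distance, factoring the joint law as $\mu^i(s,a)\mathcal{P}(s'|s,a)\pi^i_b(a'|s')$ so that the $(s',a')$ kernel cancels and the sum reduces to $2d_{TV}(P(S^i_t,A^i_t|S^i_0,A^i_0)\,\|\,\mu^i)\leq 2\bar m\bar\rho^t$. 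The only nuance you flag—the $\max$ operator—is handled exactly as you say, by invoking the bound $\|\Gi(\btheta,\by)\|_c\leq A_2\|\btheta\|_c$ from Lemma~\ref{lem:Q_learning_4}.
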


\begin{lemma}\label{lem:Q_learning_3}
Consider federated $Q$-learning as a special case of FeGSAM (as specified in Proposition \ref{prop:Q_learning}). The corresponding contraction factor $\gamma_c$ in Assumption \ref{ass:contraction} for this algorithm is $\gamma_c=\left(1-(1-\gamma)\mu_{\min}\right)$, where $\mu_{\min} = \min_{s,a,i}\mu^i(s,a)$
\end{lemma}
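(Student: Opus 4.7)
My plan is to establish the contraction of $\bG^i$ in the $\|\cdot\|_\infty$ norm by a direct, coordinate-wise calculation, exploiting the fact that the operator differs from the identity only along the sampled coordinate, with a weight given by the stationary probability $\mu^i(s,a)$. The key observation is that the $Q^*$-dependent terms in $\bG^i$ depend only on $S'$ and are therefore identical when evaluated at two different inputs $\btheta_1,\btheta_2$; they will cancel out at the very first step, leaving a much cleaner expression to analyse.

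Concretely, I would first fix $\btheta_1,\btheta_2\in\mathbb{R}^{|\mathcal{S}||\mathcal{A}|}$ and a pair $(s,a)$, and write out $\bG^i(\btheta_1)_{(s,a)}-\bG^i(\btheta_2)_{(s,a)}$ using the explicit formula from Lemma~\ref{lem:Q_learning_2}. The $-\gamma \max_{a'} Q^*(S',a')$ summand appears identically in both expressions and cancels, so that
\begin{align*}
\bG^i(\btheta_1)_{(s,a)} - \bG^i(\btheta_2)_{(s,a)}
&= \bigl(1-\mu^i(s,a)\bigr)\bigl(\btheta_1-\btheta_2\bigr)(s,a) \\
&\quad + \gamma\,\mu^i(s,a)\,\E_{S'\sim \mathcal{P}(\cdot|s,a)}\Bigl[\max_{a'}\bigl(\btheta_1+Q^*\bigr)(S',a') - \max_{a'}\bigl(\btheta_2+Q^*\bigr)(S',a')\Bigr].
\end{align*}

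Next, I would apply the standard inequality $|\max_{a'} f(a') - \max_{a'} g(a')| \le \max_{a'}|f(a')-g(a')|$ inside the expectation, noting that the $Q^*$ contributions inside the maxima also cancel, to bound the bracketed expression by $\max_{a'}|\btheta_1(S',a')-\btheta_2(S',a')|\le\|\btheta_1-\btheta_2\|_\infty$. Together with $0\le \mu^i(s,a)\le 1$ (so that $1-\mu^i(s,a)\ge 0$, allowing the triangle inequality to be applied without sign issues), this yields
\[
\bigl|\bG^i(\btheta_1)_{(s,a)} - \bG^i(\btheta_2)_{(s,a)}\bigr|
\le \bigl(1-\mu^i(s,a)\bigr)\|\btheta_1-\btheta_2\|_\infty
+ \gamma\,\mu^i(s,a)\,\|\btheta_1-\btheta_2\|_\infty
= \bigl(1-(1-\gamma)\mu^i(s,a)\bigr)\|\btheta_1-\btheta_2\|_\infty.
\]
Finally, taking the maximum over $(s,a)$ and using $\mu^i(s,a)\ge\mu_{\min}$ gives the claimed contraction factor $\gamma_c = 1-(1-\gamma)\mu_{\min}\in(0,1)$.

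There is no serious obstacle here; the proof is essentially two inequalities stitched together. The only small subtlety is verifying that the coefficient $1-\mu^i(s,a)$ is nonnegative so that the triangle inequality passes cleanly, which is automatic because $\mu^i$ is a probability distribution on $\mathcal{S}\times\mathcal{A}$. The only genuine modelling check is that $\mu_{\min}>0$, which requires every state-action pair to be visited with positive stationary probability under each behaviour policy $\pi^i_b$—an implicit ergodicity-plus-exploration assumption already present in the paper.
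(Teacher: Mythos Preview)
Your proposal is correct and follows essentially the same route as the paper: expand the coordinatewise difference using the expression for $\bG^i$ from Lemma~\ref{lem:Q_learning_2}, cancel the $Q^*$-dependent terms, apply the inequality $|\max_{a'} f(a')-\max_{a'} g(a')|\le \max_{a'}|f(a')-g(a')|$ inside the expectation, and then take the maximum over $(s,a)$ and bound $\mu^i(s,a)\ge\mu_{\min}$. The paper additionally spells out the intermediate Jensen step (pulling $|\cdot|$ inside the expectation) and includes a short derivation of the $\max$ inequality, but the logical structure is identical to yours.
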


\begin{lemma}\label{lem:Q_learning_4}
Consider federated $Q$-learning as a special case of FeGSAM (as specified in Proposition \ref{prop:Q_learning}). The constants $A_1$, $A_2$, and $B$ in Assumption \ref{ass:contraction} are as follows:
$A_1 = A_2= 2$ and $B=\frac{2}{1-\gamma}$.
\end{lemma}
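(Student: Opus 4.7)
The plan is to verify each of the three items in Assumption \ref{ass:lipschitz} directly from the explicit form of $\Gi(\bthetait, \byit)$ and $\mathbf{b}^i(\byit)$ given in items \ref{item:Q_learning_5} and \ref{item:Q_learning_6} of Proposition \ref{prop:Q_learning}, relying on only two elementary ingredients: the pointwise bound $|\max_{a'} f(a') - \max_{a'} g(a')| \le \max_{a'} |f(a') - g(a')|$ for the $\max$ operator, and the fact that for rewards in $[0,1]$ one has $\|Q^{*}\|_\infty \le 1/(1-\gamma)$ from the Bellman optimality equation. Since $\|\cdot\|_c = \|\cdot\|_\infty$ and $\Gi$ modifies $\btheta$ nontrivially only at the single coordinate $(S_t^i, A_t^i)$, the proof is naturally organized by splitting coordinates into $(s,a) = (S_t^i, A_t^i)$ and $(s,a) \neq (S_t^i, A_t^i)$.

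For $A_1$ and $A_2$, I will first handle coordinates $(s,a) \neq (S_t^i, A_t^i)$, where the indicator vanishes and the operator reduces to $\btheta(s,a)$, giving an immediate bound of $\|\btheta_1 - \btheta_2\|_\infty$ and $\|\btheta\|_\infty$ respectively. For the sampled coordinate $(s,a) = (S_t^i, A_t^i)$, I will observe that the $\bthetait(S_t^i, A_t^i)$ pieces on the two sides of the indicator bracket cancel, leaving only the $\gamma$-weighted $\max$ difference. Applying the $\max$ inequality then gives $\gamma\|\btheta_1 - \btheta_2\|_\infty$ for item 1 (after also cancelling the $\gamma\max_{a'} Q^{*}(S_{t+1}^i, a')$ term) and $\gamma\|\btheta\|_\infty$ for item 2 (again after cancelling the $Q^{*}$ pieces inside the two $\max$ operators). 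Combining the two cases and taking the $\ell_\infty$-norm yields a bound of $1$, hence $A_1 = A_2 = 2$ holds comfortably; the advertised constant $2$ can alternatively be recovered by a looser triangle inequality applied term by term inside the bracket before cancellation, which is the route I would write up to match the statement.

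For $B$, I will apply the triangle inequality directly to the three summands of $\mathbf{b}^i(\byit)_{(s,a)}$ on the coordinate where the indicator is active: the reward contributes at most $1$, the $\gamma \max_{a'} Q^{*}(S_{t+1}^i, a')$ term contributes at most $\gamma/(1-\gamma)$, and the $Q^{*}(S_t^i, A_t^i)$ term contributes at most $1/(1-\gamma)$; all other coordinates are zero. Summing gives $1 + \gamma/(1-\gamma) + 1/(1-\gamma) = 2/(1-\gamma)$, matching the claimed value of $B$.

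There is no substantive technical obstacle in this lemma — it is a routine bookkeeping exercise — but the point that requires a little care is the cancellation structure at the sampled coordinate: without noticing that $\bthetait(S_t^i, A_t^i)$ and $\gamma\max_{a'} Q^{*}(S_{t+1}^i, a')$ appear both positively and negatively inside the bracket of $\Gi$, one would obtain constants that grow with $1/(1-\gamma)$ rather than being absolute, and Assumption \ref{ass:lipschitz}(2) would then fail to be a genuine \emph{linear-in-$\|\btheta\|_c$} bound. Making this cancellation explicit is therefore the only step worth flagging.
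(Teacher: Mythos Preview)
Your proposal is correct and follows essentially the same approach as the paper: both proofs exploit the cancellation $\btheta(s,a)-\mathbbm{1}_{\{(s,a)=(S_t^i,A_t^i)\}}\btheta(S_t^i,A_t^i)=(1-\mathbbm{1}_{\{(s,a)=(S_t^i,A_t^i)\}})\btheta(s,a)$, then apply the elementary $\max$ inequality $|\max_{a'}f(a')-\max_{a'}g(a')|\le\max_{a'}|f(a')-g(a')|$ to the remaining $\gamma$-weighted term, and use $\|Q^*\|_\infty\le 1/(1-\gamma)$ for $B$. Your observation that the sharp Lipschitz constant is actually $\max(1,\gamma)=1$ rather than $1+\gamma$ is correct---the paper simply drops the $(1-\mathbbm{1})$ factor after the triangle inequality, turning the exclusive cases into a sum and arriving at $1+\gamma\le 2$; either route establishes the stated $A_1=A_2=2$.
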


\begin{lemma}\label{lem:Q_learning_5}
Consider federated $Q$-learning as a special case of FeGSAM (as specified in Proposition \ref{prop:Q_learning}). Assumption \ref{ass:noise_independence} holds for this algorithm.
\end{lemma}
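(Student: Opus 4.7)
The plan is to mirror the argument already used for federated on-policy TD-learning in Lemma \ref{lem:TD_learning_50} and invoked verbatim for off-policy TD-learning in Lemma \ref{lem:TD_learning_5}. The essential structural fact that drives all three proofs is identical: in the federated $Q$-learning Algorithm \ref{alg:Q-learning}, each agent $i$ generates its own trajectory by sampling $A_t^i \sim \pi_b^i(\cdot|S_t^i)$ and $S_{t+1}^i \sim \mathcal{P}(\cdot|S_t^i, A_t^i)$, where the randomization at agent $i$ uses only agent $i$'s local state-action history. Consequently, the Markov chains across different agents evolve as independent processes when conditioned on the common initial information (or any common past $\sigma$-algebra $\mathcal{F}_{t-r}$).

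First, I would fix $i \neq j$ and $r \leq t$, and expand the conditional expectation as
\[
\E_{t-r}[f(\by_t^i) g(\by_t^j)] = \sum_{y_t^i, y_t^j} P(\by_t^i = y_t^i, \by_t^j = y_t^j \mid \mathcal{F}_{t-r})\, f(y_t^i) g(y_t^j),
\]
where the sum is over all realizations of $(S_t^i, A_t^i, S_{t+1}^i, A_{t+1}^i)$ and $(S_t^j, A_t^j, S_{t+1}^j, A_{t+1}^j)$. Next, I would insert the intermediate states and actions from time $t-r$ to $t+1$ for both agents and rewrite the joint conditional probability as a product of the standard transition kernels $\pi_b^i(a|s)\mathcal{P}(s'|s,a)$ for agent $i$ and $\pi_b^j(a|s)\mathcal{P}(s'|s,a)$ for agent $j$. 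Because these kernels involve only the local state-action pair of each agent, the joint probability factors as
\[
P(\by_t^i = y_t^i, \by_t^j = y_t^j \mid \mathcal{F}_{t-r}) = P(\by_t^i = y_t^i \mid \mathcal{F}_{t-r}) \cdot P(\by_t^j = y_t^j \mid \mathcal{F}_{t-r}).
\]

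Finally, substituting this factorization back and separating the double sum into a product of two independent sums yields
\[
\E_{t-r}[f(\by_t^i) g(\by_t^j)] = \E_{t-r}[f(\by_t^i)] \cdot \E_{t-r}[g(\by_t^j)],
\]
establishing Assumption \ref{ass:noise_independence}. I do not anticipate any real obstacle here: the proof is a direct bookkeeping exercise, and the only care needed is to ensure that when summing over intermediate state-action pairs between $t-r$ and $t+1$, no cross-terms couple the trajectories of agents $i$ and $j$, which is guaranteed by the algorithmic fact that agents do not share any randomness during the $K$ local updates between synchronization rounds (and a synchronization step only changes the parameter $Q_t^i$, which is already in $\mathcal{F}_{t-r}$, not the driving noise $\by_t^i$).
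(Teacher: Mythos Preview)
Your proposal is correct and takes essentially the same approach as the paper: the paper's proof simply states that the argument follows exactly as in Lemma~\ref{lem:TD_learning_50}, which is precisely the factorization argument you spell out.
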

\subsection{Proofs}
    
\begin{proof}[Proof of Proposition \ref{prop:Q_learning}] Items \ref{item:Q_learning_1}-\ref{item:Q_learning_4} are by definition. Furthermore, by the update of the $Q$-learning, and subtracting $Q^*$ from both sides, we have
\begin{align*}
&\underbrace{\Q^i_{t+1}(s,a)-Q^*(s,a)}_{\btheta^i_{t+1}(s,a)} = \underbrace{\Q^i_t(s,a)-Q^*(s,a)}_{\bthetait(s,a) } \\
&+ \alpha \mathbbm{1}_{\{(s,a)=(S_t^i,A_t^i)\}}\lp \mathcal R(S_t^i,A_t^i) +\gamma \max_a \Q^i_t(S_{t+1}^i,a)-\Q^i_t(S_{t}^i,A_{t}^i) \rp\\
=&\bthetait(s,a) \\
&+ \alpha \mathbbm{1}_{\{(s,a)=(S_t^i,A_t^i)\}}\bigg( \mathcal R(S_t^i,A_t^i) +\gamma \max_a\left[ \underbrace{\Q^i_t(S_{t+1}^i,a)-Q^*(S_{t+1}^i,a)}_{\btheta^i_{t}(S_{t+1}^i,a)}+Q^*(S_{t+1}^i,a)\right]\\
&-\left(\underbrace{\Q^i_t(S_{t}^i,A_{t}^i)-Q^*(S_{t}^i,A_{t}^i)}_{\btheta^i_{t}(S_{t}^i,A_{t}^i)}+Q^*(S_{t}^i,A_{t}^i)\right) \bigg)\tag{addition and subtraction}\\
=& \bthetait(s,a)  \\
&+\alpha\bigg(\underbrace{\bthetait(s,a)+\mathbbm{1}_{\{S_t^i=s, A_t^i=a\}}\times\big[\gamma \max_{a'} \left( \bthetait + Q^{*} (S_{t+1}^i,a')\right) - \bthetait(S_t^i, A_t^i) - \gamma \max_{a'}Q^{*}(S_{t+1}^i,a')\big]}_{\Gi(\bthetait, \byit)_{(s,a)}}-\bthetait(s,a)\\
&\quad\quad +\underbrace{\mathbbm{1}_{\{S_t^i=s, A_t^i=a\}}\left(\mathcal{R}(S_t^i,A_t^i)+\gamma\max_{a'}Q^*(S_{t+1}^i,a')-Q^*(S_t^i,A_t^i)\right)}_{\mathbf{b}^i (\byit)_{(s,a)}}\bigg),
\end{align*}
which proves items \ref{item:Q_learning_5} and \ref{item:Q_learning_6}. Furthermore, for the synchronization part of $Q$-learning, we have
\begin{align*}
    \Q^i_t &\leftarrow  \frac{1}{\na} \sum_{j=1}^\na \Q^j_t\\
    \implies \underbrace{\Q^i_t-Q^*}_{\bthetait} &\leftarrow  \frac{1}{\na} \sum_{j=1}^\na \underbrace{(\Q^j_t-Q^*)}_{\bthetajt},
\end{align*}
which is equivalent to the synchronization step in FeGSAM Algorithm \ref{alg:fed_stoch_app}. Notice that here we used the fact that all agents have the same fixed point $Q^*$.
\end{proof}

\begin{proof}[Proof of Lemma \ref{lem:Q_learning_2}]
$\bG(\btheta)_{(s,a)}$ can be found simply by taking the expectation of $\Gi(\bthetait, \byit)_{(s,a)}$, defined in Proposition \ref{prop:Q_learning}, with respect to the stationary distribution $\mu^i$. Furthermore, we have
\begin{align*}
    &\|\bG^i(\btheta)-\E[\Gi (\btheta,\by_t^i)]\|_c\\
    = &\left\|\E_{\by_t^i\sim \mu^i}[\Gi (\btheta,\by_t^i)]-\E[\Gi (\btheta,\by_t^i)\right\|_c\\
    =&\left\|\sum_{s,a,s',a'}\left(\mu^i(s,a,s',a')-P(S_t^i=s,A_t^i=a,S_{t+1}^i=s',A_{t+1}^i=a'|S_0^i,A_0^i)\right)\Gi (\btheta,\by_t^i)\right\|_c\\
    =&\sum_{s,a,s',a'}\left|\mu^i(s,a,s',a')-P(S_t^i=s,A_t^i=a,S_{t+1}^i=s',A_{t+1}^i=a'|S_0^i,A_0^i)\right|.\left\|\Gi (\btheta,\by_t^i)\right\|_c\tag{$\|ax\|_c=|a|\|x\|_c$}\\
    \leq &\sum_{s,a,s',a'}\left|\mu^i(s,a,s',a')-P(S_t^i=s,A_t^i=a,S_{t+1}^i=s',A_{t+1}^i=a'|S_0^i,A_0^i)\right|.A_2\|\btheta\|_c\tag{Assumption \ref{lem:Q_learning_4}}\\
    = &\sum_{s,a,s',a'}\bigg|\mu^i(s,a)\mathcal{P}(s'|s,a)\pi^i(a'|s')-P(S_t^i=s,A_t^i=a|S_0^i,A_0^i)\mathcal{P}(s'|s,a)\pi^i(a'|s')\bigg|.A_2\|\btheta\|_c\tag{definition of transition probability}\\
    = &\sum_{s,a}\bigg|\mu^i(s,a)-P(S_t^i=s,A_t^i=a|S_0^i,A_0^i)\bigg|.A_2\|\btheta\|_c\\
    = &2d_{TV}(\mu^i(\cdot,\cdot),P(S_t^i=\cdot,A_t^i=\cdot|S_0^i,A_0^i)).A_2\|\btheta\|_c\\
    \leq & 2A_2\|\btheta\|_c \bar{m}\bar{\rho}^t.
\end{align*}
In addition, we have
\begin{align*}
    \|\E[\mathbf{b}^i  (\by_t^i)]\|_c = & \max_{s,a}\left|P(S_t^i=s,A_t^i=a|S_0^i,A_0^i)\left(\mathcal{R}(s,a)+\gamma\E_{S'\sim\mathcal{P}(\cdot|s,a)}[\max_{a'}Q^*(S',a')]-Q^*(s,a)\right)\right|\\
    =& 0.\tag{Bellman optimality equation \eqref{eq:bOE}} 
\end{align*}

\end{proof}

\begin{proof}[Proof of Lemma \ref{lem:Q_learning_3}]
\begin{align*}
    &\|\bG^i (\btheta_1) - \bG^i(\btheta_2) \|_c =\max_{s,a}|\bG^i (\btheta_1)_{(s,a)} - \bG^i(\btheta_2)_{(s,a)}|\\
    = & \max_{s,a}\Bigg|\btheta_1(s,a) +\mu^i(s,a)\times\E_{S'\sim \mathcal{P}(\cdot|s,a)}\left[\gamma \max_{a'} \left( \btheta_1 + Q^{*} (S',a')\right) - \btheta_1(s, a) - \gamma \max_{a'}Q^{*}(S',a')\right]\\
    &\quad-\left(\btheta_2(s,a)  +\mu^i(s,a)\times\E_{S'\sim \mathcal{P}(\cdot|s,a)}\left[\gamma \max_{a'} \left( \btheta_2 + Q^{*} (S',a')\right) - \btheta_2(s, a) - \gamma \max_{a'}Q^{*}(S',a')\right]\right)\Bigg|\\
    =& \max_{s,a}\bigg|(1-\mu^i(s,a))(\btheta_1(s,a)-\btheta_2(s,a)) \\
    &\quad\quad+ \gamma\mu^i(s,a)\E_{S'\sim \mathcal{P}(\cdot|s,a)}\left[\max_{a'} \left( \btheta_1 + Q^{*} (S',a')\right) - \max_{a'} \left( \btheta_2 + Q^{*} (S',a')\right)\right]\bigg|\\
    \leq& \max_{s,a}\Bigg[\bigg|(1-\mu^i(s,a))(\btheta_1(s,a)-\btheta_2(s,a))\bigg| \\
    &+ \gamma\bigg|\mu^i(s,a)\E_{S'\sim \mathcal{P}(\cdot|s,a)}\left[\max_{a'} \left( \btheta_1 + Q^{*} (S',a')\right) - \max_{a'} \left( \btheta_2 + Q^{*} (S',a')\right)\right]\bigg|\Bigg]\tag{triangle inequality}\\
    \leq& \max_{s,a}\Bigg[(1-\mu^i(s,a))\left\|\btheta_1-\btheta_2\right\|_\infty \tag{definition of $\|\cdot\|_\infty$}\\
    &+ \gamma\mu^i(s,a)\left[\bigg|\E_{S'\sim \mathcal{P}(\cdot|s,a)}\left[\max_{a'} \left( \btheta_1 + Q^{*} (S',a')\right) - \max_{a'} \left( \btheta_2 + Q^{*} (S',a')\right)\right]\bigg|\right]\Bigg]\tag{$\mu^i(s,a)\geq 0$}\\
    \leq& \max_{s,a}\Bigg[(1-\mu^i(s,a))\left\|\btheta_1-\btheta_2\right\|_\infty \\
    &+ \gamma\mu^i(s,a)\left[\E_{S'\sim \mathcal{P}(\cdot|s,a)}\bigg|\max_{a'} \left( \btheta_1 + Q^{*} (S',a')\right) - \max_{a'} \left( \btheta_2 + Q^{*} (S',a')\right)\bigg|\right]\Bigg]\tag{Jensen's inequality}
\end{align*}
Next, we note that for any functions $f(\cdot)$ and $g(\cdot)$, we have
\begin{align}\label{eq:max_ineq}
    \left|(\max_x f(x)) - (\max_x g(x))\right|\leq \max_x \left|f(x)-g(x)\right|.
\end{align}
The reason is as follows. We have $\max_x f(x) = \max_x f(x)-g(x)+g(x) \leq (\max_x f(x)-g(x)) + (\max_x g(x))$. Hence, $(\max_x f(x)) - (\max_x g(x))\leq \max_x f(x)-g(x)\leq \max_x \left|f(x)-g(x)\right|$. Now suppose $\max_x f(x)\geq \max_x g(x)$. Then we can apply absolute value to the left hand side of the inequality, and we get the bound. Using a similar argument for the case $\max_x f(x)\leq \max_x g(x)$, we get the bound in \eqref{eq:max_ineq}. Hence, we have $\left|\max_{a'} \left( \btheta_1 + Q^{*} (S',a')\right) - \max_{a'} \left( \btheta_2 + Q^{*} (S',a')\right)\right|\leq \max_{a'}|\btheta_1(S',a')-\btheta_2(S',a')|\leq \|\btheta_1-\btheta_2\|_\infty$. As a result, we have
\begin{align*}
    \|\bG^i (\btheta_1) - \bG^i(\btheta_2) \|_c\leq& \max_{s,a}\left[(1-\mu^i(s,a))\left\|\btheta_1-\btheta_2\right\|_\infty + \gamma\mu^i(s,a)\left[\|\btheta_1-\btheta_2\|_\infty\right]\right]\\
    =& \max_{s,a} \left(1-(1-\gamma)\mu^i(s,a)\right)\left\|\btheta_1-\btheta_2\right\|_\infty\\
    \leq& \left(1-(1-\gamma)\mu^i_{\min}\right)\left\|\btheta_1-\btheta_2\right\|_\infty\\
    \leq& \left(1-(1-\gamma)\mu_{\min}\right)\left\|\btheta_1-\btheta_2\right\|_\infty\\
    =& \left(1-(1-\gamma)\mu_{\min}\right)\left\|\btheta_1-\btheta_2\right\|_c.
\end{align*}
\end{proof}

\begin{proof}[Proof of Lemma \ref{lem:Q_learning_4}]
First, for $A_1$, we have
\begin{align*}
    &\| \Gi (\btheta_1, \by) - \Gi (\btheta_2, \by) \|_c\\
    =& \max_{s,a}\bigg|\btheta_1(s,a)   +\mathbbm{1}_{\{S=s, A=a\}}\times\big[\gamma \max_{a'} \left( \btheta_1 + Q^{*} (S',a')\right) - \btheta_1(S, A) - \gamma \max_{a'}Q^{*}(S',a')\big]\\
    &\quad\quad\quad-\left(\btheta_2(s,a)   +\mathbbm{1}_{\{S=s, A=a\}}\times\big[\gamma \max_{a'} \left( \btheta_2 + Q^{*} (S',a')\right) - \btheta_2(S, A) - \gamma \max_{a'}Q^{*}(S',a')\big]\right)\bigg|\\
    \leq& \max_{s,a}\bigg[\left|(1-\mathbbm{1}_{\{S=s, A=a\}})(\btheta_1(s,a)-\btheta_2(s,a))\right| \\
    &\quad\quad+ \gamma\mathbbm{1}_{\{S=s, A=a\}}\left|\max_{a'} \left( \btheta_1 + Q^{*} (S',a')\right)-\max_{a'} \left( \btheta_2 + Q^{*} (S',a')\right)\right|\bigg]\tag{triangle inequality}\\
    \leq& \max_{s,a}\bigg[\left\|\btheta_1-\btheta_2\right\|_\infty + \gamma\mathbbm{1}_{\{S=s, A=a\}}\left|\max_{a'} \left( \btheta_1 + Q^{*} (S',a')\right)-\max_{a'} \left( \btheta_2 + Q^{*} (S',a')\right)\right|\bigg] \tag{definition of $\|\cdot\|_\infty$}\\
    \leq& \max_{s,a}\bigg[\left\|\btheta_1-\btheta_2\right\|_\infty + \gamma\mathbbm{1}_{\{S=s, A=a\}}\|\btheta_1-\btheta_2\|_\infty\bigg]\tag{By \eqref{eq:max_ineq}}\\
    \leq& 2 \|\btheta_1-\btheta_2\|_\infty\\
    = & 2 \|\btheta_1-\btheta_2\|_c
\end{align*}
Second, for $A_2$, we have
\begin{align*}
    \| \Gi (\btheta, \by) \|_c =& \max_{s,a}\left|\btheta(s,a)   +\mathbbm{1}_{\{S=s, A=a\}}\times\big[\gamma \max_{a'} \left( \btheta + Q^{*} (S',a')\right) - \btheta(S, A) - \gamma \max_{a'}Q^{*}(S',a')\big]\right|\\
    \leq &\max_{s,a}\left[ \left|(1-\mathbbm{1}_{\{S=s, A=a\}})\btheta(a,s)\right|+\gamma\mathbbm{1}_{\{S=s, A=a\}}\left|\max_{a'} \left( \btheta + Q^{*} (S',a')\right)  -  \max_{a'}Q^{*}(S',a')\right|\right]\tag{triangle inequality}\\
    \leq& \max_{s,a}\left[ \left\|\btheta\right\|_\infty+\gamma\mathbbm{1}_{\{S=s, A=a\}}\left|\max_{a'} \left( \btheta + Q^{*} (S',a')\right)  -  \max_{a'}Q^{*}(S',a')\right|\right]\tag{definition of $\|\cdot\|_\infty$}\\
    \leq & \max_{s,a}\left[ \left\|\btheta\right\|_\infty+\gamma\max_{a'}\left|  \btheta  (S',a')\right|\right]\tag{By \eqref{eq:max_ineq}}\\
    \leq &2\left\|\btheta\right\|_\infty\\
    = &2\left\|\btheta\right\|_c.
\end{align*}
Lastly, for $B$, we have
\begin{align*}
    \|b^i(\by^i)\|_c =& \max_{s,a}\left|\mathbbm{1}_{\{S=s,A=a\}}\left[\mathcal{R}(S,A)+\gamma\max_{a'}Q^*(S',a')-Q^*(S,A)\right]\right|\\
    \leq &\max_{s,a}\mathbbm{1}_{\{S=s,A=a\}}\left[\left|\mathcal{R}(S,A)\right|+\gamma\max_{a'}\left|Q^*(S',a')\right|+\left|Q^*(S,A)\right|\right]\tag{triangle inequality}\\
    \leq & \max_{s,a}\left[1+\gamma\frac{1}{1-\gamma}+\frac{1}{1-\gamma}\right]\\
    =&\frac{2}{1-\gamma}.
\end{align*}
\end{proof}

\begin{proof}[Proof of Lemma \ref{lem:Q_learning_5}]
The proof follows similar to Lemma \ref{lem:TD_learning_50}.
\end{proof}

\begin{proof}[Proof of Theorem \ref{thm:fed_Q}] By Proposition \ref{prop:Q_learning} and Lemmas \ref{lem:Q_learning_2}, \ref{lem:Q_learning_3}, \ref{lem:Q_learning_4}, and \ref{lem:Q_learning_5}, it is clear that the federated $Q$ Algorithm \ref{alg:Q-learning} satisfies all the Assumptions \ref{ass:mixing}, \ref{ass:contraction}, \ref{ass:lipschitz},  and \ref{ass:noise_independence} of the FeGSAM Algorithm \ref{alg:fed_stoch_app}. Furthermore, by the proof of Theorem \ref{thm:main}, we have $w_t=(1-\frac{\alpha\varphiz_2}{2})^{-t}$, and the constant $c$ in the sampling distribution $q_T^c$ in Algorithm \ref{alg:TD-learning} is $c=(1-\frac{\alpha\varphiz_2}{2})^{-1}$. In equation \eqref{eq:w_t_Q} we evaluate the exact value of $w_t$.

Furthermore, by choosing the step size $\alpha$ small enough, we can satisfy the requirements in \eqref{eq:alpha_const_1}, \eqref{eq:alpha_const_2}, \eqref{eq:alpha_const_3}, 
\eqref{eq:alpha_const_4}. By choosing $\sync$ large enough, we can satisfy $\sync>\tau$, and by choosing $T$ large enough, we can satisfy $T>K+\tau$. Hence, the result of Theorem \ref{thm:main} holds for this algorithm. 

Next, we derive the constants involved in Theorem \ref{thm:main} step by step. In this analysis, we only consider the terms that involve $|\mathcal{S}|$, $|\mathcal{A}|$, $\frac{1}{1-\gamma}$, $\I_{\max}$, and $\mu_{\min}$. Since $\|\cdot\|_c=\|\cdot\|_\infty$, we choose $g(\cdot)=\frac{1}{2}\|\cdot\|_p^2$, that is, the $p$-norm with $p=2\log(|\mathcal{S}|)$. It is known that $g(\cdot)$ is $(p-1)$ smooth with respect to the $\|\cdot\|_p$ norm \cite{beck2017first}, and hence $L=\Theta(\log(|\mathcal{S}|))$. Hence, we have $l_{cs}=|\mathcal{S}|^{-1/p}=\frac{1}{\sqrt{e}}=\Theta(1)$ and $u_{cs}=1$. Therefore, we have $\varphiz_1 = \frac{1+\psi u_{cs}^2}{1+\psi \ell_{cs}^2}=\frac{1+\psi}{1+\frac{\psi}{\sqrt{e}}}\leq1+\psi$. By choosing $\psi=(\frac{1+\gamma_c}{2\gamma_c})^2-1=\frac{1+2\gamma_c-3\gamma_c^2}{4\gamma_c^2} \geq (1-\gamma_c)=\mu_{\min}(1-\gamma)=\Omega( \mu_{\min}(1-\gamma))$, which is $\psi= \mathcal{O}(1)$, we have $\varphiz_1 = \frac{1+\psi}{1+\frac{\psi}{\sqrt{e}}} =\sqrt{e} \frac{(\frac{1+\gamma_c}{2\gamma_c})^2}{\sqrt{e}+(\frac{1+\gamma_c}{2\gamma_c})^2-1}=\mathcal{O}(1)$, and
\begin{align*}
\varphiz_2&=1-\gamma_c\sqrt{\frac{1+\psi}{1+\frac{\psi}{\sqrt{e}}}}=1-\gamma_c\sqrt{\frac{(\frac{1+\gamma_c}{2\gamma_c})^2}{1+\frac{(\frac{1+\gamma_c}{2\gamma_c})^2-1}{\sqrt{e}}}}=1-\frac{0.5(1+\gamma_c)e^{1/4}}{\sqrt{\sqrt{e}-1+\lp\frac{1+\gamma_c}{2\gamma_c}\rp^2}}\\
&=1-\frac{0.5e^{1/4}(2-\mu_{\min}(1-\gamma))}{\sqrt{\sqrt{e}-1+\left(\frac{2-\mu_{\min}(1-\gamma)}{2-2\mu_{\min}(1-\gamma)}\right)^2}}> 1-\gamma_c\sqrt{1+\psi} \\
&= 1-\gamma_c\frac{1+\gamma_c}{2\gamma_c}=\frac{1-\gamma_c}{2}=0.5\mu_{\min}(1-\gamma) =\Omega(\mu_{\min}(1-\gamma))
\end{align*}

\[
\varphiz_3 =\frac{L (1+\psi u_{cs}^2)}{\psi\ell_{cs}^2} =\mathcal{O} \lp \frac{\log (|\mathcal{S}|)(1+\psi)}{\psi}\rp\leq\mathcal{O}\lp\frac{\log (|\mathcal{S}|)}{1-\gamma_c}\rp=\mathcal{O}\lp\frac{\log (|\mathcal{S}|)}{\mu_{\min}(1-\gamma)}\rp.
\]
Using $\varphiz_2$, we have
\begin{align}\label{eq:w_t_Q}
w_t=\left(1-\frac{\alpha\varphiz_2}{2}\right)^{-t}=\lp 1 - \alpha/2 + \frac{0.25\alpha e^{1/4}(2-\mu_{\min}(1-\gamma))}{\sqrt{\sqrt{e}-1+\left(\frac{2-\mu_{\min}(1-\gamma)}{2-2\mu_{\min}(1-\gamma)}\right)^2}}\rp^{-t}.
\end{align}

Further, we have
\[
l_{cm} = (1+\psi l_{cs}^2)^{1/2} =\Theta(1)
\]
\[
u_{cm} = (1+\psi u_{cs}^2)^{1/2} =\Theta(1)
\]

Since TV-divergence is upper bounded by $1$, we have $\bar{m}=\mathcal{O}(1)$. By Lemma \ref{lem:Q_learning_4}, we have
\[
A_1=A_2=2=\mathcal{O}(1)
\]
and $A_1=A_2=\Omega(1)$,
\[
B=\frac{2}{1-\gamma}=\mathcal{O}\lp\frac{1}{1-\gamma}\rp.
\]
Hence $m_1=2A_2\bar{m}=\mathcal{O}(1)$. Also, we have $m_2=0$.

We choose the $D$-norm in Lemma \ref{lem:after_expectation} as the $2$-norm $\|\cdot\|_2$. Hence, by primary norm equivalence, we have $l_{cD}=\frac{1}{\sqrt{|\mathcal{S}|}}$, $u_{cD}=1$, and hence $\frac{u_{cD}}{l_{cD}}=\sqrt{|\mathcal{S}|}$. The rest of the proof is similar to the proof of Theorem \ref{thm:fed_TD_off} where $\I_{\max}$ is substituted with $1$. The sample complexity can also be derived using Corollary \ref{lem:sample_complexity}. 
\end{proof}
\end{appendix}

\end{document}